\newtheorem{problem}{Problem}
\newtheorem{proposition}{Proposition}[section]
\newtheorem{lemma}{Lemma}[section]
\newtheorem{theorem}{Theorem}[section]
\theoremstyle{definition}
\theoremstyle{remark}
\newtheorem*{remark}{Remark}
\newcommand{\D}{\mathcal{D}}
\def\bl#1{\textcolor{blue}{#1}}
\def\ldg{G_{\mathcal A_s,\mathcal A_g}^{\,\ell}}
\def\ldgg{G^{\,\ell}}
\def\udg{G_{\mathcal A_s,\mathcal A_g}^{\,u}}
\def\udgg{G^{\,u}}
\def\toro{\texttt{TORO} }
\def\lrbm{\texttt{LRBM}\xspace} 
\def\urbm{\texttt{URBM}\xspace} 
\def\mrb{\textsc{MRB}\xspace}
\def\rb{\textsc{RB}\xspace}
\def\fvs{\textsc{MFVS}\xspace}
\def\DFSDP{\textsc{DFDP}\xspace}
\def\PQS{\textsc{PQS}\xspace}
\def\spp{\textsc{SepPlan}\xspace}
\def\vsp{\texttt{VSP}\xspace}
\def\minvs{\textsc{MinVS}\xspace}
\def\vertsep{\textsc{VS}\xspace}
\def\ilpmrb{\textsc{TB}_{\mrb}\xspace}
\def\ilptb{\textsc{TB}_{\mrb}\xspace}
\def\ilpfvs{\textsc{TB}_{\textsc{FVS}}\xspace}
\def\trlb{\textsc{TRLB}\xspace}
\def\toroe{\texttt{TORE}\xspace}
\def\toroi{\texttt{TORI}\xspace}
\def\modap{\texttt{MODAP}\xspace} 
\def\orla{ORLA*\xspace}
\def\motar{MoTaR\xspace}
\def\model{StabilNet\xspace}
\def\astar{A*\xspace}
\def\hete{\texttt{HeTORI}\xspace}
\def\heteCP{\texttt{HeCP}\xspace}
\def\heteTI{\texttt{HeTI}\xspace}
\newcommand{\yourName}{Kai Gao} 
\newcommand{\yourYear}{2025} 
\begin{document}

\clearpage
\begin{center}

\vspace*{\fill}

\copyright { }{\yourYear}\\
{\yourName}\\
ALL RIGHTS RESERVED\\

\vspace*{\fill}

\end{center}

\pagenumbering{gobble}
\clearpage

\makeTitlePage{Month}{Year}

\begin{frontmatter}
    \begin{my_abstract}

This thesis aims to provide a complete structural analysis and efficient algorithmic solutions to tabletop object rearrangement with overhand grasps (\toro). 
This problem captures a common task that we solve on a daily basis and is essential in enabling truly intelligent robotic manipulation.
When rearranging many objects in a confined workspace, on the one hand, action sequencing with the least pick-n-places in \toro is NP-hard\cite{han2018complexity}; on the other hand, temporarily relocating objects to some free space (``buffer poses'') may be necessary but highly challenging in a cluttered environment.

Focusing on these two challenges, the thesis covers \toro in four different setups, including varied workspace assumptions (with/without external buffers) and manipulator settings (single/dual-arms or a mobile manipulator).

The thesis first explores \toro with external buffers (\toroe), addressing the size of needed space for temporary object relocation (“running buffers”). 
This study shows that finding the maximum running buffers (\mrb) is NP-hard and that \mrb can grow unbounded with an increasing number of objects, even with uniform shapes. 
Exact algorithms developed for both labeled and unlabeled settings can scale to over 100 objects. 

The thesis further extends the \toroe algorithms to tabletop rearrangement with internal buffers (\toroi), where all temporary object placements need to be inside the workspace. A two-step baseline planner is developed, generating a primitive plan based on object-object collisions and selecting buffer locations. 
Using a “lazy” planner within a bi-directional tree search framework, the method efficiently produces robust, high-quality solutions in simulations, outperforming existing approaches for large-scale problems.

The thesis later explores lazy buffer verification for dual-arm task planning in non-monotone rearrangement tasks. 
Handling complex dependencies often requires moving objects multiple times and coordinating hand-offs between two arms in a shared workspace. Our task planning algorithms effectively sequence and distribute pick-and-place tasks, yielding significant time savings over greedy or single-robot approaches. For motion planning, we introduce a tightly integrated pipeline that combines novel sampling methods with advanced trajectory optimization. The proposed planner achieves superior execution times and trajectory compliance with acceleration and jerk constraints, advancing coordination in dual-arm systems.

Finally, the thesis incorporates lazy buffer verification into an A* framework (\orla) for time-optimal multi-object rearrangement in mobile robot tabletop setups. 
The proposed method applies delayed evaluation to optimize object pick-and-place sequences, factoring in both end-effector and robot base movements. 
With learning-based stability predictions, \orla can handle complex, multi-layered rearrangement tasks. Extensive simulations and ablation studies validate \orla’s effectiveness, providing high-quality solutions for challenging rearrangement scenarios.
\end{my_abstract}
    
\begin{acknowledgments}

First and foremost, I would like to express my sincere gratitude to my PhD advisor, Prof. Jingjin Yu. JJ's insightful guidance and dedication have been invaluable throughout my PhD journey. He has set an inspiring example of how to approach research and excel as a scientist in robotics, and I am deeply grateful for the trust and freedom he provided to pursue my ideas.

I am also thankful to my co-authors and collaborators, especially Kostas, Baichuan, and Rui, whose insights and guidance have been true milestones in my research journey. Working together has been a deeply rewarding experience, and I greatly appreciate their expertise, constructive feedback, and the depth they brought to our work.

Additionally, I would like to thank my labmates, Siwei, Zihe, Duo, Haonan, Tzvika, and Teng, for the discussions and exchange of ideas. The collaborative environment and support within the lab have enriched my research experience and have been an essential part of my PhD.

To my parents and my girlfriend, who have been a constant source of support and encouragement, thank you for being there through the highs and lows of this journey. Your belief in me has kept me moving forward.

Thank you all for your support. This accomplishment would not have been possible without each of you.

\end{acknowledgments}

    \makeTOC
    \makeListOfTables
    \makeListOfFigures

\makeListOfAcronyms
\end{frontmatter}

\begin{thesisbody}
    
\chapter{Introduction and Background}
\thispagestyle{myheadings}

This thesis provides a brief overview of my works on tabletop rearrangement planning. 
Rearrangement planning is required in nearly all aspects of our daily lives. Be it work-related, at home, or for play, objects are to be grasped and rearranged, e.g., tidying up a messy desk, cleaning the table after dinner, or solving a jigsaw puzzle. 
Similarly, many industrial and logistics applications require repetitive rearrangements of many objects, e.g., the sorting and packaging of products on conveyors with robots, and doing so efficiently is of critical importance to boost the competitiveness of the stakeholders. 

\begin{figure}
    \centering
\includegraphics[width=0.9\textwidth]{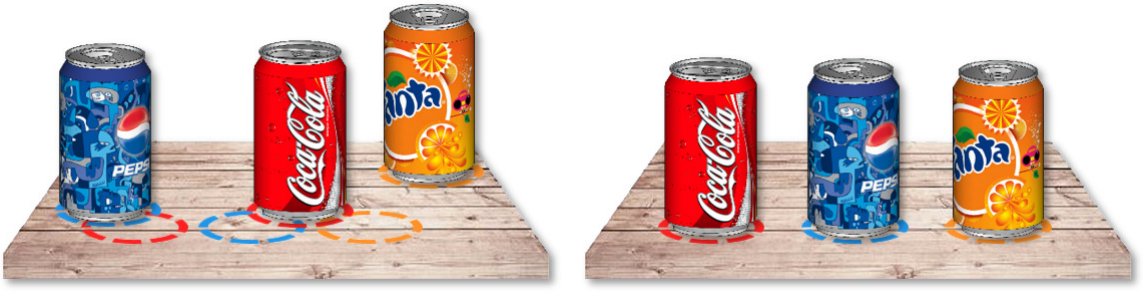}
\caption{A \toro instance where the three soda cans are to be rearranged 
from the left configuration to the right configuration.}
\label{fig:toro}
\end{figure}

However, even without the challenge of grasping, deciding the object manipulation order for optimizing a 
rearrangement task is non-trivial. To that end,  \cite{han2018complexity} examined the problem of \emph{tabletop object rearrangement with overhand grasps} (\toro), where objects may be picked up, moved around, and then placed at poses that are not in collision with other objects. 
An object that is picked up but cannot be directly placed at its goal is temporarily stored at a \emph{buffer}.
For example, for the setup given in \ref{fig:toro}, using a single manipulator, either the Coke or the Pepsi must be moved to a buffer before the task can be completed.

The thesis introduces efforts to solve the two main challenges in \toro: (1) the inherent combinatorial challenge in action sequencing, and (2) buffer pose allocation when objects need temporary placements.
\ref{chap:rbm} and \ref{chap:trlb} investigate single-arm \toro with external and internal buffers.
\ref{chap:cdr} extends the single-arm planners to dual-arm setups, where two robot arms cooperate to rearrange objects in a confined shared workspace.
\ref{chap:orla} proposes time-optimal rearrangement planners and examines the idea in a mobile robot setup.

\ref{chap:rbm} investigates \toro with external buffers (\toroe), where external free space is assumed for buffer allocation.
Details can be found in publications \labelcref{pubs:P5} and \labelcref{pubs:P12}.
This setup raises the natural question of how many simultaneous storage spaces, or ``running buffers'', are required to make certain tabletop rearrangement problems feasible.
This work examines the problem for both labeled and unlabeled settings. 
On the structural side, we observe that finding the minimum number of running buffers (\mrb) can be carried out on a dependency graph abstracted from a problem instance and show that computing \mrb is NP-hard. 
We then prove that under both labeled and unlabeled settings, even for 
uniform cylindrical objects, the number of required running buffers may 
grow unbounded as the number of objects to be rearranged increases. 
We further show that the bound for the unlabeled case is tight. 
On the algorithmic side, we develop effective exact algorithms 
for finding \mrb for both labeled and unlabeled tabletop rearrangement 
problems, scalable to over a hundred objects under very high object 
density. More importantly, our algorithms also compute a sequence 
witnessing the computed \mrb that can be used for solving object 
rearrangement tasks. 
Employing these algorithms, empirical evaluations reveal that random 
labeled and unlabeled instances, which more closely mimic real-world 
setups generally have fairly small \mrb{s}. 
Using real robot experiments, we demonstrate that the running buffer
abstraction leads to state-of-the-art solutions for the in-place rearrangement of 
many objects in a tight, bounded workspace.

In contrast to \toroe, there are many practical rearrangement scenarios where objects have to be displaced inside the workspace.
\ref{chap:trlb} applies the proposed \toroe algorithms to \toro with internal buffers (\toroi).
Details of this study can be found in publications \labelcref{pubs:P3} and \labelcref{pubs:P10}.
In a given \toroi instance, objects may need to be placed at temporary positions (``buffers'') to complete the rearrangement. However, allocating these buffer locations can be highly challenging in a cluttered environment. To tackle the challenge, a two-step baseline planner is first developed, which generates a primitive plan based on inherent combinatorial constraints induced by start and goal poses of the objects and then selects buffer locations assisted by the primitive plan. We then employ the ``lazy'' planner in a tree search framework which is further sped up by adapting a novel preprocessing routine. Simulation experiments show our methods can quickly generate high-quality solutions and are more robust in solving large-scale instances than existing state-of-the-art approaches.

\ref{chap:cdr} employs the idea of lazy buffer verification for dual-arm task planning and coordination.
Details of this work can be found in publications \labelcref{pubs:P0} and \labelcref{pubs:P7}.
In a non-monotone rearrangement task, complex object-object dependencies require moving some objects multiple times to solve an instance. 
In working with two arms in a large workspace, some objects must be 
handed off between the robots, which further complicates the planning process.

For task planning in the challenging dual-arm tabletop rearrangement problem, we develop 
effective algorithms for scheduling the pick-n-place sequence 
that can be properly distributed between the two arms. 
We show that, even without using a sophisticated motion planner, our 
method achieves significant time savings in comparison to greedy approaches 
and naive parallelization of single-robot plans.

For motion planning, we aim at optimizing motion plans for a real dual-arm system in which the two arms operate in close vicinity to solve highly constrained tabletop multi-object rearrangement problems. 
Toward that, we construct a tightly integrated planning and control optimization pipeline, \bl{M}akespan-\bl{O}ptimized \bl{D}ual-\bl{A}rm \bl{P}lanner (\modap) that combines novel sampling techniques for task planning with state-of-the-art trajectory optimization techniques. 
Compared to previous state-of-the-art, \modap produces task and motion plans that better coordinate a dual-arm system, delivering significantly improved execution time improvements while simultaneously ensuring that the resulting time-parameterized trajectory conforms to specified acceleration and jerk limits.

\begin{figure}
    \centering
    \includegraphics[width=0.9\textwidth]{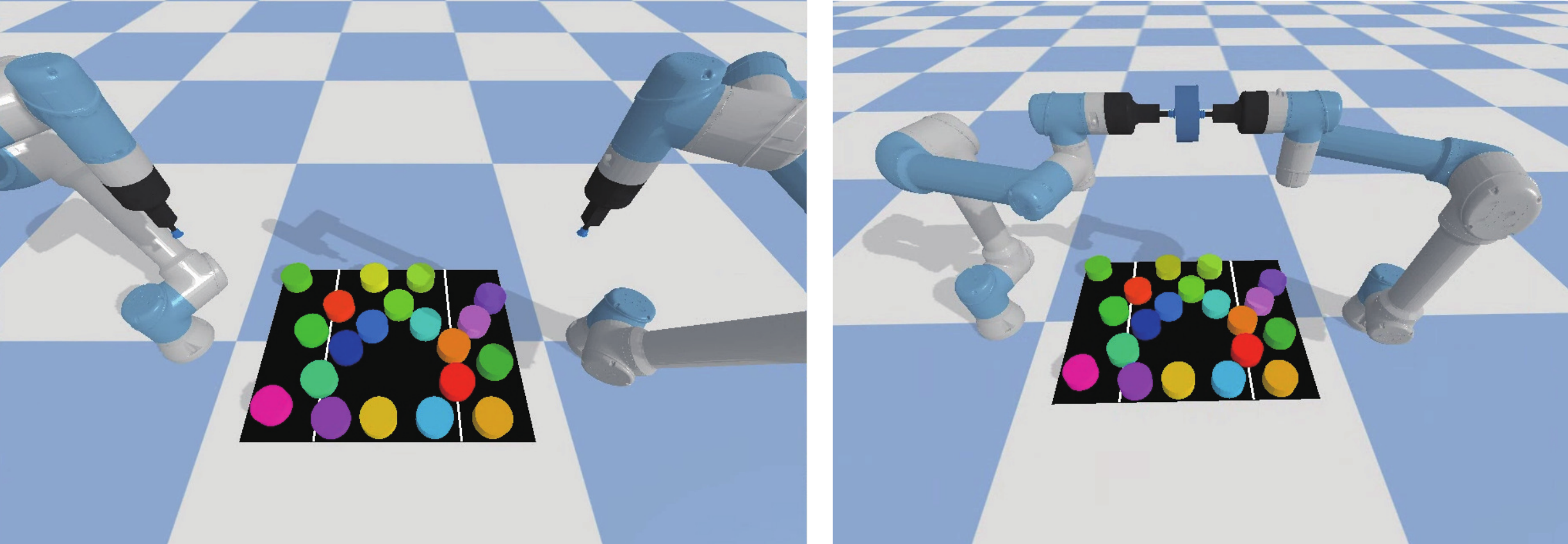}
    \caption{[Left] PyBullet setup for the Cooperative Multi-Robot Rearrangement problem, where only a portion of the environment (the region between two white lines) is reachable by both arms. [Right] Handoff operation at a pre-computed pose above the environment.}
    \label{fig:problem}
\end{figure}

\ref{chap:orla} further integrates the idea of lazy buffer verification into the A* framework, enabling time-optimal multi-object rearrangement planning.
The idea is examined in the scenario of mobile robot tabletop rearrangement (\ref{fig:intro}).
Details of this work can be found in publications \labelcref{pubs:P00}.

In this work, we propose \orla, which leverages delayed/lazy evaluation in searching for a high-quality object pick-n-place sequence that considers both end-effector and mobile robot base travel. \orla readily handles multi-layered rearrangement tasks powered by learning-based stability predictions. Employing an optimal solver for finding temporary locations for displacing objects, \orla can achieve global optimality. Through extensive simulation and ablation study, we confirm the effectiveness of \orla in delivering high quality solutions for challenging rearrangement instances.

\begin{figure}
\vspace{2mm}
    \centering
    \includegraphics[width=0.9\columnwidth]{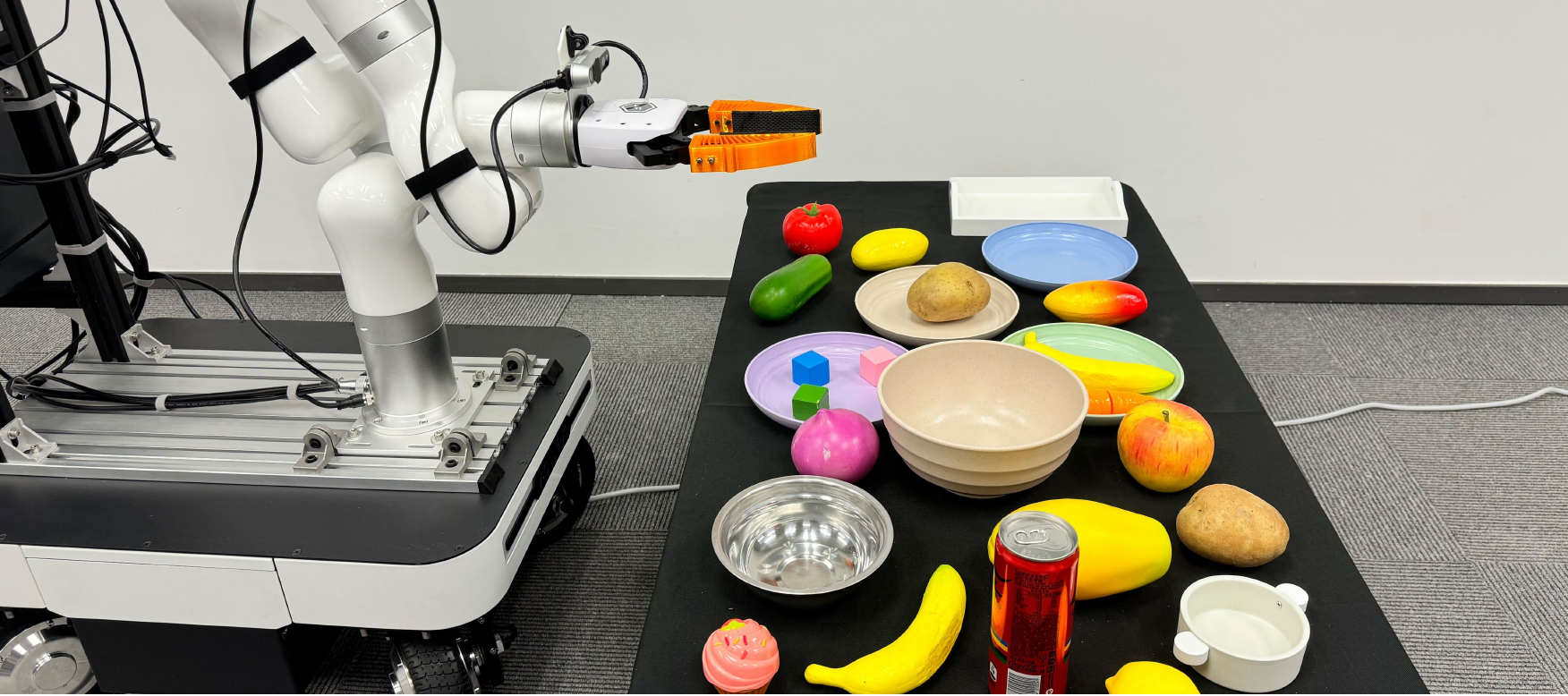}
    \caption{An example of the Mobile Robot Tabletop Rearrangement (\motar) setup.}
    \label{fig:intro}
\end{figure}

\chapter{Running Buffer Minimization (RBM) for \toro with External Buffers (\toroe)} \label{chap:rbm}
\thispagestyle{myheadings}

\section{Motivation}
This chapter investigates \toro with external buffers (\toroe), where we assume an external free space to temporarily place objects during the rearrangement process.
In \toroe, the study focuses on a practical rearrangement objective of minimizing the number of \emph{running buffers} (\rb) in solving a \toroe instance, which is the number of objects stored at buffers simultaneously. 
The maximum running buffer during rearrangement is denoted as \mrb. Whereas the total number of buffers used has more bearing on 
global solution optimality, \mrb sheds more light on \emph{feasibility}. 
Knowing the \mrb tells us whether a certain number of external 
buffers will be sufficient for solving a class of rearrangement problems. 
This is critical for practical applications where the number of 
external buffers is generally limited to a small constant. 
For example, in solving everyday rearrangement tasks, e.g., sorting or retrieving items in the fridge, a human may attempt to temporarily hold multiple items, sometimes awkwardly. 
There is clearly a limit to the number of items that can be held simultaneously this way. 

This chapter brings forth several novel technical contributions. First, we show that computing \mrb on arbitrary \emph{dependency graphs}, which encode the combinatorial 
information of \toroe instances, is NP-hard. Second, this study establishes that for an $n$-object 
\toroe instance, \mrb can be lower bounded by $\Omega(\sqrt{n})$ for uniform cylinders, 
even when all objects are \emph{unlabeled}. This implies that the same is true for 
the \emph{labeled} setting. Third, we provide a matching algorithmic upper bound 
$O(\sqrt{n})$ for the unlabeled setting. 
Fourth, we develop multiple highly effective and optimal algorithms for 
computing rearrangement plans with \mrb for \toro. 
In particular, we present a dynamic programming method for the labeled setting, 
a priority queue-based algorithm for the unlabeled setting, and a much more 
efficient \emph{depth-first-search dynamic programming} routine that readily 
scales to instances with over a hundred objects for both settings. Furthermore, 
methods for computing plans with the minimum number of \emph{total 
buffers} subject to the \mrb constraints are provided. 
These algorithms not only provide the optimal number of buffers but also provide a 
rearrangement plan that witnesses the optimal solution.

\section{Related Work}\label{sec:related-works}
As a high utility capability, manipulation of objects in 
a bounded workspace has been extensively studied, with works devoted to 
perception/scene understanding \cite{saxena2008robotic,gualtieri2016high,
mitash2017self,xiang2017posecnn}, task/rearrangement planning 
\cite{stilman2005navigation,
treleaven2013asymptotically,havur2014geometric,
krontiris2015dealing,king2016rearrangement,
han2018complexity,lee2019efficient}, 
manipulation \cite{taylor1987sensor,goldberg1993orienting,
lynch1999dynamic,dogar2011framework,bohg2013data,dafle2014extrinsic,
boularias2015learning,chavan2015prehensile}, as well as integrated, 
holistic approaches \cite{kaelbling2011hierarchical,levine2016end,mahler2017dex,
zeng2018robotic,wells2019learning}.
As object rearrangement problems often embed within them multi-robot motion planning 
problems, rearrangement inherits the PSPACE-hard complexity \cite{hopcroft1984complexity}. 
These problems remain  NP-hard even without complex geometric constraints 
\cite{wilfong1991motion}. Considering rearrangement plan quality, e.g., minimizing the 
number of pick-n-places or the end-effector travel, is also computationally 
intractable \cite{han2018complexity}. 

For rearrangement tasks using mainly prehensile actions, the algorithmic 
studies of Navigation Among Movable Obstacles \cite{stilman2005navigation,
stilman2007manipulation} result in backtracking search methods that can 
effectively deal with monotone instances which restrict the robot to move each obstacle at most once.
Via carefully calling monotone solvers, difficult non-monotone cases can 
be solved as well \cite{krontiris2015dealing,wang2022lazy}.
\cite{han2018complexity} relates tabletop rearrangement problems 
to the Traveling Salesperson Problem \cite{papadimitriou1977euclidean}  
and the Feedback Vertex Set problem \cite{karp1972reducibility},
both of which are NP-hard. Nevertheless, integer programming 
models could quickly compute high-quality solutions for 
practical sized ($\sim20$ objects) problems. 
Focusing mainly on the unlabeled setting, bounds on the number of 
pick-n-places are provided for disk 
objects in \cite{bereg2006lifting}.
In \cite{lee2019efficient}, a complete algorithm is developed that reasons 
about object retrieval, rearranging other objects as needed, with later 
work \cite{nam2019planning} considering plan optimality and sensor 
occlusion. 
While objectives in most problems focus on the number 
of motions, \cite{halperin2020space} seeks to minimize 
the space needed to carry out a rearrangement task for discs moving inside the workspace.

Non-prehensile rearrangement has also been extensively studied\cite{ben1998practical,huang2019large}, with object
singulation as an early focus \cite{chang2012interactive,
laskey2016robot,eitel2020learning}. In this problem, a robot is tasked to separate a target object from surrounding obstacles with non-prehensile actions, e.g., pushing and poking, in order to provide room for performing grasping actions. An iterative search was employed in 
\cite{huang2019large} for accomplishing a multitude of rearrangement tasks spanning singulating, separation, and sorting of identically shaped cubes.
\cite{song2019multi} combines Monte Carlo Tree Search
with a deep policy network for separating many objects into coherent clusters
within a bounded workspace, supporting non-convex objects. 
More recently, a bi-level planner is proposed \cite{pan2020decision}, 
engaging both (non-prehensile) pushing and (prehensile) overhand grasping 
for sorting a large number of objects. 
Synergies between non-prehensile and prehensile actions have been explored 
for solving clutter removal tasks \cite{zeng2018learning,huang2020dipn} 
and more challenging object retrieval tasks \cite{huang2021visual,vieira2022persistent}
using a minimum number of pushing and grasping actions.

On the structural side, a central object that we study is the 
\emph{dependency graph} structure.  Similar dependency structures were first introduced to multi-robot path planning problems to deal with path conflicts between agents \cite{buckley1988fast,van2009centralized}.  
Subsequently, the structure was employed for reasoning about and solving challenging 
rearrangement problems \cite{krontiris2015dealing,krontiris2016efficiently,wang2020robot,gao2022toward}. 
The full labeled dependency graph, as induced by a rearrangement 
instance, is first introduced and studied in \cite{han2018complexity}. This 
current work introduces the unlabeled dependency graph. 
We observe that, in the labeled setting, through the dependency graph, the 
running buffer problems naturally connect to \emph{graph layout} problems 
\cite{diaz2002survey,garey1979computers,papadimitriou1976np,garey1974some,gavril2011some, bodlaender1995approximating},
where an optimal linear ordering of graph vertices is sought. Graph layout 
problems find a vast number of important applications, including VLSI design, scheduling 
\cite{shin2011minimizing}, and so on. For the unlabeled 
setting, the dependency graph becomes a planar one for uniform objects with a round base. 
Rearrangement can be tackled through partitioning of the dependency 
graph using a \emph{vertex separator} 
\cite{lipton1979separator,gilbert1984separator,alon1990separator,elsner1997graph}. For a survey on 
these topics, see~\cite{diaz2002survey}.

\section{\toro with External Buffer Space (\toroe)}
For \toroe tasks where objects assume similar geometry, there are two natural practical settings depending on whether the objects are distinguishable, i.e., whether they are \emph{labeled} or \emph{unlabeled}. 
We describe external buffer formulations under these two distinct settings, and 
discuss the important \emph{dependency graph} structure for both. 

\subsection{Labeled \toro with External Buffers}
Consider a bounded workspace $\mathcal W \subset \mathbb R^2$ with a set of 
$n$ objects $\mathcal O = \{o_1, \ldots, o_n\}$ placed inside it. All objects 
are assumed to be \emph{generalized cylinders} with the same height. A 
\emph{feasible arrangement} of these objects is a set of poses $\mathcal A 
=\{x_1,\ldots, x_n\}, x_i \in SE(2)$ in which no two objects collide. 
Let $\mathcal A_s = \{x_1^s, \ldots, x_n^s\}$ and $\mathcal A_g = 
\{x_1^g, \ldots, x_n^g\}$ be two feasible arrangements, a tabletop object 
rearrangement problem 
\cite{han2018complexity}
seeks a plan using 
\emph{pick-n-place} operations that move the objects from $\mathcal A_s$ to 
$\mathcal A_g$ (see \ref{fig:rbm-ex-prob}(a) for an 
example with 7 uniform cylinders). In each pick-n-place operation, an object 
is grasped by a robot arm, lifted above all other objects, transferred to and 
lowered at a new pose $p \in SE(2)$ where the object will not be in collision 
with other objects, and then released. A pick-n-place operation can be 
formally represented as a 3-tuple $a = (i, x', x'')$, denoting that object 
$o_i$ is moved from pose $x'$ to pose $x''$. A full rearrangement plan $P = (a_1, a_2, 
\ldots)$ is then an ordered sequence of pick-n-place operations. 

\begin{figure}[h]
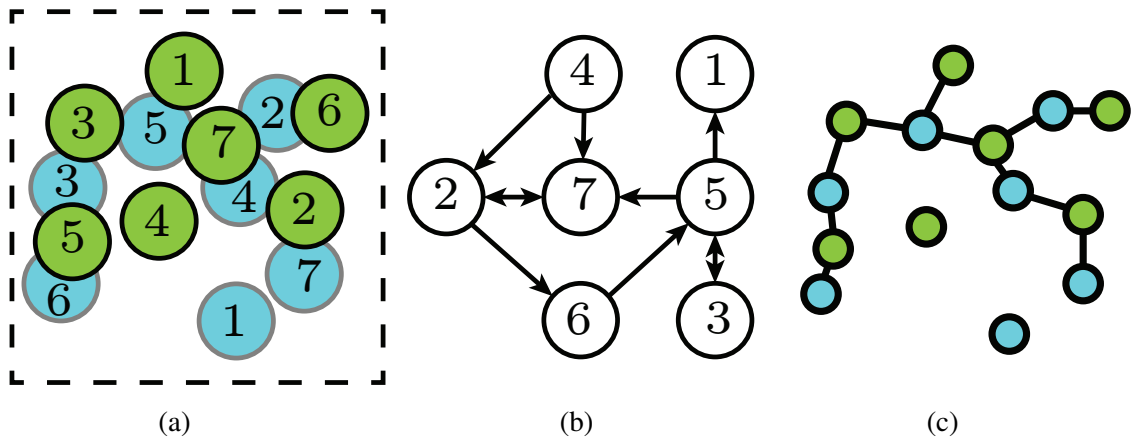

    \centering
\begin{overpic}
[width=\textwidth]{figures/rbm_problem_exp.pdf}
\put(13, -4){{\small (a)}}
\put(48, -4){{\small (b)}}
\put(80, -4){{\small (c)}}
\end{overpic}
\vspace{2mm}
    \caption{A 7-object labeled instance with uniform cylinders; we 
    will use this instance as a running example. (a) The 
    unshaded discs (as projections of cylinders) represent the start arrangement
    $\mathcal{A}_s$ and the shaded discs represent the goal arrangement $\mathcal{A}_g$. 
    (b) The corresponding labeled dependency graph. (c) The corresponding 
    unlabeled dependency graph, which is bipartite and planar.}
    \label{fig:rbm-ex-prob}
\end{figure}

Depending on $\mathcal A_s$ and $\mathcal A_g$, it may not always be 
possible to directly transfer an object $o_i$ from $x_i^s$ to $x_i^g$
in a single pick-n-place operation, because $x_i^g$ may be occupied 
by other objects. This creates \emph{dependencies} between objects. 
If object $o_i$ at pose $x^g_i$ intersects objects $o_j$ at 
pose $x^s_j$, we say $o_i$ \emph{depends} on $o_j$. This suggests that 
object $o_j$ must be moved first before $o_i$ can be placed at its goal
pose $x^g_i$. 

It is possible to have circular dependencies. As an example, for the instance given in \ref{fig:rbm-ex-prob}(a), objects $3$ and $5$ have dependencies on each other. 
In such cases, some object(s) must be temporarily moved to an intermediate 
pose to solve the rearrangement problem. Similar to \cite{han2018complexity}, for this chapter,
we assume that \emph{external buffers} outside the workspace are used for intermediate poses, which avoids time-consuming geometric computations 
if the intermediate poses are to be placed within $\mathcal W$. 
During the execution of a rearrangement plan, multiple objects can be stored at buffer locations. We call the buffers that are currently in use \emph{running buffers} (\rb). %
With the introduction of buffers, there are three types of pick-n-place 
operations: 1) pick an object at its start pose and place it at a buffer,
2) pick an object at its start pose and place it at its goal pose, and 3) 
pick an object from a buffer and place it at its goal pose. 
%
Naturally, it is desirable to be able to solve a rearrangement problem
with the least number of running buffers, giving rise to the \emph{labeled 
running buffer minimization} problem. 

\begin{problem}[Labeled Running Buffer Minimization (\lrbm)]\label{p:1} Given 
feasible arrangements $\mathcal A_s$ and $\mathcal A_g$, find a rearrangement 
plan $P$ that minimizes the maximum number of running buffers in use at any 
given time as the plan is executed. 
\end{problem}

In an \lrbm instance, the set of all dependencies induced by $\mathcal 
A_s$ and $\mathcal A_g$ can be represented using a directed graph $\ldgg = 
(V, A)$, where each $v_i \in V$ corresponds to object $o_i$ and there is an 
arc $v_i \to v_j$ for $1 \le i, j \le n, i \ne j$ if object $o_i$ 
depends on object $o_j$. We call $\ldgg$ a \emph{labeled dependency graph}. 
The labeled dependency graph for \ref{fig:rbm-ex-prob}(a) is given in 
\ref{fig:rbm-ex-prob}(b).
Based on the dependency graph $\ldgg$, we can immediately identify multiple 
circular dependencies in the graph, e.g., between objects $3$ and $5$, or among 
objects $7, 2, 6$ and $5$. The cycles form strongly connected components
of $\ldgg$, which can be effectively computed \cite{tarjan1972depth}.  
Since moving objects to external buffers does not create additional dependencies, we have
\begin{proposition}
$\ldgg$ fully captures the information needed to solve the tabletop rearrangement problem with external buffers moving objects from $\mathcal A_s$ to $\mathcal A_g$.
\end{proposition}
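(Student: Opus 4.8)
The statement is a combinatorial-abstraction claim, so the plan is to show that the admissibility of every pick-n-place operation — and hence solvability of the whole instance together with the running-buffer profile of any plan — is a function of a purely symbolic configuration together with the arc set of $\ldgg$ only. First I would attach to any partial execution its \emph{combinatorial configuration}, a map $\sigma \colon \mathcal O \to \{\mathsf{s},\mathsf{b},\mathsf{g}\}$ recording whether each object currently rests at its start pose, at an external buffer, or at its goal pose; by the three admissible operation types listed above, every reachable geometric configuration projects onto such a $\sigma$, the execution begins at $\sigma \equiv \mathsf{s}$ and must end at $\sigma \equiv \mathsf{g}$, and the number of running buffers in use at any instant equals $|\sigma^{-1}(\mathsf{b})|$.

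The core step is to verify that the geometric admissibility of each operation type, from a state with abstraction $\sigma$, depends only on $\sigma$ and the arcs of $\ldgg$. A start-to-buffer move is always admissible, because by the standing \toroe assumption the external free space accommodates the object without colliding with any object in $\W$ or with previously buffered objects. A start-to-goal or buffer-to-goal move of $o_i$ is admissible exactly when $\sigma(o_j) \ne \mathsf{s}$ for every $o_j$ with $v_i \to v_j \in A$: the target pose $x_i^g$ can only be blocked by an object sitting at some start pose $x_j^s$, which by the very definition of dependency happens iff $v_i \to v_j$ is an arc of $\ldgg$; it cannot be blocked by an object at its goal pose, since $\mathcal A_g$ is a feasible arrangement, nor by a buffered object, since buffers lie outside $\W$. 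For the same two reasons, placing $o_i$ at $x_i^g$ never obstructs any later goal placement or buffer use, so executing a move never introduces an obstruction not already visible in $\ldgg$ — this is precisely the sense in which moving objects to external buffers does not create additional dependencies.

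Putting these together, every rearrangement plan projects to a walk from $\sigma \equiv \mathsf{s}$ to $\sigma \equiv \mathsf{g}$ in the finite transition system on symbolic configurations whose transitions are dictated entirely by the arcs of $\ldgg$; conversely, any such walk lifts back to a genuine geometric plan by choosing, step by step, concrete pairwise-non-colliding external buffer poses, which the \toroe assumption always permits. Hence feasibility, the collection of realizable plans up to this abstraction, and in particular the minimum achievable \mrb, are all determined by $\ldgg$ alone. The only place where care is needed — and the step I would treat as the crux — is the exhaustive case check ruling out collisions with goal-placed and with buffered objects, which is exactly where the hypotheses ``$\mathcal A_g$ feasible'' and ``external buffer'' do the work; the lifting direction is routine given the assumed availability of external space.
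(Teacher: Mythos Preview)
Your proposal is correct and aligns with the paper's reasoning. The paper does not give a separate proof for this proposition; it is stated immediately after the one-line justification ``Since moving objects to external buffers does not create additional dependencies,'' and is treated as essentially self-evident from the setup. Your argument is a careful formalization of exactly that idea: you make explicit the symbolic configuration $\sigma$, verify case-by-case that admissibility of each of the three operation types is determined by $\sigma$ and the arc set of $\ldgg$, and then close the loop with the projection/lifting correspondence. The crux you identify---that goal-placed objects cannot obstruct (by feasibility of $\mathcal A_g$) and buffered objects cannot obstruct (because buffers are external)---is precisely the content of the paper's ``no additional dependencies'' remark, so the two approaches are the same in substance, with yours supplying the details the paper leaves implicit.
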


We use two examples to illustrate the relationships between \toroe, its dependency graph, and external buffer-based solutions. First, for the example given in \ref{fig:toro}, the corresponding start/goal configurations are given in \ref{fig:rbm-coke-pepsi}[top left]. The labeled dependency graph is given in  \ref{fig:rbm-coke-pepsi}[top right]. Because of the existence of cyclic dependencies between Coke and Pepsi, one of these two must be temporarily moved to a buffer. Suppose that Pepsi is moved to buffer. This changes the configuration and dependency graph as shown in the second row of \ref{fig:rbm-coke-pepsi}. The \toroe instance can then be readily solved. The complete solution sequence is $\langle Pepsi \to b, Coke \to g, Pepsi \to g, Fanta \to g \rangle$, where $b$ refers to a buffer and $g$ refers to the goal of the corresponding object.

\begin{figure}[ht!]
    \centering
    \includegraphics[width=0.8\textwidth]{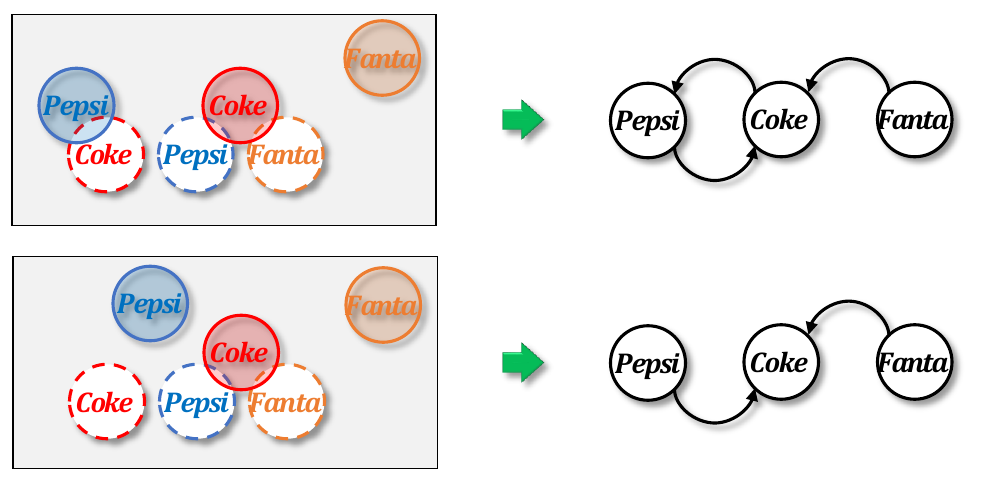}
    \caption{Two configurations of the setup given in \ref{fig:toro} and the corresponding dependency graphs.}
    \label{fig:rbm-coke-pepsi}
\end{figure}

For the example given in \ref{fig:rbm-ex-prob}[a], its labeled dependency graph \ref{fig:rbm-ex-prob}[b] shows two cycles ($2\leftrightarrow7$ and $3\leftrightarrow 5$). One planning sequence for solving this can be derived that moves $2$ and $3$ to buffer: $\langle 1\to g, 2\to b, 3 \to b, 7 \to g, 4 \to g, 5 \to g, 6 \to g, 2 \to g, 3 \to g\rangle$. 

\subsection{Unlabeled \toro with External Buffers}
In the unlabeled setting, objects are interchangeable. That is, it does not 
matter which object goes to which goal. For example, in \ref{fig:rbm-ex-prob}, 
object $5$ can move to the goal for object $6$. We call this version the 
\emph{unlabeled running buffer minimization} (\urbm) problem, which is intuitively 
easier. The plan for the unlabeled problem can be represented similarly as the 
labeled setting; we continue to use labels so that plans can be clearly represented
but do not require matching labels for start and goal poses.

For the unlabeled setting, the dependency structure remains but appears in a different form. It is now an \emph{undirected bipartite} graph. That is, $\udgg = (V_s\cup V_g, E)$ 
where each $v \in V_s$ (resp., $v \in V_g$) corresponds to a start (resp., goal) 
pose $p \in \mathcal A_s$ (resp., $p \in \mathcal A_g$). We denote the vertices representing the start and goal poses as \emph{start vertices} and \emph{goal 
vertices}, respectively. There is an edge between 
$v_s \in V_s$ and $v_g \in V_g$ if the objects at the corresponding poses 
overlap. The unlabeled dependency graph for \ref{fig:rbm-ex-prob}(a) is illustrated in \ref{fig:rbm-ex-prob}(c). 

In practice, many \toroe instances have objects with footprints that are uniform regular polygons (e.g., squares) or discs. In such settings, we can say something additional about the resulting unlabeled dependency graphs (the proofs of the two propositions below can be found in the appendix (\ref{sec:app-toroe})).

\begin{proposition}\label{p:udg-polygon}
For unlabeled \toroe where footprints of objects are uniform regular polygons, the maximum degree of the unlabeled dependency graph is upper bounded by 19. 
\end{proposition}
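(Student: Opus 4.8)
\textbf{Proof plan for Proposition~\ref{p:udg-polygon}.}

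The plan is to turn Proposition~\ref{p:udg-polygon} into an elementary packing estimate. By the bipartite structure of $\udgg$ it suffices to bound the degree of an arbitrary start vertex $v_s$ (the goal-vertex case is symmetric, exchanging the roles of $\mathcal A_s$ and $\mathcal A_g$). Let $P_0$ be the footprint of the object at $v_s$ and let $P_1,\dots,P_N$ be the footprints of the goal objects that overlap $P_0$; these are exactly the neighbors of $v_s$, so the degree of $v_s$ equals $N$. All footprints are congruent copies of one regular $k$-gon (at possibly different positions and orientations, which will not matter), and since $\mathcal A_g$ is a feasible arrangement the sets $P_1,\dots,P_N$ have pairwise disjoint interiors. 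So the statement reduces to: how many pairwise interior-disjoint congruent regular $k$-gons can meet a fixed congruent copy?

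Next I would localize and count by area. Write $R$ for the circumradius, $d_k$ for the diameter, and $A_k$ for the area of the regular $k$-gon, and let $c_0$ be the center of $P_0$. For each $i$ pick a point $p_i \in P_i \cap P_0$. Then $|p_i - c_0| \le R$ because $p_i \in P_0$, and every point of $P_i$ is within $d_k$ of $p_i$; by the triangle inequality $P_i \subseteq \bar B(c_0,\, R + d_k)$. Since the $P_i$ have disjoint interiors and each has area $A_k$, comparing areas gives $N\, A_k \le \pi (R + d_k)^2$, i.e. $N \le f(k) := \pi (R+d_k)^2 / A_k$.

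It then remains to show $f(k) \le 19$ for every $k \ge 3$, which I would split into two regimes. Using $A_k = \frac12 k R^2 \sin(2\pi/k)$, the function $k \mapsto k\sin(2\pi/k)$ is increasing on $[3,\infty)$ (its derivative equals $\sin\theta - \theta\cos\theta$ with $\theta = 2\pi/k \in (0,\pi)$, which is positive since its own derivative is $\theta\sin\theta > 0$ and it vanishes at $\theta = 0$); hence $A_k \ge A_4 = 2R^2$ for $k \ge 4$, and combined with $d_k \le 2R$ this gives $f(k) \le \pi (3R)^2/(2R^2) = 9\pi/2 < 15$ for all $k \ge 4$. For the remaining case $k = 3$, substituting $d_3 = \sqrt3\, R$ and $A_3 = \frac{3\sqrt3}{4} R^2$ yields $f(3) = \frac{4\pi(1+\sqrt3)^2}{3\sqrt3} \approx 18.05 < 19$. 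So $N \le 18$ in all cases, which is even a touch better than claimed.

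The hard part is exactly the triangle $k = 3$: the naive variant of the area argument that packs inscribed disks instead of the polygons themselves gives only $N \le (2R/r + 1)^2 = 25$, which is useless here. The fix is the one above --- pack the polygons (area $A_k$), not their inscribed disks, inside the ball $\bar B(c_0, R+d_k)$ --- after which the triangle, being the ``least round'' regular polygon, is still comfortably under $19$ while every other regular polygon is easier. If a cleaner write-up is wanted, only the $k = 3$ arithmetic is worth carrying out in detail; everything else follows from the monotonicity remark.
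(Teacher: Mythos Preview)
Your proof is correct and takes a genuinely different route from the paper's. The paper reduces to a circle-packing problem: overlapping polygons have centers within $2R$ of the fixed center, while disjoint polygons have centers at least $2r$ apart (where $r = R\cos(\pi/k)$ is the inradius), so the question becomes how many disjoint disks of radius $r$ can have centers in a disk of radius $2R$, equivalently how many radius-$r$ disks pack into a disk of radius $2R+r$; since the ratio $(2R+r)/r \le 5$ for all $k \ge 3$, the paper then invokes Fodor's result that at most $19$ unit disks fit in a radius-$5$ disk. You instead contain the overlapping polygons \emph{themselves} in $\bar B(c_0, R+d_k)$ and compare areas directly. Your approach is more elementary --- no external packing theorem is needed --- and actually yields the slightly sharper bound $N \le 18$. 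The paper's route has the advantage of tying the problem to a well-tabulated packing number, which would help if one cared about extremal configurations. Your remark that swapping the polygons for their inscribed disks in the area argument fails at $k=3$ (giving only $25$) is precisely why the paper's inscribed-disk reduction must appeal to Fodor rather than a naive area count.
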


\begin{proposition}\label{p:udg}
For unlabeled \toroe where footprints of objects are uniform discs, the dependency graph is a planar bipartite graph with a maximum degree of 5. 
\end{proposition}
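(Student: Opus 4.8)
The plan is to establish the three asserted properties---bipartiteness, maximum degree at most $5$, and planarity---essentially independently, all by elementary plane geometry. Throughout, write $r$ for the common disc radius, and recall that for discs a pose is just a center; two discs overlap precisely when their centers are at distance strictly less than $2r$, whereas feasibility of $\mathcal A_s$ (resp.\ $\mathcal A_g$) forces any two start (resp.\ goal) centers to be at distance at least $2r$. Bipartiteness is then immediate: by construction every edge of $\udgg$ joins a vertex of $V_s$ to a vertex of $V_g$, and $V_s \cap V_g = \emptyset$; the graph is moreover simple, since each start--goal pair contributes at most one edge.

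For the degree bound I would fix a start vertex and translate so its disc is centered at the origin. Its neighbors in $\udgg$ are the goal vertices whose discs overlap it, i.e.\ those with centers in the open disc $B(0,2r)$, and since $\mathcal A_g$ is feasible these centers are pairwise at distance $\ge 2r$. So the degree is bounded by the size of any set $S \subseteq B(0,2r)$ whose points are pairwise at distance $\ge 2r$, and I claim $|S| \le 5$. Suppose $|S| = 6$: no point of $S$ is the origin (else every other point would be within $2r$ of it), so the six rays from the origin to the points of $S$ split $[0,360^\circ)$ into six gaps, some gap is $\le 60^\circ$, and hence two points of $S$, at distances $a,b \in (0,2r)$ from the origin, subtend an angle $\le 60^\circ$. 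By the law of cosines their distance is at most $\sqrt{a^2+b^2-ab} \le \max(a,b) < 2r$, contradicting the pairwise bound. The symmetric argument bounds goal-vertex degrees, so every vertex of $\udgg$ has degree at most $5$; a regular pentagon of goal centers about a start center shows this is attained.

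For planarity the plan is to exhibit a concrete straight-line drawing: place each vertex at its center and draw each edge as the segment between the two centers. I would verify this has no improper crossings. First, no vertex lies on a non-incident edge: if a center $p$, distinct from both endpoints, lay on the segment from a start center $p^s$ to a goal center $p^g$ of an edge, then $\|p^s-p^g\| = \|p^s-p\| + \|p-p^g\|$, and---$p$ itself being either a start or a goal center---one of these two summands is a start--start or goal--goal distance, hence $\ge 2r$, contradicting $\|p^s-p^g\| < 2r$. Second, two edges $\{p^s,p^g\}$ and $\{q^s,q^g\}$ with four distinct endpoints cannot cross at a common interior point $z$: then $\|p^s-p^g\| = \|p^s-z\| + \|z-p^g\|$ and $\|q^s-q^g\| = \|q^s-z\| + \|z-q^g\|$, so applying the triangle inequality to the start pair $\{p^s,q^s\}$ and the goal pair $\{p^g,q^g\}$ gives $\|p^s-p^g\| + \|q^s-q^g\| \ge \|p^s-q^s\| + \|p^g-q^g\| \ge 4r$, impossible since each left-hand summand is $< 2r$. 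Third, two edges sharing an endpoint can interact badly only by overlapping along a sub-segment, which the first point excludes. Hence the straight-line drawing is a planar embedding (after an infinitesimal perturbation in the harmless case where a start center coincides with a goal center), so $\udgg$ is planar.

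I do not expect a serious obstacle; the proof is short, and what it needs is care rather than depth. The step most likely to trip one up is bookkeeping the inequalities---the strict $\|p^s-p^g\| < 2r$ defining an edge versus the non-strict $\ge 2r$ from non-overlap---so that every contradiction above is genuine and not borderline, together with being exhaustive about the degenerate configurations in the embedding (an edge through a vertex, collinear edges, a start coinciding with a goal), all of which collapse to the single observation that no non-incident vertex lies on an edge. The one genuinely quantitative ingredient is the five-point packing lemma, and even that reduces to the one-line angle estimate above.
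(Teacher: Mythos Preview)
Your proposal is correct and follows essentially the same approach as the paper: the straight-line embedding at the disc centers together with the triangle-inequality contradiction for crossing edges is exactly the paper's planarity argument, and the degree bound via the impossibility of packing six pairwise $\ge 2r$-separated points into the open disc $B(0,2r)$ is what the paper states (tersely) as ``one cylinder may non-trivially intersect at most five non-overlapping cylinders.'' You are considerably more careful than the paper about degenerate configurations (a vertex on a non-incident edge, coincident start/goal centers) and you actually justify the five-point packing lemma rather than asserting it, but the skeleton is the same.
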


For either \lrbm or \urbm, the minimum required number of running buffers is denoted as \mrb, which can be computed based on the corresponding dependency graph.

\section{Structural Analysis and NP-Hardness}\label{sec:struture}
The introduction of running buffers to tabletop rearrangement problems 
induces unique and interesting structures. 
We highlight some important structural properties of \lrbm, including 
the comparison to minimizing the total number of buffers \cite{han2018complexity}, 
the solutions of \lrbm and \emph{linear arrangement} \cite{shiloach1979minimum} or 
\emph{linear ordering} \cite{adolphson1973optimal} of its dependency graph, and the hardness of computing \mrb for \toroe. 

As will be established in this section, computing \mrb solutions for \toroe is intimately connected to finding a certain optimal linear ordering of vertices in the associated dependency graph. Since finding an optimal linear ordering is hard, it renders computing \mrb solutions hard as well. This in turn limits the algorithmic solutions that one can secure for computing \mrb solutions for \toroe tasks.

\subsection{Running Buffer versus Total Buffer}
In solving \toroe, running buffers are related to but different from the total number of buffers, as studied in \cite{han2018complexity}. 
It is shown that the minimum number of total buffers for solving \toroe is the same as the size of the minimum \emph{feedback vertex set} (FVS) of the underlying dependency graph. 
An FVS is a set of vertices the removal of which leaves a graph acyclic. A \toroe with an acyclic dependency graph can be solved 
without using any buffer because there are no cyclic dependencies between any pairs of objects. We denote the size of the minimum FVS as 
\fvs. 

As a labeled \toroe problem, the example from \ref{fig:toro} has $\fvs = \mrb = 1$. For the example from \ref{fig:rbm-ex-prob}(a)(b), $\fvs = 2$ (an FVS set is $\{2, 3\}$) and $\mrb = 2$.

As an extreme example illustrating the difference between \mrb and \fvs, consider a labeled dependency graph that is formed by $n$ copies of $2$-cycles, e.g., \ref{fig:MFVS_vs_MRB}. 
On one hand, the \fvs of the instance is $n$ because one vertex must be removed from each of the $n$ cycles to make the graph acyclic. On the other hand, 
the \mrb is just $1$ because each cyclic dependency can be resolved independently from other cycles using a single external buffer. 
Therefore, whereas the total number of buffers used has more bearing on 
global solution optimality, \mrb sheds more light on \emph{feasibility}. 
Knowing the \mrb tells us whether a certain number of external 
buffers will be sufficient for solving a class of rearrangement problems. 
This is critical for practical applications where the number of 
external buffers is generally limited to be a small constant. 
For example, in solving everyday rearrangement tasks, e.g., sorting things or retrieving items in the fridge, a human may attempt to temporarily hold multiple items, sometimes awkwardly. There is clearly a limit to the number of items that can be held simultaneously this way. 

\begin{figure}[h]
    \centering
    \includegraphics[width=0.9\textwidth]{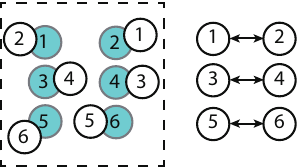}
    \caption{An instance with the labeled dependency graph formed 
by 3 copies of $2$-cycles. The \fvs is 3. On the other hand, 
the \mrb is just $1$ for the problem. The example scales to have arbitrarily large \fvs with \mrb remaining at $1$.}
    \label{fig:MFVS_vs_MRB}
\end{figure}

We give an example where the \mrb and \fvs cannot always be optimized 
simultaneously, i.e., they form a Pareto front. 
For the setup in \ref{fig:ex-mrb-fvs}, the \fvs $\{7, 9, 10\}$ has size $3$ - it can be readily checked that deleting  $\{7, 9, 10\}$ leaves the dependency graph acyclic. Using our algorithms, the \mrb can be computed to be $2$ with the witness sequence being $1, 10, 8, 4, 5, 3, 6, 7, 2, 9$, corresponding to the plan of $\langle 1 \to g, 10\to b, 8 \to g, 4 \to b, 5\to g, 3\to g, 10 \to g, 6 \to b, 7 \to g, 6 \to g, 2\to b,  9\to g, 4 \to g, 2 \to g \rangle$, where $b$ refers to a buffer and $g$ refers to the goal of the corresponding object. The RB reaches the maximum $2$ when $4$ and $6$ are moved to the buffer.
However, in this case, the total number of buffers used is $4$: $2, 4, 6$, and $10$ are moved to buffers. This turns out to be the best we can do for this example, that is,  if we are constrained by solutions with $\mrb=2$, the total number of buffers that must be used is at least $4$ instead of the \fvs size of $3$.

We note that it is rarely the case that \fvs will be larger when the solution space is constrained to \mrb solutions; for uniform cylinders, for example, the total number of buffers needed after first minimizing the running buffer is almost always the same as the \fvs size. 
\begin{figure}[h!]
    \centering
    \begin{overpic}[width=\textwidth]{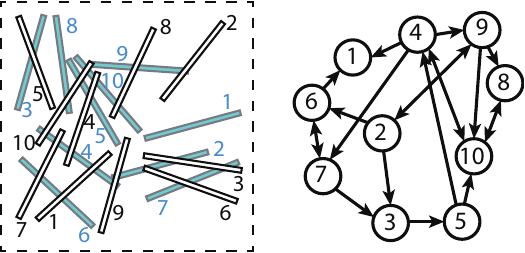}
    \end{overpic}
    \caption{An \lrbm instance with uniform thin cuboids (left) and its 
    labeled dependency graph, where the total number of buffers needed is 
    more than the size of the \fvs when the number of running buffers is 
    minimized.}
    \label{fig:ex-mrb-fvs}
\end{figure}

\subsection{Running Buffers and Linear Vertex Ordering}\label{subsec:logvrb}
Given a graph with vertex set $V$, a \emph{linear ordering} of $V$ is a bijective 
function $$\varphi: \{1, \ldots, |V|\} \to V$$ 
Given a labeled dependency graph $ \ldgg(V, A)$ for an \lrbm problem and a linear ordering $\varphi$ for $V$, we may turn $\varphi$ into a plan $P$ for the \lrbm problem by sequentially picking up objects corresponding to vertices $\varphi(1), \varphi(2), \ldots$ For each object that is picked up, it is moved to its goal pose if it has no further dependencies; otherwise, it is stored in the buffer. Objects already in the buffer will be moved to their goal pose at the earliest possible opportunity. 

For example, given the linear ordering $1, 5, 6, 3, 4, 2, 7$ for the dependency graph from \ref{fig:rbm-ex-prob}(b), first, $1$ can be directly moved to its goal. Then, $5$ is moved to the buffer because it has dependencies on $3$ and $7$ (but no longer on $1$). Then, $6$ can be directly moved to its goal because $5$ is now at a buffer location. Similarly, $3$ can be moved to its goal next. Then, $4$ and $2$ must be moved to the buffer, after which $7$ can be moved to its goal directly. Finally, $2, 4$, and $5$ can be moved to their respective goals from the buffer. This leads to a maximum running buffer size of $3$. This is not optimal; an optimal sequence is $5, 6, 2, 7, 4, 3, 1$, with $\mrb = 2$. Both sequences are illustrated in \ref{fig:lr}.
\begin{figure}[h!]
    \centering
    \begin{overpic}
    [width=\textwidth]{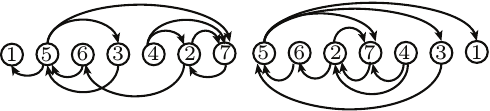}
    \put(22, -3.5){{\small (a)}}
    \put(74, -3.5){{\small (b)}}
    \end{overpic}
    \vspace{2mm}
    \caption{Two linear orderings of vertices of the labeled 
    dependency graph from \ref{fig:rbm-ex-prob}(b) (i.e., these are different representations of the same graph). The right one minimizes \mrb.}
    \label{fig:lr}
\end{figure}

The discussion suggests that we may effectively view the number of running buffers as a function of a dependency graph $ \ldgg$ and a linear ordering $\varphi$. We thus define $\rb( \ldgg, \varphi)$ as the number of running buffers needed for rearranging $ \ldgg$ following the order given by $\varphi$. It is straightforward to see that $\mrb( \ldgg) = \min_{\varphi}\rb( \ldgg, \varphi)$.

\subsection{Intractability of Computing \mrb}
Since computing \fvs is NP-hard \cite{han2018complexity}, one would expect that computing \mrb for a labeled dependency graph, which can be any directed graph, is also hard. We show that this is indeed the case, by examining the interesting relationship between \mrb and the \emph{vertex separation problem} (\vsp), which is equivalent to path width, gate matrix layout, and search number problems as described in Theorem 3.1 in \cite{diaz2002survey}, resulting from a series of studies \cite{kirousis1986searching, kinnersley1992vertex, fellows1989search}. Unless $P=NP$, there cannot be an absolute approximation algorithm for any of these problems \cite{bodlaender1995approximating}. First, we describe the vertex separation problem. 
Intuitively, given an undirected graph $G = (V, E)$, \vsp seeks a linear ordering $\varphi$ 
of $V$ such that, for a vertex with order $i$, the number of vertices that come no later than 
$i$ in the ordering, with edges to vertices that come after $i$, is minimized. 
%


\vspace{1mm}
\noindent\fbox{\begin{minipage}{0.94\textwidth}
\vspace{1mm}
\noindent
\textbf{Vertex Separation (\vsp)}\\
\noindent
\textbf{Instance}: Graph $G(V,E)$ and an integer $K$.\\
\noindent
\textbf{Question}: Is there a bijective function $\varphi: \{1, \dots, n\} \to 
V$, such that for any integer $1 \le i \le n$, 
$|\{u\in V \mid \exists (u, v) \in E \ and \ \varphi(u)\leq i < \varphi(v) \}| \leq K$?
\vspace{1mm}
\end{minipage}}
\vspace{1mm}

As an example, in \ref{fig:vsp}(a), with the given linear ordering, at the second vertex, both the first and the second vertices have edges crossing the vertical separator, yielding a crossing number of $2$. Given a graph $G$ and a linear ordering $\varphi$, we define $\vertsep(G, \varphi) := \max_i | \{u\in V \mid \exists (u, v) \in E  \ and \  \varphi(u)\leq i < \varphi(v) \} |$, \vsp seeks $\varphi$ that minimizes $\vertsep(G, \varphi)$. Let $\minvs(G)$, the vertex separation number of graph $G$, be the minimum $K$ for which a \vsp instance has a yes answer, then $\minvs(G) = \min_\varphi \vertsep(G, \varphi)$.
Now, given an undirected graph $G$ and a labeled dependency graph $\ldgg$ obtained from $G$ by replacing each edge of $G$ with two directed edges in opposite directions, we observe that there are clear similarities between $\vertsep(G, \varphi)$ and $\rb(\ldgg, \varphi)$, which is characterized by the following lemma. 

\begin{lemma}\label{l:vs-rb}
$\vertsep(G, \varphi) \leq \rb(\ldgg, \varphi) \leq \vertsep(G, \varphi) + 1$.
\end{lemma}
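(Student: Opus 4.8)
The plan is to fix a linear ordering $\varphi$ of $V$, consider the plan $P$ that $\varphi$ induces on $\ldgg$ as described above, and compare the buffer usage of $P$ against the separator sets of $\varphi$ on $G$. For $0 \le i \le |V|$ set $C_i := \{u \in V : \exists\,(u,v)\in E \text{ with } \varphi(u) \le i < \varphi(v)\}$, so that $\vertsep(G,\varphi) = \max_i |C_i|$. Because $\ldgg$ is the bidirected orientation of $G$, in $\ldgg$ object $o_u$ depends on exactly the objects $o_v$ with $\{u,v\}\in E$, and the arc $v_u\to v_v$ records that $o_u$'s goal pose overlaps $o_v$'s start pose; hence $o_u$ may be lowered at its goal exactly once every $G$-neighbour of $u$ has been picked up, and not before. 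Tracing $P$: object $o_u$ stays at its start until step $\varphi(u)$, is placed directly at its goal at step $\varphi(u)$ when all its $G$-neighbours precede it, and otherwise is deposited in a buffer at step $\varphi(u)$ and evacuated to its goal at the earliest step by which all its $G$-neighbours have been picked, namely step $\max\{\varphi(v) : \{u,v\}\in E\}$. This trace yields the key invariant I would verify: right after step $i$ of $P$ (meaning after $\varphi(i)$ has been relocated and every buffered object that can reach its goal has been moved there), the set of objects resting in buffers is exactly $\{o_u : u\in C_i\}$.

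Granting the invariant, the left inequality is immediate: at the snapshot just after step $i$ the plan holds $|C_i|$ objects in buffers, so $\rb(\ldgg,\varphi) \ge |C_i|$ for every $i$, whence $\rb(\ldgg,\varphi) \ge \max_i |C_i| = \vertsep(G,\varphi)$.

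For the right inequality I would bound the buffer occupancy within a single step. Just before $\varphi(i)$ is picked up the buffers hold $C_{i-1}$; picking $\varphi(i)$ adds at most one object (itself, possibly placed in a new buffer), bringing the count to at most $|C_{i-1}|+1$, and every subsequent action of step $i$ is a buffer-to-goal relocation that only decreases the count, with at most one object in transit at any instant since the arm carries one object at a time. Thus no moment during step $i$ exceeds $|C_{i-1}|+1$, and as $|C_{i-1}| \le \vertsep(G,\varphi)$ for all $i$, this yields $\rb(\ldgg,\varphi) \le \vertsep(G,\varphi)+1$.

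The part I expect to be delicate is the careful formalization of the invariant, not the counting. One must state the execution rule of $P$ precisely (what ``has no further dependencies'' means, and that ``move to the goal at the earliest possible opportunity'' is never obstructed — here one uses that distinct goal poses are mutually collision-free and that the external buffer space is unconstrained), and then confirm that the single-gripper restriction cannot covertly push the buffer count above $|C_i|$ by more than the single in-transit object, which is exactly the origin of the additive $+1$. With the invariant in hand, both inequalities reduce to elementary bookkeeping over prefixes of $\varphi$.
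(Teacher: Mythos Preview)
Your proposal is correct and follows essentially the same approach as the paper's proof: both identify that the objects sitting in buffers immediately after step $i$ coincide with the separator set $C_i$, giving the lower bound, and that the transient placement of $\varphi(i)$ into a buffer before evacuation can push the count to $|C_{i-1}|+1$ but no higher, giving the upper bound. Your version is more explicit about the invariant and the step-by-step bookkeeping, whereas the paper argues both directions informally via the same picture; substantively they are the same argument.
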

\begin{proof}
Fixing a linear ordering  $\varphi$, it is clear that $\vertsep(G, \varphi) \le \rb(\ldgg, \varphi)$, since the vertices on the left side of a separator with edges crossing the separator for $G$ correspond to the objects that must be stored at buffer locations. For example, in \ref{fig:vsp}(a), past the second vertex from the left, both the first and the second vertices have edges crossing the  vertical ``separator''. In the corresponding dependency graph shown in \ref{fig:vsp}(b), objects corresponding to both vertices must be moved to the external buffer. 
%
%
On the other hand, we have $\rb(\ldgg, \varphi) \leq \vertsep(G, \varphi) + 1$ because as we move across a vertex in the linear ordering, the corresponding object may need to be moved to a buffer location temporarily. 
For example, as the third vertex from the left in \ref{fig:vsp}(a) is 
passed, the vertex separator drops from $2$ to $1$, but for dealing with 
the corresponding dependency graph in \ref{fig:vsp}(b), the object 
corresponding to the third vertex from the left must be moved to the 
buffer before the first and the second objects stored in the buffer can be 
placed at their goals. 
\end{proof}

\begin{figure}[h!]
    \centering
\begin{overpic}
[width=\textwidth]{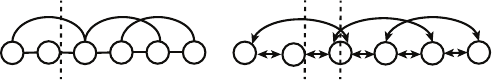}
\put(19, -3){{\small (a)}}
\put(72, -3){{\small (b)}}
\end{overpic}
\vspace{3mm}
\caption{(a) An undirected graph and a linear ordering of its vertices. (b)
A corresponding labeled dependency graph with the same vertex ordering.}
\label{fig:vsp}
\end{figure}

\begin{theorem}\label{t:mrb-hard}
Computing \mrb, even with an absolute approximation, for a labeled dependency graph is NP-hard. 
\end{theorem}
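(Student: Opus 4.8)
The plan is to reduce from the Vertex Separation problem (\vsp), exploiting the tight relationship between $\minvs$ and $\mrb$ furnished by Lemma~\ref{l:vs-rb}. The crucial input is not merely the NP-hardness of computing $\minvs$ exactly, but the stronger statement from \cite{bodlaender1995approximating} (via the equivalences summarized in \cite{diaz2002survey}) that, unless $P=NP$, \vsp admits no \emph{absolute approximation}: for every fixed constant $c \ge 0$ there is no polynomial-time algorithm that, on input $G$, returns a value within additive error $c$ of $\minvs(G)$. I would begin by recording this non-approximability statement, since it is exactly what lets the argument cross the one-buffer slack in Lemma~\ref{l:vs-rb}.

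Next I would set up the reduction already described just before Lemma~\ref{l:vs-rb}: given an undirected graph $G=(V,E)$, form the labeled dependency graph $\ldgg$ by replacing each edge $\{u,v\}$ with the two arcs $u\to v$ and $v\to u$. This map is computable in polynomial time, and since a labeled dependency graph may be an arbitrary directed graph, $\ldgg$ is a legitimate instance of the \mrb problem. Applying Lemma~\ref{l:vs-rb} to every linear ordering $\varphi$ and then minimizing over $\varphi$ (the lower bound passes through the minimizer of $\rb$, the upper bound through the minimizer of $\vertsep$), one gets
\[ \minvs(G) \;\le\; \mrb(\ldgg) \;\le\; \minvs(G)+1, \]
so $\mrb(\ldgg)$ lies in $\{\minvs(G),\,\minvs(G)+1\}$.

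The conclusion is then immediate. Suppose some polynomial-time algorithm computed $\mrb$ within additive error $c$ for a fixed constant $c\ge 0$. On input $G$, run it on $\ldgg$ and output the result as an estimate of $\minvs(G)$; by the triangle inequality and the displayed bound this estimate is within additive error $c+1$ of $\minvs(G)$, in polynomial time, contradicting the non-approximability of \vsp. Taking $c=0$ in particular rules out exact computation, which proves the theorem.

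I do not expect any single step to be a serious obstacle; rather, the one point requiring care is methodological — one must invoke the \emph{absolute-approximation} hardness of \vsp, not plain NP-hardness. Because Lemma~\ref{l:vs-rb} leaves a gap of one buffer, even knowing $\mrb(\ldgg)$ exactly does not determine $\minvs(G)$, so an "NP-hard to compute exactly" input would be too weak to transfer across the gap. Starting from the non-approximability statement absorbs the additive slack automatically and, as a bonus, delivers the "even with an absolute approximation" strengthening with no gap-amplification gadget needed.
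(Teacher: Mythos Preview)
Your proposal is correct and follows essentially the same route as the paper: build $\ldgg$ from $G$ by replacing each edge with a bidirectional arc, invoke Lemma~\ref{l:vs-rb} to sandwich $\mrb(\ldgg)$ between $\minvs(G)$ and $\minvs(G)+1$, and conclude that any additive-$c$ approximation for \mrb would yield an additive-$(c{+}1)$ approximation for \vsp, contradicting \cite{bodlaender1995approximating}. The paper phrases the approximation in terms of a witness ordering $\varphi^*$ rather than a numeric estimate, but the argument is otherwise identical.
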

\begin{proof}
Given an undirected graph $G$, we reduce from approximating \vsp within a constant to approximating \mrb within a constant for a dependency graph  $\ldgg$ from $G$ constructed as stated before, replacing each edge in $G$ as a bidirectional dependency. 

Unless $P=NP$, \vsp does not have absolute approximation in polynomial time. 
Henceforth, if $\mrb(\ldgg, \varphi)$  can be approximated within $\alpha$ in polynomial time, which means for graph $G$, we can find a $\varphi^*$ in polynomial time such that $\rb(\ldgg, \varphi^*) \leq  \mrb(\ldgg) + \alpha$, we then have $\vertsep(G, \varphi^*)\leq \rb(\ldgg, \varphi^*)  \leq \alpha + \mrb(\ldgg) \leq \minvs(G)+\alpha+1$, which shows vertex separation can have 
an absolute approximation, implying $P=NP$.
\end{proof}

With some additional effort, we can show that computing \mrb solutions for certain \toroe instances is NP-hard.

\begin{theorem}\label{t:mrb-toro-hard}
Computing \mrb for \toroe is NP-hard. 
\end{theorem}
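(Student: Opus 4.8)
The plan is to rerun the reduction from the vertex separation problem (\vsp) used in \ref{t:mrb-hard}, but now outputting a genuine geometric \toroe instance rather than an abstract labeled dependency graph. Everything hinges on a single \emph{realizability lemma}: given an arbitrary undirected graph $G=(V,E)$, construct in polynomial time a \toroe instance $\mathcal I_G$ over generalized cylinders of equal height whose labeled dependency graph $\ldgg$ is exactly the bidirectional orientation of $G$ (each edge of $G$ becoming a pair of opposite arcs). Granting this, the Proposition that $\ldgg$ fully captures the information needed to solve \toroe gives $\mrb(\mathcal I_G)=\mrb(\ldgg)$, and applying \ref{l:vs-rb} termwise and then minimizing over linear orderings yields $\minvs(G)\le \mrb(\ldgg)\le \minvs(G)+1$. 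Hence $\mrb(\mathcal I_G)$ pins $\minvs(G)$ down to one of two consecutive integers, so a polynomial-time (even additively-approximate) algorithm for \mrb on \toroe would yield an absolute approximation of \vsp, forcing $P=NP$ \cite{bodlaender1995approximating}. Thus computing \mrb for \toroe is NP-hard, in the same strong (no absolute approximation) sense as \ref{t:mrb-hard}.

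All the real work is therefore in the realizability lemma. I would use thin cuboidal footprints, one object $o_v$ per vertex. Place the start poses $x_v^s$ as small pairwise-disjoint tiles along the bottom edge of the workspace, one per \emph{column}. Make each goal pose $x_v^g$ a \emph{comb}: a thin vertical spine rising through a side/top margin, carrying a horizontal arm at a height level reserved exclusively for $v$, off of which thin \emph{teeth} drop down into precisely the columns of $v$'s neighbors, each tooth just grazing the corresponding $x_u^s$. Reserving one horizontal level and one family of vertical lanes (inside every column) per object makes the three needed properties hold essentially by construction: $\mathcal A_s$ is collision-free; distinct combs are disjoint because they use different levels and different lanes; and the tooth of $o_v$ reaching column $u$ meets $x_u^s$ and no other start pose, so the start--goal overlap pattern is exactly $E$. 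Coordinates of polynomial bit-length suffice since all features live on a grid whose size is polynomial in $|V|$.

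The main obstacle is the collision bookkeeping for $\mathcal A_g$ --- specifically, routing each spine up to its level without crossing the arms or spines of other objects, while keeping $\mathcal A_g$ feasible and the cross-arrangement intersection graph exactly $G$. A nested/staircase routing of the spines (ordering the levels to agree with the column order) eliminates spurious crossings; if short detours are cleaner, one can instead subdivide each edge of $G$ with $O(1)$ auxiliary ``relay'' objects, in which case one must additionally check that the vertex-separation value changes only in a controlled way --- e.g., invoking the near-invariance of pathwidth (equivalently $\minvs$) under edge subdivision, so that the (in)approximability of \vsp still transfers. As a fallback requiring the least arithmetic, one can restrict the reduction to a graph family on which \vsp is already NP-hard and which is transparently realizable by thin-footprint gadgets, such as bounded-degree graphs. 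In every case the argument is a routine but careful geometric translation: once \ref{t:mrb-hard} and \ref{l:vs-rb} are in hand, the theorem says precisely that the abstract hardness survives geometric encoding.
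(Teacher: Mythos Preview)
Your high-level reduction matches the paper's: realize the bidirectional orientation of a \vsp instance $G$ as the labeled dependency graph of a \toroe instance, then apply \ref{l:vs-rb} to pin down $\minvs(G)$ within $1$. The divergence is entirely in the realizability step.

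The paper does not attempt to realize arbitrary $G$. It cites \cite{monien1988min} to observe that \vsp is already NP-complete on \emph{planar} graphs of maximum degree three, then uses F\'ary's theorem \cite{istvan1948straight} to obtain a straight-line non-crossing drawing. Each vertex $v$ becomes an object whose footprint is the point $v$ together with slightly more than half of each incident straight edge; the start configuration rotates every object clockwise by a small $\varepsilon$ about its vertex and the goal configuration rotates counterclockwise. Planarity of the drawing makes both arrangements feasible, and each edge $\{u,v\}$ becomes exactly the $2$-cycle $u\leftrightarrow v$ in $\ldgg$, with no routing to verify.

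Your comb construction, aimed at arbitrary $G$, has a gap that your proposed fixes do not close. A tooth of comb $v$ in column $u$ descends from level $v$ through every lower level $k<v$, and if arm $k$ spans column $u$ at level $k$ you get a goal--goal collision; per-object sub-lanes separate teeth from other teeth but not teeth from arms, and a staircase ordering of spines does not touch this issue. The edge-subdivision fallback can in principle be made to work but needs the additional pathwidth-invariance argument you only gesture at. Your final fallback to ``bounded-degree graphs'' does not help by itself --- it is \emph{planarity}, not degree, that makes the geometric realization trivial. Once you restrict to planar input the whole obstacle evaporates, and the paper's rotation gadget gives the construction in two lines.
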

\begin{proof}
By \cite{monien1988min}, \vsp remains NP-complete for planar graphs with a maximum degree of three. 
From \ref{l:vs-rb} and \ref{t:mrb-hard} and the corresponding proofs, if we can show that we can convert an arbitrary planar graph with maximum degree three into a corresponding \toroe instance, then we are done. That is, all we need to show is that, if \ref{fig:vsp}(a) is a planar graph with maximum degree three, we can construct a \toroe instance for which the dependency graph is \ref{fig:vsp}(b). \\
We proceed to show something stronger: instead of showing the above for planar graphs with a maximum degree of three, we will do so for all planar graphs. By \cite{istvan1948straight}, all planar graphs can be drawn in the plane without edge crossings using only straight-line edges. Given an arbitrary planar graph and one of its straight-line-edge non-crossing embedding in the plane, we show how we can convert the embedding to a corresponding \toroe instance.
\begin{figure}[h]
    \centering
\begin{overpic}
[width=\textwidth]{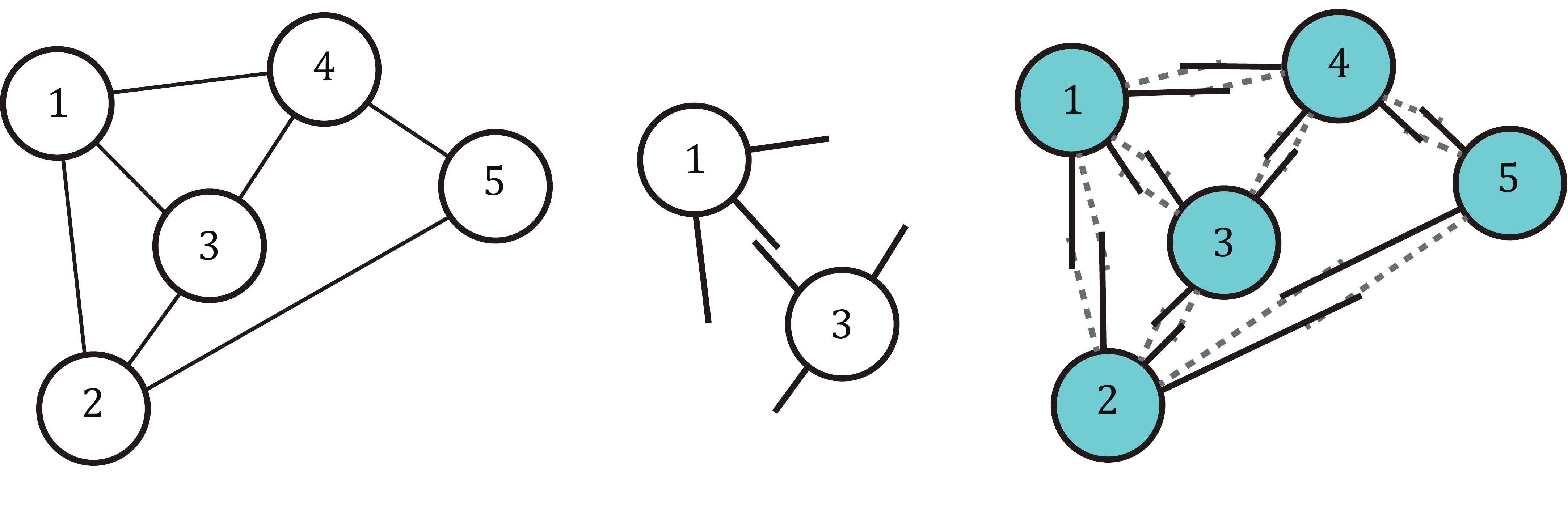}
\put(14, -3){{\small (a)}}
\put(45, -3){{\small (b)}}
\put(82, -3){{\small (c)}}
\end{overpic}
\vspace{3mm}
\caption{(a) An undirected planar graph with five vertices (b)
``Object gadgets'' for two of the vertices. (c) The corresponding \toroe instance.}
\label{fig:mrb-toro-hard}
\end{figure} 
\linebreak\indent The construction process is explained using \ref{fig:mrb-toro-hard} as an illustration, where \ref{fig:mrb-toro-hard}(a) shows a straight-line-edge non-crossing embedding of planar graph. We construct \emph{object gadgets} based on the embedding's geometric structure as follows. For each node of the graph, we take the node and a bit more than half of each of the edges coming out of it. For example, for nodes $1$ and $3$, the corresponding objects are given in \ref{fig:mrb-toro-hard}(b). We now construct a \toroe instance with a dependency graph corresponding to replacing edges of \ref{fig:mrb-toro-hard}(a) with bidirectional edges. To do so, we place each object gadget as they appear in \ref{fig:mrb-toro-hard}(a). For the goal configuration, we rotate each object \emph{counterclockwise} by some small angle $\varepsilon$ around the center of the corresponding node. For the start configuration, we perform a similar rotation but in the \emph{clockwise} direction. The process yields \ref{fig:mrb-toro-hard}(c) for  \ref{fig:mrb-toro-hard}(a). 
It is straightforward to check that the construction converts each edge $(i, j)$ in the planar graph into a mutual dependency between two objects $i$ and $j$. For example, there is a cyclic dependency between object gadgets $1$ and $3$ in \ref{fig:mrb-toro-hard}(c).
\end{proof}

\section{Lower and Upper Bounds on \mrb}\label{sec:bounds}
After connecting computing \mrb to vertex linear ordering and proving the computational intractability, we proceed to establish quantitative bounds on \mrb, i.e., what is the lowest possible \mrb for \lrbm and \urbm, and what is the best that we can do to lower \mrb? 

\subsection{Intrinsic \mrb Lower Bounds} 
When there is no restriction on object geometry, \mrb can easily reach the maximum possible $n -1$ for an $n$ object instance, even in the \urbm case. An example of when this happens is given in \ref{fig:stick}, where $n = 6$ thin cuboids are aligned horizontally in $\mathcal A_s$, one above the other. The cuboids are vertically aligned in $\mathcal A_g$. Every pair of start pose and goal pose then induces a (unique) dependency.
Clearly, this yields a bidirectional $K_6$ labeled dependency graph in the \lrbm case and a $K_{6, 6}$ unlabeled dependency graph in the \urbm case. For both, $n - 1 = 5$ objects must be moved to buffers before the problem can be resolved. The example readily generalizes to an arbitrary number of objects. 

\begin{proposition}
\mrb lower bound is $n - 1$ for $n$ objects for both \lrbm and \urbm, which is the maximum possible, even for uniform convex objects. 
\end{proposition}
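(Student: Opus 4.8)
The plan is to establish two matching bounds. The upper direction, $\mrb \le n-1$, should hold for \emph{every} $n$-object \toroe instance, labeled or unlabeled; the lower direction is witnessed by the thin-cuboid family already introduced and drawn in \ref{fig:stick}, whose dependency graph is a bidirectional $K_n$ in the labeled case and $K_{n,n}$ in the unlabeled case.

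For the upper bound I would give an explicit plan. Pick up $o_1,\dots,o_{n-1}$ in turn, placing each one at an external buffer; at that moment exactly $n-1$ buffers are in use and only $o_n$ still sits at its start pose $x_n^s$. The goal pose $x_n^g$ can be obstructed only by some \emph{other} object resting at its own start pose (external buffers lie outside $\mathcal W$, and $\mathcal A_g$ is feasible so no goal pose has yet been placed that could block it), but $o_1,\dots,o_{n-1}$ are all in buffers, so $o_n$ goes directly to $x_n^g$ with no new buffer. Then retrieve the buffered objects one at a time: when $o_j$ is pulled from its buffer, the only objects not yet at their goals are the ones still in buffers, none of them at a start pose, and $x_j^g$ cannot overlap another goal pose since $\mathcal A_g$ is feasible, so $x_j^g$ is clear. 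The same argument goes through in the unlabeled setting once an arbitrary matching of objects to goal poses is fixed. Hence the plan never uses more than $n-1$ running buffers, giving $\mrb \le n-1$.

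For the lower bound I would first verify the geometry of the cuboid instance: with sufficiently thin cuboids and enough spacing, the $n$ horizontal bars of $\mathcal A_s$ are pairwise disjoint, the $n$ vertical bars of $\mathcal A_g$ are pairwise disjoint, and every horizontal bar meets every vertical bar, so each start pose overlaps each goal pose and vice versa --- that is, $\ldgg$ is the bidirectional complete graph on $n$ vertices and $\udgg = K_{n,n}$. I would then invoke the characterization $\mrb(\ldgg) = \min_\varphi \rb(\ldgg,\varphi)$ from the previous subsection and show that for a bidirectional $K_n$ \emph{every} linear ordering $\varphi$ yields $\rb(\ldgg,\varphi) = n-1$: when object $o_{\varphi(i)}$ with $i < n$ is picked up it still depends on $o_{\varphi(n)}$, which does not leave its start pose until step $n$, so $o_{\varphi(i)}$ must be buffered and cannot be retrieved before then; consequently $o_{\varphi(1)},\dots,o_{\varphi(n-1)}$ are simultaneously in buffers right after the $(n-1)$-st pick. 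An identical count applies to $K_{n,n}$ in the unlabeled case, where no object can reach any goal pose until all $n$ start poses are vacated. Thus $\mrb = n-1$ for this family, for every $n$, matching the upper bound.

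The only parts needing care are the geometric check that thin cuboids in the stated configuration really realize $K_n$ and $K_{n,n}$, and the bookkeeping in the upper-bound plan's retrieval phase --- namely that a goal pose is obstructed only by an object at some start pose and never by another goal pose, which is exactly feasibility of $\mathcal A_g$. I do not expect a genuine obstacle: both directions are short and elementary given the linear-ordering view of \mrb already developed, and the cuboids are plainly uniform and convex, which is all the ``even for uniform convex objects'' clause requires.
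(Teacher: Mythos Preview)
Your proposal is correct and follows the same approach as the paper: the thin-cuboid construction of \ref{fig:stick} yielding a bidirectional $K_n$ (labeled) or $K_{n,n}$ (unlabeled) dependency graph, together with the observation that $n-1$ is always achievable. The paper's own treatment is considerably terser---it simply asserts in the surrounding text that ``$n-1$ objects must be moved to buffers before the problem can be resolved'' and that $n-1$ is ``the maximum possible,'' without spelling out either direction---so your explicit upper-bound plan and your linear-ordering argument for the lower bound fill in details the paper leaves to the reader.
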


\begin{figure}[h]
    \centering    \includegraphics[width=0.4\textwidth]{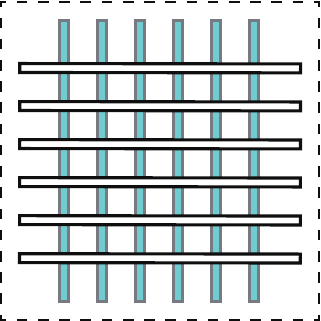}
    \caption{An instance with 6 cuboids where horizontal and vertical sets represent 
    start and goal poses, respectively. An arbitrary labeling of the objects may be 
    given in the labeled setting.}
    \label{fig:stick}
\end{figure}

The lower bound on \mrb being $\Omega(n)$ is undesirable, but it seems to depend on having objects that are thin; everyday objects are not often like that. An ensuing question of high practical value is then: what happens when the footprint of the objects is ``fat''? Next, we establish that, the lower bound drops to $\Omega(\sqrt{n})$ for uniform cylinders, which approximate many real-world objects/products. Furthermore, we show that this lower bound is tight for \urbm (in Section~\ref{subsec:upper}).

For convenience, assume $n$ is a perfect square, i.e., $n = m^2$ for some integer $m$. 
To establish the $\Omega(\sqrt{n})$ lower bound, a grid-like unlabeled dependency graph is used, which we call a \emph{dependency grid}, where $\mathcal A_s$ and $\mathcal A_g$ have similar patterns with $\mathcal A_g$ offset from $\mathcal A_s$ to the left (or right/above/below) by the length of one grid edge. An illustration of a portion of such a setup is given in \ref{fig:DependencyGrid}. We use $\D(w, h)$ to denote a dependency grid with $w$ columns and $h$ rows. 

\begin{lemma}\label{l:urbm-lower}
Given a \urbm instance with $n = m^2$ objects and whose dependency graph is $\D(m, 2m)$, its \mrb is lower bounded by $\Omega(m) = \Omega(\sqrt{n})$
\end{lemma}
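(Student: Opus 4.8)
The plan is to exhibit, for the dependency grid $\D(m,2m)$, an obstruction that forces at least $\Omega(m)$ objects to reside at buffers simultaneously in \emph{any} rearrangement plan. I would argue directly on the structure of the unlabeled dependency graph together with the fact (from the discussion in Section~\ref{subsec:logvrb}) that any plan corresponds to a linear ordering $\varphi$ of the vertices and that the number of running buffers at the ``cut'' after position $i$ is the number of start poses already picked up whose occupying relationship with an as-yet-unfilled goal is still unresolved. Concretely, at the moment exactly half the columns' worth of the $2m$ rows have been processed, I want to show that a linear fraction of the start objects cannot yet have been returned to any goal, because each goal pose in $\D(m,2m)$ overlaps several start poses (its grid neighbors under the one-edge offset), and those start poses are spread across the grid so that many of them are ``straddled'' by the cut.

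The key steps, in order: (1) Describe $\D(m,2m)$ precisely — $2m^2$ start vertices and $2m^2$ goal vertices arranged on congruent grids, with $\mathcal A_g$ shifted left by one grid edge, so that a goal vertex in the interior is adjacent in $\udgg$ to the start vertex directly ``behind'' it and to a constant number of lateral neighbors; note in particular that a goal cannot be filled until \emph{all} of its start-neighbors have been vacated, and a start object, once picked, occupies a buffer until some goal it is compatible with is vacated and assigned to it. (2) Fix an arbitrary plan / linear ordering $\varphi$ and consider the first time $t^*$ at which $m$ entire rows of start poses have been picked up (this must happen since all $2m^2$ objects are eventually moved); equivalently, look at a cut of $\varphi$ at an appropriate position. (3) Show that among those picked objects, $\Omega(m)$ of them still cannot have reached a goal: a goal in the ``active frontier'' region has a start-neighbor lying in the not-yet-processed part of the grid, so it is blocked; count the picked start objects whose every compatible goal is still blocked, and use the bounded degree (Proposition~\ref{p:udg}, degree $\le 5$) to turn ``many blocked goals'' into ``many stuck objects'' via a simple double-counting / matching argument. (4) Conclude $\rb(\udgg,\varphi)=\Omega(m)=\Omega(\sqrt n)$ for every $\varphi$, hence $\mrb=\Omega(\sqrt n)$.

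A cleaner alternative I would pursue in parallel is to lower-bound $\mrb(\D(m,2m))$ by the \emph{vertex separation number} (pathwidth) of the underlying grid-like graph: a two-dimensional grid of side $\Theta(m)$ has pathwidth $\Theta(m)$, and by the analogue of Lemma~\ref{l:vs-rb} for the unlabeled bipartite dependency graph, $\rb(\udgg,\varphi)$ is within an additive constant of the vertex-separation value of the appropriate graph along $\varphi$. Since the $2m \times m$ grid has minimum vertex separation $\Theta(m)$ regardless of the ordering, this immediately gives the bound. The reason to include the direct argument as well is that the reduction of the bipartite dependency grid to a standard grid's pathwidth needs a small amount of care about how start/goal vertices interleave.

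The main obstacle I anticipate is step (3): making the counting argument robust to an \emph{adversarial} ordering $\varphi$, i.e., ruling out clever plans that process the grid in a space-filling-curve fashion to keep the frontier thin. The resolution is the same isoperimetric phenomenon that makes grid pathwidth large — any way of building up half the grid necessarily creates a boundary of size $\Omega(m)$ between processed and unprocessed cells, and every boundary cell contributes (through its blocked goal) a stuck buffered object. Pinning down the constant and confirming that the bounded-degree bookkeeping does not lose more than a constant factor is where the real work lies; everything else is bookkeeping about the grid's adjacency under the one-edge offset.
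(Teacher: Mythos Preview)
Your isoperimetric/frontier strategy is sound in spirit but differs from the paper's proof. The paper pairs the $2m^2$ grid vertices into $m^2$ vertical pairs $p_{i,j}=(v_{i,2j-1},v_{i,2j})$, observes that the running-buffer count at any moment equals the number of \emph{cleared} pairs (start emptied, paired goal still unfilled), and then proves a local lemma: any two adjacent columns whose filled-goal counts are not both $0$ or both $m$ contain at least one cleared pair. At the instant $\lfloor n/3\rfloor$ goals are filled, a short case split on how many columns are only partially filled yields $\Omega(m)$ cleared pairs. This is a self-contained column-by-column count, not an appeal to grid isoperimetry or pathwidth.

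Two cautions on your write-up. In step~(2), ``the first time $m$ entire rows of start poses have been picked up'' need not occur under an adversarial ordering; cut instead at the moment a fixed fraction of the starts (or goals) have been processed, which always exists. More importantly, in step~(3) the phrase ``picked start objects whose every compatible goal is still blocked'' is ill-posed for the \emph{unlabeled} problem: every goal is compatible with every object, so that set is empty unless \emph{all} goals are blocked. The quantity you must bound is simply $|S|-|\{g:N(g)\subseteq S\}|$ for the set $S$ of picked starts, and what you actually need is the grid vertex-isoperimetric inequality showing this is $\Omega(m)$ whenever $|S|$ is a constant fraction of $m^2$. That inequality holds and would finish your argument, but it is not what you wrote, and the bounded-degree ``double-counting/matching'' you sketch does not substitute for it. Finally, your pathwidth alternative is viable, but Lemma~\ref{l:vs-rb} is stated for the labeled construction (bidirectional edges); an unlabeled analogue relating the running buffer on $\udgg$ to the vertex separation of the underlying grid would have to be stated and proved separately.
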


Because the proof has limited relevance to the delivery of the main contributions of the paper, we highlight the main idea and refer the readers to the appendix for the complete proof. What we show is that, for certain unlabeled \toroe instance with $n$ objects with such a dependency graph, at least $\Omega(\sqrt{n})$ objects must be moved to buffers simultaneously to solve the \toroe instance.

\begin{figure}[h!]
    \centering
    \includegraphics[width=\textwidth]{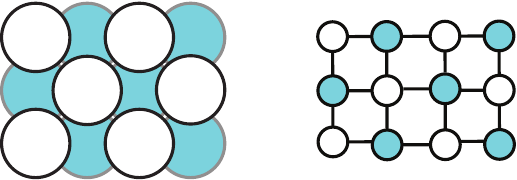}
    \caption{A \urbm instance (left) and its unlabeled dependency graph (right), a 
    $4\times 3$ dependency grid. Unshaded (resp., shaded) discs/vertices indicate start poses (resp., goal poses).}
    \label{fig:DependencyGrid}
\end{figure}

Because \urbm  \  instances always have lower or equal \mrb than the \lrbm  \  instances with the same objects and goal placements, the conclusion of \ref{l:urbm-lower} directly applies to \lrbm. Therefore, we have 

\begin{theorem}
For both \urbm and \lrbm with $n$ uniform cylinders, \mrb is lower bounded by $\Omega(\sqrt{n})$.
\end{theorem}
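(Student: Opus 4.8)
The plan is to derive the theorem almost immediately from \ref{l:urbm-lower}, with two small bridging observations. First I would instantiate a concrete family of \toroe instances: for $n = m^2$, place $m^2$ uniform cylinders so that $\mathcal A_s$ forms an $m \times m$ block of a regular grid and $\mathcal A_g$ is the same block shifted left by one grid edge, choosing the disc radius and grid spacing so that each goal disc overlaps exactly the start disc it slides onto (and no others). This is precisely the realization of the dependency grid $\D(m, 2m)$ depicted in \ref{fig:DependencyGrid}, and it certifies that such instances are geometrically legal with uniform round footprints. \ref{l:urbm-lower} then gives $\mrb = \Omega(m) = \Omega(\sqrt{n})$ for the \urbm version of this instance.

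Next I would transfer the bound to \lrbm. Fix any bijective labeling matching the $i$-th start pose to the $i$-th goal pose. Any plan for the resulting labeled instance is in particular a valid plan for the unlabeled instance on the same poses (it merely forgoes the freedom to permute goals), so its maximum running-buffer count is at least the \mrb of the unlabeled instance. Hence the labeled \mrb is also $\Omega(\sqrt{n})$, i.e., every labeling of this geometric instance yields a hard \lrbm instance. Finally, for $n$ not a perfect square, take $m = \lfloor \sqrt{n} \rfloor$, build the grid instance on $m^2$ cylinders, and pad with $n - m^2$ extra uniform cylinders placed in isolated positions with coincident start and goal poses; these introduce no dependencies and leave \mrb unchanged, while $m = \Theta(\sqrt{n})$, so the $\Omega(\sqrt{n})$ bound persists for all $n$.

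The substance of the argument is entirely inside \ref{l:urbm-lower}, so the real obstacle is that lemma rather than the theorem proper. The hard part is showing that the offset dependency grid cannot be rearranged while keeping fewer than $\Omega(m)$ objects in buffers simultaneously: one must argue that every linear ordering of the bipartite grid's poses induces, at some prefix, a set of $\Omega(m)$ already-picked start discs whose overlapping goal pose is still occupied (equivalently, whose target is not yet free) — a running-buffer analogue, adapted to the bipartite unlabeled dependency structure, of the vertex-separation correspondence in \ref{l:vs-rb}. I would prove this via a boundary/potential argument on the $m \times 2m$ grid graph: as poses are consumed in the given order, track the ``frontier'' of partially processed columns; since the grid has no sublinear separator along any sweep direction, the frontier — and hence the number of objects simultaneously parked in buffers — must reach size $\Omega(m)$ for some prefix. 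Making this counting argument clean (handling both the unlabeled matching freedom and the $+1$ slack analogous to \ref{l:vs-rb}) is the step that requires care.
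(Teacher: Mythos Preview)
Your approach is correct and mirrors the paper's: invoke \ref{l:urbm-lower} for the \urbm bound on the dependency-grid instance, then transfer to \lrbm via the observation that any labeled plan is also a valid unlabeled plan on the same poses (so $\mrb_{\lrbm} \ge \mrb_{\urbm}$). Your padding for non-square $n$ and the explicit geometric realizability check are refinements the paper simply elides, and your sketched frontier/separator argument for \ref{l:urbm-lower} differs from the paper's actual column-counting proof in the appendix (which examines cleared vertices in adjacent columns at the moment $\lfloor n/3\rfloor$ goals are filled), but the derivation of the theorem itself from the lemma is identical.
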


For uniform cylinders, while the lower bound on \urbm is tight (as shown in Section~\ref{subsec:upper}), we do not know whether the lower bound on \lrbm is tight; our conjecture is that $\Omega(\sqrt{n})$ is not a tight lower bound for \lrbm. Indeed, the $\Omega(\sqrt{n})$ lower bound can be realized when uniform cylinders are simply arranged on a cycle, an illustration of which is given in \ref{fig:lrbm-cycle}. 
For a general construction, for each object $o_i$, let $o_i$ depend on $o_{(i-1\mod n)}$ and $o_{(i+\sqrt{n}\mod n)}$, where $n$ is the number of objects in the instance. From the labeled dependency graph, we can construct the actual \lrbm instance where start and goal arrangements both form a cycle. 
It can be shown that when $n/2$ objects are at the goal poses, $\Omega(\sqrt{n})$ objects are 
at the buffer. We omit the proof, which is similar in spirit to that for \ref{l:urbm-lower}. 
\begin{figure}[h!]
    \centering
    \includegraphics[width=\textwidth]{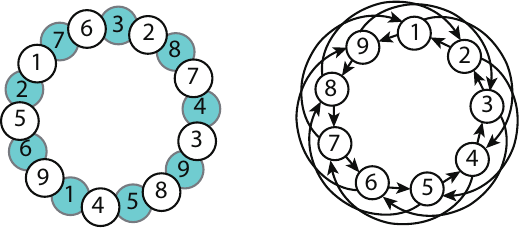}
    \caption{An example of a 9-object \lrbm yielding $\Omega(\sqrt{n})$ \mrb (left)
    and the corresponding dependency graph (right).}
    \label{fig:lrbm-cycle}
\end{figure}

\subsection{Upper Bounds on \mrb  \  for  \urbm}\label{subsec:upper}
We now establish, regardless of how $n$ uniform cylinders are to be 
rearranged, the corresponding \urbm instance admits a solution using 
$\mrb = O(\sqrt{n})$. Lower and upper bounds on \urbm agree;
therefore, the $O(\sqrt{n})$ bound is tight. 

To prove the upper bound, We propose an $O(n\log(n))$-time algorithm \spp for the 
setting based on a vertex separator of $\udg$.
\spp computes a sequence of goal vertices to be removed from the dependency graph.
Given a sequence of goal vertices to be removed, 
the running buffer size at each moment equals
$
\max(0,\|N(g,\udg)\|-\|g\|)
$,
where $g$ is the set of removed goal vertices at this moment,
and $N(g,\udg)$ is the set of neighbors of $g$ in $\udg$.
We prove that \spp can find a rearrangement plan 
with $ O(\sqrt{n})$ running buffers.

\begin{algorithm}
\begin{small}
    \SetKwInOut{Input}{Input}
    \SetKwInOut{Output}{Output}
    \SetKwComment{Comment}{\% }{}
    \caption{ \spp}
		\label{alg:spp}
    \SetAlgoLined
		\vspace{0.5mm}
    \Input{$\udg(V,E)$: unlabeled dependency graph}
    \Output{$\pi$: goal sequence}
		\vspace{0.5mm}
		$\pi,V,E\leftarrow$ RemovalTrivialGoals($\udg(V,E)$)\\
        \lIf{$V$ is $\emptyset$}{
        \Return $\pi$
        }
        $A, B, C \leftarrow$ Separator($\udg(V,E)$)\\
        $\pi \leftarrow \pi + g(C)$\\
        $A' \leftarrow A-N(g(C),\udg(V,E))$\\
        $B' \leftarrow $\ $B$$-N(g(C),\udg(V,E))$\\
		$\pi_{A'}\leftarrow $ \spp($\udg(A',E(A'))$)\\
		$\pi_{B'}\leftarrow $ \spp($\udg(B',E(B'))$)\\
		\lIf{ $\delta(A')\geq \delta(B')$}{
		    $\pi\leftarrow \pi + \pi_{A'} + \pi_{B'}$
		}
		\lElse{
		$\pi\leftarrow \pi + \pi_{B'} + \pi_{A'}$
		}
		\Return $\pi$\\
\end{small}
\end{algorithm}

The algorithm is presented in \ref{alg:spp}. 
\spp consumes a graph $\udg(V,E)$, which is a subgraph of $\udg$ induced by vertex set $V$.
To start with, the isolated goal vertices or those with only one dependency in $\udg(V,E)$ can be removed without using buffers (Line 1).
After that, $V$ can be partitioned into three disjointed subsets $A$, $B$ and $C$ \cite{lipton1979separator} (Line 3), 
such that there is no edge connecting vertices in $A$ and $B$, $|A|,|B|\leq 2|V|/3$, and $|C|\leq 2\sqrt{2|V|}$ (\ref{fig:separator}(a)). 
For the start vertices in $C$ and the neighbors of the 
goal vertices in $C$, we remove them from $\udg$. 
Since there are at most $5$ neighbors for each goal vertex, there are at most $10\sqrt{2|V|}$ objects moved to the buffer in this operation. 
After that, we remove the goal vertices 
in $C$,
which should be isolated now (Line 4). 
Function $g(\cdot)$ obtains the goal vertices in a given vertex set.
Let $A'$, $B'$ be the remaining vertices in $A$ and $B$ (Line 5-6). 
 And correspondingly, let $C'$ be the removed vertices, i.e., $C':=(A\bigcup B \bigcup C)\backslash (A'\bigcup B')$.
Function $N(\cdot,\cdot)$ obtains the neighbors of a vertex set in a given dependency subgraph.
With the removal of  $C'$ from $\udg$, $A'$ and $B'$ form two independent subgraphs (\ref{fig:separator}(b)). 
We can deal with the subgraphs one after the other by 
recursively calling \spp(\ref{fig:separator}(c)) (Line 7-8). 
Let $\delta(V'):= |g(V')|-|s(V')|$ where $g(V')$ and $s(V')$ are the goal and start vertices in a vertex set $V'$ respectively. 
Between vertex subsets $A'$ and $B'$, we prioritize the one with larger $\delta(\cdot)$ value(Line 9-10).  That is because, after solving a rearrangement subproblem induced by a vertex set $V'$, there is $\delta(V')$ fewer objects in buffers or $\delta(V')$ more available goal poses.

\begin{figure}[h!]
    \centering
\begin{overpic}
[width=\textwidth]{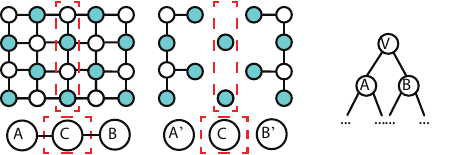}
\put(12,-5){(a)}
\put(46,-5){(b)}
\put(82,-5){(c)}
\end{overpic}
\vspace{4mm}
    \caption{The recursive solver \spp for \urbm. (a) A $O(\sqrt{|V|})$ vertex separator for the planar dependency graph. (b) By removing the start vertices in $C$ and the neighbors of the goal vertices in $C$, the remaining graph consists of two independent subgraphs and isolated goal poses in $C$. (c) The problem can be solved by recursively calling \spp.}
    \label{fig:separator}
\end{figure}

As shown in detail in the appendix, \spp guarantees an \mrb upper bound of  $\dfrac{20}{1-\sqrt{2/3}}\sqrt{n}$.

\begin{theorem}\label{t:urbm-upper}
For \urbm with $n$ uniform cylinders, a polynomial time algorithm can compute a
plan with $O(\sqrt{n})$ \rb, which implies that \mrb is  bounded by $O(\sqrt{n})$.
\end{theorem}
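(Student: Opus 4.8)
The plan is to prove the bound by directly analyzing the recursive algorithm \spp (\ref{alg:spp}) together with the buffer characterization already recorded in the excerpt. Following a goal sequence $\pi$, at the moment the prefix $g$ of $\pi$ has been removed the running-buffer count is $\max(0,\|N(g,\udg)\|-\|g\|)$: every start vertex adjacent to an already-removed goal must have been picked up, exactly $\|g\|$ goals have been filled, and in the unlabeled setting a buffered object is re-placed at the first vacated goal. Hence $\rb(\udg,\pi)$ is the maximum of this quantity over prefixes $g$ of $\pi$, and it suffices to exhibit one $\pi$ with $\rb(\udg,\pi)=O(\sqrt n)$; \spp is designed to produce such a $\pi$.

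The second step invokes the planar structure. By \ref{p:udg}, $\udg$ is planar with maximum degree $5$, so the Lipton--Tarjan planar separator theorem partitions the vertices of any $k$-vertex subgraph into $A,B,C$ with no edges between $A$ and $B$, $|A|,|B|\le 2k/3$, and $|C|\le 2\sqrt{2k}$, computable in linear time. In the separator step of \spp we pick up the $\le|C|$ start vertices of $C$ together with the $\le 5\,\|g(C)\|\le 5|C|$ neighbors of the goal vertices of $C$ and then fill the now-isolated goals of $C$; this adds only $O(\sqrt k)$ running buffers and disconnects the remaining graph into the independent parts $A'$ and $B'$, which are then recursed on.

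The third and central step is the recurrence. Let $F(k)$ denote the worst-case running-buffer count incurred by a call of \spp on a $k$-vertex subgraph, measured above the buffer level present when the call begins. The separator step contributes a one-time term $c\sqrt k$ with $c=O(1)$. For the two recursive calls, of sizes at most $2k/3$, the key observation---already flagged in the text---is that finishing the subproblem on a vertex set $V'$ changes the ambient buffer level by exactly $-\delta(V')=\|s(V')\|-\|g(V')\|$, so by handling whichever of $A',B'$ has the \textbf{larger} $\delta$ first the buffer ``credit'' released there absorbs part of the transient demand of the second call, and the two contributions do not simply add. Carrying this bookkeeping through yields a recurrence of the form $F(k)\le c\sqrt k + F(2k/3)$, whose unrolling is a convergent geometric series, $F(n)\le c\sqrt n\sum_{i\ge 0}(\sqrt{2/3})^{\,i}=\frac{c}{1-\sqrt{2/3}}\sqrt n=O(\sqrt n)$, which reproduces the bound $\frac{20}{1-\sqrt{2/3}}\sqrt n$ of the appendix when $c=20$. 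Since the recursion has $O(\log n)$ depth with $O(n)$ total work per level, \spp runs in $O(n\log n)$ time, which proves the theorem; combined with \ref{l:urbm-lower} it also shows the $\Theta(\sqrt n)$ bound for \urbm is tight.

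The step I expect to be the main obstacle is making the recurrence rigorous: the running-buffer peak has to be tracked along the \emph{concatenated} plan $g(C)+\pi_{A'}+\pi_{B'}$, so the separator contribution, the ambient buffer level inherited by each recursive call, and the $\delta$-credit released upon finishing the first subproblem must be combined at exactly the right moments. One must verify that the ``larger-$\delta$-first'' ordering genuinely prevents the recursive peaks from stacking up additively, since otherwise the bound would degrade by a $\log n$ factor; that careful accounting is what is deferred to the appendix.
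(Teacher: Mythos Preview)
Your overall plan matches the paper's: same algorithm \spp, same planar separator (via \ref{p:udg}), same larger-$\delta$-first ordering, and the same geometric series conclusion. The gap is precisely where you suspected it would be, and it is real: the recurrence $F(k)\le c\sqrt{k}+F(2k/3)$ on the \emph{relative} peak does not hold.

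The quantity $F(k)$, as you define it (worst-case peak above the entering level, over all $k$-vertex subgraphs arising in the recursion), can be $\Theta(k)$ rather than $O(\sqrt{k})$. A recursive subgraph $V'$ can have $\delta(V')$ as negative as $-\Theta(|V'|)$ (many more starts than goals), in which case the buffer level \emph{must} rise by $|\delta(V')|$ across that call, so the relative peak is $\Omega(|V'|)$ regardless of how cleverly the call proceeds. The $\delta$-ordering does not bound the entering level of the second call by $L_0+c\sqrt{k}$; it only ensures that level is as low as possible among the two orderings, which can still be $L_0+\Theta(k)$ above the \emph{generalized} (signed) entering level. What saves the bound is that whenever the entering level of a subcall is high relative to $L_0$, it is only because $L_0$ itself is correspondingly negative (many vacant goals), so the \emph{absolute} buffer count stays small.

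The paper's appendix handles exactly this by abandoning the relative recurrence and instead tracking the absolute signed level $RB(\pi)$ at three checkpoints per recursion-tree node: the separator peak $RB(\pi_{C^*})$, the level after the first subcall $RB(\pi_{A'})$, and the exit level $RB(\pi_{B'})$. The larger-$\delta$-first choice is used to prove the averaging inequality
\[
RB(\pi_{A'}) \le \tfrac{1}{2}\bigl(RB(\pi_{C^*})+RB(\pi_{B'})\bigr),
\]
and one then inducts on the \emph{depth} $d$ of the recursion tree (not on subproblem size), using that $RB(\pi_{B'})$ of a child equals one of the parent's already-bounded checkpoints. This yields the uniform bound $\frac{1-(\sqrt{2/3})^{d+1}}{1-\sqrt{2/3}}\cdot 20\sqrt{n}$ on all checkpoints at depth $d$, hence the claimed $\frac{20}{1-\sqrt{2/3}}\sqrt{n}$ overall. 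So the ``careful accounting'' you deferred is not a matter of tightening your recurrence but of replacing the relative invariant by an absolute one and inducting on depth.
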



\section{Algorithms for \toroe}\label{sec:algorithms}
In this section, several algorithms for computing solutions for \toroe tasks under various \mrb objectives are described. The hardness result (\ref{t:mrb-toro-hard}) suggests that the challenge in computing \mrb solutions for \toroe resides with filtering through an exponential number of vertex linear orderings of the dependency graph. This leads to our algorithms being largely search-based, which are enhanced with many effective heuristics.

We first describe a dynamic programming-based method for 
\lrbm (\ref{subsec:dp}).
Then, we propose a priority queue-based method in \ref{subsec:pqs} for \urbm. 
Furthermore, a significantly faster 
depth-first modification of DP for computing \mrb is provided in 
\ref{subsec:dfsdp}.
Finally, we also developed an integer
programming model, denoted $\ilpmrb$, for computing the minimum total number 
of buffers needed subject to the \mrb constraint, which is compared 
with the algorithm that computes the minimum total buffer without 
the \mrb constraint, denoted as $\ilpfvs$, from \cite{han2018complexity}.
A brief description of $\ilpmrb$ is given in \ref{subsec:ilp}.

\subsection{Dynamic Programming (DP) for \lrbm}\label{subsec:dp}
As established in \ref{subsec:logvrb}, a rearrangement plan in \lrbm can be represented as a linear ordering of object labels.
That is, given an ordering of objects, $\pi$, we start with $o_{\pi(1)}$. 
If $x^g_{\pi(1)}$ is not occupied, then $o_{\pi(1)}$ is directly moved there. Otherwise, it is moved to a buffer location. We then continue with the second object in the order, and so on. After we work with each object in the given order, we always check whether objects in buffers can be moved to their goals, and do so if an opportunity is present. 
We now describe a \emph{dynamic programming} (DP) algorithm for computing such a linear ordering that yields the \mrb. 




The pseudo-code of the algorithm is given in \ref{alg:dp}. The algorithm maintains 
a search tree $T$, each node of which represents an arrangement where a set of objects
$S$ have left the corresponding start poses. We record the objects currently at the 
buffer ($T[S]$.b) and the minimum running buffer from the start arrangement 
$\mathcal{A}_s$ to the current arrangement ($T[S].\mrb$). The DP starts with an empty 
$T$. We let the root node represent $\mathcal A_s$ (Line 1). At this moment, there is
no object in the buffer and the \mrb is 0(Line 2-3). And then we enumerate all the arrangements with $|S|=$ 1, 2, $\cdots$ and finally $n$(Line 4-5). For arbitrary $S$, 
the objects at the buffer are the objects in $S$ whose goal poses are still occupied by 
other objects (Line 6), i.e., $\{o\in S | \exists o' \in \mathcal{O} \backslash S, 
(o, o')\in A\}$, where $A$ is the set of arcs in $\ldg$. $T[S].\mrb$, the minimum running
buffer from the root node $T[\emptyset]$ to $T[S]$, depends on the last object $o_i$ 
added into $S$ and can be computed by enumerating $o_i$ (Line 7-20):
$$
\begin{array}{l}
    T[S].\mrb = \displaystyle\min_{o_i\in S}\max( T[S\backslash \{o_i\}].\mrb, \\ 
    \qquad\qquad\qquad\qquad\qquad |T[S\backslash \{o_i\}].b|\ +\text{TC}(S\backslash \{o_i\},S)),
\end{array}
$$
where the \emph{transition cost} TC is given as 
$$
\text{TC}(S\backslash \{o_i\},S)=\begin{cases}1,\  o_i \in T[S].b,\\ 
0,\  otherwise,\end{cases}
$$
with $x^g_i$ currently occupied (Line 10), the transition cost is due to objects 
dependent on $o_i$ cannot be moved out of the buffer before moving $o_i$ to the 
buffer (Line 11). 
 Specifically, $T[S\backslash \{o_i\}].\mrb$ is the previous $\mrb$ and $|T[S\backslash \{o_i\}].b|\ +\text{TC}(S\backslash \{o_i\},S)$ is the running buffer size in the new transition. 
If $T[S].\mrb$ is minimized with $o_i$ being the last object in 
$S$ from the starts, then $T[S\backslash \{o_i\}]$ is the parent node of $T[S]$ 
in $T$ (Line 16-19). Once $T[\mathcal O]$ is added into $T$, $T[\mathcal{O}].\mrb$ 
is the \mrb of the instance (Line 23) and the path in $T$ from $T[\emptyset]$ to
$T[\mathcal{O}]$ is the corresponding solution to the instance.

\begin{algorithm}[h!]
\begin{small}
    \SetKwInOut{Input}{Input}
    \SetKwInOut{Output}{Output}
    \SetKwComment{Comment}{\% }{}
    \caption{Dynamic Programming for \lrbm}
		\label{alg:dp}
    \SetAlgoLined
		\vspace{0.5mm}
    \Input{$ \ldgg(\mathcal{O},A)$: labeled dependency graph}
    \Output{$\mrb$: the minimum number of running buffers}
		\vspace{0.5mm}
		$T.root \leftarrow \emptyset$ \\
		\vspace{0.5mm}
		$T[\emptyset].b \leftarrow\emptyset$ \Comment{{\footnotesize objects currently at the buffer}}
		\vspace{0.5mm}
		$T[\emptyset].\mrb \leftarrow 0$ \Comment{{\footnotesize current minimum running buffer}}
		\vspace{0.5mm}
		\For{$1\leq k\leq |\mathcal{O}|$}{
		\Comment{\footnotesize{enumerate cases where $k$ objects have left the start poses}}
		\For{$S\in \text{k-combinations of } \mathcal{O}$}{
		$T[S].b\leftarrow \{o\in S \mid \exists o' \in \mathcal{O} \backslash S, (o,o')\in A\}$\\
		\Comment{{\footnotesize Find the \mrb from $T[\emptyset]$ to $T[S]$}}
		$T[S].\mrb  \leftarrow \infty$\\
		\For{$o_i \in S$}{
		$parent = S\backslash \{o_i\}$\\
		\If{$o_i\in T[S].b$}{
		$RB \leftarrow \max(T[parent].\mrb$, $|T[parent].b|+1$)
		}
		\Else{
		$RB \leftarrow  T[parent].\mrb$ 
		}
		\If{RB$<T[S].\mrb$}{
		$T[S].\mrb \leftarrow RB$\\
		$T[S].parent \leftarrow$  $parent$
		}
		}
		}
		}
		\vspace{0.5mm}
		\Return $T[\mathcal{O}]$.\mrb\\
\end{small}
\end{algorithm}

For the \lrbm instance in \ref{fig:rbm-ex-prob}, \ref{Tab:DP} shows $T[S].\mrb$  with different last-object options when $S=\{o_2, o_5, o_6\}$. If the last object $o_i$ is $o_5$, then we need to move $o_5$ into the buffer before moving $o_6$ out of the buffer. Therefore, even though the buffer size of the parent node and the current node are both 2, there is a moment when all three objects are at the buffer. However, when we choose $o_2$ or $o_6$ as the last object to add, the $T[S].\mrb$ becomes 2.
\begin{table}[h!]
    \caption{\label{Tab:DP}$T[S].\mrb$ for different last objects ($[p]= [parent]$)}
    \centering
    \resizebox{0.5\textwidth}{!}{
    \begin{tabular}{p{4.0em}|p{4.0em}|p{3.4em}|p{3.4em}|p{3.8em}}
        \hline
        \small{Last object} & \small{$T[p].\mrb$} & \small{$T[p].b$} & \small{$T[S].b$} & \small{$T[S].\mrb$}\\
        \hline
        $o_2$ & 1 & \{$o_5$\} & \{$o_2$, $o_5$\} & 2\\
        $o_5$ & 2 & \{$o_2$, $o_6$\} & \{$o_2$, $o_5$\} & 3\\
        $o_6$ & 2 & \{$o_2$, $o_5$\} & \{$o_2$, $o_5$\} & 2\\
    \end{tabular}
    }
\end{table}

\subsection{A Priority Queue-Based Method for \urbm}\label{subsec:pqs}
Similar to \lrbm, rearrangement plans for \urbm can be represented by a linear ordering of goal vertices in $\udg$. 
We can compute the ordering that yields \mrb by maintaining a search tree as we have done in \ref{alg:dp}. 
Each node $T[g]$ in the tree represents an arrangement where a set of goal vertices $g$ have been removed from $\udg$. 
The remaining dependencies of $T[g]$ is an induced graph of $\udg$, 
formed from $V(\udg)\backslash (g\cup N(g,\udg))$ where $V(\udg)$ is the vertex set of $\udg$, $N(g,\udg)$ is the neighbors of $g$ in $\udg$. 
 When the goal vertices $g$ are removed from $\udgg$, the running buffer size is $max(0,|N(g)|-|g|)$,  i.e., the number of objects cleared away minus the number of available goal poses. Specifically, when $|N(g)|>|g|$, some objects are in buffers; when $|N(g)|<|g|$, some goal poses are available; when $|N(g)|=|g|$, the cleared objects happen to fill all the available goal poses. 
Given an induced graph $I(g)$, denote the goal vertices with no more than one neighbor in $I(g)$ as \emph{free goals}. 
We make two observations. 
First, given an induced graph $I(g)$, we can always prioritize the removal of free goals without optimality loss.
Second, multiple free goals may appear after a goal vertex is removed. For example, in the instance shown in \ref{fig:rbm-ex-prob}(a), when the vertex representing $x^5_g$ is removed, $x^2_g$, $x^3_g$, and $x^4_g$ become free goals and can be added to the linear ordering in an arbitrary order.
 Denote nodes without free goals in the induced graph as \emph{key nodes}. 
In conclusion, the  key nodes
in the search tree may be sparse and enumerating  all possible nodes  with DP carries much overhead. 

As such, instead of exploring the search tree layer by layer like DP, we maintain 
a sparse tree with a priority queue $Q$. While each node still represents an 
arrangement, each edge in the tree represents  actions to clear out the next goal pose $x_i^g$. The corresponding child node of the edge represents the arrangement where $x_i^g$ and free goals are removed from the induced graph. 
%
 Similar to A*, we always pop out the key node with the smallest \mrb in $Q$. 
We denote this priority queue-based search method \PQS
.

\subsection{Depth-First Dynamic Programming}\label{subsec:dfsdp}
Both \lrbm and \urbm can be viewed as solving a series of decision problems, i.e., 
asking whether we can find a rearrangement plan with $k= 1, 2, \ldots$ running buffers. 
As dynamic programming is applied to solve such decision problems, instead of 
performing the more standard breadth-first exploration of the search tree, we 
identified that a depth-first exploration is much more effective. 
We call this variation of dynamic programming \DFSDP, which is a fairly 
straightforward alteration of a standard DP procedure. 
%
The high-level structure of \DFSDP is described in \ref{alg:dfdp}. 
It consumes a dependency graph and returns the minimum running buffer size for the corresponding instance. 
Essentially, \DFSDP fixes the running buffer size RB and checks whether there is a plan requiring 
no more than RB running buffers. As the search tree (see \ref{subsec:dp}) 
is explored, depth-first exploration is used instead of breadth-first (Line 4). 

\begin{algorithm}[ht!]
\begin{small}
    \SetKwInOut{Input}{Input}
    \SetKwInOut{Output}{Output}
    \SetKwComment{Comment}{\% }{}
    \caption{ Dynamic Programming with Depth-First Exploration}
		\label{alg:dfdp}
    \SetAlgoLined
		\vspace{0.5mm}
    \Input{$ \ldgg$: labeled dependency graph}
    \Output{RB: the minimum number of running buffers}
		\vspace{0.5mm}
		RB$\leftarrow 0$\\
		\While{not time exceeded}{
		 $T$.root $ \leftarrow \emptyset$ \\
		$T\leftarrow$ Depth-First-Search($T$,$\emptyset$, RB, $ \ldgg$)\\
		\lIf{ $\mathcal O \in T$}{\Return RB}
		\lElse{RB+=1}
		}
\end{small}
\end{algorithm}

The details of the depth-first exploration are shown in \ref{alg:dfs}.
Similar to DP, each node in the tree represents an object state indicating whether an object is at the start pose, the goal pose, or external buffers.
And as described in \ref{subsec:logvrb}, essential object states can be represented by the set of objects $S$ that have picked up from start poses.
If $S$ is $\mathcal O$, then all the objects are at the goal poses and we find a path on the search tree from the start state to the goal state (Line 1).
Given the set of objects away from the start poses $S$, we can get the sets of objects at start poses $S_S$, goal poses $S_G$, and external buffers $S_B$ (Line 2). Specifically, $S_S=\mathcal O\backslash S$, $S_G$ are the objects in $S$ that have no dependency in $S_S$, and $S_B$ are the objects in $S$ that have dependencies in $S_S$.
We further explore child nodes of the current state by moving one object away from the start pose (Line 3).
By checking the dependencies in $ \ldgg(\mathcal{O},A)$, we determine whether this object should be moved to buffers or its goal pose (Line 4).
The transition from $S$ to $S\bigsqcup \{o\}$ fails if the external buffers are overloaded or the child node has been explored (Line 5).
Otherwise, we can add $S\bigsqcup \{o\}$ into the tree (Line 7) and explore the new node (Line 8).
The algorithm returns a tree.

\begin{algorithm}[ht!]
\begin{small}
    \SetKwInOut{Input}{Input}
    \SetKwInOut{Output}{Output}
    \SetKwComment{Comment}{\% }{}
    \caption{Depth-First-Search}
		\label{alg:dfs}
    \SetAlgoLined
		\vspace{0.5mm}
    \Input{ $T$: Search Tree; $S$: The set of objects away from start poses; $RB$: Running buffer size; $ \ldgg(\mathcal{O},A)$: labeled dependency graph}
    \Output{$T$}
		\vspace{0.5mm}
		\lIf{$S$ is $\mathcal{O}$}{\Return $T$}
		$S_S$, $S_G$, $S_B$ = GetState($S$)\\
		\For{$o\in S_S$}{
		$ToBuffer$=CollisionCheck($S_S\backslash \{o\}$,$o$, $ \ldgg$)\\
		\lIf{(ToBuffer and $\|S_B\|+1>$RB) or ($S\bigsqcup \{o\}\in T$)}{\Return $T$}
		\Else{
		AddNode($T$, $S$, $S\bigsqcup \{o\}$)\\
		$T$ $\leftarrow$ Depth-First-Search($T$, $S\bigsqcup \{o\}$, RB, $ \ldgg$)\\
		\lIf{
		$\mathcal O\in T$}{\Return $T$}
		}
		}
		\Return $T$
\end{small}
\end{algorithm}

The intuition is that, when there are many rearrangement plans on the search tree that do not use more than $k$ running buffers, a depth-first search will quickly 
find such a solution, whereas a standard DP must grow the full search tree 
before returning a feasible solution. 
A similar depth-first exploration heuristic is used in \cite{wang2021uniform}. 

\subsection{Minimizing Total Buffers Subject to \mrb Constraints}\label{subsec:ilp}

 We also construct a Mixed Integer Programming(MIP) model minimizing \mrb or total buffers.
Let binary variables $c_{i,j}$ represent  the dependency graph $\ldg$: $c_{i,j}=1$ if and only if $(i,j)$ is in the arc set of $\ldg$. Let $y_{i,j}(1\leq i<j\leq n)$ be the binary 
sequential variables: $y_{i,j}=1$ if and only if $o_i$ moves out of the start pose before $o_j$.
 $y_{i,j}$ are used to represent the ordering of actions.
%
We further introduce two sets of binary variables $g_{i,j}$ and $b_{i,j}(1\leq i,j\leq n)$ to indicate object positions at each moment. 
$g_{i,j}=1$ indicates that $o_j$ has no dependency on other objects when moving $o_i$ from the start pose. In other words, the goal pose of $o_j$ is available at the moment. $b_{i,j}=1$ indicates that $o_j$ stays at the buffer after moving $o_i$ away from the start pose.
Finally, binary variables $B_i=1$ if and only if $o_i$ is moved to a buffer at some point.
The objective function consists of two terms: 
the total buffer term and running buffer term.
The total buffer term, scaled by $\alpha$, counts the number of objects that need buffer locations.
\mrb is represented with an integer variable $K$ and scaled by $\beta$.
To minimize total buffers subject to \mrb constraints,
we set $\alpha=1,\beta=n$.
The objective function is adaptable to different demands on rearrangement plans. 
Specifically, when $\alpha=0$,$\beta > 0$,
the MIP model minimizes \mrb.
When $\alpha > 0$, $\beta =0$, the MIP model minimizes total buffers, 
i.e. total actions in the rearrangement plan.
When $\alpha/\beta >n-1$, the MIP model first minimizes total buffers, 
and then minimizes running buffers.
When $\beta/\alpha >n-1$, the MIP model first minimizes running buffers, 
and then minimizes total buffers.

In the MIP model, Constraints~\ref{eq:c1} imply the rules for sequential variables  to make sure a valid ordering of indices $1,\dots,n$ can be encoded from $y_{i,j}$.
Constraints~\ref{eq:c2} imply that $B_j=1$ if $o_j$ has been to buffers in the plan.
Constraints~\ref{eq:c3} imply that \mrb $K$ is lower bounded by the maximum number of objects concurrently placed in buffers.
With Constraints~\ref{eq:c4} and \ref{eq:c5}, $g_{i,j}=0$ if and only if $o_j$ depends on an object $o_k$ which is still at the start pose when $o_i$ is moved.
With Constraints~\ref{eq:c6}-\ref{eq:c8}, 
$b_{i,j}=1$ if and only if $o_j$ is moved before $o_i$ and the goal pose is still unavailable when $o_i$ is moved from the start pose.

\begin{equation}
    \arg \min \alpha [\sum_{i=1}^{n}B_i]+\beta K
\end{equation}
\begin{equation}\label{eq:c1}
    0\leq y_{i,j}+y_{j,k}-y_{i,k}\leq 1 \ \ \  \forall 1\leq i < j < k \leq n
\end{equation}
\begin{equation}\label{eq:c2}
    B_j \geq \sum_{1\leq i\leq n} \dfrac{b_{i,j}}{n}\ \ \  \forall 1\leq j\leq n
\end{equation}
\begin{equation}\label{eq:c3}
    K \geq \sum_{1\leq j\leq n} b_{i,j}\ \ \  \forall 1\leq i\leq n
\end{equation}
\begin{equation}\label{eq:c4}
    \begin{split}
    \sum_{1\leq k < i} \dfrac{c_{j,k}(1-y_{k,i})}{n} + \sum_{i < k \leq n} \dfrac{c_{j,k}y_{i,k}}{n} \leq 1-g_{i,j} \\
    \forall 1\leq i,j \leq n
    \end{split}
\end{equation}
\begin{equation}\label{eq:c5}
    \begin{split}
    1-g_{i,j}\leq \sum_{1\leq k < i} c_{j,k}(1-y_{k,i}) + \sum_{i < k \leq n} c_{j,k}y_{i,k}\\ 
    \forall 1\leq i,j \leq n
    \end{split}
\end{equation}
\begin{equation}\label{eq:c6}
\begin{split}
    \dfrac{g_{i,j}+y_{i,j}}{2}\leq 1-b_{i,j} \leq g_{i,j}+y_{i,j}\\
    \forall 1\leq i<j \leq n
\end{split}
\end{equation}
\begin{equation}\label{eq:c7}
\begin{split}
    \dfrac{g_{j,i}+(1-y_{i,j})}{2}\leq 1-b_{j,i} \leq g_{j,i}+(1-y_{i,j})\\
    \forall 1\leq i<j \leq n
\end{split}
\end{equation}
\begin{equation}\label{eq:c8}
b_{i,i}=1-g_{i,i}
\end{equation}



\section{Experimental Results}
\label{sec:rbm-experiments}
Our evaluation focuses on uniform cylinders, given their prevalence in 
practical applications. 
For simulation studies, instances with different object densities are created, as measured 
by \emph{density level} $\rho := n\pi r^2/(h*w)$, where $n$ is the number of objects and 
$r$ is the base radius. $h$ and $w$ are the height and width of the workspace.
In other words, $\rho$ is the proportion of the tabletop surface occupied by objects. 

The evaluation is conducted on both random object placements and manually 
constructed difficult setups (e.g., dependency grids with $MRB = \Omega(\sqrt{n}$)).
For generating test cases with high $\rho$ value, we invented a physic engine 
(we used Gazebo \cite{koenig2004design}) based approach for doing so. Within a rectangular box, we sample 
placements of cylinders at lower density and then also sample locations for some 
smaller ``filler'' objects (see \ref{fig:compression}, left). From here, 
one side of the box is pushed to reach a high-density setting 
(\ref{fig:compression}, right), which is very difficult to generate via random 
sampling. By controlling the ratio of the two types of objects, different density 
levels can be readily obtained. \ref{fig:density} shows three random object placements 
for $\rho = 0.2, 0.4$ and $0.6$.

\begin{figure}[h!]
    \centering
    \includegraphics[width=\textwidth]{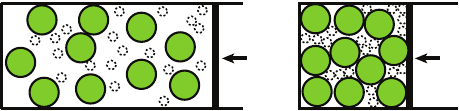}
    \vspace{1mm}
    \caption{Generating dense instances using a physics-engine-based simulator
    through compression of the left scene to the right scene.}
    \label{fig:compression}
\end{figure}

\begin{figure}[h!]
    \centering
    \includegraphics[width=\textwidth]{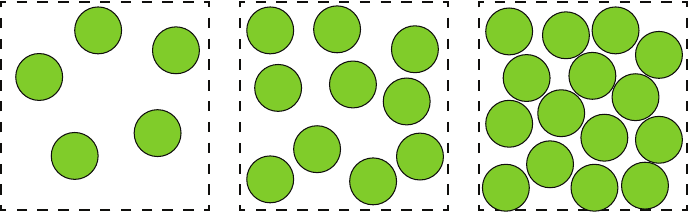}
    \caption{Unlabeled arrangements with $\rho=0.2, 0.4, 0.6$ respectively.}
    \label{fig:density}
\end{figure}

From two randomly generated object placements with the same $\rho$ and $n$ values, a 
\urbm instance can be readily created by superimposing one over the other. \lrbm
instances can be generated from \urbm instances by assigning each object a random 
label in $[n]$ for both start and goal configurations.

%
%

\begin{table}[h]
\caption{Evaluated methods for \toroe. }
\label{tab:toroe}
\begin{tabular}{ |p{0.25\columnwidth}|p{0.65\columnwidth}| } 
 \hline
 Problems & Methods \\ 
 \hline
 \lrbm & \DFSDP, DP, $\ilpfvs$, $\ilptb$ \\ 
 \hline
 \urbm & \DFSDP, \PQS \\ 
 \hline
\end{tabular}
\end{table}

The proposed algorithms are implemented in Python and all experiments are executed 
on an Intel$^\circledR$ Xeon$^\circledR$ CPU at 3.00GHz. For solving  MIP, Gurobi 9.16.0 \cite{gurobi} is used.

\subsection{\lrbm over Random Instances }
In \ref{fig:LabeledAlgorithms}, we compare the effectiveness of DP and \DFSDP, in terms of computation time and success rate, for different densities. 
Each data point is the average of $30$ test cases minus the unfinished ones, if any, 
subject to a time limit of $300$ seconds per test case. For \lrbm, we are able to 
push to $\rho = 0.4$, which is fairly dense. 
The results clearly demonstrate that \DFSDP significantly outperforms the baseline
DP.
Based on the evaluation, both methods can be used to tackle practical-sized problems 
(e.g., tens of objects), with \DFSDP demonstrating superior efficiency and robustness. 
\begin{figure}[h!]
    \vspace{4mm}
\centering
    \begin{overpic}[width=\textwidth]{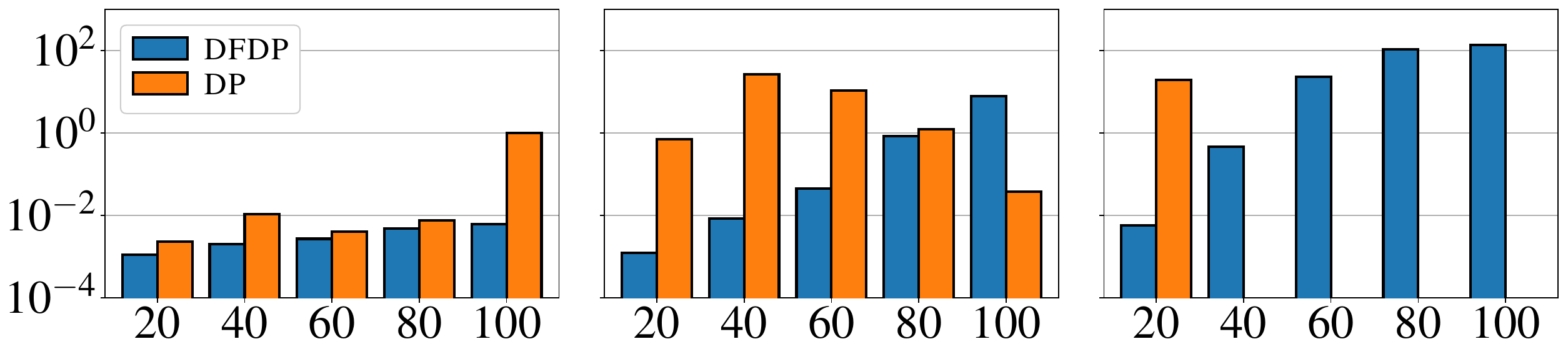}
    \put(0.5,-22.5){ \includegraphics[width=\textwidth]{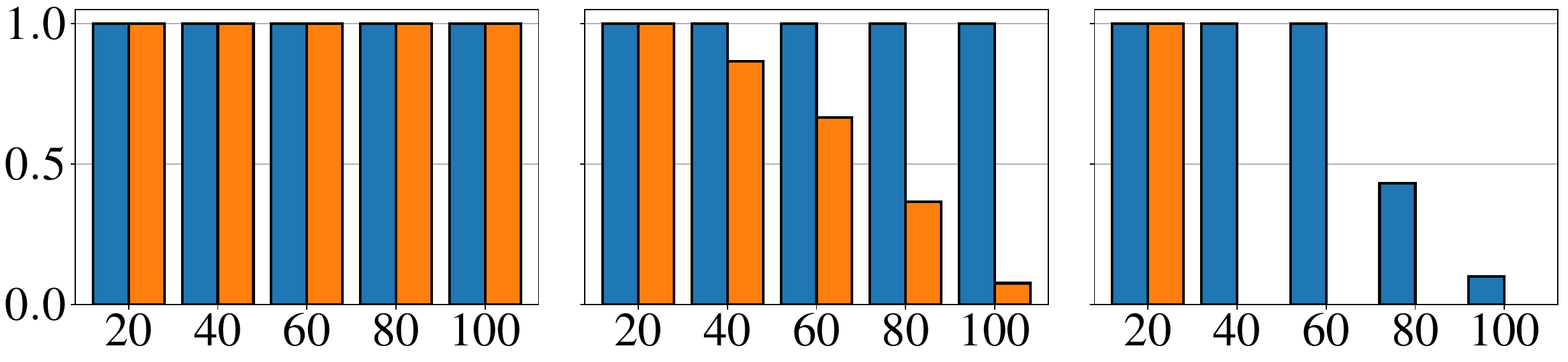}}
    \end{overpic}
    \vspace{28mm}
    \caption{Performance of \DFSDP and DP over \lrbm. The top row shows the average
    computation time (s) and the bottom row the success rate, for density levels
    $\rho=0.2, 0.3$, $0.4$, from left to right. The $x$-axis denotes the number of 
    objects involved in a test case.}
    \label{fig:LabeledAlgorithms}
\end{figure}

The actual \mrb sizes for the same test cases from \ref{fig:LabeledAlgorithms} 
are shown in \ref{fig:LabeledResults}
on the left. We observe that \mrb is rarely very large even for fairly large \lrbm
instances. The size of \mrb appears correlated to the size of the largest connected
component of the underlying dependency graph, shown in \ref{fig:LabeledResults}
on the right.
\begin{figure}[h!]
    \centering
\begin{overpic}
[width=0.48\textwidth]{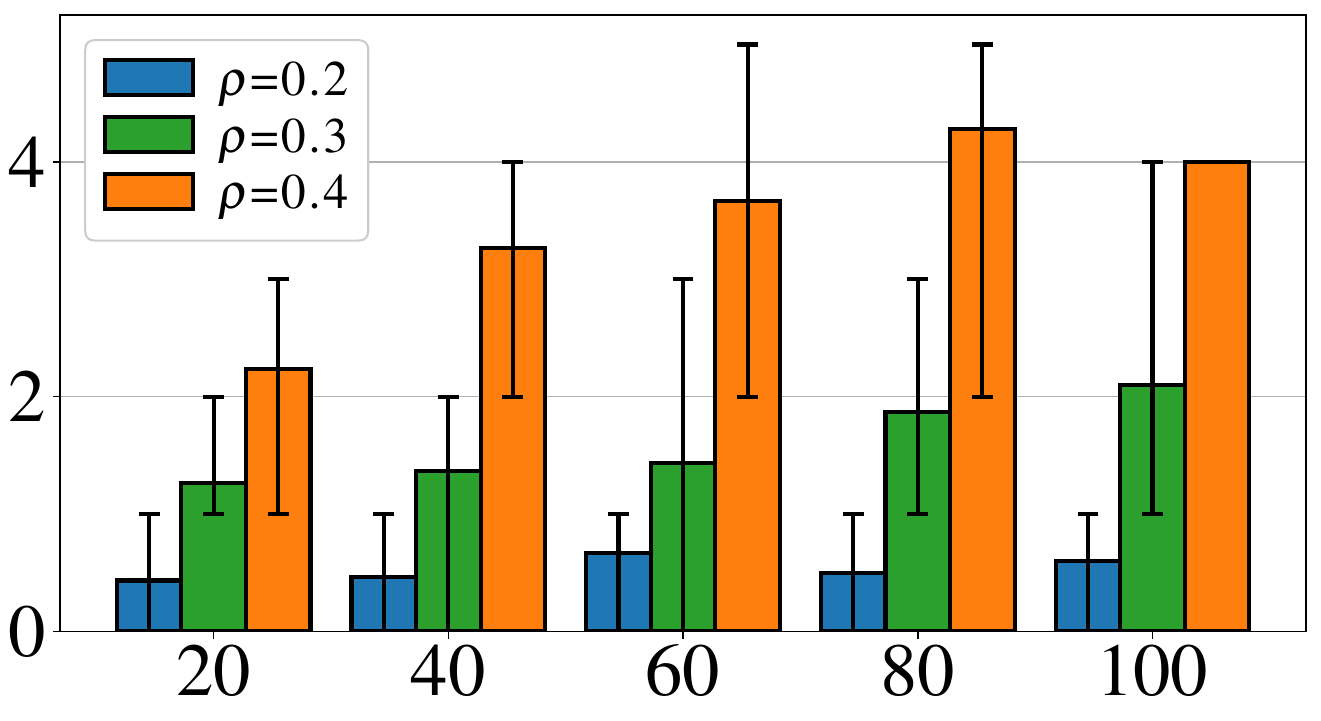}
\end{overpic}
\begin{overpic}
[width=0.48\textwidth]{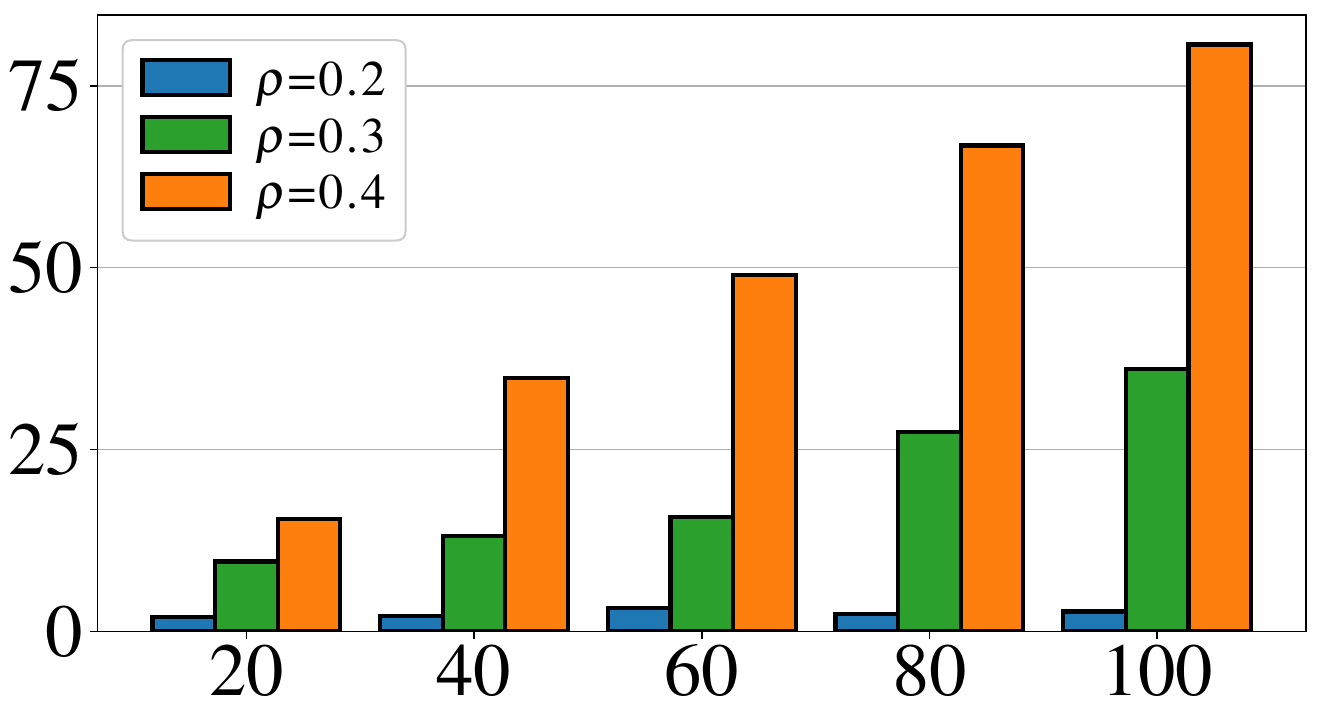}
\end{overpic}
    \caption{For \lrbm instances with $\rho = 0.2$-$0.4$ and $n=20$-$100$, the 
    left figure shows average \mrb size and range. The right figure shows
    the size of the largest connected component of the dependency graph.}
    \label{fig:LabeledResults}
\end{figure}

For $\lrbm$ with $\rho = 0.3$ and $n$ up to $50$, we computed the \fvs sizes 
using $\ilpfvs$ (which does not scale to higher $\rho$ and $n$) and compared that 
with the \mrb sizes, as shown in \ref{fig:MRBFVS} (a). We observe that the
\fvs is about twice as large as \mrb, suggesting that \mrb provides more reliable 
information for estimating the design parameters of pick-n-place systems. For 
these instances, we also computed the total number of buffers needed subject to 
the \mrb constraint using $\ilptb$. Out of about $150$ instances, only $1$ showed 
a difference as compared with \fvs (therefore, this information is not shown in 
the figure). In \ref{fig:MRBFVS} (b), we provided a computation time 
comparison between $\ilpfvs$ and $\ilptb$, showing that $\ilptb$ is practical,
if it is desirable to minimize the total buffers after guaranteeing the 
minimum number of running buffers. 

\begin{figure}[h!]
    \centering
    \begin{overpic}[width=\textwidth]{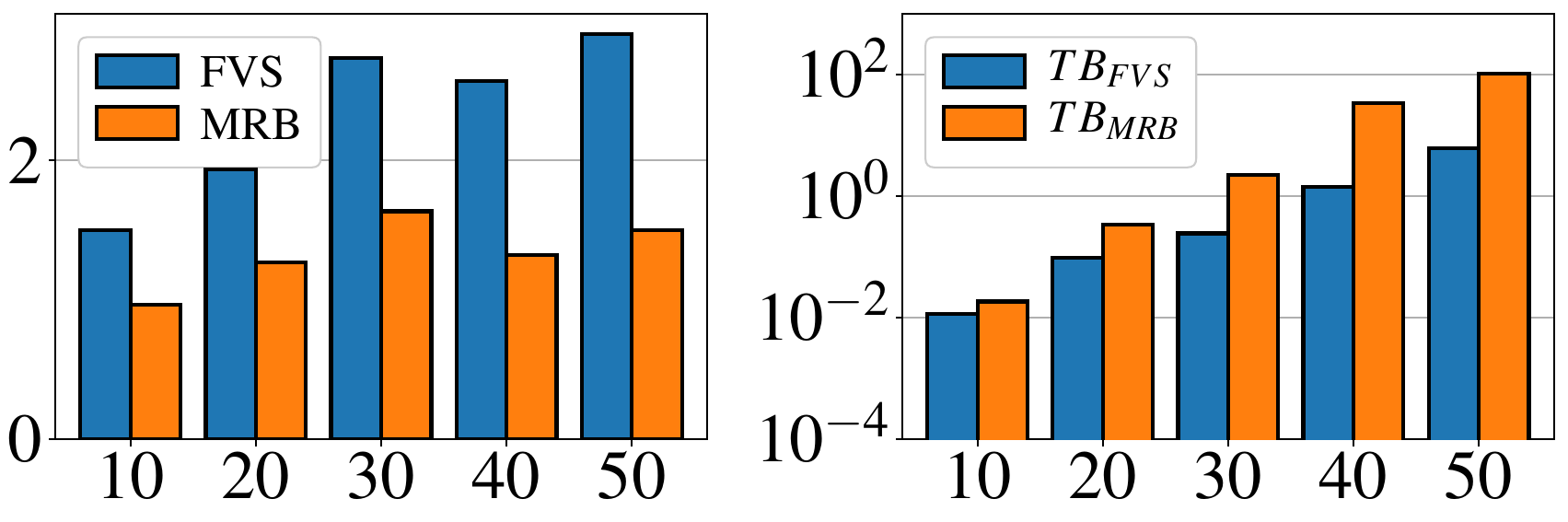}
    \put(22,-3.5){{\small (a)}}
    \put(76,-3.5){{\small (b)}}
    \end{overpic}
    \vspace{2mm}
    \caption{(a) Comparison between size of \mrb and \fvs. (b)  Computation time 
    comparison between $\ilpfvs$ and $\ilptb$. }
    \label{fig:MRBFVS}
    \vspace{-1mm}
\end{figure}

Considering our theoretical findings and the evaluation results, an important 
conclusion can be drawn here is that \mrb is effectively a small constant for
random instances, even when the instances are very large. Also, 
minimizing the total number of buffers used subject to \mrb constraint can 
be done quickly for practical-sized problems. 

\subsection{\urbm over Random Instances}
For \urbm, we carry out a similar performance evaluation as we have done for \lrbm. 
Here, \PQS and \DFSDP are compared. For each combination of $\rho$ and $n$, $100$ 
random test cases are evaluated. Notably, we can reach $\rho = 0.6$ with relative
ease. From \ref{fig:UnlabeledAlgorithms}, we observe that \DFSDP is more 
efficient than \PQS, especially for large-scale dense settings. In terms of the \mrb 
size, all instances tested have an average \mrb size between $0$ and $0.7$, which is 
fairly small (\ref{fig:UnlabeledResults}). Interestingly, we witness a decrease of 
\mrb as the number of objects increases, which could be due to the lessening 
``border effect'' of the larger instances. That is, for instances with fewer 
objects, the bounding square puts more restriction on the placement of the 
objects inside. For larger instances, such restricting effects become smaller.
We mention that the total number of buffers for random \urbm cases subject to 
\mrb constraints are generally very small. 

\begin{figure}[h!]
    \centering
    \begin{overpic}[width=\textwidth]{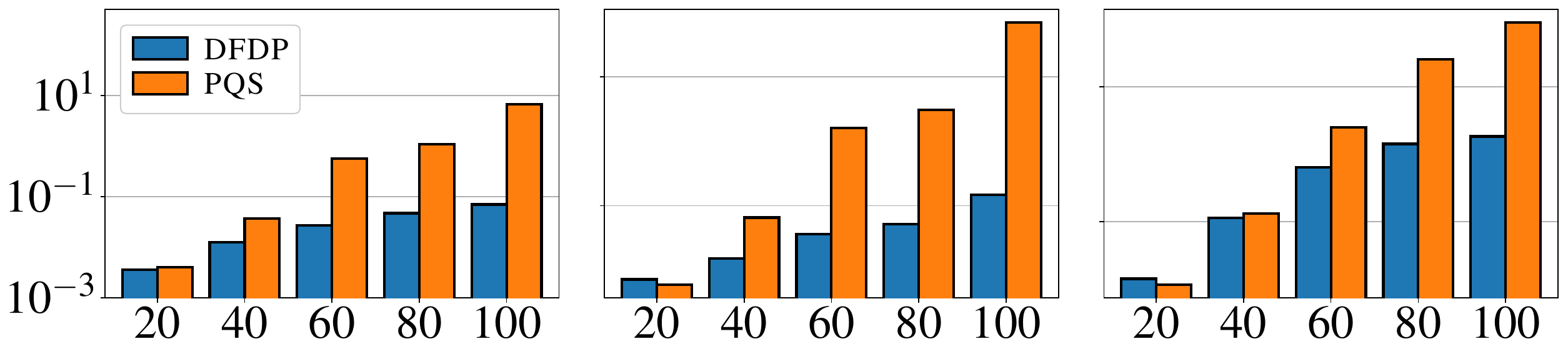}
    \put(0.5,-22.5){ \includegraphics[width=\textwidth]{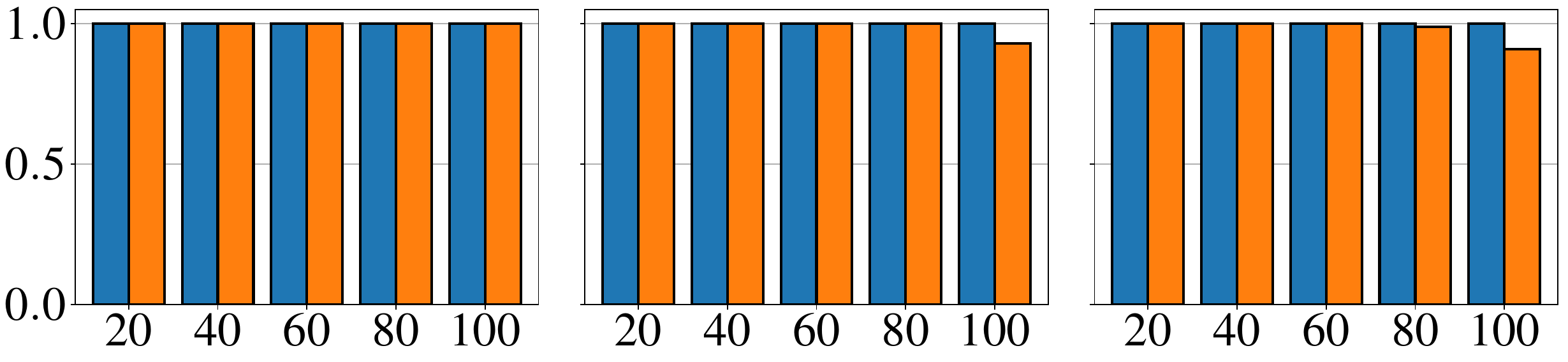}}
    \end{overpic}
        \vspace{32mm}
\caption{Performance of \DFSDP and \PQS over \urbm. The top row shows the average
    computation time and the bottom row shows the success rate, for density levels
    $\rho=0.4, 0.5, 0.6$, from left to right.}
    \label{fig:UnlabeledAlgorithms}
\end{figure}

\begin{figure}[h!]
    \centering
\includegraphics[width=0.7\textwidth]{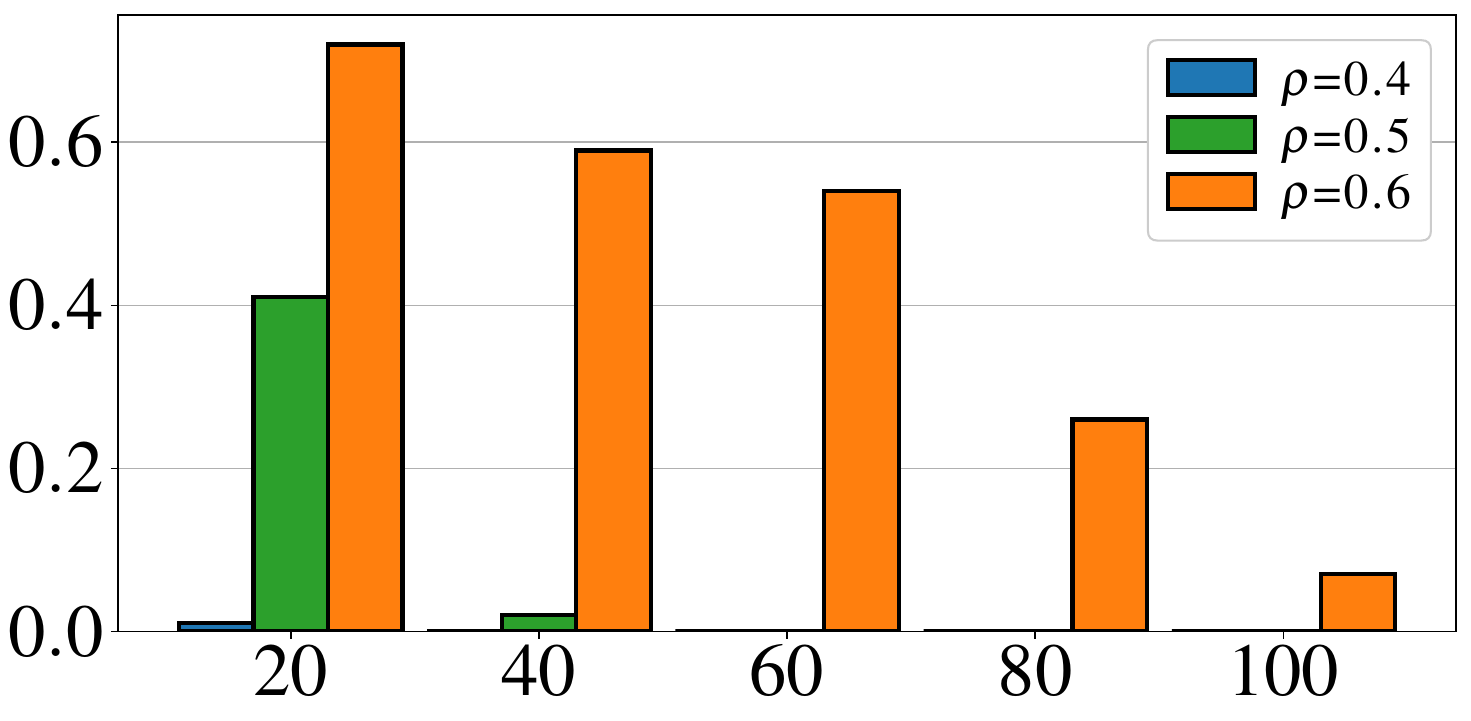}
    \caption{Average \mrb size for \urbm instances with $\rho=0.4-0.6$ and $n=20-100$.
    For $\rho = 0.4$ and $0.5$, the $\mrb$ sizes are near zero as the number of 
    objects goes beyond $20$.}
    \label{fig:UnlabeledResults}
    \vspace{-1mm}
\end{figure}

\subsection{Manually Constructed Difficult Cases of \toroe}
In the random scenario, the running buffer size is limited. In particular, for 
\lrbm, the dependency graph tends to consist of multiple strongly connected 
components that can be dealt with independently. We further show the performance 
of \DFSDP on the instances with $\mrb=\Theta(\sqrt{n})$. We evaluate three kinds 
of instances: (1) \textsc{UG}: $m^2$-object \urbm instances whose $\udg$ are 
dependency grid $\mathcal D(m,2m)$ (e.g., \ref{fig:DependencyGrid}); (2) 
\textsc{LG}: $m^2$-object \lrbm instances whose start and goal arrangements are 
the same as the instances in (1). (3) \textsc{LC}: $m^2$-object \lrbm instances 
with objects placed on a cycle (\ref{fig:lrbm-cycle}). The computation time and 
the corresponding \mrb are shown in \ref{fig:SpecialResults}. For 
\textsc{LG} instances, the labels are randomly assigned. We try 30 test cases and 
then plot out the average. We observe that the \mrb is much larger for these 
handcrafted instances as compared with random instances with similar density and
number of objects.

\begin{figure}[h!]
    \centering
    \begin{overpic}[width=\textwidth]{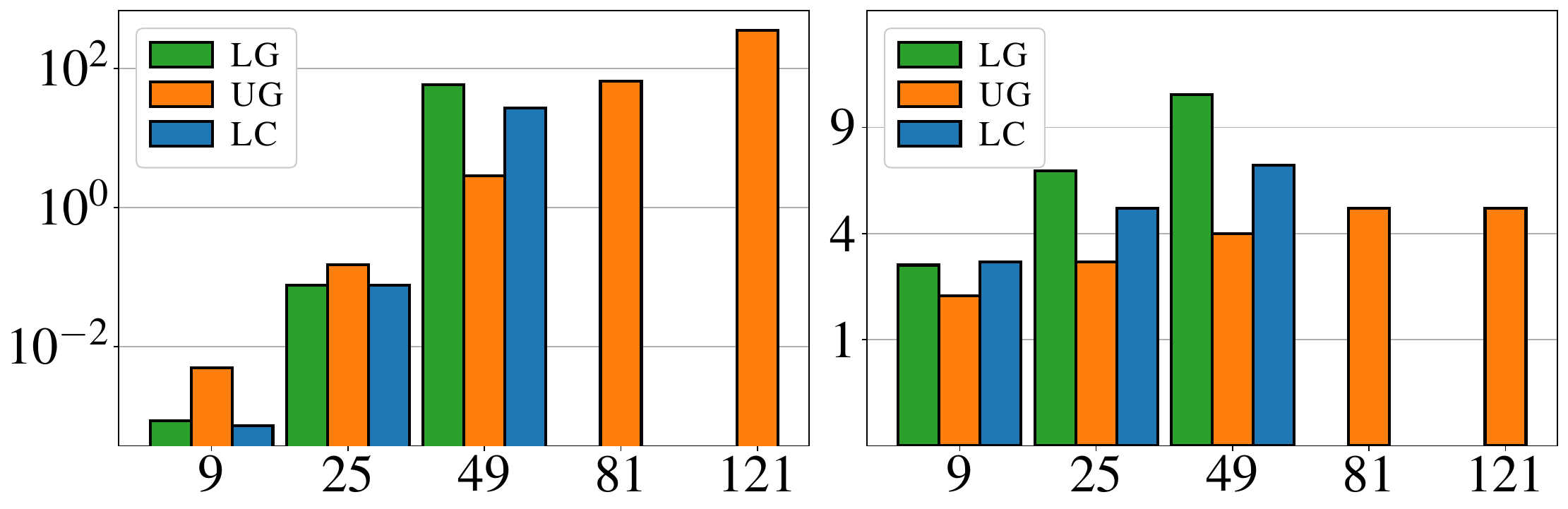}
    \end{overpic}
    \caption{ For handcrafted cases and different numbers of objects, the left 
    figure shows the computation time by \DFSDP and the right figure the resulting
    \mrb size.}
    \label{fig:SpecialResults}
\end{figure}


\section{Physical Experiments}\label{sec:hardware}
In this section, we demonstrate that the plans computed by proposed algorithms can be readily executed on real robots in a complete vision-planning-control pipeline.
We first introduce the hardware setup and the pipeline during the execution of the rearrangement plans.
After that, we present experimental results based on the execution of computed rearrangement plans.

\subsection{Hardware Setup}\label{subsec:setup}
In our hardware setup (\ref{fig:workspace}), we use a UR-5e robot arm with an OnRobot VGC 10 vacuum gripper to execute pick-n-places. 
An Intel RealSense D435 RGB-D camera is set up above the environment to provide an overview of the entire workspace.
The camera calibration is done with four 2D fiducial markers at the corners of the environment using a C++ cross-platform software library Chilitags \cite{chilitags}.
An example instance of our experiment is shown in \ref{fig:tore-hardware-exp}\footnote{More demonstrations are available in a \href{https://github.com/gaokai15/gaokai15.github.io/assets/53358252/315865f0-5fc2-4ee8-8805-90a62e5effe6}{\bl{video}} online.}.
The right side of the pad works as an external space for buffer placements.
For the cylindrical objects, 
pose estimation is conducted with the aid of Chilitags. Demonstrations are available online.

\begin{figure}[ht!]
    \centering
    \includegraphics[width=0.5\textwidth]{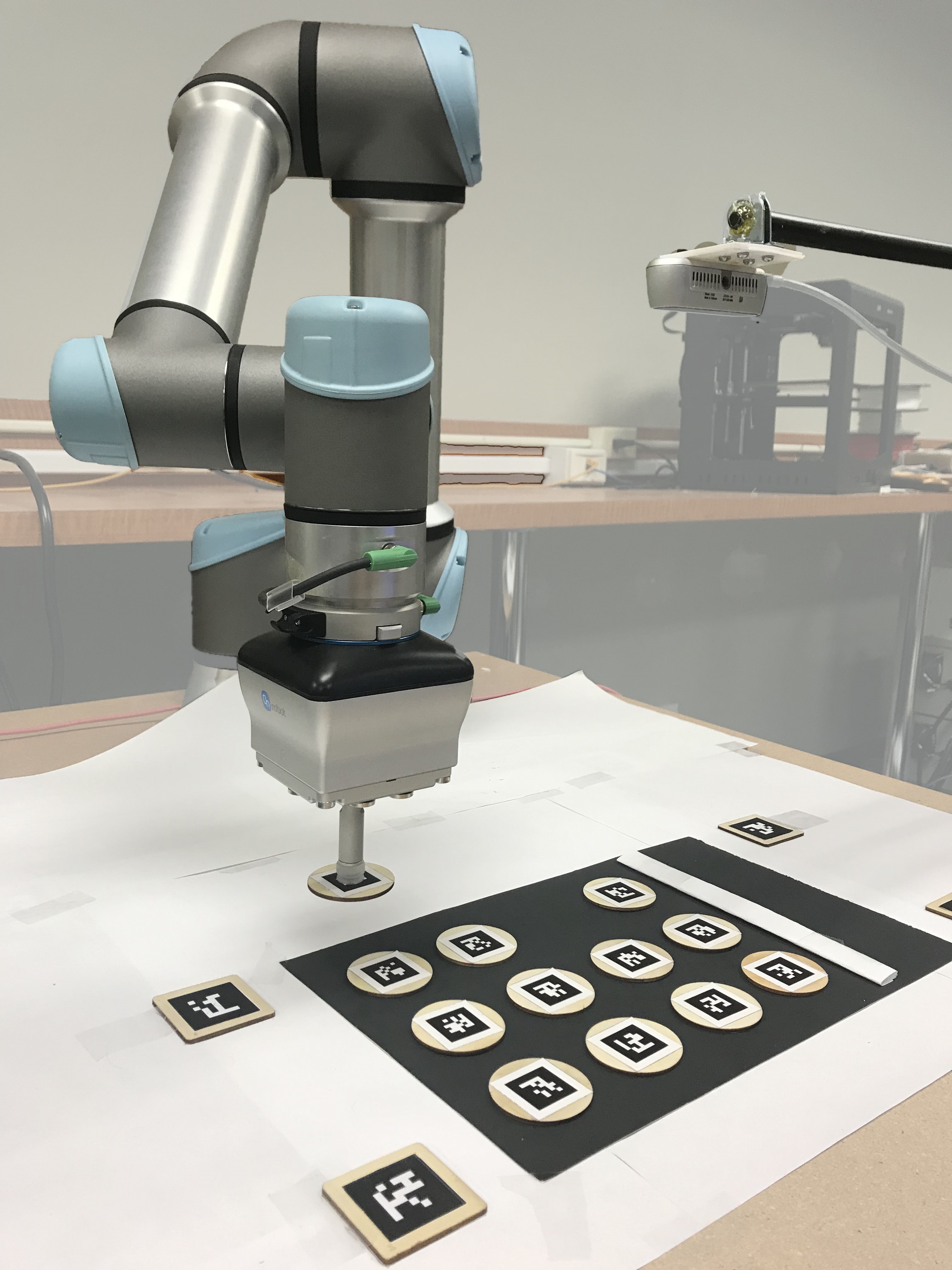}
    \caption{Our hardware setup for executing rearrangement plans computed by proposed algorithms.}
    \label{fig:workspace}
\end{figure}

\begin{algorithm}[h]
\begin{small}
    \SetKwInOut{Input}{Input}
    \SetKwInOut{Output}{Output}
    \SetKwComment{Comment}{\% }{}
    \caption{Rearrangement Pipeline}
    \label{alg:ur5e}
    \SetAlgoLined
		\vspace{0.5mm}
    \Input{$\mathcal A_g$: goal arrangement}
		\vspace{0.5mm}
		$\mathcal A_c \leftarrow PoseEstimation()$\\
		$\pi \leftarrow$ RearrangementSolver($\mathcal A_c$, $\mathcal A_g$)\\
		\For{($o$, $p_c$, $p_t$) in $\pi$}
		{
		$p_c \leftarrow $ UpdatePose($o$)\\
		ExecuteAction(($o$, $p_c$, $p_t$))\\
		}
		
\end{small}
\end{algorithm}

The rearrangement pipeline is shown in \ref{alg:ur5e}.
The system first estimates the current poses of workspace objects $\mathcal A_c$ (Line 1).
Given the current arrangement $
\mathcal A_c$ and goal arrangement $\mathcal A_g$, 
the rearrangement solver computes a rearrangement plan $\pi$ (Line 2).
Each action in $\pi$ consists of three components: manipulating object $o$, current pose $p_c$, and target pose $p_t$ (Line 3).
To improve the accuracy of pick-n-places, 
we update the grasping pose before each grasp (Lines 4-5).

\subsection{Experimental Validation}\label{subsec:hardware_exp}
We conduct hardware experiments on \toroe, 
comparing the execution time of RBM plans and TBM plans. With the same notations as those in \ref{sec:rbm-experiments}, RBM plans minimize running buffer size and TBM plans minimize total buffer size.
In \toroe, minimizing the total buffer size is equivalent to minimizing the number of total actions.
We tried 10 instances with 12 cylindrical objects.
The results are shown in \ref{Tab:TB_RB}.
While minimizing running buffer size,
RBM plans are only 5$\%$ longer than TBM plans on average.

\begin{figure}
    \centering
    \includegraphics[width=0.8\columnwidth]{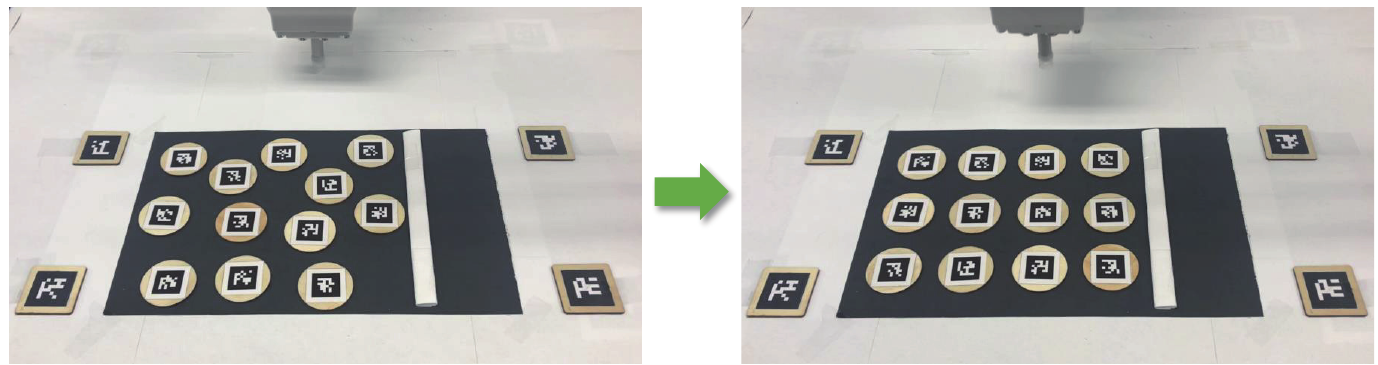}
    \caption{An example instance of our experiment.
The right side of the pad works as an external space for buffer placements.}
    \label{fig:tore-hardware-exp}
\end{figure}

\begin{center}
\begin{table}[h!]
\caption{\label{Tab:TB_RB} Comparison between RBM plans and TBM plans in execution time and the number of actions.}
\centering
\resizebox{0.5\textwidth}{!}{
\begin{tabular}{|l|ll|ll|}
\hline
\multirow{2}{*}{instance} & \multicolumn{2}{l|}{RBM}                                                                    & \multicolumn{2}{l|}{TBM}                                                                   \\ \cline{2-5} 
                          & \multicolumn{1}{l|}{\begin{tabular}[c]{@{}l@{}}execution \\ time (secs)\end{tabular}} & \# actions & \multicolumn{1}{l|}{\begin{tabular}[c]{@{}l@{}}execution\\ time (secs)\end{tabular}} & \# actions \\ \hline
1 & \multicolumn{1}{l|}{211.35}&15& \multicolumn{1}{l|}{194.52}&14\\
2 & \multicolumn{1}{l|}{197.76}&14& \multicolumn{1}{l|}{197.66}&14\\
3 & \multicolumn{1}{l|}{184.49}&13& \multicolumn{1}{l|}{185.82}&13\\
4 & \multicolumn{1}{l|}{201.32}&14& \multicolumn{1}{l|}{205.96}&14\\
5 & \multicolumn{1}{l|}{227.03}&16& \multicolumn{1}{l|}{195.44}&14\\
6 & \multicolumn{1}{l|}{193.84}&14& \multicolumn{1}{l|}{192.61}&14\\
7 & \multicolumn{1}{l|}{216.49}&15& \multicolumn{1}{l|}{198.90}&14\\
8 & \multicolumn{1}{l|}{250.13}&17& \multicolumn{1}{l|}{211.15}&15\\
9 & \multicolumn{1}{l|}{176.75}&13& \multicolumn{1}{l|}{179.83}&13\\
10 & \multicolumn{1}{l|}{179.07}&12& \multicolumn{1}{l|}{160.96}&12\\ \hline
Average & \multicolumn{1}{l|}{203.82}&14.3& \multicolumn{1}{l|}{192.28}&13.7\\ \hline

\end{tabular}
}
\end{table}
\vspace{-3mm}
\end{center}

\section{Summary}
In this work, we investigate the problem of minimizing the number of running 
buffers (\mrb) for solving labeled and unlabeled tabletop rearrangement problems 
with overhand grasps (\toro), which translates to finding a best linear ordering 
of vertices of the associated underlying dependency graph. 
For \toro, \mrb is an important quantity to understand as it determines the 
problem's feasibility if only external buffers are to be used, which is 
the case in some real-world applications \cite{han2018complexity}.
Despite the provably high computational complexity that is involved, we provide 
effective dynamic programming-based algorithms capable of quickly computing \mrb 
for large and dense labeled/unlabeled \toro instances.
In addition, we also provide methods for minimizing the total number of buffers 
subject to \mrb constraints.
Whereas we prove that \mrb can grow unbounded for both labeled and unlabeled 
settings for special cases for uniform cylinders, empirical evaluations 
suggest that real-world random \toro instances are likely to have much smaller
\mrb values. 

We conclude by leaving the readers with some interesting open problems. 
As for bounds, the lower and upper bounds of \mrb for \lrbm for uniform cylinders
do not yet agree; can the bound gap be narrowed further? 
Objects may have different sizes. An interesting question here is if we can move a large object to buffer or a small object to buffer, which is more beneficial? Moving larger objects are more challenging but we may need to do this fewer times.

\chapter{TRLB: Tabletop Rearrangement with Lazy Buffer Pose Verification}\label{chap:trlb}
\thispagestyle{myheadings}

\section{Motivations}
This chapter investigates \toro with internal buffers (\toroi).
While solving \toroe only deals with \emph{inherent} constraints defined by the start and goal poses,
some objects in \toroi may be temporarily displaced inside the workspace and induce further \emph{acquired} constraints.
For instance, to solve the problem in \ref{fig:toro} with external buffers, 
we can move the Pepsi can to an external buffer to break the cycle, 
move the Coke first and then Fanta to their goal locations, and finally, bring back the Pepsi can into the workspace. 
In \toroi, we must find a temporary location for the Pepsi can in the workspace. 
If the buffer location overlaps with the goal of the Coke can (or the Fanta can),
then the Coke can (or the Fanta can) depends on the Pepsi can again.
To avoid these acquired constraints,
the buffer location needs to avoid the goals of Coke and Fanta.

Due to acquired constraints arising from internal buffer selection, \toroi is more challenging than \toroe. 
Intuitively, selecting buffers inside the workspace (\toroi) is much more difficult and constrained than using buffers outside the workspace (\toroe) to store displaced objects. 
Since \toroe has been shown to be computationally intractable \cite{han2018complexity}, and is  equivalent to a special case of \toroi where the workspace is large enough that collision-free buffer locations are guaranteed,
\toroi is also NP-hard.
Additionally, in \toroi, it may be challenging to allocate valid buffer locations so it is necessary to limit the running buffer size. With this observation, \toroi solutions are computed based on methods for \toroe. 

For \trlb, we propose Tabletop Rearrangement with Lazy Buffers (\trlb), an effective framework for solving \toroi based on \toroe algorithms. 
We first describe a rearrangement solver with lazy buffer allocation, where buffer allocation is delayed after DFDP computes a ``rough'' schedule of object movements. 
Second, we design a preprocessing routine based on the \toroe planner for the unlabeled variant in order for further speedups.
Third, the \trlb framework recovers from buffer allocation failures to enhance scalability to larger and more cluttered instances.
Finally, \trlb is extended to consider object heterogeneity(e.g., object size and shape).

\section{Related Work}
{\bf Rearrangement Approaches:} Object rearrangement is a topic of interest within Task and Motion Planning (TAMP). Typical definitions in this domain \cite{cosgun2011push, wang2021uniform, gaorunning, wang2021efficient} involve arranging multiple objects to specific goals. Certain variations such as NAMO (Navigation among Movable Obstacles) \cite{stilman2005navigation, stilman2007planning, stilman2008planning}, and retrieval problems \cite{ZhangLu-RSS-21, nam2019planning}, 
focus on clearing out a path for a target object while identifying objects that need to be relocated. Rearrangement may be approached either via simple but inaccurate non-prehensile actions, e.g., pushes \cite{cosgun2011push, king2017unobservable, huang2021dipn}, or more purposeful prehensile actions, such as grasps \cite{krontiris2015dealing, krontiris2016efficiently, wang2021efficient, labbe2020monte,morgan2021vision,wen2021catgrasp}.
Focusing on the combinatorial challenges, some planners use external space for temporary object storage \cite{han2018complexity,gaorunning,bereg2006lifting, nam2019planning}, while others exploit problem linearity to simplify search  \cite{okada2004environment, stilman2005navigation, stilman2008planning, levihn2013hierarchical}. By linking rearrangement to established graph-based problems, efficient algorithms have been obtained for various tasks and objectives  \cite{han2018complexity,gaorunning,bereg2006lifting}.
This chapter uses a plan generated given access to external buffer locations as a ``primitive plan'', which then guides buffer allocation inside the workspace.

{\bf Dependency Graph:} We represent the combinatorial constraints of such problems via a dependency graph, which was first applied to general multi-body planning problems \cite{van2009centralized} and then rearrangement \cite{krontiris2015dealing, krontiris2016efficiently}. Choosing different manipulation sequences gives rise to multiple dependency graphs for the same problem instance, which limits scalability. Prior work \cite{han2018complexity} has applied full dependency graphs to address \toro, showing that the challenge embeds the NP-hard Feedback Vertex Set (FVS) and the Traveling Salesperson (TSP) problems. More recently, some of the authors \cite{gaorunning} examined an optimization objective, \emph{running buffers}, which is the size of the external space needed for rearrangement, and also examined an unlabeled setting. Similar structures are also used in other problems, such as packing \cite{wang2020robot}. Deep neural networks have been also applied to detect the embedded dependency graph of objects in a cluttered space to determine the ordering of object retrieval \cite{ZhangLu-RSS-21}.


{\bf Buffer Identification:} In rearrangement, collision-free locations are needed for obstacle displacements. Some previous efforts predetermine buffer candidates and place obstacles to accessible candidate locations when necessary \cite{wang2021uniform, cheong2020relocate}. Others decouple the problem into successive subproblems \cite{krontiris2016efficiently, wang2021uniform}.
Intuitively speaking, a valid buffer location needs to avoid other objects at their current poses. Backtracking search further constrains object displacements given paths of future manipulation actions \cite{stilman2007planning, stilman2008planning}. Nevertheless, these methods are computationally expensive since they deal with inherent and acquired constraints at the same time. To reduce the associated  cost, a lazy strategy can be applied \cite{bohlin2000path, denny2013lazy, hauser2015lazy}, which delays path/configuration collision checking. A similar idea is proposed here. Feasible object locations are obtained with the aid of a ``rough schedule'' of object manipulation actions,  which is computed given only the inherent constraints. By decoupling inherent and acquired constraints, the proposed method computes high-quality solutions efficiently.

\section{Tabletop Rearrangement with Internal Buffer Space (\toroi)}
\subsection{Problem Statement}
\begin{figure}[t]
    \centering
    \includegraphics[width=\columnwidth]{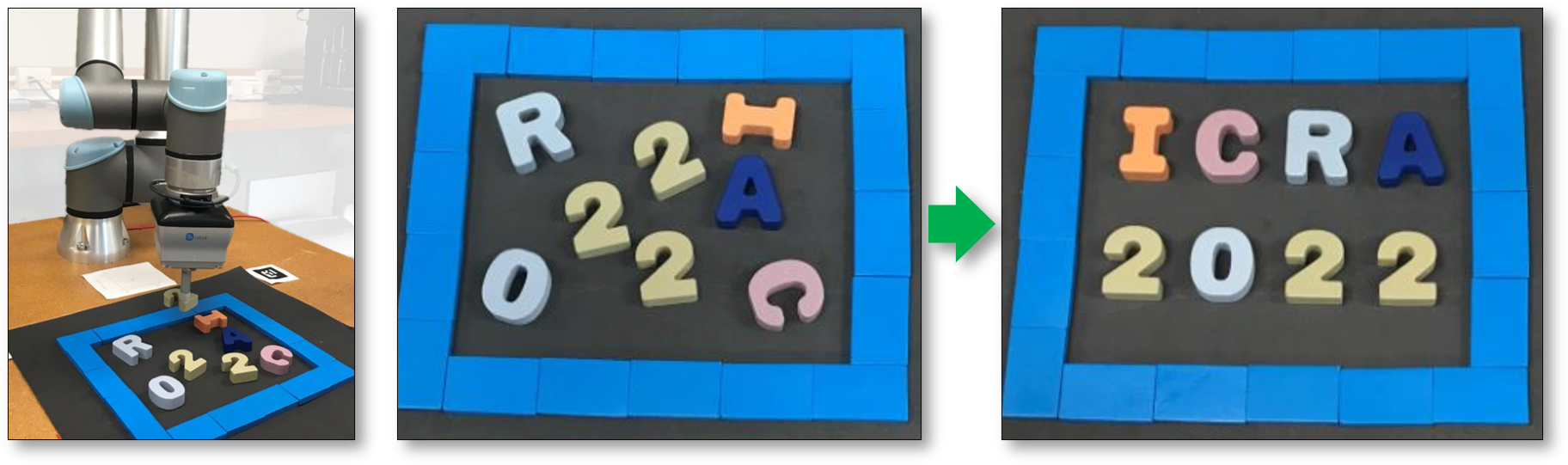}
        \vspace{-.25in}
    \caption{A robot arm rearranges word patterns with overhand grasps, minimizing the number of pick-n-place actions. The letters cannot go out of the bounding box and cannot overlap on the tabletop.}
    \label{fig:TRLBApplicationExample}
\end{figure}

Consider a 2D bounded workspace $\mathcal W\subset \mathbb R^2$ containing a set of $n$ objects $\mathcal O=\{o_1, ..., o_n\}$. 
Each object is assumed to be an upright \emph{generalized cylinder}. A \emph{feasible arrangement} $\mathcal A =\{p_1, ..., p_n\}$, $p_i = \{x_i, y_i, \theta_i\} \in SE(2)$ is a set of poses for objects in $\mathcal O$, such that 
(1) each object's footprint is contained in $\mathcal W$, and
(2) no two objects collide. 

Consider (overhand) pick-n-place actions to move objects one by one.  A pick-n-place action, represented as an ordered pair $(o, p)$, grasps an object $o$ at its current pose and lifts it above all other objects. It then moves it horizontally, and places it at the target pose $p$ within $\mathcal W$. An action is collision-free iff both the current and resulting arrangements are feasible. A plan $P$ from a feasible $\mathcal A_s$ to a feasible $\mathcal A_g$ is a sequence of collision-free pick-n-place actions transforming $\mathcal A_s$ into $\mathcal A_g$. We want to compute feasible plans that minimize the number of pick-n-place actions, which leads to increased system throughput, i.e., \vspace{-.05in}

\begin{problem}[\toro w/ Internal Buffers (\toroi)]
Given feasible arrangements $\mathcal A_s=\{p^s_1, ..., p^s_n\}$ and $\mathcal A_g=\{p^g_1, ..., p^g_n\}$, find a feasible plan $P$ sequentially moving objects from $\mathcal A_s$ to $\mathcal A_g$, which minimizes the number of actions.
\end{problem}
\vspace{-.1in}

\section{Algorithmic Solutions}
We first describes a rearrangement solver with lazy buffer allocation (\ref{sec:LazyBufferGeneration}), where buffer allocation is delayed after getting a ``rough'' schedule of object movements. To enhance scalability to larger and more cluttered instances, the \trlb framework (\ref{sec:PartialPlan}) recovers from buffer allocation failures. Finally, a preprocessing routine helps with further speedups (\ref{sec:Preprocess}).

\subsection{Lazy Buffer Pose Verification}\label{sec:LazyBufferGeneration}
When an object stays at a buffer, it should avoid blocking the upcoming manipulation actions of other objects. Otherwise, either the object in the buffer or the manipulating object has to yield, which increases the number of necessary actions. In other words, we need to carefully choose acquired constraints. If we know the schedule of other objects in advance, a buffer can be selected to minimize unnecessary obstructions. 
This observation motivates
us to ``lazily'' allocate buffers by ignoring initially acquired constraints and solve
the rearrangement problem in two steps: First, compute a \emph{primitive plan}, which is an incomplete schedule ignoring acquired constraints; second, given the incomplete schedule as a reference, generate buffers to optimize the selection of acquired constraints.


\subsubsection{Primitive Plan}
To compute a \emph{primitive plan}, we assume enough free space is available so that no acquired constraints will be created. This transforms the problem into a \toroe problem, where each object is displaced at most once before it moves to the goal pose. Then, an object $o_i \in \mathcal O$ can have three \emph{primitive} actions:   
\begin{enumerate}
    \item $(o_i, s\rightarrow g)$: moving from $p^s_i$ to $p^g_i$;
    \item $(o_i, s\rightarrow b)$: moving from $p^s_i$ to a buffer;
    \item $(o_i, b\rightarrow g)$: moving from a buffer to $p^g_i$.
\end{enumerate}
A primitive plan is a sequence of primitive actions;
computing such a plan is similar to finding a linear vertex ordering \cite{adolphson1973optimal, shiloach1979minimum} of the dependency graph. We use dynamic programming based methods in \ref{chap:rbm} to achieve this, which minimizes the number of total buffers or running buffers. 

\subsubsection{Buffer Allocation}
Free space inside the workspace $\mathcal{W}$ is scarce in cluttered spaces (e.g.,  \ref{fig:trlb-density}) and acquired constraints must be dealt with through  the careful allocation of buffers inside $\mathcal{W}$. We apply a greedy strategy to find feasible buffers based on a primitive plan (\ref{alg:buffer}). The general idea is to incrementally add constraints on the buffers until we find feasible buffers for the whole primitive plan or terminate at a step where there are no feasible buffers for the primitive plan. In \ref{alg:buffer}, $\mathcal O_s, \mathcal O_g, \mathcal O_b$ are the sets of objects currently at start poses, goal poses and buffers respectively.

%
%
\begin{algorithm}
\begin{small}
    \SetKwInOut{Input}{Input}
    \SetKwInOut{Output}{Output}
    \SetKwComment{Comment}{\% }{}
    \caption{ Buffer Allocation}
		\label{alg:buffer}
    \SetAlgoLined
		\vspace{0.5mm}
    \Input{$\pi$: a primitive plan; $\mathcal A_s=\{p^s_1,...,p^s_n\}$: start arrangement; $\mathcal A_g=\{p^g_1,...,p^g_n\}$: goal arrangement}
    \Output{$B$: buffers; TerminatingStep: the action step where buffer generation fails, $\infty$ if Success.}
		\vspace{0.5mm}
		$\mathcal O_s$ $\leftarrow$ $\mathcal O$; 
		$\mathcal O_g$, $\mathcal O_b$ $\leftarrow$ $\emptyset$; 
		$B \leftarrow$ RandomPoses($\mathcal O$)\\
		\For{   $(o_i, m)\in \pi$}{
    		\If{$m$ is s $\rightarrow$ b}{
        		$\mathcal O_b$.add($o_i$)\\
        		Constraints[$o_i$]$\leftarrow$GetPoses($\mathcal O_s \bigcup \mathcal O_g - \{o_i\}$)\\
    		}
    		\ElseIf{$m$ is b $\rightarrow$ g}{
        		\lFor{$o\in \mathcal O_b\backslash\{o_i\}$}{
        		Constraints[$o$].add($p^g_i$)
    		}
    		}
    		\Else{
        		\lFor{$o\in \mathcal O_b$}{
        		    Constraints[$o$].add($p^g_i$)
        		}
    		}
    		Success, $B'$ $\leftarrow$ BufferGeneration($\mathcal O_b$, Constraints, $B$)\\
    		\If{Success}{
    		$B$ $\leftarrow$ $B'$\\
    		$\mathcal O_s, \mathcal O_g, \mathcal O_b \leftarrow$ UpdateState($\mathcal O_s, \mathcal O_g, \mathcal O_b$)\\
    		}
    		\lElse{
    		\Return $B$, $\pi$.index(action)
    		}
		}
		\vspace{0.5mm}
		\Return $B$, $\infty$\\
\end{small}
\end{algorithm}

We start with $\mathcal A_s$ where all the objects are at start poses and the buffers are initialized at random poses (Line 1). Each action in $\pi$ indicates an object $o_i$ that is manipulated and the action $m$ performed (Line 2). If $o_i$ is moved to a buffer (Line 3),  then we add it into $\mathcal O_b$ (Line 4). The current poses of other objects in $O_s\bigcup O_g$ are seen as fixed obstacles for $o_i$ (Line 5). If $o_i$ is leaving the buffer (Line 6), then other objects in $\mathcal O_b$ should avoid the goal pose $p^g_i$ of $o_i$ (Line 7). If $o_i$ is moving directly from $p^s_i$ to $p^g_i$ (Line 8, the ``else'' corresponds to $m$ being $s\to g$, e.g., directly go from start to goal), then all buffers for objects in the current $\mathcal O_b$ need to avoid $p^g_i$ (Line 9). After setting up acquired constraints, we generate new buffers for objects in $O_b$ to satisfy these constraints by either sampling or solving an optimization problem (Line 10). Old buffers in $B$ satisfying new constraints will be directly adopted. If feasible buffers are found (Line 11), then buffers and object states will be updated (Line 12-13). Otherwise, we return the feasible buffers computed and record the terminating step  of the algorithm (Line 14). 
In the case of a failure, we provide a \trlb framework to recover, which is presented in \ref{sec:PartialPlan}.

\ref{fig:AlgoExample} illustrates the buffer allocation process via an example. The green, cyan, and transparent discs represent the current poses, goal poses and allocated buffers respectively. When we move $o_1$ to a buffer $B_1$ (\ref{fig:AlgoExample}(b)),  it only needs to avoid collision with $p^s_2$ and $p^s_3$.  But as we move $o_3$ to a buffer,  $B_1$ needs to avoid $o_3$'s buffer $B_3$ as well. To satisfy the added constraint, $B_1$ will be reallocated. Since the new buffers $B_1$ and $B_3$ (\ref{fig:AlgoExample}(c)) satisfy the constraints added in the following steps, they need not to be relocated. Note that the buffer originally selected for $o_1$ but then replaced will not appear in the resulting plan, i.e., $o_1$ will move directly to the new buffer (\ref{fig:AlgoExample}(c)-(f)). \ref{alg:buffer} works with one strongly connected component of the dependency graph at a time, treating objects in other components as fixed obstacles.


Once the feasible buffers are found, all the primitive actions can be transformed into feasible pick-n-place actions inside the workspace. And therefore, the primitive plan can be transformed into a rearrangement plan moving objects from $\mathcal A_s$ to $\mathcal A_g$. The function BufferGeneration is implemented by either sampling or solving an optimization problem, both of which are discussed below.

\begin{figure}
    \vspace{2mm}
    \centering
    \includegraphics[width=0.9\columnwidth]{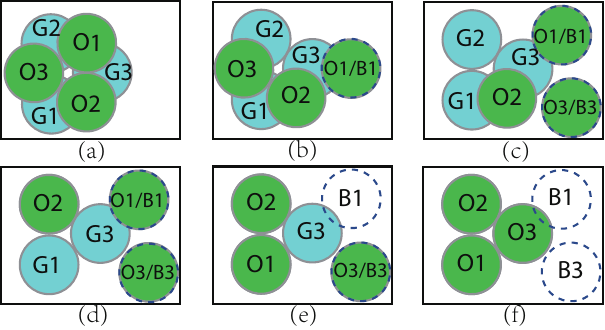}
    \vspace{-.1in}
    \caption{A working example with three objects defined in (a).
    The primitive plan is [($o_1$, s $\rightarrow$ b), ($o_3$, s $\rightarrow$ b), ($o_2$, s $\rightarrow$ g), ($o_1$, b $\rightarrow$ g), ($o_3$, b $\rightarrow$ g)].  Figures (b)-(f) show the steps of Alg. 1 after each action. The transparent discs with dashed line boundaries ($B_i$) represent the buffers satisfying constraints up to each step. For each object $o_i$, the green ($O_i$) and cyan ($G_i$) discs represent the current and goal poses respectively.}
    \label{fig:AlgoExample}
\end{figure}

\paragraph{Sampling}\label{sec:Sampling}
Given the object poses that buffers need to avoid so far, feasible buffers can be generated by sampling poses inside the free space. When objects stay in buffers at the same time,we sample buffers one by one and previously sampled buffers will be seen as obstacles for latter ones.

\paragraph{Optimization}\label{sec:Optimization}
For cylindrical objects $o_i$ at $(x_i,y_i)$ with radius $r_i$, and $o_j$ at $(x_j,y_j)$ with radius $r_j$, they are collision-free when
$(x_i-x_j)^2+(y_i-y_j)^2 \geq (r_i+r_j)^2$ holds.
Therefore, the collision constraints obtained in \ref{alg:buffer} can be transformed into quadratic inequalities.
By further restricting the range of buffer centroids to assure they are in the workspace, the buffer allocation problem can be transformed into a quadratic programming problem with no objective function. For objects with general shapes, collision avoidance cannot be presented by inequalities of object centroids. We can construct the optimization problem with $\phi$ functions of the objects \cite{chernov2010mathematical} and solve the problem with gradients.

\subsection{Failure Recovery with Bi-Directional Search}\label{sec:PartialPlan}
The \trlb framework builds on the insight that a new \toroi instance is generated when lazy buffer allocation fails. The new instance has the same goal $\mathcal A_g$ as the original one but some progress has been made in solving the \toroi task. There are two straightforward implementations of \trlb: forward search and bidirectional search. In the first case, by accepting partial solutions, a rearrangement plan can be computed by developing a search tree $T$ rooted at $\mathcal A_s$.  In the search tree $T$, nodes are feasible arrangements and edges are partial plans containing a sequence of collision-free actions. When buffer allocation fails, we add the resulting arrangement into the tree and resume the rearrangement task from a random node in $T$.  This randomness and the randomness in primitive plan computation and buffer allocation allows \trlb to recover from failures. 
\vspace{0.05in}


\begin{algorithm}[h]\label{alg:BST}
\begin{small}
    \SetKwInOut{Input}{Input}
    \SetKwInOut{Output}{Output}
    \SetKwComment{Comment}{\% }{}
    \caption{\trlb with Bidirectional Search}
		\label{alg:BS}
    \SetAlgoLined
		\vspace{0.5mm}
    \Input{$\mathcal A_s$, $\mathcal A_g$, $max\_time$}
    \Output{ Search trees: $T_1$, $T_2$}
		\vspace{0.5mm}
		$T_1$.root, $T_2$.root$\leftarrow \mathcal A_s, \mathcal A_g$\\
		\While{not exceeding $max\_time$}{
		$\mathcal A_{rand}\leftarrow$ RandomNode($T_1$)\\
		$\mathcal A_{new1} \leftarrow$ LazyBufferAllocation($\mathcal A_{rand}$, $T_2$.root)\\
		$T_1$.add($\mathcal A_{new1}$)\\
		\lIf{$\mathcal A_{new1}$ is $T_2$.root}{\Return $T_1$, $T_2$}
		$\mathcal A_{near}\leftarrow$ NearestNode($\mathcal A_{new1}$, $T_2$)\\
		$\mathcal A_{new2} \leftarrow$ LazyBufferAllocation($\mathcal A_{near}$, $\mathcal A_{new1}$)\\
		$T_2$.add($\mathcal A_{new2}$)\\
		\lIf{$\mathcal A_{new2}$ is $\mathcal A_{new1}$}{\Return $T_1$, $T_2$}
		$T_1, T_2 \leftarrow T_2, T_1$\\
		}
\end{small}
\end{algorithm}

In bidirectional search, two search trees rooted at $\mathcal A_s$ and $\mathcal A_g$ are developed. This more involved procedure is shown in  \ref{alg:BS}, which computes two search trees that connect $\mathcal A_s$ and $\mathcal A_g$.  In Line 1, the trees are initialized. For each iteration, we first rearrange between a random node $\mathcal A_{rand}$ on $T_1$ to the root node of $T_2$ (Line 3-5). The function LazyBufferAllocation refers to the overall algorithm developed in \ref{sec:LazyBufferGeneration}. A found path yields a feasible plan for \toroi (Line 6). Otherwise, we rearrange between the new arrangement $\mathcal A_{new1}$ and its nearest neighbor in $T_2$ (Line 7-9). 
This paper defines the distance between arrangement nodes as the number of objects at different poses.
If a path is found, then we find a feasible rearrangement plan for \toroi (Line 10). Otherwise, we switch the trees and attempt rearrangement from the opposite side (Line 11).

\subsection{Preprocessing for Rearrangement in Confined Workspace}\label{sec:Preprocess}
In dense environments, allocating buffers is hard, motivating minimizing the number of running buffers (mentioned in \ref{chap:rbm}), which is generally low even in high density settings if we treat objects as \emph{unlabeled}. Based on this, for each component of the dependency graph that is not a single vertex/cycle, we reduce the running buffer size to at most 1 by first solving an \emph{unlabeled} instance (also mentioned in \ref{chap:rbm}). After preprocessing, we obtain a \toroi requiring at most one running buffer. \ref{fig:preprocessing} shows an example of preprocessing.  $o_1$, $o_2$ and $o_3$ form a complete graph, where at least two objects need to be placed at buffers simultaneously. We conduct preprocessing of the three-vertex component by moving $o_2$ to a buffer position, $o_1$ to $p^g_3$ and $o_3$ to $p^g_2$. $o_2$ will not move to $p^g_1$ since it does not occupy other goal poses. The preprocessing step needs one buffer and the resulting rearrangement problem is monotone.

\begin{figure}[h]
    \centering
    \includegraphics[width=\columnwidth]{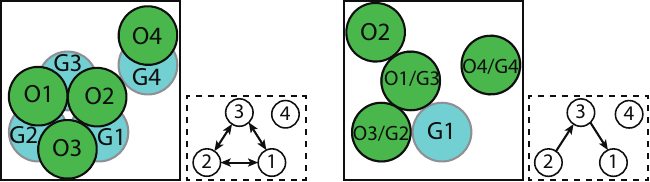}
    \vspace{-.2in}
    \caption{A four-object example of preprocessing. The green and cyan discs represent current and goal arrangements respectively. Before preprocessing (left), two buffers need to be allocated synchronously. After  preprocessing (right), the problem becomes monotone.}
    \label{fig:preprocessing}
    \vspace{-.1in}
\end{figure}

\section{Heterogeneous \toroi}
\subsection{Motivation}
Latest algorithmic research on \toroi is mainly limited to solving instances with mostly homogeneous objects \cite{han2018complexity,labbe2020monte,shome2020fast, gao2022fast}. 
%
The assumption of homogeneous objects leads to a relatively simple model where each object manipulation action has the same or very similar cost, which in turn limits the applicability of the resulting methods. 
In many real-world tabletop rearrangement problems, objects are often heterogeneous. For example, in \ref{fig:intro_example}, some objects are small and thin, e.g., pens, while others can be bulky, e.g., the white bag. Intuitively, objects' sizes and shapes impact the rearrangement planning process - it is more difficult to relocate larger objects temporarily. 
Furthermore, other properties, such as weight, affect the effort needed to move them around. This further complicates the computation of optimal rearrangement plans.  
We thus ask a natural question here: 
Can we leverage objects' properties to guide multi-object rearrangement planning?

\begin{figure}[t]
\vspace{2mm}
    \centering
    \includegraphics[width=\columnwidth]{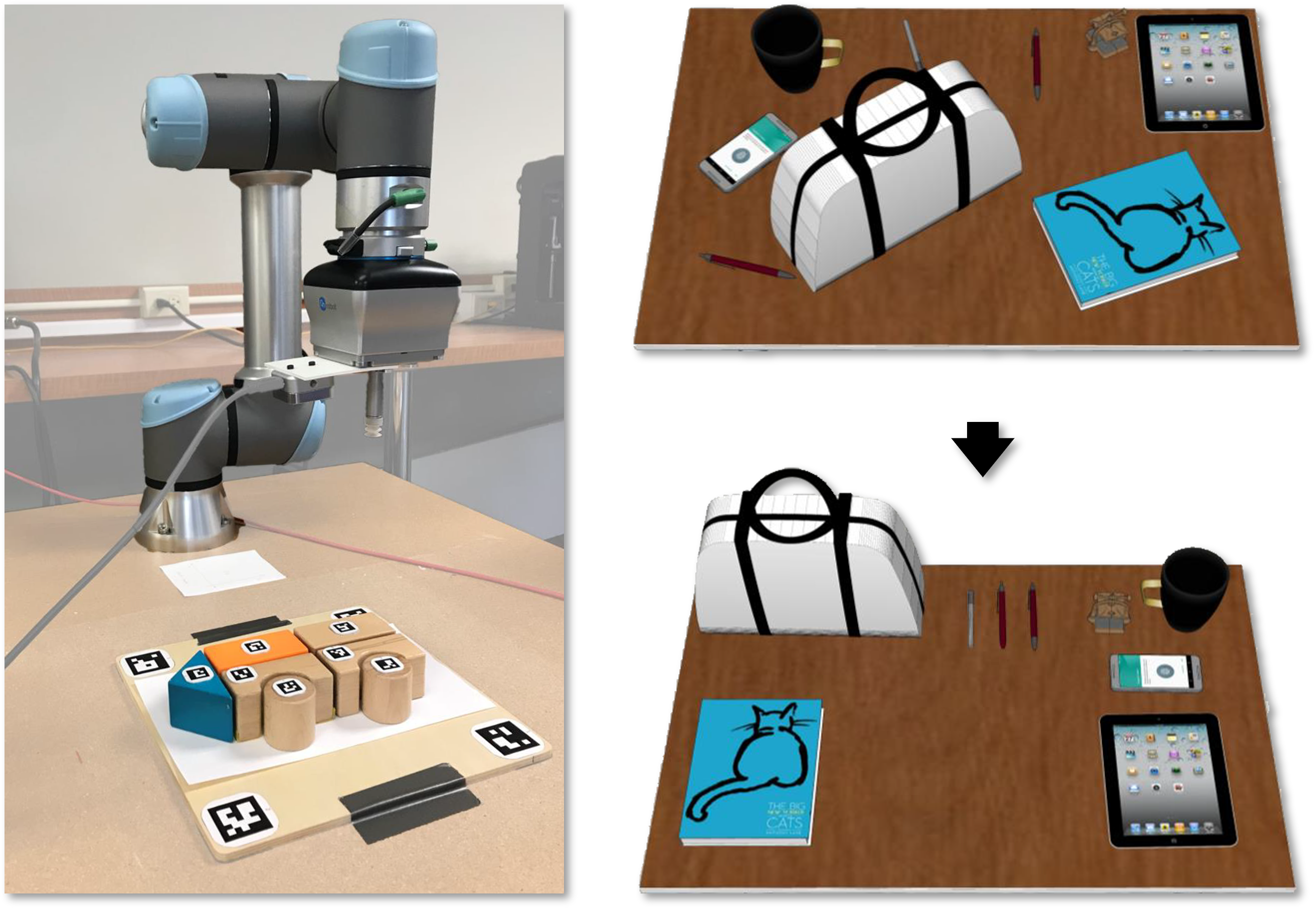}
    \caption{[Left] Our hardware setup for evaluating tabletop rearrangement of multiple heterogeneous objects. One of the test cases is also shown. [Right] The illustration of an example instance where many objects of drastically different sizes and weights are to be rearrangements.}
    \label{fig:intro_example}
\end{figure}

With the question in mind, we investigate heterogeneous \toroi (\hete), in which the characteristics of workspace objects, such as size, shape, weight, and other properties, can vary significantly. 
Taking object characteristics into consideration immediately leads to a much more general problem model, allowing each manipulation operation to have a cost that is affected by the properties of the objects. 
For example, manipulating an object that is large, heavy, and/or fragile is naturally more challenging. As a result, such an operation could take more time to complete. Therefore, a higher cost may be given for such operations than manipulating an object that is small, light, and/or tough. 
This section focuses on developing methods that take such differences into account. 
More specifically, we develop methods for addressing two problem variants. In one of the variants, object shape and size are considered in generating the manipulation plan. Object characteristics are directly reflected in the optimization objective in the other variant. 

\subsection{Problem Formulation}
\hete assumes full knowledge of object characteristics $\mathcal I=\{I_1, I_2,\dots,I_n\}$.
$\mathcal I$ may include properties of objects, including geometry, mass, material properties, and so on. 
In practice, $\mathcal I$ can be estimated based on sensing data.

For \hete, if the robot is sufficiently powerful, it may still be the case that grasping/releasing objects takes the same amount of time for each object, regardless of their sizes and mass (with some range); but it could also be that different objects requires different (time) cost to grasp/release.
Based on the observation, for \hete, two cost objectives are examined, which are: 
\begin{enumerate}
    \item \textbf{(\texttt{PP})} Total number of relocations, i.e., $J(\Pi)=|\Pi|$.
    \item \textbf{(\texttt{TI})} Total task \emph{impedance}, i.e., \vspace{2mm}\begin{align}J(\Pi)=\sum_{(o_i,p_i^1,p_i^2)\in \Pi} f(I_i).\label{eq:ti}\end{align}
\end{enumerate}

Specifically, the PP objective is the number of grasps or places a robot needs to execute in a rearrangement plan.
The TI objective enables the prioritization of manipulation actions based on objects' intrinsic properties, where $f(I_i)$ represents the actual cost of manipulating object $o_i$, as determined by the robot and $I_i$. 
Compared to PP, the TI objective represents the manipulation cost with higher levels of fidelity. 
For example, if a robot is tasked to rearrange gold and iron bars of the same size and shape, it is certainly better to move iron bars more due to iron's significantly lower density.  

Based on the descriptions so far, \hete can be formally summarized as follows:

\begin{problem}[\hete]
Given two feasible arrangements $\mathcal A_s$ and $\mathcal A_g$ of objects $\mathcal O$, compute a rearrangement plan $\Pi$ moving $\mathcal O$ from $\mathcal A_s$ to $\mathcal A_g$ with the minimum cost $J(\Pi)$.
\end{problem}

\subsection{Weighting Heuristics for \hete}\label{sec:weighting}
Our algorithmic design methodology is general: different (weighting) heuristics can be employed by different seach strategies (e.g., TRLB and MCTS) for solving \hete. 
Here, we first describe our carefully constructed heuristics that are applicable to both PP and TI objectives.

\subsubsection{Collision Probability  (\heteCP) for PP}\label{sec:collision}
In a cluttered workspace, it is difficult to allocate collision-free poses for use as buffers.
Even when a pose is collision-free in the current arrangement, it may overlap/collide with some goal poses, blocking the movements of other objects.
It is generally more challenging to allocate high-quality buffers for objects with large sizes or aspect ratios (e.g., stick-like objects).
A natural question would be:
How much do the object's size and shape impact its collision probability with other objects? 
We compute an estimated collision probability for each object to answer the question. Based on the collision probability, we propose a heuristic \heteCP (Heterogeneous object Collision Probability) to address the buffer allocation challenge in \hete and guide the rearrangement search. 

Given two objects $A$ and $B$ with fixed orientations, positions of $B$ colliding with $A$ can be represented by the \emph{Minkowski difference} of the two objects. That is,
$$
A\ominus B(0,0) := \{a-b| a \in P_A, b\in P_B(0,0)\},
$$
where $P_A$ is the point set of $A$ at the current pose and $P_B$ is the point set of $B$ at position $(0,0)$ with the current orientation. 
Note that when $B$ is rotational symmetric, the Minkowski difference is the same as the Minkowski sum, which replaces the minus with a plus in the formula above.

\begin{figure}
    \centering
    \vspace{-2mm}
    \includegraphics[width=0.5\textwidth]{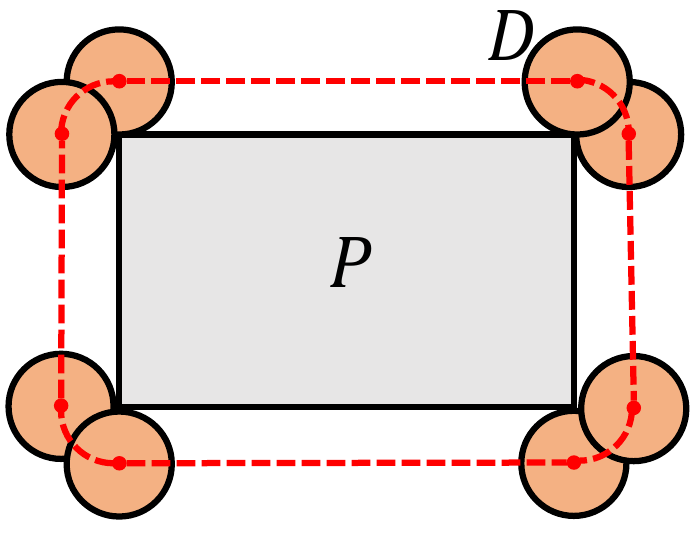}
    \caption{An illustration of the Minkowski difference $P\ominus D(0,0)$ (red dashed region) of a rectangle $P$ and a disc $D$.}
    \label{fig:no_fit}
\end{figure}

Specifically, for a convex polygon $P$ and a disc $D$, the area of $P\ominus D(0,0)$ can be simply represented as
$
S_{N}=S_{P}+S_{D}+r_DC_{P},
$
where $S$ and $C$ are the area and circumference of corresponding objects, $r_D$ is the radius of $D$. 
In the example of \ref{fig:no_fit}, $S_P$ comes from the polygon $P$ in the center of $N$, $S_D$ comes from the four rounded corners, and $r_DC_P$ comes from the area along the edges.
Since one of the objects is a disc, the formula is rotation invariant. 
Furthermore, if $P$ is at least $2r_D$ away from the boundary of $\mathcal W$, the probability of $D$ colliding with $P$ is
\begin{equation}\label{eq:pro}
\Pr=\dfrac{S_{N}}{(H-2r_D)(W-2r_D)}=\dfrac{S_{P}+S_{D}+r_D C_{P}}{(H-2r_D)(W-2r_D)}    
\end{equation}

Assuming both $P$ and $D$ are small compared to $\mathcal W$, $\Pr$ is a fairly tight upper bound of the collision probability of $P$ and $D$ in the workspace.


Based on \ref{eq:pro}, the \heteCP heuristics can be computed as in \ref{alg:heteCP}. 
The algorithm consumes the object characteristics $\mathcal I$. We first compute the area and circumference of each object based on $\mathcal I$ (Lines 1-4).
In Lines 5-6, we create a disc whose size is the average size of workspace objects. 
The area and radius of the average-sized disc are denoted as $\overline{s}$ and $\overline{r}$, respectively.
The weight of each object is then computed as the estimated collision probability between the object and the average-sized disc (Lines 7-9).

In general, \heteCP measures the difficulty of allocating a collision-free pose for an object and the probability of the sampled pose blocking other objects' movements. 

\begin{algorithm}[ht]
\begin{small}
    \SetKwInOut{Input}{Input}
    \SetKwInOut{Output}{Output}
    \SetKwComment{Comment}{\% }{}
    \caption{\heteCP}
		\label{alg:heteCP}
    \SetAlgoLined
		\vspace{0.5mm}
		\Input{$\mathcal I$: object characteristics.
		}
        \Output{\heteCP: weights on $\mathcal O$}
		\vspace{0.5mm}
    \vspace{1mm}
    \For{$o_i\in \mathcal O$}
    {
    $s_i \leftarrow $ getSize($I_i$)\\
    $c_i \leftarrow $ getCircumference($I_i$)\\
    }	
    \vspace{1mm}
    $\overline{s} = (\sum_{1 \leq i \leq n} s_i)/n$\\
    \vspace{1mm}
    $\overline{r} = \sqrt{\overline{s}/\pi}$\\
    \vspace{1mm}
    \For{$o_i\in \mathcal O$}
    {
    $w_i \leftarrow \dfrac{s_{i}+\overline{s}+\overline{r} c_{i}}{(H-2\overline{r})(W-2\overline{r})}$ \\
    }	
    \vspace{1mm}
    \Return $\{w_1, w_2,\dots, w_n\}$
\end{small}
\end{algorithm}

\subsubsection{Task Impedance (\heteTI) for TI}
When task impedance is evaluated, directly applying the existing methods, which seek rearrangement plans with the minimum number of actions, may lead to undesirable sub-optimality.
For the TI objective, the weighting heuristic is denoted as \heteTI. 
\heteTI gives each object $o_i$ a weight $f(I_i)$ as defined in \ref{eq:ti}.

\subsection{Extended TRLB (ETRLB) for \hete}\label{sec:hete-trlb}
For TRLB, we construct a weighted dependency graph with the heuristics and formulate the corresponding feedback vertex set problem and running buffer minimization problem.

Recall that TRLB first computes a primitive plan and then allocates buffers to check plan feasibility.
For \hete, we leverage weighting heuristics to guide the primitive plan computation in TRLB, which yields to \emph{Extended TRLB} (ETRLB).
To do so, we first construct a weighted dependency graph $\mathcal G_w$, where each vertex $v_i$ has a weight $w_i$.
Correspondingly, in the primitive plan computation process, we solve weighted TBM and weighted RBM of $\mathcal G_w$.
For convenience, we use the notations ETBM and ERBM to represent ETRLB with weighted TBM and RBM, respectively.
ETBM computes a primitive plan minimizing the total weights of objects moving to buffers.
We transform the problem into computing the feedback vertex set of $\mathcal G_w$ with the minimum sum of weights, which can be solved by dynamic programming or integer programming similar to TBM in TRLB.
The same as the TRLB with RBM, ERBM computes the minimum running buffer size for the rearrangement problem but it assumes each object $o_i$ to take $w_i$ amount of space in buffers.

In the original RBM computation, TRLB fixes the running buffer size and checks whether there is a feasible rearrangement plan given the running buffer size in order to make use of the efficiency of depth-first search. 
If not, the algorithm increases the running buffer size by one and checks the feasibility again. 
This process continues until it finds the lowest running buffer size with a feasible primitive plan.
Since this framework needs an integer running buffer size, ERBM rounds the weights to the nearest integers so that the sum of weights is still an integer. 

For \hete with the PP objective, applying \heteCP in primitive plan computation discourages relocation of objects for which the allocation of high-quality buffers is difficult, which reduces the challenge in the buffer allocation process.
When weighting heuristic \heteCP is used, ETBM and ERBM seek primitive plans minimizing the total amount of ``difficulty on buffer allocation'' and the maximum ``concurrent difficulty on buffer allocation''.
For the TI objective, we apply \heteTI to primitive plan computation instead.

\subsection{Extended MCTS (EMCTS) for \hete}\label{sec:hete-mcts}
For MCTS, we use the heuristics with additional improvements, which yields Extended Monte Carlo Tree Search (EMCTS) for heterogeneous instances.
Specifically, for \hete with the PP objective, EMCTS applies \heteCP to the exploration term in the UCB function, replacing constant $C$ with 
$$C(1+\dfrac{w_i}{\sum^n_{i=1} w_i }).$$
EMCTS prioritizes actions making progress in sending high-weight objects to the goals (moving them to goals or clearing out their goal poses). 
We do this for two reasons. 
First, as mentioned in \ref{sec:weighting}, it is difficult to move objects with high \heteCP weights to buffers. 
And their allocated buffers are more likely to block other objects' goal poses.
Second, these high-weighted objects at goal poses are guaranteed to be collision-free from other goal poses.
For \hete with the TI objective, the reward in EMCTS is the total \heteTI weights of objects in the goal poses.

Besides the application of weighting heuristics, to tackle the challenge in \hete, we introduce two additional improvements to both MCTS and EMCTS.
\subsubsection{Reducing action space}
For each state, we only consider actions $a_i$ whose corresponding object $o_i$ are away from goal poses. 
This modification significantly reduces the action space when the state is close to the goal state and will not lead to optimality loss since the neglected actions will not make any difference to the workspace.
\subsubsection{Efficient collision checker for \hete}
In \toroi, MCTS spends most of the computation time on collision checks for buffer allocation and goal pose feasibility checks.
The original paper\cite{labbe2020monte} uses the bounding discs for collision checks.
For \hete, our collision checker has two steps: in the broad phase, we check collisions between bounding discs of object poses, which aims at pruning out most objects in the workspace that are far from the target object.
In the narrow phase, for objects whose bounding discs overlap with that of the target object, we check the collision of their polygonal approximations.
We also tried other broad-phase collision checkers, such as quadtrees and uniform space partitions. 
While these methods have good performance in large-scale collision checking, we observe limited speed-up but a large amount of overhead in maintaining the structures in \hete instance.
That is because few robotic manipulation instances need to deal with more than 100 objects simultaneously.

\section{Experimental Results for \toroi}
We implemented the algorithms of the \trlb framework in Python. Simulated experiments use environments with different density levels $\rho$, defined as the proportion of the tabletop surface occupied by objects, 
i.e., $\rho:=(\Sigma_{o_i\in \mathcal O} S_{o_i})/S_{\mathcal W}$, 
where $S_{o_i}$ is the base area of $o_i$ and $S_{\mathcal W}$ is the area of $\mathcal W$. 

\begin{figure}[h!]
\centering
\includegraphics[width=0.35\columnwidth]{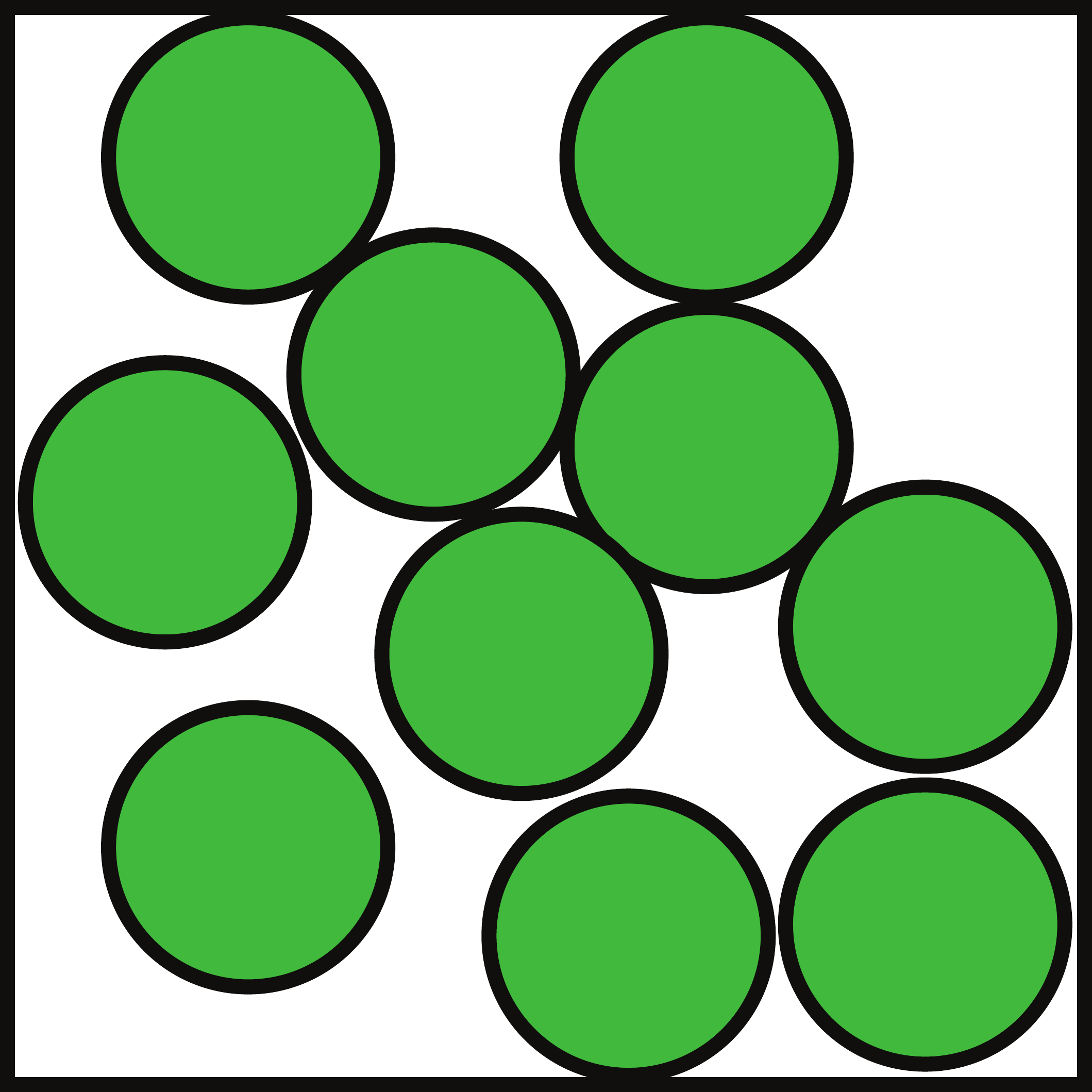}\hspace{10mm}
\includegraphics[width=0.35\columnwidth]{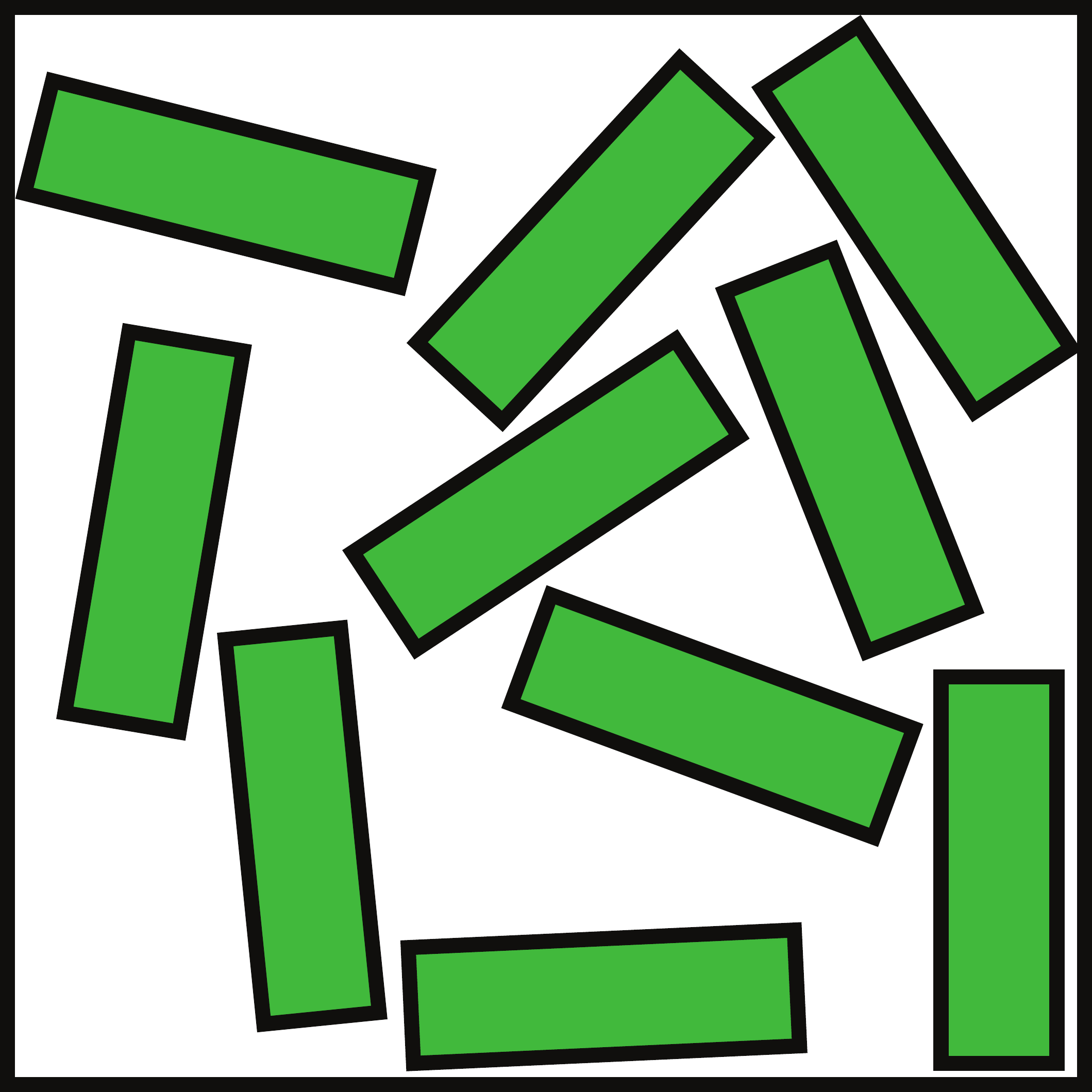}
\vspace{-.05in}
    \caption{[left] 10 cylinders with $\rho=0.5$, [right] 10 cuboids with $\rho=0.4$.}
    \label{fig:trlb-density}
\end{figure}

\ref{fig:trlb-density} shows a dense 10-cylinder arrangement with $\rho=0.5$, and a dense 10-cube arrangement with $\rho=0.4$. The experiments are executed on an Intel$^\circledR$ Xeon$^\circledR$ CPU at 3.00GHz. Each data point is the average of $30$ cases except for unfinished trials, given a time limit of $300$ sec. per case.

\subsection{Ablation Study for Cylindrical Objects}
We first present experiments with cylindrical objects to compare lazy buffer generation algorithms given different options, including:
(1) Primitive plan computation: running buffer minimization (RBM), total buffer minimization (TBM), random order (RO); (2) Buffer allocation methods: optimization (OPT), sampling (SP); (3) High level planners: one-shot (OS), forward search tree (ST), bidirectional search tree (BST); and (4) With or without preprocessing (PP).
%
Here, the one-shot (OS) planner is using primitive plans and buffer allocation (\ref{sec:LazyBufferGeneration}) without tree search (\ref{sec:PartialPlan}). In OS, we attempt to compute a feasible rearrangement plan up to $30|\mathcal O|$ times before announcing a failure.
Notice that at least $|\mathcal O|$ actions are required for solving any instance. 

A full \trlb algorithm is a combination of components, e.g.,  RBM-SP-BST stands for using the primitive plans that minimize running buffer size,  performing buffer allocation by sampling, maintaining a bidirectional search tree, and doing so without preprocessing.

For evaluation, we first compare the primitive plan computation options, using sampling-based buffer allocation, bidirectional tree search and no preprocessing.  TBM and RBM plans are computed using dynamic programming solvers from \ref{chap:rbm}. The results are shown in \ref{fig:Primitive}. 
Even though plans generated by TBM-SP-BST are slightly shorter than RBM-SP-BST, TBM-SP-BST is less scalable as either the density level or the number of objects in the workspace increases. Compared to RBM plans, individual RO plans can be generated almost instantaneously but there is little benefit in computation for the overall algorithm. 
The results indicate that RBM should be used for primitive plan computation as it results in efficient and high-quality solutions.

\begin{figure}[h!]
    \vspace{2mm}
\centering
    \begin{overpic}[width=1\columnwidth]{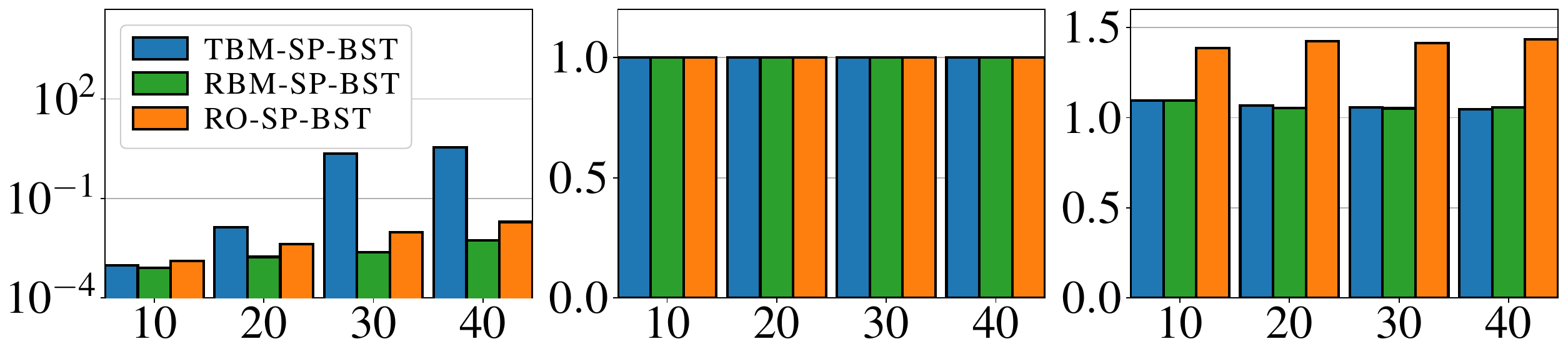}
    \put(-1.5,-22.5){ \includegraphics[width=1\columnwidth]{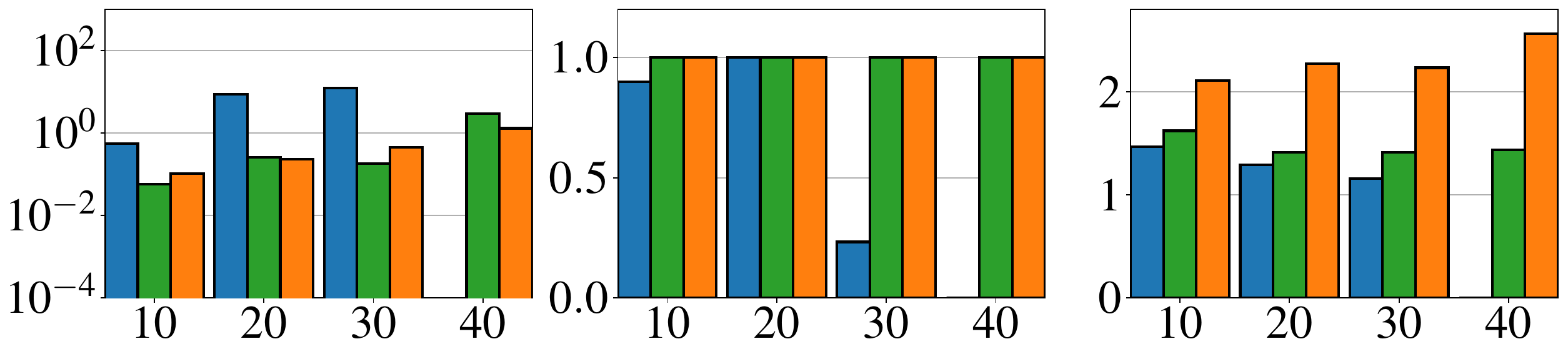}}
    \end{overpic}
    \vspace{25mm}
    \caption{Comparison of primitive planners with $10$-$40$ cylinders and density levels $\rho=0.3$ (top), 0.5 (bottom) (left: computation time in seconds; middle: success rate; right: number of actions as multiples of $|\mathcal O|$).}
    \label{fig:Primitive}
\end{figure}

In \ref{fig:BufferAllocation}, buffer allocation methods are compared using the RBM primitive planner and the OS high-level planner. Optimization-based allocation guarantees completeness and generates high-quality plans but it is computationally expensive. When $\rho=0.5$, the success rate tends to be low in instances with a small number of objects.  That is because for the given density level, the smaller the 
number of objects, the larger the object size relative to the environment, and the smaller the configuration space size relative to the environment. Thus, precisely allocating buffer locations with OPT is helpful in these cases.

\begin{figure}[h!]
    \vspace{2mm}
\centering
    \begin{overpic}[width=1\columnwidth]{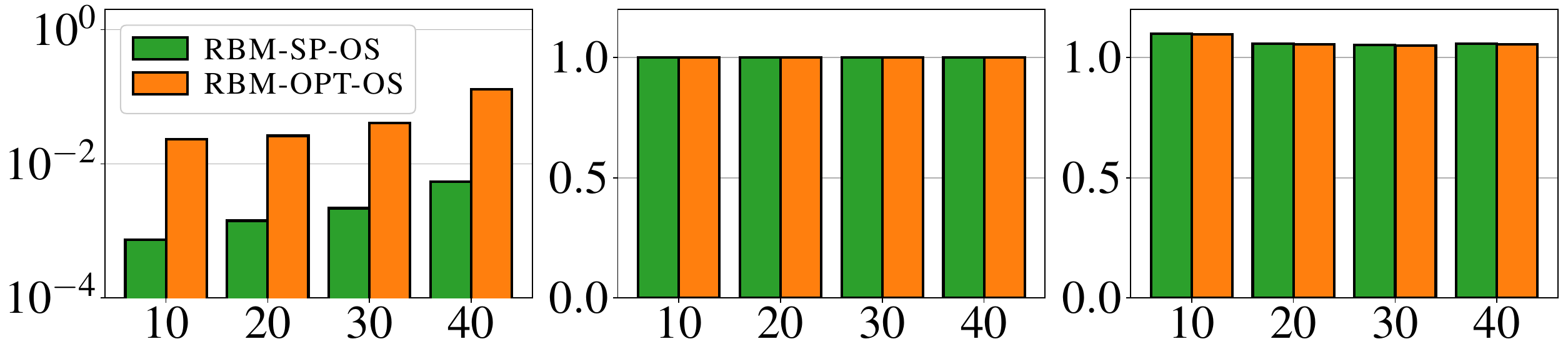}
    \put(-1.5,-22.5){ \includegraphics[width=1\columnwidth]{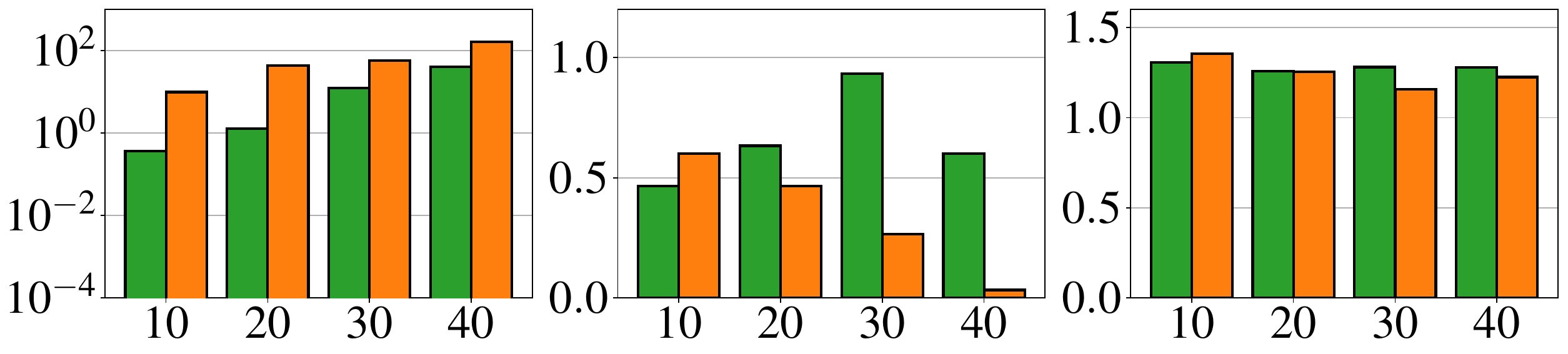}}
    \end{overpic}
    \vspace{25mm}
    \caption{Comparison of buffer allocation methods with $10$-$40$ cylinders at density levels $\rho=0.3$ (top), 0.5 (bottom) (left: computation time in seconds; middle: success rate; right: number of actions as multiples of $|\mathcal O|$).}
    \label{fig:BufferAllocation}
\end{figure}

The effectiveness of the high-level planners and preprocessing are shown in \ref{fig:HighLevel}, which suggests that ST, BST and preprocessing are all effective in increasing success rate in dense environments. In addition, preprocessing significantly speeds up computation in large scale dense cases at the price of extra actions to execute preprocessing. By simplifying the dependency graph with preprocessing, less time is needed to compute a primitive plan.
\begin{figure}[h!]
    \vspace{1mm}
\centering
    \begin{overpic}[width=1\columnwidth]{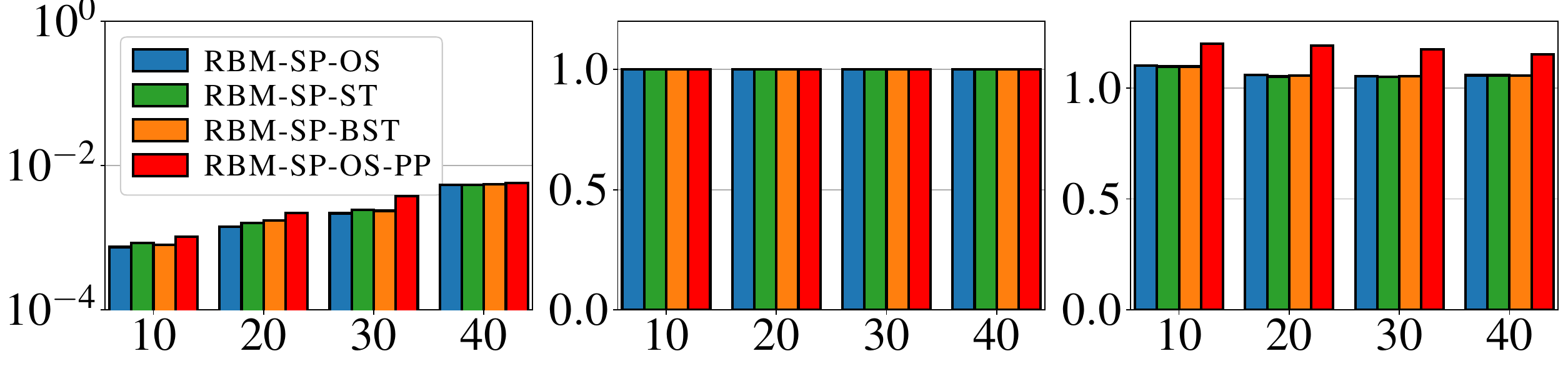}
    \put(-1.5,-22.5){ \includegraphics[width=1\columnwidth]{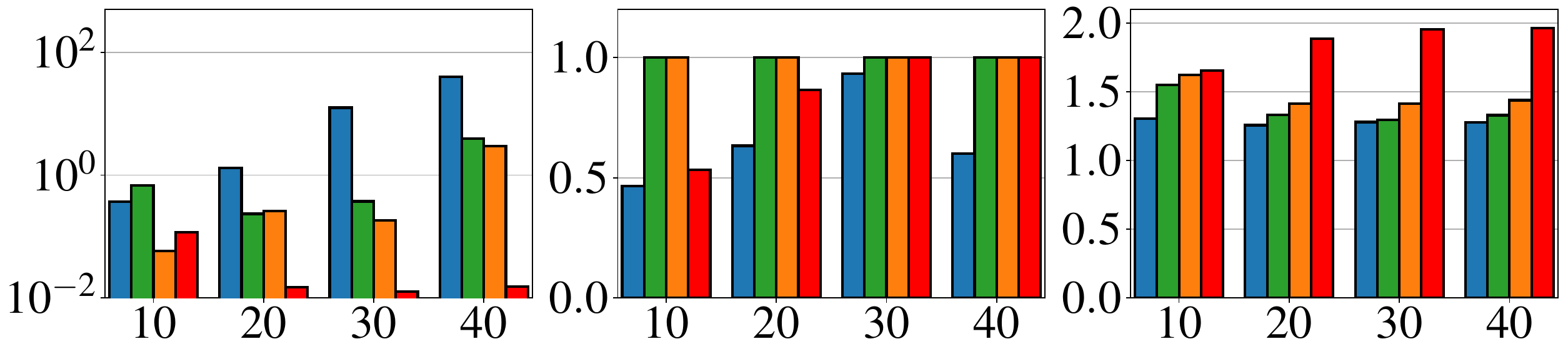}}
    \end{overpic}
    \vspace{25mm}
    \caption{Comparison among high-level frameworks and OS with preprocessing. There are 10-40 cylinders in the workspace at density levels $\rho=0.3$ (top), $0.5$ (bottom) (left: computation time in seconds; middle: success rate; right: number of actions as multiples of $|\mathcal O|$).}
    \label{fig:HighLevel}
\end{figure}

The robustness of ST and BST are further evaluated with ``dense-small'' instances where a few objects are packed densely (\ref{fig:HighLevelDenseSmall}). The bidirectional search tree has a higher success rate in these cases, especially in 5-object instances.

In addition to the above evaluations, we also tried integrating the preprocessing into the BST framework (RBM-SP-BST-PP), which speeds up the computation:  for $60$-object instances with $\rho=0.5$,  only $63\%$ of them can be solved by RBM-SP-BST in 300 seconds. All of them, however, can be solved together with the preprocessing and the solution time averaged 0.29 seconds. Similarly to the results in \ref{fig:HighLevel}, preprocessing makes the solution plan much longer than necessary (needs around $30\%$ more actions than RBM-SP-BST). Based on the analysis on computation time, success rate, and solution quality, RBM-SP-BST is the best overall combination, and preprocessing significantly speeds up the solver with a reduction of solution quality.

\begin{figure}[h!]
    \vspace{2mm}
\centering
\includegraphics[width=0.6\columnwidth]{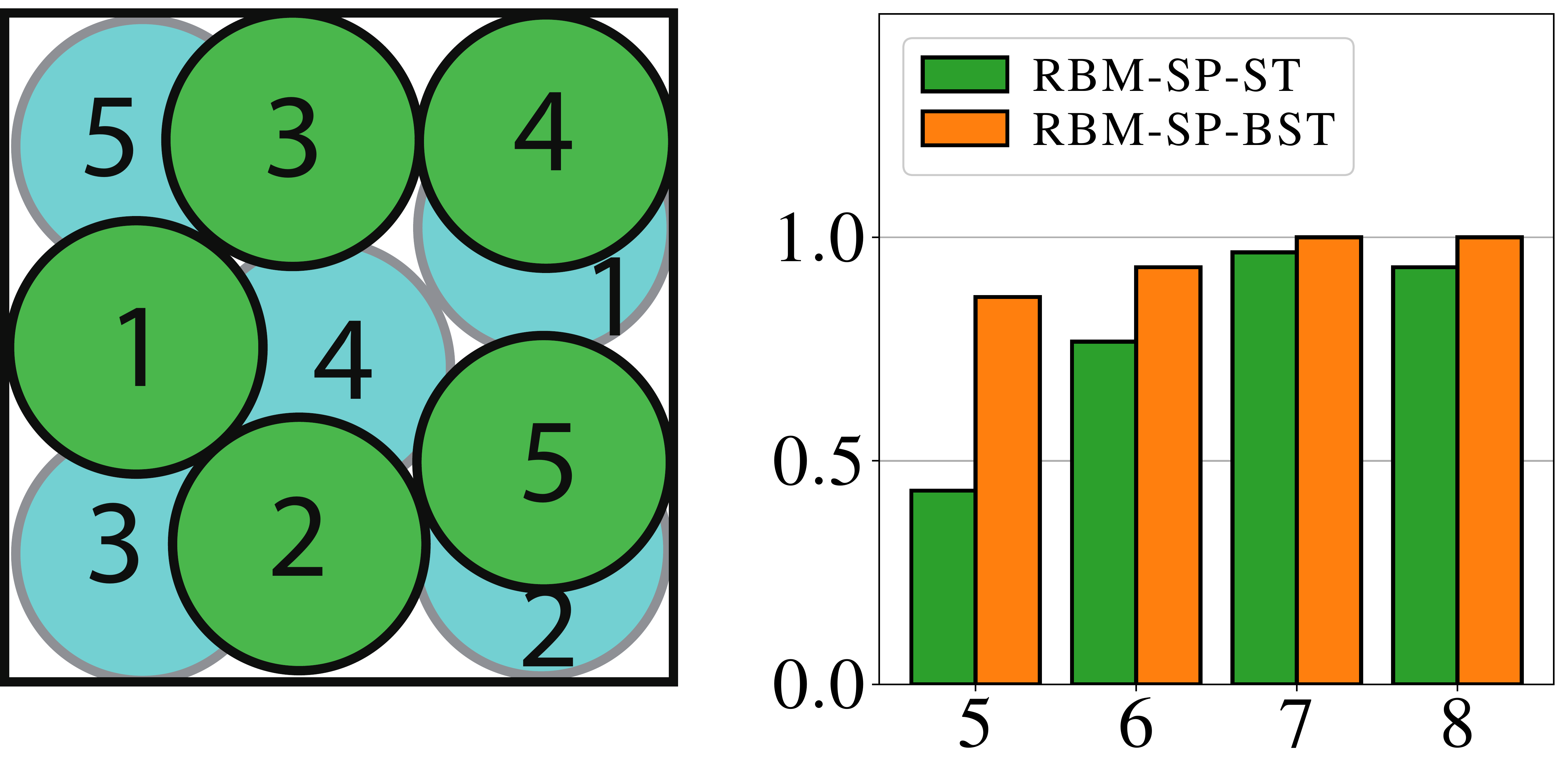}
\vspace{-0.1in}
    \caption{Comparison between ST and BST frameworks with ``dense-small'' instances where 5-8 cylinders packed in the environment and density level $\rho=0.5$. [left] An example with 5 cylinders, [right] success rate of methods.}
    \label{fig:HighLevelDenseSmall}
\end{figure}

\subsection{Comparison with Alternatives for Cylindrical Objects}
We compare the proposed method RBM-SP-BST with BiRRT(fmRS) \cite{krontiris2016efficiently} and a MCTS planner \cite{labbe2020monte}, which, to the best of our knowledge, are state-of-the-art planners for \toroi. The MCTS planner is a C++ solver, while the other two methods are implemented in Python. Besides success rate, solution quality, and computation time, 
we also compare the number of collision checks which are time-consuming in most planning tasks.  In \ref{fig:LargeScale}, we compare the methods in large scale problems with $\rho=0.3$. The success rate is $100\%$ for all.
Our method, RBM-SP-BST, avoids repeated collision checks due to the use of the dependency graph. BiRRT(fmRS), which only uses dependency graphs locally, spends a lot of time and conducts a lot of collision checks to generate random arrangements. MCTS generates solutions with similar optimality but does so also with a lot of collision checking, which slows down computation.  We note that a value of $1$ in the right figure (number of actions) is the minimum possible, so both RBM-SP-BST and MCTS compute high-quality solutions, which RBM-SP-BST does slightly better. To sum up, in sparse large scale instances, RBM-SP-BST is two magnitudes faster and conducts much fewer collision checks than the alternatives.

\begin{figure}[h!]
    \vspace{1mm}
\centering
    \begin{overpic}[width=1\columnwidth]{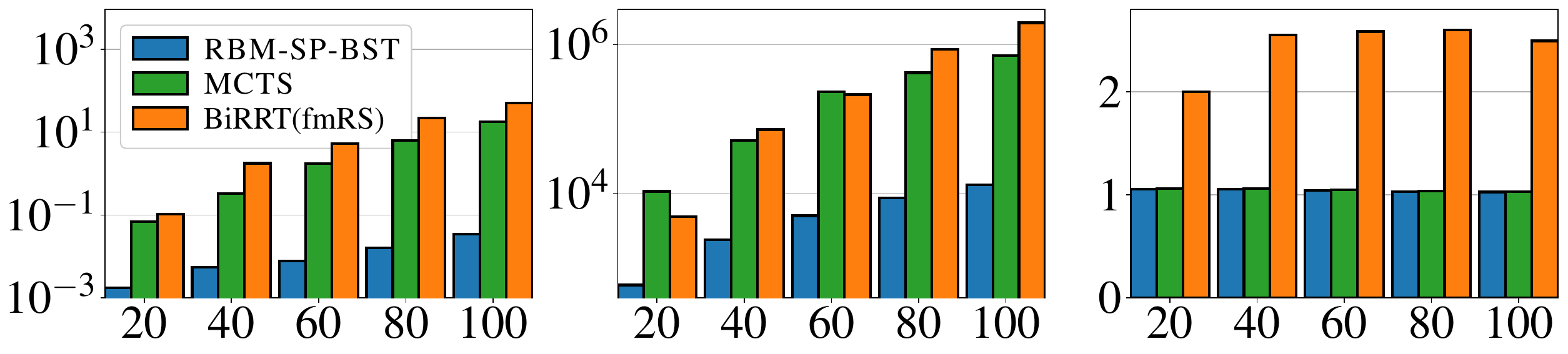}
    \end{overpic}
            \vspace{-.25in}
    \caption{Comparison of algorithms with 20-100 cylinders at density level $\rho=0.3$ (left: computation time in seconds; middle: number of collision checks; right: number of actions as multiples of $|\mathcal O|$).}
    \label{fig:LargeScale}
\end{figure}

Next, we compare the methods in ``dense-small'' instances (\ref{fig:DenseSmallComaprison}). Here, RBM-SP-BST is the only method that maintains high success rate in these difficult cases.

\begin{figure}[h!]
\centering
    \includegraphics[width=1\columnwidth]{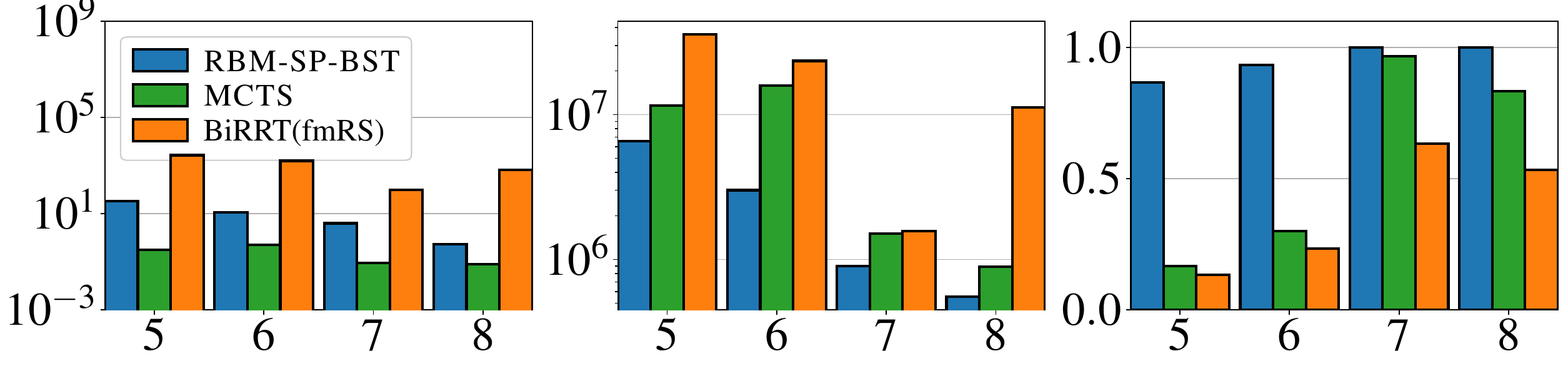}
            \vspace{-.25in}
    \caption{Comparison of methods on ``dense-small'' instances where 5-8 objects are packed in an environment with $\rho=0.5$ (left: computation time in seconds; middle: number of collisions; right: success rate).}
    \label{fig:DenseSmallComaprison}
\end{figure}

We further compare the performance of RBM-SP-BST and MCTS in lattice rearrangement problems recently studied in the literature \cite{yurearrangement}. 
An example with 15 objects is shown in \ref{fig:lattice}[left]. 
In the start and goal arrangements, gaps between adjacent objects are set to be 0.01 object radius, and thus buffer allocation is challenging for sampling-based methods. While MCTS tries all actions on each node, RBM-SP-BST can detect the embedded combinatorial object relationship via the dependency graph and therefore needs less buffer allocations. As shown in \ref{fig:lattice}[right], RBM-SP-BST has much higher success rate in lattice rearrangement tasks.

\begin{figure}[h!]
\centering
\includegraphics[width=0.6\textwidth]{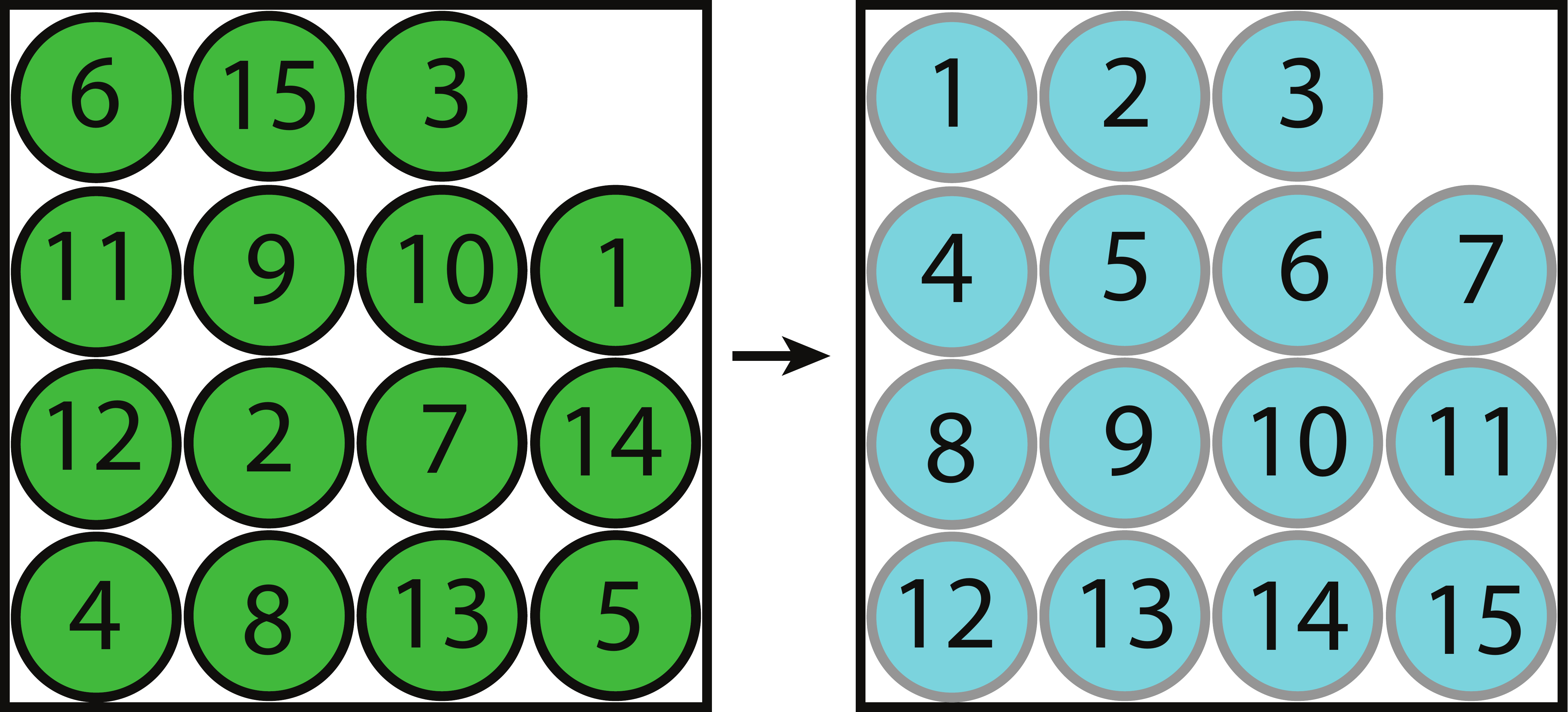}\hspace{2mm}
    \includegraphics[width=0.3\columnwidth]{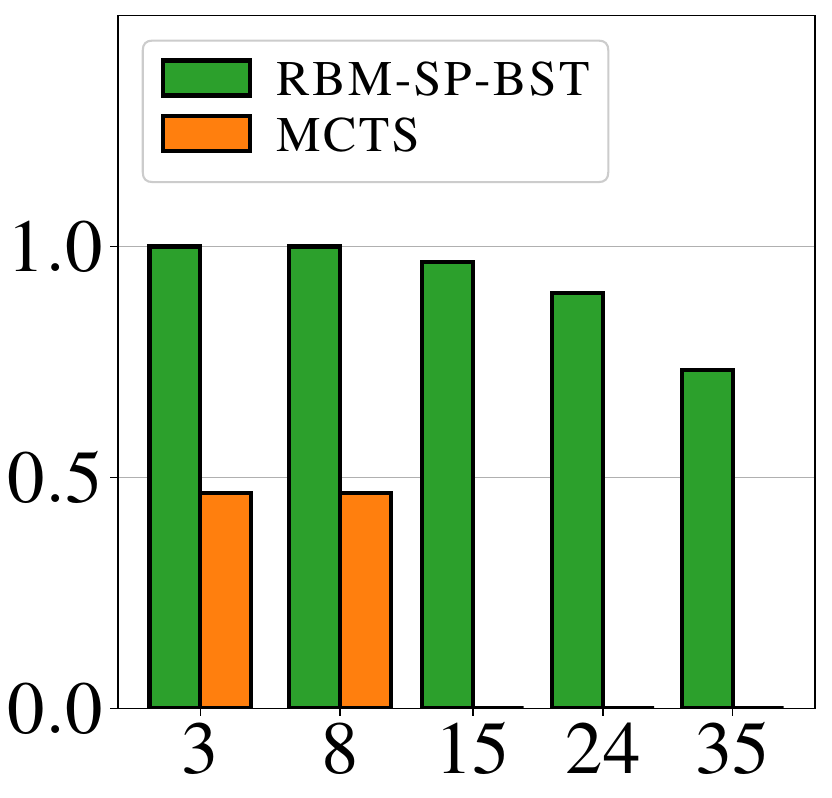}
        \vspace{-.05in}
    \caption{Comparison among methods in lattice instances with 3-35 objects. [left] lattice example; [right] success rate}
    \label{fig:lattice}
\end{figure}
\vspace{-1mm}
\subsection{Cuboid Objects}
Because the MCTS solver only supports cylindrical objects, we only compare RBM-SP-BST and BiRRT(fmRS) in the cuboids setup (\ref{fig:trlb-density}[right]).
When $\rho=0.3$, RBM-SP-BST computes high quality solutions efficiently, 
while BiRRT(fmRS) can only solve instances with up to 20 cuboids.
We mention that, when $\rho=0.4$, BiRRT(fmRS) cannot solve any instance, 
but RBM-SP-BST can solve 50-object rearrangement problems in 28.6 secs on average.

\begin{figure}[h!]
\centering
    \includegraphics[width=1\columnwidth]{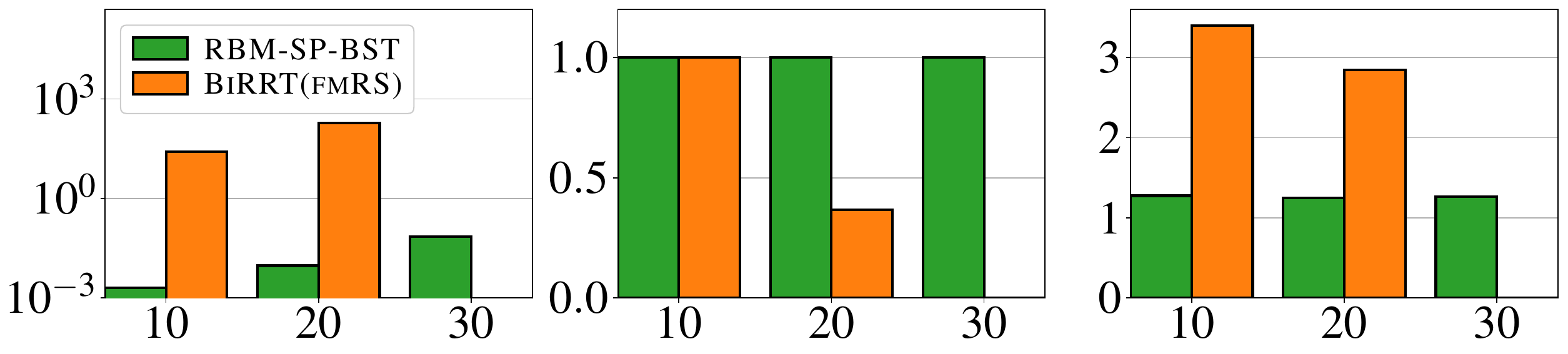}
        \vspace{-.25in}
    \caption{Comparison between methods in cuboid instances with $10$-$30$ cuboids and $\rho=0.3$ (left: computation time in seconds; middle: success rate; right: number of actions as multiples of $|\mathcal O|$).}
    \label{fig:StickComaprison}
\vspace{-1mm}
\end{figure}

\subsection{Hardware Demonstration}
\begin{figure}[h!]
    \vspace{-1mm}
    \centering
    \includegraphics[width=\columnwidth]{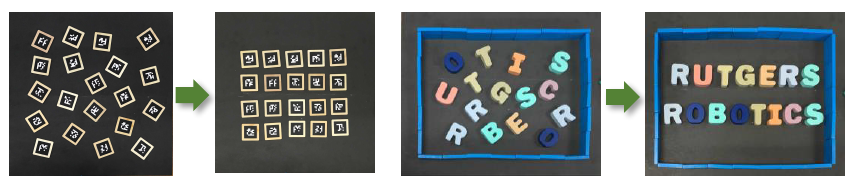}
    \vspace{-.25in}
    \caption{Experimental settings in addition to that of  \ref{fig:TRLBApplicationExample}.}
    \label{fig:HardwareSetups}
    \vspace{-1mm}
\end{figure}

We further demonstrate that the plans computed by \trlb can be readily executed on real robots in a complete vision-planning-control pipeline. Our hardware setup consists of a UR-5e robot arm, an OnRobot VGC 10 vacuum gripper, and an Intel RealSense D435 RGB-D camera. As shown in the accompanying video\footnote{The video is available \href{https://www.youtube.com/watch?v=hegO3JenKjo&t=4s}{\bl{online}}. More demonstrations are available \href{https://github.com/gaokai15/gaokai15.github.io/assets/53358252/315865f0-5fc2-4ee8-8805-90a62e5effe6}{\bl{here}}.}, \trlb solves all attempted instances (\ref{fig:TRLBApplicationExample} and \ref{fig:HardwareSetups}), which involves concave objects, in an apparently natural and efficient manner. 
%

\section{Experimental Results for \hete}\label{sec:conclusion}
The proposed algorithms' performance is evaluated using simulation and real robot experiments.
All methods are implemented in Python, and the experiments are executed on an Intel$^\circledR$ Xeon$^\circledR$ CPU at 3.00GHz. Each data point is the average of $50$ cases except for unfinished trials, given a time limit of $600$ seconds per case. 

\subsection{Numerical Results in Simulation}
Simulation experiments are conducted in two scenarios:
\begin{enumerate}[leftmargin=4.5mm]
    \item (RAND) In the RAND (random) scenario, for each object, the base shape is randomly chosen between an ellipse and a rectangle. The aspect ratios and the areas are uniformly random values in $[1.0,3.0]$ and $[0.05,1]$, respectively.
    The areas of the objects are then normalized to a certain sum $S$, whose value depends on specific problem setups. Examples of the RAND scenario are shown in the first row of \ref{fig:exp_example}.
    This setup is used to examine algorithm performance in general \hete instances.
    \item (SQ) In the SQ (squares) scenario, there are $2$ large squares and $|\mathcal O|-2$ small squares. The area ratio between the large and small squares is $9:1$. This setup simulates the rearrangement scenario where object size varies significantly.
    Examples of arrangements in the SQ scenario are shown in the second row of \ref{fig:exp_example}.
\end{enumerate}

For each test case, object start and goal poses are uniformly sampled in the workspace for both scenarios.
The sampling process for a single test case is repeated in the event of pose collision until a valid configuration is obtained.  
We use the \emph{density level $\rho$} to measure how cluttered a workspace is. 
It is defined as $(\sum_{o_i\in \mathcal O} S_i)/S_W$, where $S_i$ and $S_W$ represents the size of $o_i$ and the workspace respectively. 
In other words, $\rho$ represents the proportion of the workspace occupied by objects. 
\ref{fig:exp_example} shows examples of RAND and SQ scenarios for $\rho=0.2$-$0.4$, where $\rho = 0.4$ is already fairly dense (e.g., 3rd column \ref{fig:exp_example}).
We limit the maximum number of objects in a test case to $40$ as typical \hete cases are unlikely to have more objects. 

\begin{figure}[h]
    \centering
    \vspace{1mm}
    \includegraphics[width=0.9\textwidth]{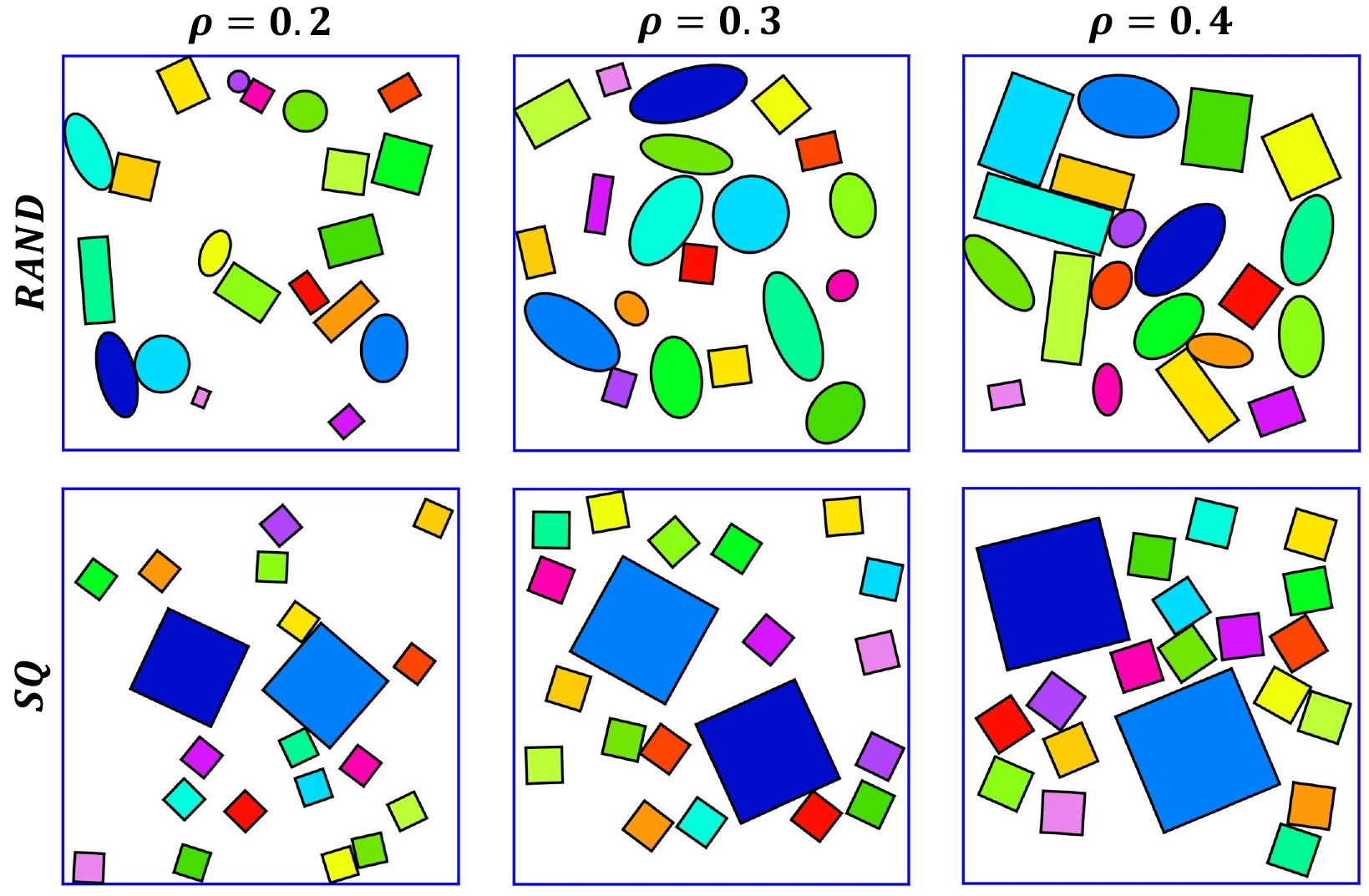}
    \caption{Illustrations of some simulation test cases at different densities. [top] RAND instances under different density levels. [bottom] SQ instances under different density levels.}
    \label{fig:exp_example}
\end{figure}

\begin{figure}[h]
    \centering
    \includegraphics[width=0.96\textwidth]{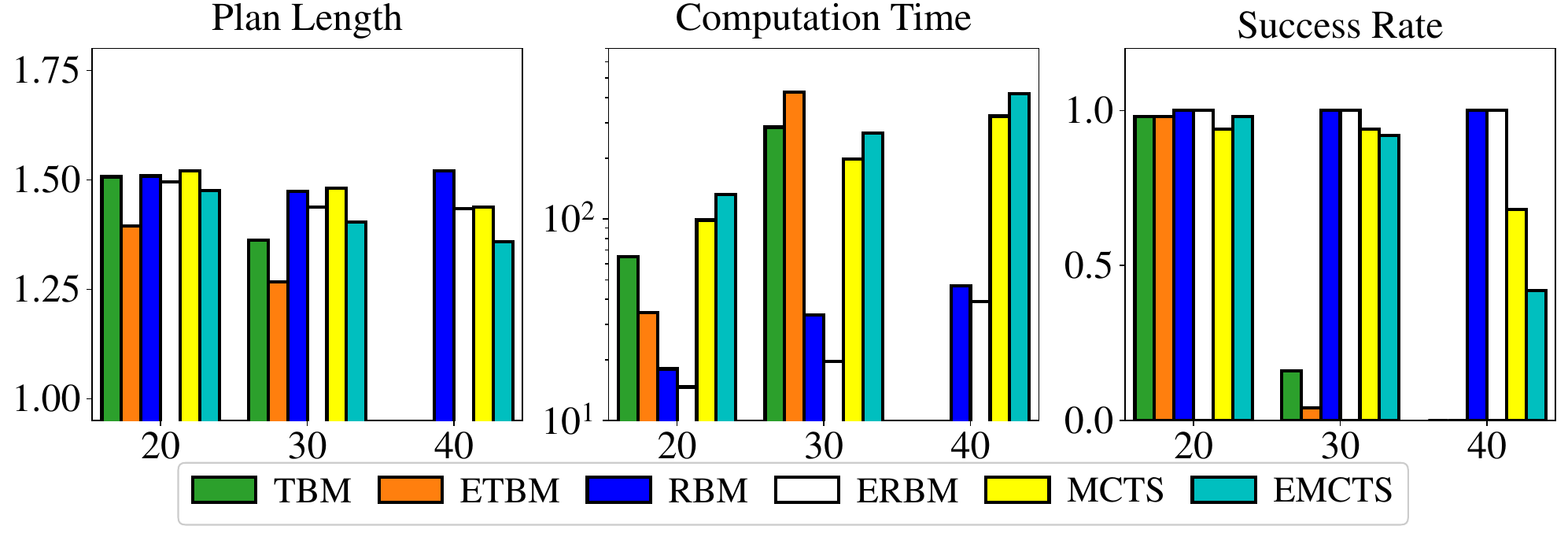}
    \caption{Algorithm performance in RAND instances with $\rho=0.4$ and 20-40 objects. [Left] Plan length as multiplies of $|\mathcal O|$. [Middle] Computation time (secs). [Right] Success rate.}
    \label{fig:rd_pp}
\end{figure}
In \ref{fig:rd_pp}, we first evaluate algorithm performance in dense RAND instances ($\rho=0.4$) under the PP objective. 
Compared with the original TBM, ETBM shows a $7.8\%$ improvement in solution quality with only $52\%$ computation time, when both TBM and ETBM have $98\%$ success rates in 20-object instances.
Suffering from the expansion of solution space, neither TBM or ETBM is scalable as the number of objects increases. 
With \heteCP, both ERBM and EMCTS show clear advantages compared to the original methods (RBM and MCTS) in solution quality as the number of objects increases. 
In 40-object instances, plans computed by ERBM and EMCTS save $3.6$ and $3.2$ actions over those by RBM and MCTS, respectively.
While ERBM spends less time than RBM to solve RAND instances, EMCTS spends slightly more time on computation since the weights are added to the exploration term.
In general, ETBM computes solutions with the best quality in small-scale instances, ERBM is most scalable as $|\mathcal O|$ increases, and EMCTS balances between optimality and scalability,  computing high-quality solutions in 30-object instances with a high success rate.

\begin{figure}[h]
    \centering
    \includegraphics[width=\columnwidth]{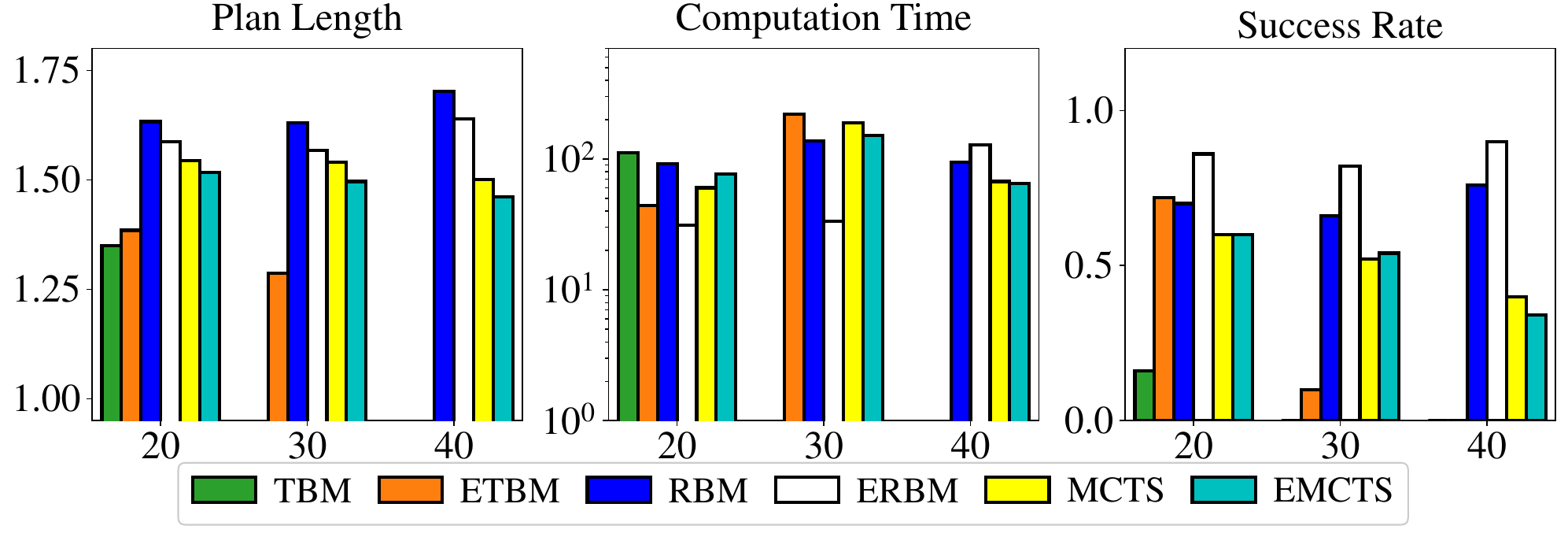}
    \caption{Algorithm performance in SQ instances with $\rho=0.4$ and 20-40 objects. [Left] Plan length as multiplies of $|\mathcal O|$. [Middle] Computation time (secs). [Right] Success rate.}
    \label{fig:sq_pp}
\end{figure}

In SQ instances, it is difficult to displace the large squares in the cluttered workspace. 
\ref{fig:sq_pp} shows the algorithm performance in SQ instances with $\rho=0.4$ under the PP objective.
The results suggest that ETBM with \heteCP effectively increases the success rate from $16\%$ to $72\%$ when $|\mathcal O|=20$.
Similarly, the success rate of ERBM is around $15\%$ higher than RBM when $|\mathcal O|=20,30,$ and $40$.
While ERBM solves more test cases, the computed solutions are even $5\%-7\%$ shorter than RBM.
As for MCTS methods, EMCTS computes $3\%-5\%$ shorter paths than EMCTS with a similar success rate and computation time.

Besides dense instances, we also evaluate the efficiency gain of \heteCP at different density levels when $|\mathcal O|=20$.
The results of RAND and SQ scenarios are shown in \ref{fig:rd_density} and 
\ref{fig:sq_density} respectively.
In sparse RAND instances ($\rho=0.2$ or $0.3$), ETBM and ERBM return plans with similar optimality without additional computational overheads.
When $\rho=0.3$, EMCTS spends additional $5$ seconds for exploration and receives around $3\%$ improvements on path length.

\begin{figure}[h]
    \centering
    \includegraphics[width=0.96\textwidth]{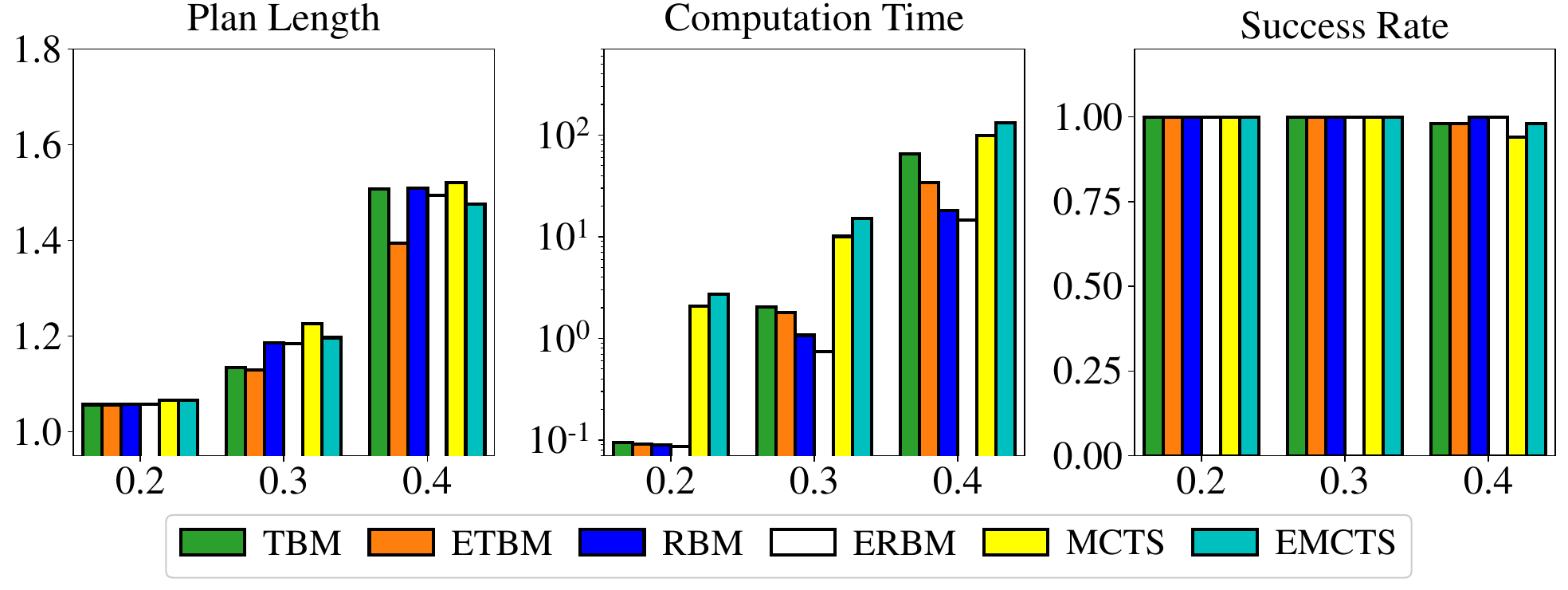}
    \caption{Algorithm performance in RAND instances under different density levels and $|\mathcal O|=20$. [Left] Plan length as multiplies of $|\mathcal O|$. [Middle] Computation time (secs). [Right] Success rate.}
    \label{fig:rd_density}
\end{figure}

In sparse SQ instances, \heteCP helps improve computation time, which contributes to an increase in success rate.
Compared with TBM, ETBM saves $70\%$ computation time when $\rho=0.2$ and $0.3$ and increases the success rate from $82\%$ to $98\%$ when $\rho=0.2$.
Similarly, compared with RBM, ERBM saves $56\%$ and $82\%$ computation time when $\rho=0.2$ and $0.3$.
In ETBM and ERBM, \heteCP speeds up computation without reducing solution quality.
Compared with MCTS, EMCTS computes $5\%$ shorter plans, saves $33\%$ computation time, and increases the success rate by $16\%$ when $\rho=0.3$.

\begin{figure}[h]
    \centering
    \vspace{1mm}
    \includegraphics[width=0.96\textwidth]{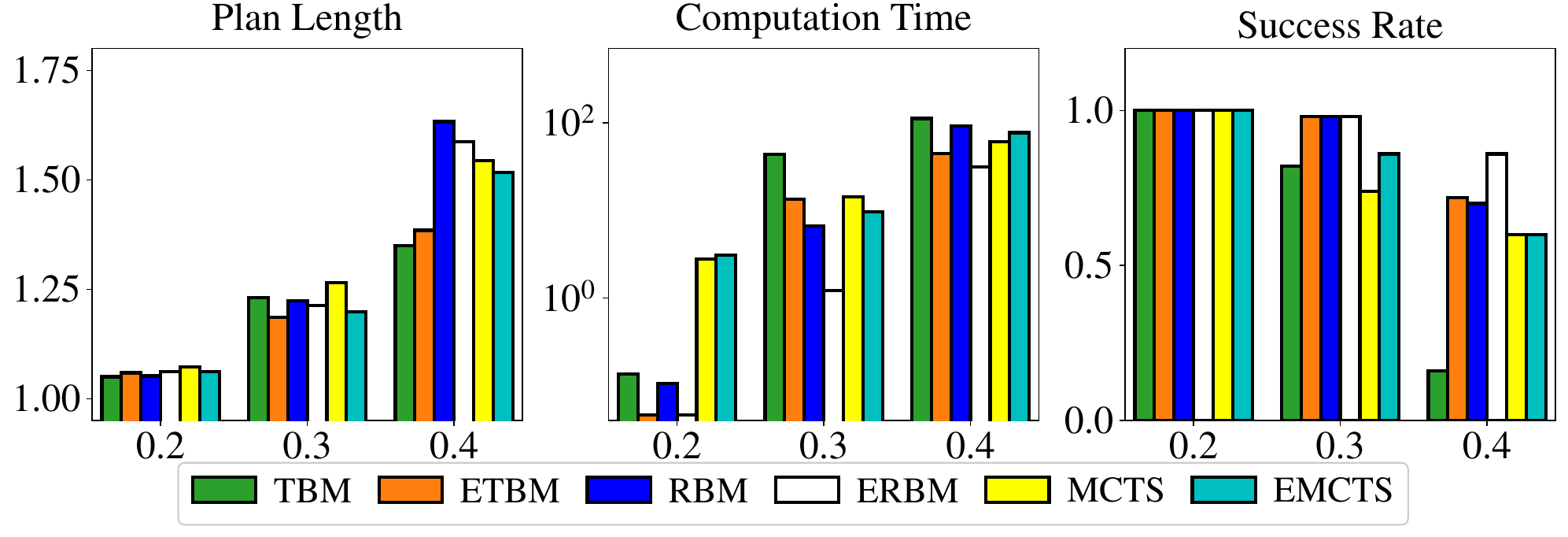}
    \caption{Algorithm performance in SQ instances under different density levels and $|\mathcal O|=20$. [Left] Plan length as multiplies of $|\mathcal O|$. [Middle] Computation time (secs). [Right] Success rate.}
    \label{fig:sq_density}
\end{figure}

In \ref{fig:rand_effort} and \ref{fig:squares_effort}, we show the algorithm performance of \hete with TI objective in RAND and SQ instances respectively.
$\rho=0.3$ in the test cases.
In RAND instances, where only around $10\%$ objects need buffer locations, the total task impedance of computed plans is similar.
Compared with original methods (TBM, RBM, and MCTS), ETBM, ERBM, and EMCTS reduce task impedance by $1\%-7\%$.
In SQ instances, object impedance varies greatly. In this case, methods with \heteTI compute much better solutions.
Compared with TBM, ETBM reduce task impedance by over $30\%$.
Similarly, ERBM reduces task impedance by around $27\%$.
Compared with MCTS, EMCTS reduces task impedance by $6\%-15\%$ but with additional overhead.

\begin{figure}[h]
    \centering
    \vspace{2mm}
    \includegraphics[width=0.96\textwidth]{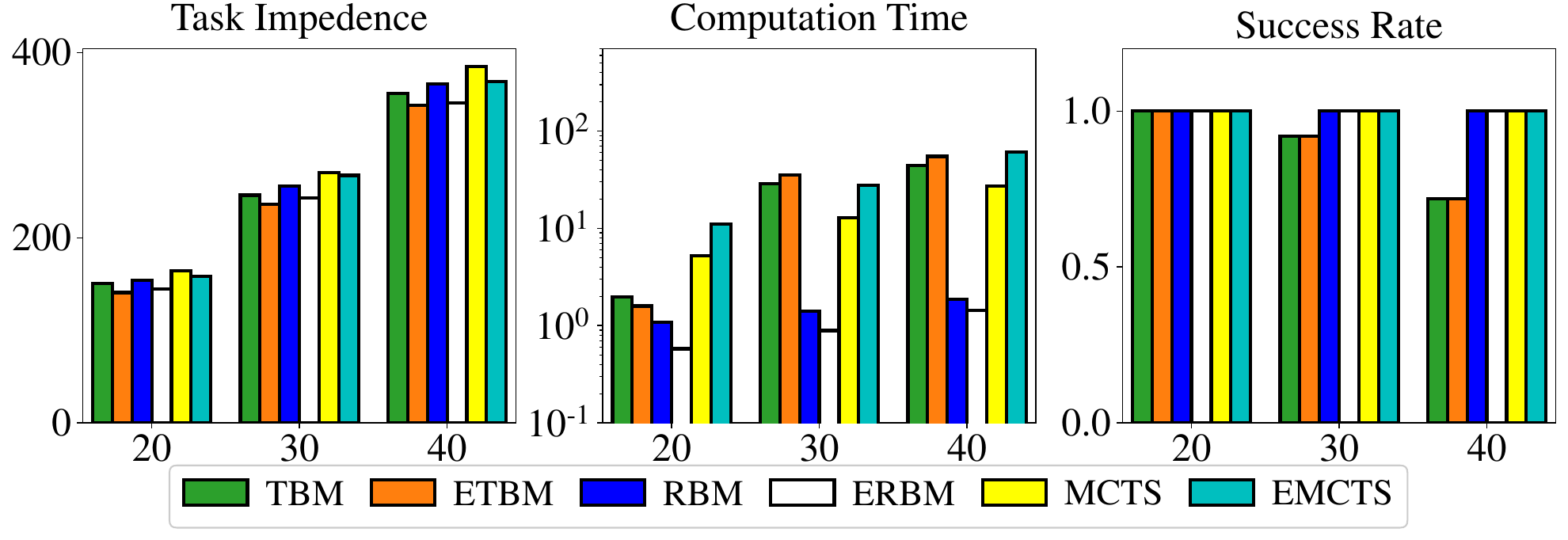}
    \caption{
    Algorithm performance in RAND instances 
    with TI objective. $\rho=0.3$, $20 \leq |\mathcal O|\leq 40$.
    [Left] Total task impedance. [Middle] Computation time (secs). [Right] Success rate.}
    \label{fig:rand_effort}
\vspace{-2mm}
\end{figure}

\begin{figure}[h]
    \centering
    \includegraphics[width=0.96\textwidth]{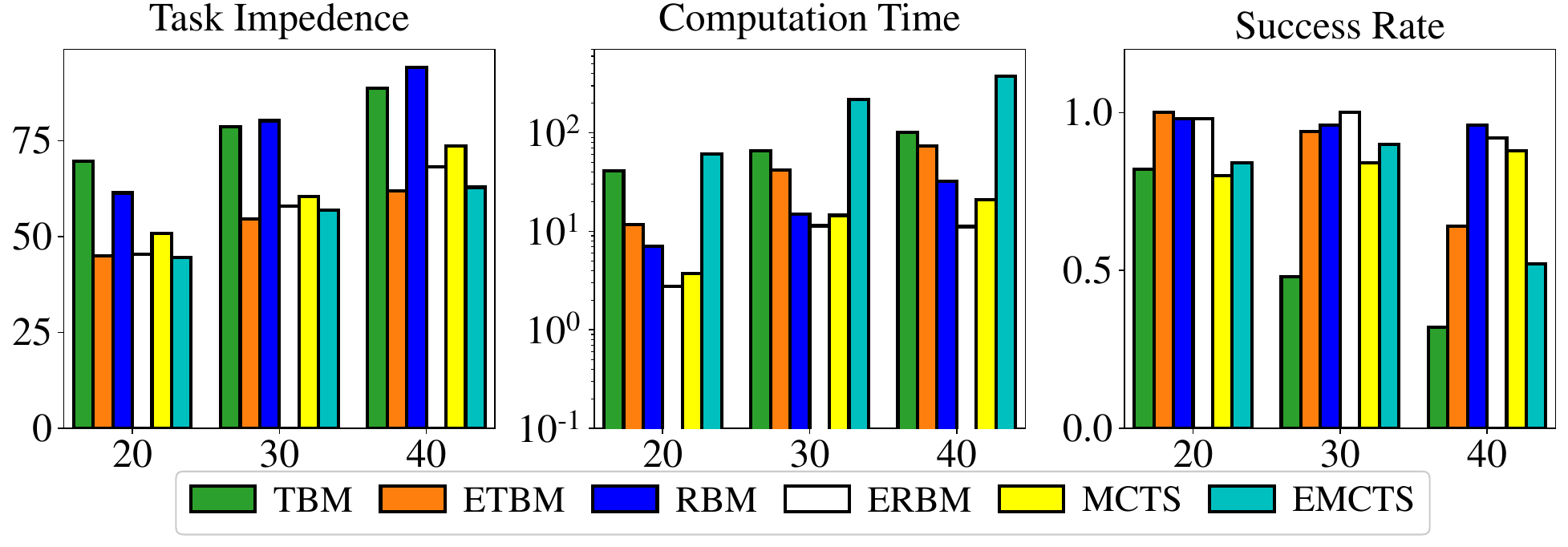}
    \caption{Algorithm performance in SQ instances 
    with TI objective. $\rho=0.3$, $20 \leq |\mathcal O|\leq 40$.
    [Left] Total task impedance. [Middle] Computation time (secs). [Right] Success rate.}
    \label{fig:squares_effort}
\end{figure}

\subsection{Demonstrations in Simulation and Real-World}
In the accompanying video 
\footnote{The video is available \href{https://github.com/gaokai15/gaokai15.github.io/assets/53358252/87e438da-7831-4042-a8e5-deb56c7aacd3}{\bl{online}}.}
, we further demonstrate that our algorithms can be applied to practical instances with general-shaped objects in both simulation and the real world. 
For simulation instances, the rearrangement plans are executed in Pybullet. 
For real-world instances, the objects are detected with an Intel RealSense D405 RGB-D camera and manipulated with an OnRobot VGC 10 vacuum gripper on a UR-5e robot arm. Since object recognition and localization are not the focus of this study, we use fiducial markers for realizing these. 
The workspaces for rearrangement are rather confined (in \ref{fig:demo}, the white areas for the first two and the wooden board for the third), translating to very high object densities, between $35$-$37\%$.
In the SQ instance with 10 wooden planks, our proposed methods ETBM and ERBM offer advantages over TBM and RBM in both solution quality and computation time, saving $1$ and $4$ actions, respectively. 
Furthermore, EMCTS is able to find a $15$-action plan in just $13.70$ secs while MCTS fails to find a solution even after $2$ minutes of computation.

\begin{figure}[h]
    \centering
    \includegraphics[width=\columnwidth]{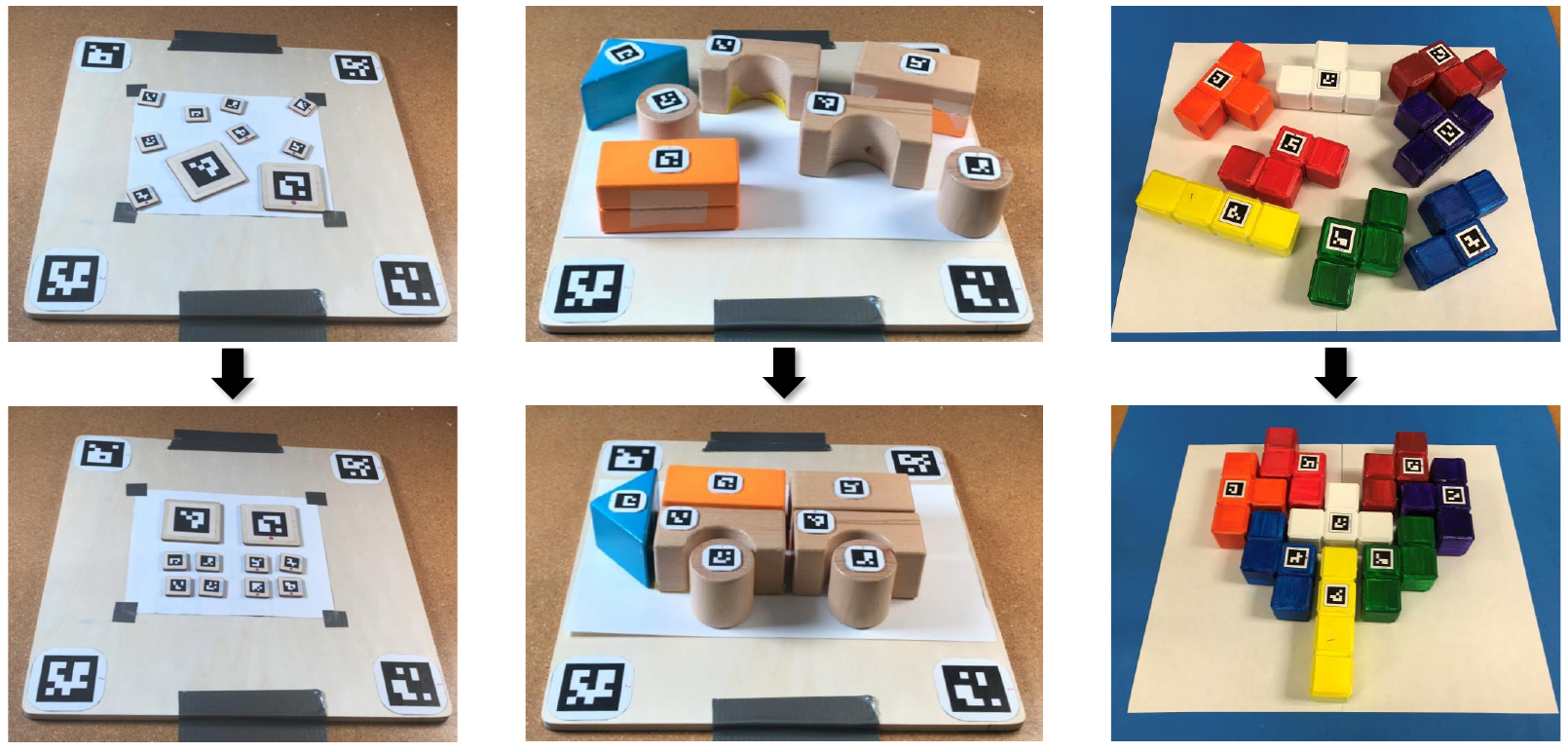}
    \caption{Physical experiment/demo setups. [left] Dense SQ instance with $10$ square wooden planks. [middle] Rearranging densely-packed thick wooden blocks of different shapes and sizes. [right] Rearranging 3d-printed Tetris pieces into a heart shape.}
    \label{fig:demo}

\end{figure}

\section{Summary}
The \trlb framework proposed in this work employs the dependency graph representation and a lazy buffer allocation approach for efficiently solving the problem of rearranging many tightly packed objects on a tabletop using internal buffers (\toroi). Extensive simulation studies show that \trlb computes rearrangement plans of comparable or better quality as the state-of-the-art methods, and does so up to $2$ magnitudes faster. 

We further examine rearranging heterogeneous objects on a tabletop (\hete) using two different objectives: PP and TI. 
The PP objective assumes each object requires an equal amount of manipulation cost for a pick-n-place, while utilizing the object's shape and size can enhance rearrangement efficiency. 
On the other hand, the TI objective evaluates the total task impedance, where the manipulation cost of each object varies based on its specific characteristics, such as size or weight. 
To optimize these objectives, we propose weighting heuristics and build SOTA algorithms with them. Extensive simulation experiments confirm the efficiency of our proposed methods. 

Demonstration with real robot experiments shows that the solutions computed by the proposed \toroi and \hete solvers are efficient and appear natural. As such, the solutions can potentially be deployed as part of home automation and industrial automation solutions in next-generation robots.

    \chapter{TRLB for Dual Arm Coordination}\label{chap:cdr}
\thispagestyle{myheadings}

\section{Motivation}
In solving multi-object rearrangement problems, employing multiple arms is 
a straightforward way to expand the workspace \cite{shome2020synchronized} 
and increase overall system throughput\cite{shome2021fast}.
Toward enabling effective dual-arm (and multi-arm) coordination, a difficult challenge at present, this work investigates the \emph{cooperative dual-arm rearrangement} 
(CDR) problem, where two robot arms' workspaces partially overlap, and each robot is responsible for a portion of the workspace (\ref{fig:problem}[Left]). 
In CDR, robot-robot collaboration must be considered to realize higher throughput. 
Certain settings benefit from having more robots, e.g., one arm may hold an object while other arms make the goal pose of the object available. 
This will lead to efficiency gain as compared with using a single robot.
There are also added challenges, such as some objects being handed off from one robot to another, which requires careful synchronization. 

For task planning, the proposed method employs lazy or delayed verification and efficiently computes high-quality solutions for non-monotone instances in a cluttered environment. 
As a key component of the algorithmic solution, this work proposes a dependency graph-guided heuristic search procedure for coordinating robot-robot object handoffs 
and individual pick-n-places that support multiple makespan evaluation metrics.

For motion planning, we propose \emph{Makespan-Optimized Dual-Arm (Task and Motion) Planner} (MODAP). 
MODAP has the following main contributions: 
First, MODAP intelligently samples inverse kinematics (IK) solutions during the task planning (i.e., when objects are picked or placed) phase, generating multiple feasible motion trajectories for downstream planning and improving long-horizon plan optimality. 
Second, MODAP leverages a recent GPU-accelerated planning framework,  cuRobo \cite{sundaralingam2023curobo}, to quickly generate high-quality motion plans for our dual-arm system, resulting in magnitudes faster computation of dynamically smooth motions.\footnote{We note that, while cuRobo can be applied to dual-arm motion planning, it cannot be directly applied to solve CDR because the two robots' plans cannot be readily synchronized due to asynchronous pick-n-place actions.} MODAP performs further trajectory optimization over the trajectories from cuRobo to accelerate the full plan.

\section{Related Work}
%

{\bf Multi-Object Rearrangement:} Single arm object rearrangement lies within the broader area of Task and Motion Planning (TAMP).
Typical problems in this domain \cite{cosgun2011push, wang2021uniform, gaorunning, wang2021efficient, gao2021fast} involve arranging multiple objects to assume specific goal poses. Certain variations, however, such as NAMO (Navigation among Movable Obstacles) \cite{stilman2005navigation,  stilman2008planning}, and retrieval problems \cite{ nam2019planning, ZhangLu-RSS-21, vieira2022persistent, huang2022self}, 
focus on clearing out a path for a target object or robot. During the process, obstacles that need to be displaced are identified.
Rearrangement may be approached either via simple but inaccurate non-prehensile actions, e.g., pushes \cite{cosgun2011push, king2017unobservable, huang2021dipn, vieira2022persistent}, or more purposeful prehensile actions, such as grasps \cite{krontiris2015dealing, krontiris2016efficiently, wang2021efficient}.

Focusing on inherent combinatorial challenges associated with rearrangement tasks, some planners assume external space for temporary object storage \cite{bereg2006lifting,han2018complexity, nam2019planning,gaorunning}, while others exploit problem linearity to simplify the search  \cite{okada2004environment, stilman2005navigation, stilman2008planning, levihn2013hierarchical}.
By linking multi-object rearrangement to established graph-based problems, efficient algorithms have been obtained for various tasks and objectives  \cite{han2018complexity,gaorunning,bereg2006lifting}.
When external free space is not available, the robot arm needs to allocate collision-free locations for temporary object displacements\cite{krontiris2016efficiently,cheong2020relocate,gao2021fast}.
In our work, we adopt the idea of ``lazy buffer allocation''\cite{gao2021fast} to the dual-arm scenario.

Multi-robot rearrangement requires additional computation on task assignment and coordination.
Ahn et al.\cite{ahn2021coordination} coordinate robots by maximizing the number of ``turn-taking'' moments when one arm is picking an obstacle from the workspace while the other arm is placing the previous obstacle at the external space.
Shome et al.\cite{shome2021fast} assume robot arms pick-n-places simultaneously and only solve monotone rearrangement problems, where each object can move directly to the goal pose.
For objects that need to be moved to a pose outside the reachable region of a robot arm,
the task can be accomplished by coordinating multiple arms to handoff the objects around\cite{shome2020synchronized}.
Cooperative pick-n-place, a problem related to multi-arm rearrangement, is well-studied in the area of printed circuit board assembly tasks\cite{li2008enhancing,moghaddam2016parallelism}. 
These problems do not have ordering constraints on pick-n-place tasks, i.e. all the items can move directly to goal poses in any ordering.
Research in these topics tends to consider multiple grippers equipped on a single arm rather than multiple arms, which reduces system flexibility.
In our work, we compute dual-arm rearrangement plans for dense non-monotone instances without external storage space.

{\bf Dependency Graph:} Dependencies between objects in rearrangement tasks can be naturally represented as a dependency graph, which was first applied to general multi-body planning problems \cite{van2009centralized} and then rearrangement \cite{krontiris2015dealing, krontiris2016efficiently}. 
In rearrangement, different choices of grasp poses and paths give rise to multiple dependency graphs for the same problem instance, which limits the scalability in computing a solution via such representations. 
Prior work \cite{han2018complexity} has applied full dependency graphs to address \toro (Tabletop Object Rearrangement with Overhand Grasps), showing that the problem embeds a NP-hard Feedback Vertex Set problem \cite{karp1972reducibility} and a Traveling Salesperson Problem \cite{papadimitriou1977euclidean}.

More recently, some of the authors \cite{gaorunning} examined another optimization objective, running buffers, which is the size of the external space needed for temporary object displacements in the rearrangement task, and also examined an unlabeled setting,
where goal poses of objects are interchangeable. 
Similar graph structures are also used in other robotics problems, such as packing problems \cite{wang2020robot}. 
Deep neural networks have been also applied to detect the embedded dependency graph of objects in a cluttered environment to determine the ordering of object retrieval \cite{ZhangLu-RSS-21}.

\section{Cooperative Dual-Arm Rearrangement (CDR)}
\subsection{Problem Statement}
Consider a 2D bounded workspace $\mathcal W \subset \mathbb R^2$ containing a set of $n$ cylindrical objects $\mathcal O=\{o_1, ..., o_n\}$. 
An arrangement $\mathcal A$ of these objects is a set of poses $\{p_1, ..., p_n\}$ with each pose  $p_i=(x_i,y_i)\in \mathcal W$.
$\mathcal A$ is \emph{feasible} if the footprints of objects are inside $\mathcal W$ and pair-wise disjoint.
Outside the workspace, two robot arms $\mathcal R=\{r_1, r_2\}$ are tasked to manipulate objects from a feasible start arrangement $\mathcal A_s$ to another feasible goal arrangement $\mathcal A_g$ (e.g., \ref{fig:problem}[Left]). 
Each robot arm $r_i$ has a connected reachable region $\mathcal S(r_i)\subseteq \mathcal W$. 
Robot $r_i$ can only manipulate within $\mathcal S(r_i)$.

It is assumed that $\mathcal S(r_1)\cup \mathcal S(r_2)=\mathcal W$.
The overlap ratio $\rho$ is defined as $\rho = \dfrac{|\mathcal S(r_1)\cap \mathcal S(r_2)|}{|\mathcal{W}|}$, 
which is the proportion of the environment that can be reached by both robots. 
In \ref{fig:CDR-Exp}[Left], we show a workspace with $\rho=0.3$. 
Robot arm $r_1$ can reach objects whose centers are on the left of the red dashed line, which include all the objects at start poses($O_1, O_2, O_3$) and $o_1$, $o_2$ at goal poses($G_1, G_2$). Robot arm $r_2$ can reach objects whose centers are on the right of the blue dashed line, which include all the objects at goal poses($G_1, G_2, G_3$) and $o_1$, $o_2$ at start poses($O_1, O_2$). 
Objects with centers between the dashed lines can be reached by both.

\begin{figure}[ht]
    \centering
    \includegraphics[width=0.36\textwidth]{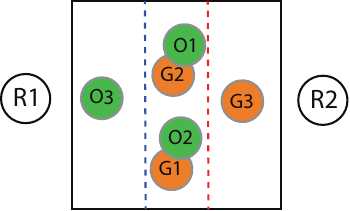}
    \hspace{5mm}
    \includegraphics[width=0.46\textwidth]{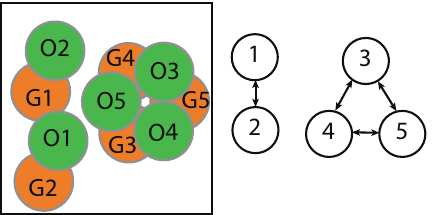}
    \caption{ [Left] A working example of CDR instance with $\rho=0.3$. For each object $o_i$, the start and goal poses are represented by green disc $O_i$ and orange disc $G_i$ respectively. [Right] A CDRF instance and its corresponding dependency graph.}
    \label{fig:CDR-Exp}
\end{figure}

We consider two manipulation primitives: (overhand) \emph{pick-n-place} and \emph{handoff}.
In each pick-n-place, an arm moves above an object $o_j$, grasps it from $\mathcal W$, transfers it atop the workspace, and places it at a collision-free pose  inside $\mathcal W$.
We allow a \emph{handoff} operation when an object needs to cross between  $\mathcal S(r_1)\backslash \mathcal S(r_2)$ and $\mathcal S(r_2)\backslash \mathcal S(r_1)$.
In each handoff, an arm, say $r_1$, grasps an object $o_j$ from $\mathcal W$, 
passes the object to the other arm $r_2$ at the predefined handoff pose above the environment.
$r_2$ receives $o_j$ and places it at a collision-free pose  in $\mathcal W$.
A handoff is shown in \ref{fig:problem}[Right].


As for the motion of each arm $r_i$, $\mathcal C^i_{free}$ is a set of collision-free configurations for $r_i$ without considering collisions with workspace objects or the other robot arm.
Throughout the rearrangement, each arm has $n+1$ motion modes: transit mode $TS$ (moving on its own) and transfer modes $TF_j$ (transferring the object $o_j$), $\forall j=1,...,n$.
Therefore, the state space $\Gamma_i$ of an arm $r_i$ is the Cartesian product $\mathcal C^i_{free} \times M$, where $M$ is the motion mode set. 
A path of a robot arm $r_i$ is defined as $\pi_i:[0,T]\rightarrow \Gamma_i$ with $\pi_i[0]=\pi_i[T]=(q^*_i, TS)$, where $T$ is the makespan of the rearrangement plan and $q^*_i$ is the rest pose of arm $r_i$.

The rearrangement problem we study seeks a path set $\Pi=\{\pi_1, \pi_2\}$, such that robot paths are collision-free: for all $t\in [0,T]$, arm $r_1$ at state $\pi_1[t]$ does not collide with arm $r_2$ at state $\pi_2[t]$.
Additionally, the solution quality is evaluated by the makespan under two levels of fidelity:
\begin{enumerate}
    \item (MC) manipulation cost based makespan;
    \item (FC) estimated execution time based makespan.
\end{enumerate}
In MC (manipulation cost) makespan, we assume the manipulation time dominates 
the execution time, which is often the case in today's systems. MC makespan is defined as the number of \emph{steps} for the rearrangement task. In each step, 
each robot arm can complete an individual pick-n-place, execute a coordinated handoff, or idle.

In FC (full cost) makespan, solution quality is measured by the estimated 
complete time based on four parameters: 
maximum horizontal speed $s$ of the end-effectors, the execution time of each pick ($t_g$), place ($t_r$), and handoff ($t_h$). The details of FC metric and its corresponding heuristic search method is presented in the appendix (\ref{sec:CDR-KC})

Given the setup, the problem studied in this paper can be summarized as:

\begin{problem}[CDR: Collaborative Dual-Arm Rearrangement]
Given feasible arrangements $\mathcal A_s$, $\mathcal A_g$, and makespan metric (MC or FC), determine a collision-free path set $\Pi$ for $\mathcal R$ moving objects from $\mathcal A_s$ to $\mathcal A_g$ minimizing the makespan $T$.
\end{problem}

We also investigate two special cases of CDR.
On one hand, when $\rho=1$, $\mathcal S(r_1)=\mathcal S(r_2)=\mathcal W$, 
which yields to Collaborative Multi-Arm Rearrangement with Full Overlap (CDRF). 
In this case, handoff is not needed since each robot can execute pick-n-place operations individually inside the workspace. 
On the other hand, when $\rho=0$, $\mathcal S(r_1)\bigsqcup \mathcal S(r_2)=\mathcal W$, 
which yields to Collaborative Multi-Arm Rearrangement with no Overlap (CDRN).

\section{Task Scheduling for CDR}
In this section, we study the scheduling problem of primitive actions in CDR. 
A primitive action can be represented by a tuple $(r_i, o_j,v )$ indicating that $r_i$ is tasked to transfer $o_j$ to new status $v$, where $v$ is one kind of the object status in $\{S,G,H,\mathcal B(r_1),\mathcal B(r_2)\}$.
$S$, $G$, $H$, $\mathcal B(r_1)$, and $\mathcal B(r_2)$ represent that the object is at the start pose, goal pose, handoff pose, a buffer in $\mathcal S(r_1)$, and a buffer in $\mathcal S(r_2)$ respectively.
For each primitive action $a$, the schedule provides the starting time and estimated ending time of $a$.
Noting that buffer locations are not determined in the task scheduling process, 
they are categorized into $\mathcal B(r_1)$ and $\mathcal B(r_2)$ to indicate the arm that they are reachable by.
The buffer allocation is discussed in \ref{sec:buffer}.
Since the scheduling problem only depends on the availability of goal poses and reachability of robot arms to the start and goal poses, 
the constraints can be fully expressed by the dependency graph.
We describe a dependency graph guided heuristic search method for CDR under MC makespan metrics (\ref{sec:CDR-MC}). Its variant for FC makespan is shown in the appendix (\ref{sec:CDR-KC}).

\subsection{Arrangement Space Heuristic Search for CDR in MC Makespan (MCHS)}\label{sec:CDR-MC}
When the makespan is counted in MC metric, 
we can assume primitive actions are executed simultaneously in the task scheduling process without loss of generality and the makespan is counted as the number of \emph{action steps} in the schedule.
Each action at action step $t$ is scheduled to start at time $t$ and end at time $t+1$.
Similar to the single arm rearrangement problem, we search for a plan by maintaining a search tree in the arrangement state space.
Each arrangement state in the tree represents an object arrangement, which can be expressed as a mapping $\mathcal L: \mathcal O\rightarrow \{S,G,\mathcal B(r_1),\mathcal B(r_2)\}$. The arrangement state represents a moment when all objects are stably located in the workspace and both arms are empty.

Arrangements are connected by primitive actions of two arms, 
which consist of one out of the three possible options: 
(1) individual pick-n-places: one or both arms move objects to poses that are currently collision-free; 
(2) coordinated pick-n-places: move an object to the goal pose while the other arm tries to clear away the last obstacle at its goal pose, e.g. swapping poses of $o_1$ and $o_2$ in \ref{fig:CDR-Exp}[Left]; 
(3) a coordinated handoff for an object when its start and goal poses are not both reachable by either arm.
We ignore primitive actions that hold an object for more than one action step due to the unnecessary loss of efficiency.

\ref{fig:flow-charts} shows the flow charts indicating the decision process of generating possible primitive actions in each arrangement state.
\ref{fig:flow-charts}[Upper] shows the decision process for $r_1$ to pick-n-place $o\in \mathcal O$ without cooperation with the other arm. 
That for $r_2$ is equivalent.
\ref{fig:flow-charts}[Middle] shows the decision process for $r_1$ to manipulate an object $o\in \mathcal O$ when only one object $o'$ is blocking the goal pose of $o$. In this case, two arms cooperate to send $o$ to the goal.
That for $r_2$ is equivalent.
\ref{fig:flow-charts}[Below] shows the decision process when an object $o\in \mathcal O$ needs handoff.
In the process, the availability of goal poses can be checked in the precomputed dependency graph and the reachability of arms is indicated by the object status in the arrangements. Therefore, with the dependency graph, no additional collision-checking is needed in the task scheduling process of CDR.

\begin{figure}
    \centering
    \includegraphics[width=\textwidth]{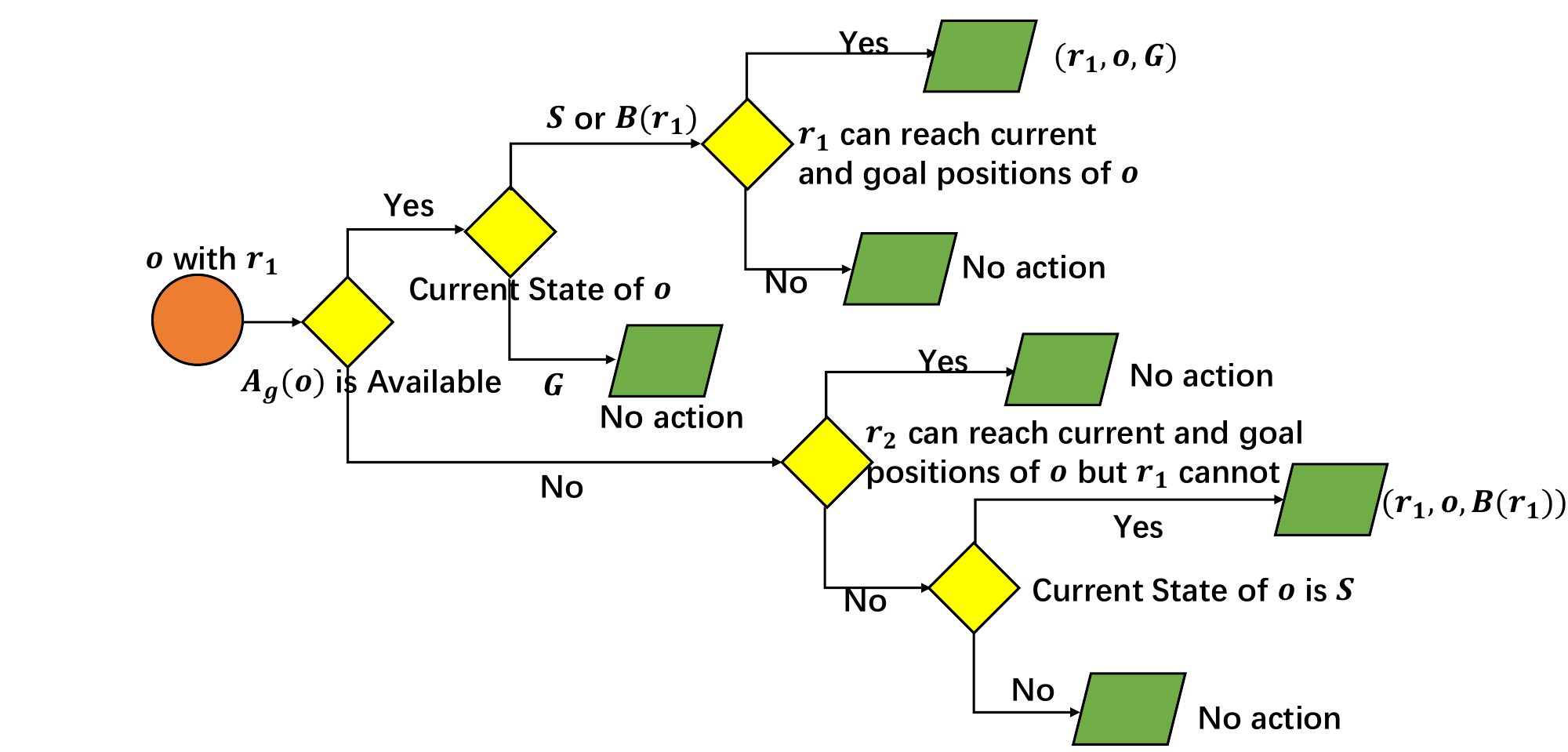}
    \includegraphics[width=\textwidth]{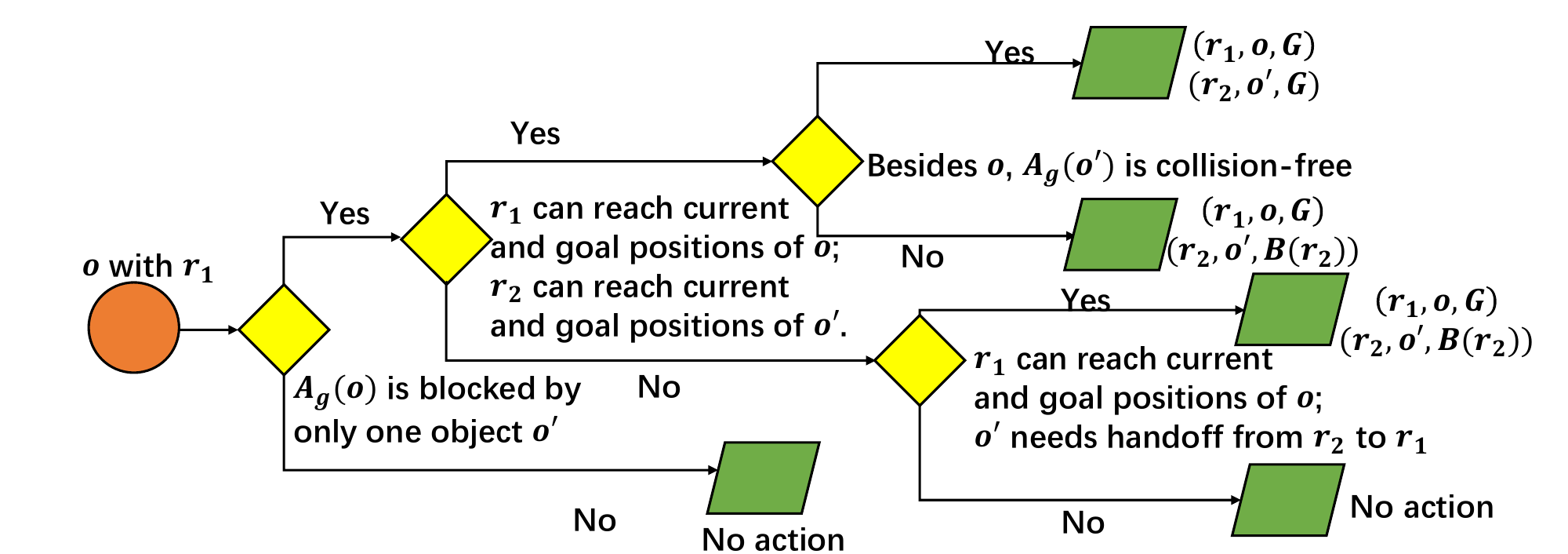}
    \includegraphics[width=\textwidth]{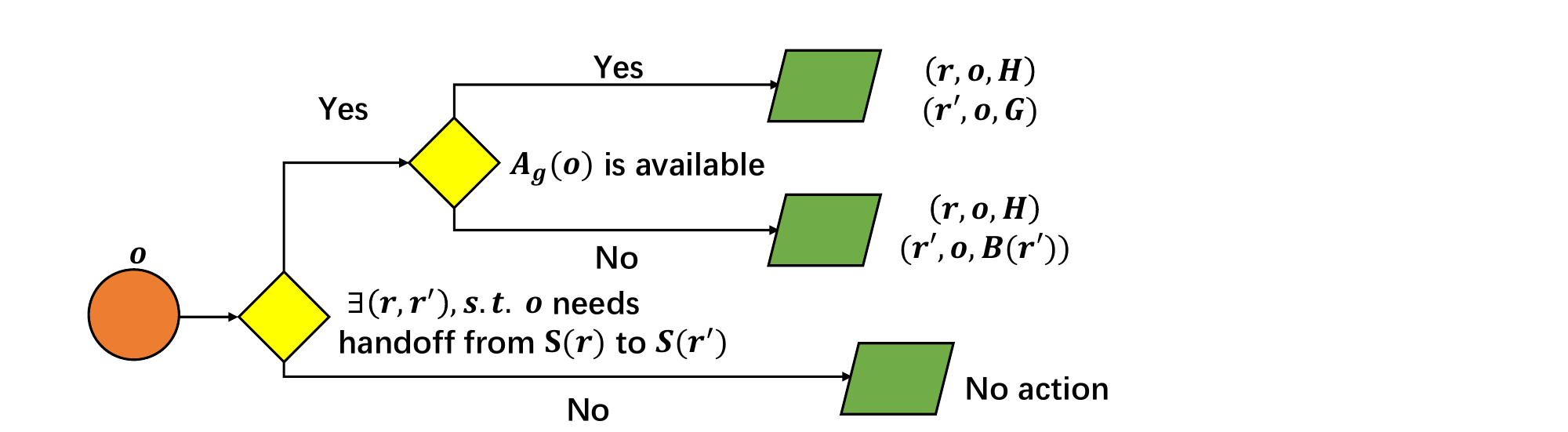}
    \caption{
    Decision process of generating possible primitive actions.
    [Upper] The decision process for $r_1$ to manipulate $o\in \mathcal O$ without cooperation with the other arm. That for $r_2$ is equivalent. 
    [Middle] The decision process for $r_1$ to manipulate an object $o\in \mathcal O$ when only one object $o'$ is blocking the goal pose of $o$. 
    That for $r_2$ is equivalent.
    [Bottom] The decision process when an object $o\in \mathcal O$ needs handoff.
    }
    \label{fig:flow-charts}
\end{figure}

For the working example in \ref{fig:CDR-Exp}[Left], a corresponding rearrangement plan, in the form of a path in the arrangement state space is presented in \ref{fig:MC-Plan}[Left]. 
At the start arrangement state $s_1$, $r_1$ and $r_2$ move $o_2$ and $o_1$ respectively to the goal poses, 
which yields the next arrangement state $s_2$.
At $s_2$, two arms coordinate to execute a handoff delivering $o_3$ to the goal and reach the goal arrangement state $s_3$. 
By concatenating primitive actions from $s_1$ to $s_3$, we obtain a corresponding task schedule for the instance (\ref{fig:MC-Plan}).

\begin{figure}
    \centering
    \includegraphics[width=0.5\textwidth]{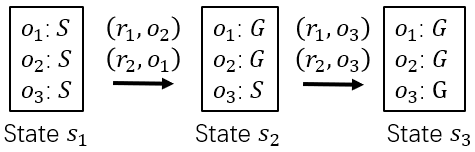}
    \hspace{3mm}
    \includegraphics[width=0.24\textwidth]{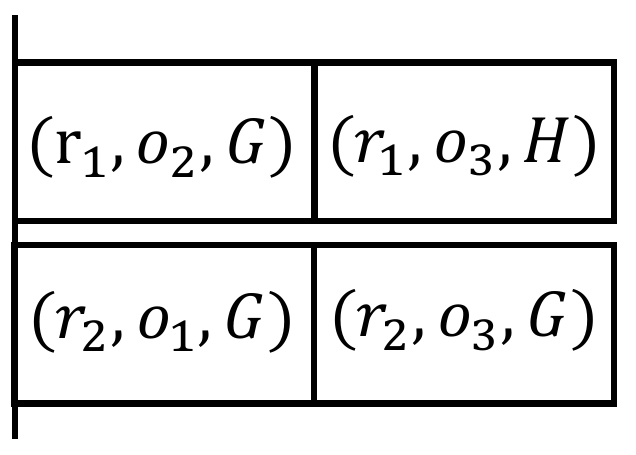}
    \caption{[Left] A path on the arrangement state space for the instance in \ref{fig:CDR-Exp}[Left] under MC metric. [Right] The corresponding schedule of the example instance in \ref{fig:CDR-Exp}[Left].}
    \label{fig:MC-Plan}
\end{figure}

The arrangement state heuristic search based on MC makespan (MCHS) explores the arrangement state space in a best-first manner, always developing the arrangement state $s$ with the lowest $f(s)=g(s)+h(s)$.
The $g$ function is number of action steps from the start arrangement state to the current arrangement state. For example, 
in \ref{fig:MC-Plan}, $g(s_3)=2$.
For each state $s$, we define the heuristic function $h(s)$ based on the needed cost to move all objects directly to goals. 

The details are presented in \ref{alg:MC-heuristic}.
In CDR, costs can be either exclusive cost of $r_1$, or exclusive cost of $r_2$, or shared cost of $r_1$ and $r_2$. 
For example, on one hand, a pick-n-place in region $\mathcal S(r_1)\backslash \mathcal S(r_2)$ can only be executed by $r_1$, 
so it leads to an exclusive cost of $r_1$. 
On the other hand, a pick-n-place in the overlapping region $\mathcal S(r_1)\bigcap \mathcal S(r_2)$ can be executed by either arm so the cost can be shared by both arms.
Based on this observation, we count the costs cost[$r_1$], cost[$r_2$], and sharedCost in the heuristic and initialize them as 0 (Line 1).
We ignore the objects at the goal poses (Line 3).
For the remaining objects, the cost category is determined by $r_1$, $r_2$'s reachability to the current pose (buffer or start pose) and goal pose of the object.
ArmSet is the set of arms able to reach both the current pose and goal pose (Line 4).
If both arms are in armSet (Line 5), then the cost of moving $o_i$ to the goal will be added to sharedCost, which means that it is a pick-n-place in the overlapping region.
If only one arm is in armSet (Line 6-7), then the cost of moving $o_i$ to the goal will be added to the cost of the specific arm, 
which means that this action happens out of reach of the other arm.
If neither arm is in armSet (Line 8-10), then the object needs a handoff to the destination.
In this case, both arms need to take one action for the handoff operation.
Finally, on one hand, if the difference of the exclusive costs is smaller than sharedCost (Line 11), then $h(s)$ is half of the total cost (Line 12). 
In other words, the cost of two arms may be balanced by splitting the sharedCost.
On the other hand, if the difference of the exclusive costs is too large (Line 13), then the unoccupied arm bears all the shared cost, and $h(s)$ equals the cost of the busy arm.

\begin{algorithm}
\begin{small}
    \SetKwInOut{Input}{Input}
    \SetKwInOut{Output}{Output}
    \SetKwComment{Comment}{\% }{}
    \caption{Manipulation Cost Search Heuristic}
		\label{alg:MC-heuristic}
    \SetAlgoLined
		\vspace{0.5mm}
    \Input{$s$: current state}
    \Output{h: heuristic value of $s$}
		\vspace{0.5mm}
		cost[$r_1$], cost[$r_2$], sharedCost $\leftarrow 0,0,0$\\
		\For{$o_i\in \mathcal O$}{
		\lIf{$\mathcal L(o_i)$ is $G$}{continue}
		armSet $\leftarrow$ ReachableArms($s$, $o_i$, $\{\mathcal L(o_i), G\}$)\\
		\lIf{armSet is $\{r_1, r_2\}$}{sharedCost$++$}
		\lElseIf{armSet is $\{r_1\}$}{cost[$r_1$]$++$}
		\lElseIf{armSet is $\{r_2\}$}{cost[$r_2$]$++$}
		\Else{
		cost[$r_1$] $++$;\\
		cost[$r_2$] $++$;
		}
		}
		\vspace{1mm}
		\lIf{$\|$cost[$r_1$]-cost[$r_2$] $\|\leq$ sharedCost}{
		\\ \Return (cost[$r_1$]+cost[$r_2$]+sharedCost)/2
		}
		\lElse{
		\Return max(cost[$r_1$], cost[$r_2$])
		}
\end{small}
\end{algorithm}

Since $cost[r_1]$ and $cost[r_2]$ provide lower bounds of necessary manipulations for each arm, $h(s)$ is a lower bound of the manipulation cost from the state $s$ to the goal arrangement. Therefore, we have:

\begin{proposition}
The manipulation cost based heuristic $h(s)$ is consistent and admissible.
\end{proposition}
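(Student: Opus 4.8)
The plan is to prove admissibility first by a direct lower‑bound argument on the per‑arm workloads, and then derive consistency from a ``bounded change'' property of $h$ along search edges. Throughout I abbreviate $a:=\mathrm{cost}[r_1]$, $b:=\mathrm{cost}[r_2]$, $c:=\mathrm{sharedCost}$ as produced by \ref{alg:MC-heuristic} at a state $s$. A preliminary arithmetic observation is that the two return branches together compute exactly $h(s)=\max\!\big(a,\,b,\,\tfrac{a+b+c}{2}\big)$: when $|a-b|\le c$ one has $\tfrac{a+b+c}{2}\ge\max(a,b)$, and when $|a-b|>c$ the quantity $\tfrac{a+b+c}{2}$ lies strictly below $\max(a,b)$. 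I will use this closed form throughout.

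\textbf{Admissibility.} Fix any feasible plan from $s$ to $\mathcal A_g$ using $T$ action steps, and let $W_1,W_2$ be the numbers of pick‑n‑place/handoff actions performed by $r_1$ and $r_2$. Since each arm executes at most one action per step, $T\ge W_1$ and $T\ge W_2$. Classify each object $o_i$ not yet at its goal in $s$ by its \texttt{armSet} (the arms reaching both its current pose and its goal), and use the standing hypothesis $\mathcal S(r_1)\cup\mathcal S(r_2)=\mathcal W$: (i) if \texttt{armSet}$=\{r_1\}$, then because $r_2$ fails to reach $o_i$'s current pose or its goal, either the first grasp of $o_i$ or its final placement at $G$ must be an $r_1$ action, so $o_i$ forces at least one $r_1$ action (symmetrically for $\{r_2\}$); (ii) if \texttt{armSet}$=\varnothing$, then since the current pose and goal are reachable by arms but by different ones, $o_i$ forces at least one $r_1$ action \emph{and} at least one $r_2$ action (the handoff); (iii) if \texttt{armSet}$=\{r_1,r_2\}$, then $o_i$ still forces at least one action by some arm. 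Each pick‑n‑place/handoff transports exactly one object, so the actions forced by distinct objects are distinct; summing gives $W_1\ge a$, $W_2\ge b$, and $W_1+W_2\ge a+b+c$. Hence $T\ge\max(W_1,W_2)\ge\max(a,b)$ and $2T\ge W_1+W_2\ge a+b+c$, so $T\ge\max\!\big(a,b,\tfrac{a+b+c}{2}\big)=h(s)$.

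\textbf{Consistency.} It suffices to verify $h(s)\le 1+h(s')$ for every successor $s'$ of $s$ in the arrangement‑state graph, each edge carrying unit cost ($g(s')=g(s)+1$). Let $(a',b',c')$ be the triple at $s'$. I will show that a single MCHS action step always satisfies $a'\ge a-1$, $b'\ge b-1$, and $a'+b'+c'\ge a+b+c-2$; the claim then follows from the Lipschitz property of the closed form, since $a\le a'+1\le h(s')+1$, $b\le h(s')+1$, and $\tfrac{a+b+c}{2}\le\tfrac{a'+b'+c'}{2}+1\le h(s')+1$. The bounded‑change inequalities are established by enumerating the three edge types of \ref{fig:flow-charts} (two independent pick‑n‑places; a coordinated pick‑n‑place where one arm places $o$ at $G$ while the other relocates the last blocker $o'$; a coordinated handoff), together with the possible targets of each placement (the object's own goal, a buffer in $\mathcal S(r_1)$, a buffer in $\mathcal S(r_2)$, or the handoff pose). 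The operative facts are: only $r_1$ can place an \texttt{armSet}$=\{r_1\}$ object at its goal and only a handoff can place an \texttt{armSet}$=\varnothing$ object at its goal, so at most one object can leave the $a$‑count per step (symmetrically for $b$); an object relocated to a buffer stays away from its goal and changes its \texttt{armSet} in a controlled way — in particular an object relocated by $r_k$ never leaves the count of the opposite arm — so each of $a,b$ moves by at most one and $a+b+c$ by at most one; and a handoff decrements both $a$ and $b$ by one, which is exactly the case where $a+b+c$ drops by two.

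\textbf{Main obstacle.} Admissibility is straightforward; the crux is the consistency bookkeeping. Coordinated pick‑n‑places and handoffs move up to two objects in one step, and relocating an object to a buffer can change its \texttt{armSet} (e.g.\ a handoff object may become exclusive to one arm, or an exclusive object may become shared), so one must check in every branch of \ref{fig:flow-charts} and for every placement target that neither $a$ nor $b$ drops by more than one and that $a+b+c$ drops by at most two. The supporting fact that carries all cases — that a relocation performed by $r_k$ cannot remove an object from the other arm's count, and can lower $r_k$'s own count by at most one — is the part that needs careful verification; once it is in hand, the remaining cases are mechanical.
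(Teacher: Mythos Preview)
Your argument is correct and considerably more thorough than the paper's own justification, which is a single sentence asserting that $\mathrm{cost}[r_1]$ and $\mathrm{cost}[r_2]$ lower-bound the necessary per-arm manipulations and hence $h(s)$ lower-bounds the remaining makespan. That sentence is exactly your admissibility argument stated without detail; the paper does not address consistency at all beyond the claim. Your closed form $h(s)=\max\bigl(a,b,\tfrac{a+b+c}{2}\bigr)$ is a clean observation, and your consistency proof via the Lipschitz property of this expression together with the per-step inequalities $a'\ge a-1$, $b'\ge b-1$, $a'+b'+c'\ge a+b+c-2$ is the natural way to close the gap the paper leaves. The supporting case analysis---in particular the key fact you isolate, that an action executed by $r_k$ can never decrease the opposite arm's count (because $r_k$ can place at $G$ only when $r_k\in\texttt{armSet}$, and a buffer placement by $r_k$ keeps the object off goal while preserving membership in the opposite arm's count)---does go through across the three edge types of the MCHS graph. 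So your approach is not really a different route; it is the paper's one-line admissibility sketch made rigorous, plus the consistency half that the paper omits.
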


\section{Buffer Allocation}\label{sec:buffer}
In this section, we allocate feasible buffer locations based on the primitive plan computed in \ref{sec:CDR-MC} following the idea of lazy buffer allocation in \ref{chap:trlb}. 
Given the primitive plan, we know not only the specific timespans that objects stay in buffers, 
but also the staying poses of the other objects in the timespans, which are treated as fixed obstacles that the buffer location needs to avoid.
Besides the obstacles, 
the buffer location of an object moved to $\mathcal B(r)$ in the primitive plan is restricted to be inside $\mathcal S(r)$, the reachable region of $r$.

\begin{figure}
    \centering
    \includegraphics[width=0.4\textwidth]{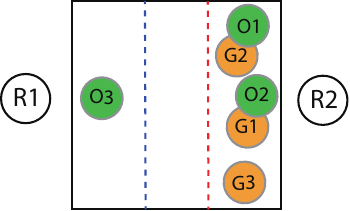}
    \vspace{5mm}\\
    \includegraphics[width=0.98\textwidth]{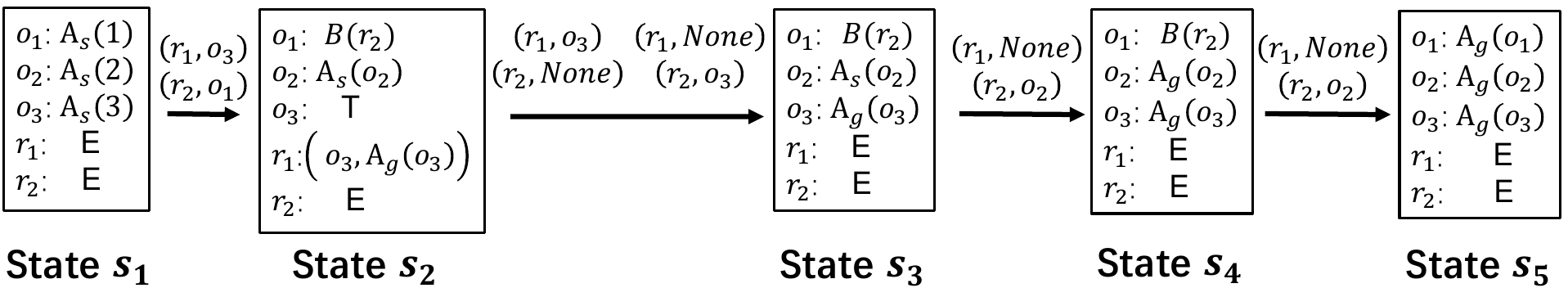}
    \vspace{1mm}
    \caption{[Top] A CDR instance that needs a buffer. [Bottom] An rearrangement plan for the example.}
    \label{fig:CDR-Buffer_Exp}
\end{figure}

We illustrate the buffer allocation process with a CDR instance in \ref{fig:CDR-Buffer_Exp}[Top], which is a variant of the example in \ref{fig:CDR-Exp}[Left]. 
When $o_1$ and $o_2$ are in $\mathcal S_2 \backslash \mathcal S_1$,
at least one of them needs to be moved to buffer locations to break the cycle in the dependency graph.
A rearrangement plan is shown in \ref{fig:CDR-Buffer_Exp}[Bottom]. 
In this plan, $o_1$ is moved to a buffer when $o_2$ is at $\mathcal A_s(o_2)$, and $o_3$ waits for a handoff. 
And $o_1$ leaves the buffer when $o_2$ and $o_3$ are both at goal positions.
Therefore, the buffer location of $o_1$ needs to be collision-free with $o_2$ at $\mathcal A_s(o_2)$, $\mathcal A_g(o_2)$ and $o_3$ at $\mathcal A_g(o_3)$. 
Additionally, it is restricted in $\mathcal S(r_2)$.
When multiple objects stay in buffers at the same time, the buffer locations need to be allocated disjointedly.
In this paper, we search for feasible buffer locations by sampling candidates in the reachable region and check collisions with the fixed obstacles indicated by the primitive plan.
The same as TRLB in \ref{chap:trlb}, 
when buffer allocation fails, 
we accept partial plans and conduct a bi-directional search in the arrangement state to connect the start and goal arrangement.

\section{GPU based Task and Motion Planning in High Dimensional C-Space}
\subsection{MCHS Task Planning}
MCHS does not consider the motion planning for the dual-arm system. In reality, robot arms take different times to rearrange a single object and may need to coordinate to avoid collisions in the shared workspace. 
Using the task plan produced by MCHS, we decouple primitive actions in the plan into lower-level \emph{sub-tasks}.
For example, for the primitive action $(r_1, o_3)$ in \ref{fig:CDR-Buffer_Exp}, it can be decoupled into four sub-tasks of $R_1$: move to pre-handoff pose, move to handoff pose, open gripper, and move back to pre-handoff pose.
Even though the original task plan is synchronized (i.e., two robots start and finish individual actions simultaneously in each action step), the generated sub-task sequences allow us to enable asynchronous execution and generate smooth and executable dual-arm motions.
The sub-tasks are categorized as follows.
\begin{enumerate}[leftmargin=5mm]
    \item A \textbf{Traveling Task (TT)} includes motions between home, pre-picking, pre-placing, and pre-handoff poses. 
    A TT sub-task can be represented as $(p_1, p_2)$, where $p_1,p_2\in SE(3)$, and the robot needs to move from the current pose $p_1$ to the target pose $p_2$.
    In our implementation, we also treat idling as a TT sub-task where $p_1=p_2$.
    \item A \textbf{Manipulation Task (MT)} includes motions between a picking/placing/handoff pose and its corresponding pre-picking/pre-placing/pre-handoff pose.
    An MT sub-task can be represented as $(p_1, p_2)$, where $p_1,p_2\in SE(3)$, and the robot needs to move in a straight line from the current pose $p_1$ to the target pose $p_2$.
    \item A \textbf{Gripper task (GT)} includes opening/closing grippers when picking/placing an object or executing a handoff. 
    A GT task can be represented as $(p_1,k)$, where the robot arm stays at the current pose $p_1$ for $k$ timesteps.
\end{enumerate}


\subsection{Motion Planning Phase}
\modap's dual-arm motion planning framework is described in \ref{alg:MotionPlanning}.
Each robot executes its sub-tasks sequentially; trajectory conflicts are resolved by leveraging cuRobo's motion generator.
In Line 1, we initialize $\mathcal T$, $\pi_1$, and $\pi_2$, where $\mathcal T$ is the dual-arm trajectory, $\pi_1$ and $\pi_2$ are current sub-tasks of $R_1$ and $R_2$. 
The rearrangement is finished if all of $\Pi_1, \Pi_2, \pi_1, \pi_2$ are empty (Line 2). 
In Line 3, $\pi_1$ and $\pi_2$ are updated.
Specifically, if the current sub-task is finished, the next sub-task in the sequence kicks in. 
Given $\pi_1, \pi_2$, single-arm trajectories $\tau_1, \tau_2$ are computed (Line 4-5). 
$\tau_1$ and $\tau_2$ avoid self-collisions and collisions between the planning robot arm and the workspace, but they don't consider collisions between arms.
If there is no collision between $\tau_1$ and $\tau_2$, a dual-arm trajectory is generated by merging $\tau_1$ and $\tau_2$ until one of the trajectories reaches its ending timestep (Line 7).
Otherwise, there are two cases (Line 8 - 12): (1) both trajectories can be replanned (Line 8-10); (2) only one trajectory can be replanned (Line 11-12).
For (1), we sample a collision-free dual-arm goal configuration $q^\star$(Line 9). At $q^\star$, one robot finishes this sub-task while the other is on its way.
And then, we compute dual-arm trajectory via cuRobo (Line 10) from the current dual-configuration to $q^\star$.
For (2), we compute a dual-arm trajectory that prioritizes the sub-task of one arm, and the other arm yields as needed. (Line 11-12).
The details of our planner for the two cases are described in \ref{sec:dual_planning}.
A third case beyond (1) and (2) exists in theory where neither of the trajectories can be replanned. We avoid this case by providing additional clearance for collision checking.
A small clearance is enough since non-TT sub-task trajectories are short (a few centimeters).
The planner computes a dual-arm trajectory $\tau$, sped up by Toppra\cite{pham2018new}, a time-optimal path parametrization solver, and added to $\mathcal T$ (Line 13).
\begin{algorithm}
\begin{small}
    \SetKwInOut{Input}{Input}
    \SetKwInOut{Output}{Output}
    \SetKwComment{Comment}{\% }{}
    \caption{Fast Dual-Arm Motion Planning}
		\label{alg:MotionPlanning}
    \SetAlgoLined
		\vspace{0.5mm}
    \Input{$\{\Pi_1, \Pi_2\}$: sub-task sequences
    }
    \Output{$\mathcal T$: dual-arm motion}
\vspace{0.5mm}

$\mathcal T\leftarrow \emptyset$; 
$\pi_1, \pi_2\leftarrow \emptyset, \emptyset$;\\
\While{
not Finished($\Pi_1, \Pi_2, \pi_1, \pi_2$)
}{
$\pi_1, \pi_2\leftarrow $UpdateSubTasks($\Pi_1, \Pi_2, \pi_1, \pi_2$);\\
$\tau_1\leftarrow SingleArmPlanning(\pi_1)$;\\
$\tau_2\leftarrow SingleArmPlanning(\pi_2)$;\\
\If{not $InCollision(\tau_1,\tau_2)$}{
    $\tau\leftarrow$ MergeTraj($\tau_1,\tau_2$);\\
}
\ElseIf{$\pi_1$ is TT and $\pi_1$ is TT}{
$q^\star\leftarrow $ SampleDualGoal($\tau_1,\tau_2$);\\
$\tau \leftarrow$ DualArmPlanning($q^\star$);\\
}
\lElseIf{$\pi_1$ not TT}
{
$\tau\leftarrow$ PlanPartialPath($\tau_1$)
}
\lElseIf{$\pi_2$ not TT}
{
$\tau\leftarrow$ PartrialPlanning($\tau_2$)
}
$\tau\leftarrow$ ToppraImprovement($\tau$);
$\mathcal T \leftarrow \mathcal T + \tau$\\
}
\Return $\mathcal T$;\\
\end{small}
\end{algorithm}
\vspace{-6mm}

\subsection{Enhancing Inverse Kinematic (IK) Solutions}\label{sec:single_planning}
Given the goal end-effector pose $p_2$, cuRobo motion generator will sample IK seeds and compute multiple IK solutions of $p_2$.
However, it may lead to undesired IK solutions of $p_2$.
For example, in IK computation for pre-picking poses, cuRobo may select a solution where the elbow is positioned below the end effector, preventing the robot from executing the subsequent grasping.
We overcame issues caused by undesirable IK solutions by dictating IK seeds. The goal pose's seed is set to the current configuration for MT tasks because we want the target configuration to be close.
The current configuration may be too far from the target configuration for TT tasks, leading to the undesired configuration mentioned above.
Therefore, in TT tasks, the seed is set to a specific configuration with its corresponding end-effector pose $30$ cm above its reachable workspace region as shown in \ref{fig:simnreal-setup}.
\subsection{Improving Path Planning to Avoid Detours}
cuRobo's geometric (path) planner samples graph nodes around the straight line between the start, goal, and home (retract) configurations. Based on these nodes, seed trajectories are planned via graph search. 
When the planning problem is challenging (e.g., the second arm blocks the straight line between the start and goal configurations for the first arm), the computed trajectories tend to make lengthy detours by detouring through the neighborhood of the home configuration, which wastes time and yields unnatural robot paths. This is especially problematic when solving many rearrangement sub-tasks. 
We improve seed trajectory quality for each arm by uniformly sampling 32 overhand poses above the workspace. 
The corresponding IK solutions of these manipulation-related poses form 1024 dual-arm configurations, among which the collision-free configurations are added to the geometric graph. This approach proves to be highly effective in avoiding unnecessary detours.

\subsection{Strategies to Resolve Dual-Arm Trajectory Conflicts}\label{sec:dual_planning}
Two arms must coordinate to resolve conflicts and progress toward their next sub-tasks when collisions exist between individual trajectories.
As mentioned in \ref{alg:MotionPlanning}, there are two different collision scenarios.
In the first, when both robots have TT sub-tasks, neither has a specific trajectory to follow.
In this scenario, we sample a collision-free dual-arm configuration (\ref{alg:sample_dual_goal}) and plan a trajectory from the current configuration to the sampled configuration.
In the second, when one of the robots has a specific trajectory to follow, the other arm needs to plan a trajectory to avoid collision.
Since cuRobo does not support planning among dynamic obstacles, we propose an algorithm, PlanPartialPath (\ref{alg:partial}), to resolve conflicts.
The details of the algorithms are elaborated on in the following sections.

\subsubsection{Dual-arm goal sampling}
\ref{alg:sample_dual_goal} takes single arm trajectories $\tau_1$ and $\tau_2$ as input and outputs the collision-free dual-arm configuration $q$.
Without loss of generality, we assume $\tau_1$ is the shorter trajectory in time dimension.
Usually, this means that the sub-task goal configuration of $R_1$ is closer to the current configuration.
Therefore, we sample a configuration as the goal configuration to the dual-arm motion planning problem, where $R_1$ reaches the sub-task goal configuration while $R_2$ progresses toward its goal configuration.
In Lines 1-2, we let $q_1^\star$ be the sub-task goal configuration of $R_1$ and $q_2^\star$ be the configuration in $\tau_2$ when $R_1$ reaches $q_1^\star$.
If the dual-arm configuration $(q_1^\star,q_2^\star)$ is collision-free, then we output the configuration (Line 3).
Otherwise, we sample the configuration of $R_2$ in the neighborhood of $q_2^\star$ (Line 5-11). 
The sampling starts with the $\gamma_0-$sphere centered at $q_2^\star$ and the range is constantly expanded to a $\gamma_n-$sphere, where $\gamma_0$ and $\gamma_n$ are constants.
When local sampling fails, 
we let $R_2$ yield to $R_1$ by sampling $R_2$ configurations on the straight line between its current configuration and its home configuration since the home configuration is guaranteed to be collision-free with the other arm.
Let $S_2$ be the discretization (with $k_2$ waypoints and $k_2$ is a constant) of the straight-line trajectory between the current configuration of $R_2$ and the home configuration of $R_2$ (Line 12).
We then find the collision-free waypoint of $R_2$ nearest the current configuration and return the corresponding dual-arm configuration (Line 13-15).
\vspace{-3mm}

\begin{algorithm}
\begin{small}
    \SetKwInOut{Input}{Input}
    \SetKwInOut{Output}{Output}
    \SetKwComment{Comment}{\% }{}
    \caption{SampleDualGoal}
		\label{alg:sample_dual_goal}
    \SetAlgoLined
		\vspace{0.5mm}
    \Input{$\tau_1:$ shorter single-arm trajectory, $\tau_2:$ longer single-arm trajectory, $k_1, k_2, \gamma_0, \gamma_n:$ hyper-parameters
    }
    \Output{$q$: collision-free dual-arm configuration}
\vspace{0.5mm}

$q^\star_1\leftarrow$ Ending configuration of $\tau_1$;\\
$q^\star_2\leftarrow$ configuration of $\tau_2$ when $\tau_1$ ends;\\
\lIf{CollisionFree($q^\star_1, q^\star_2$)}
{
\Return Merge($q^\star_1,q^\star_2$);
}
\Else{
$r = \gamma_0$;\\
\While{$r<\gamma_n$}{
    \For{$i=1$ to $k_1$}
    {
    $q_2\leftarrow$ Sample configuration in the $r$-sphere of $q^\star_2$;\\
    \If{CollisionFree($q^\star_1, q_2$)}
    {
    \Return Merge($q^\star_1,q_2$)
    }
    }
    $r\leftarrow 2r$;\\
}
$S_2\leftarrow k_2$-discretization of the straight line between the start of $\tau_2$ and robot home configuration;\\ 
\For{$q_2\in S_2$}
{
\If{CollisionFree($q^\star_1, q_2$)}
    {
    \Return Merge($q^\star_1,q_2$)
    }
}
}
\end{small}
\end{algorithm}

\vspace{-3mm}
\subsubsection{Priority based planning}
In \ref{alg:partial}, we compute a collision-free dual-arm trajectory when one of the robots has a pre-defined trajectory to follow.
Without loss of generality, we assume $R_1$ has a pre-defined trajectory $\tau_1$ to execute while $R_2$ needs to yield.
We first compute a collision-free trajectory $\tau_2$ for $R_2$ from the current configuration to its home configuration assuming $R_1$ is fixed at the current configuration (Line 2).
In the worst case, it allows $R_1$ to execute its pre-defined trajectory $\tau_1$ after $R_2$ moves back to its home configuration.
In Line 3-9, we progress on executing $\tau_1$ and $R_2$ yields along $\tau_2$ as needed.
$\tau_1[i_1]$ is the next configuration of $R_1$ on $\tau$.
In Line 5, we compute the first index $i_2'$ such that $i_2'\geq i_2$ and $\tau_2[i_2':]$ is collision-free with $\tau_1[i_1]$.
Therefore, $\tau_2[i_2], ..., \tau_2[i_2'-1]$ is collision-free with $\tau_1[i_1-1]$, and $\tau_2[i_2']$ is collision-free with $\tau_1[i_1]$.
We add the collision-free dual-arm configurations to $\tau$ (Line 6-7) and update indices $i_1$ and $i_2$ (Line 8).
To summarize, if $R_2$ blocks $\tau_1$, then $R_2$ yields along $\tau_2$ as needed; otherwise, $R_2$ stays.

\begin{algorithm}
\begin{small}
    \SetKwInOut{Input}{Input}
    \SetKwInOut{Output}{Output}
    \SetKwComment{Comment}{\% }{}
    \caption{PlanPartialPath}
		\label{alg:partial}
    \SetAlgoLined
		\vspace{0.5mm}
    \Input{$\tau_1:$ pre-defined trajectory
    }
    \Output{$\tau$: dual arm trajectory}
\vspace{0.5mm}
$\tau\leftarrow \emptyset;$\\ 
$\tau_2\leftarrow$ Motion planning between the current configuration and home configuration of $R_2$ while $R_1$ stays at the current configuration;\\
$i_1, i_2\leftarrow 0, 0$;\\
\While{$i_1<|\tau_1|$}
{
$i_2'\leftarrow$ CollisionFreeFirstIndx($\tau_1[i_1], \tau_2[i_2:]$);\\
$\tau\leftarrow$ $\tau$.Extend($\tau_1[i_1-1],[\tau_2[i_2],\dots, \tau_2[i_2'-1]]$);\\
$\tau\leftarrow$ $\tau$.Extend($\tau_1[i_1],[\tau_2[i_2']]$);\\
$i_1+=1$; $i_2=i_2';$
}
\Return $\tau$;
\end{small}
\end{algorithm}

\subsection{Real2Sim2Real}
Solving CDR mainly involves computing solutions, simulating them, and transferring them to real robot systems. To apply \modap to real robot systems, however, requires us first to build an accurate model (digital twin) of the actual system.
To achieve this, we calibrate the robots relative to each other by placing them (at different times) at the same seven diverse positions in the workspace alternately. This allows us to compute an accurate transformation between the two robot arms, upon which a digital twin of the actual setup is obtained. 
All the objects to be manipulated are labeled with ArUco markers\cite{garrido2014automatic}, and we can directly transfer their positions into the simulation with a calibrated camera.

\begin{figure}[h]
    \vspace{-1mm}
    \centering
\includegraphics[width=1\columnwidth]{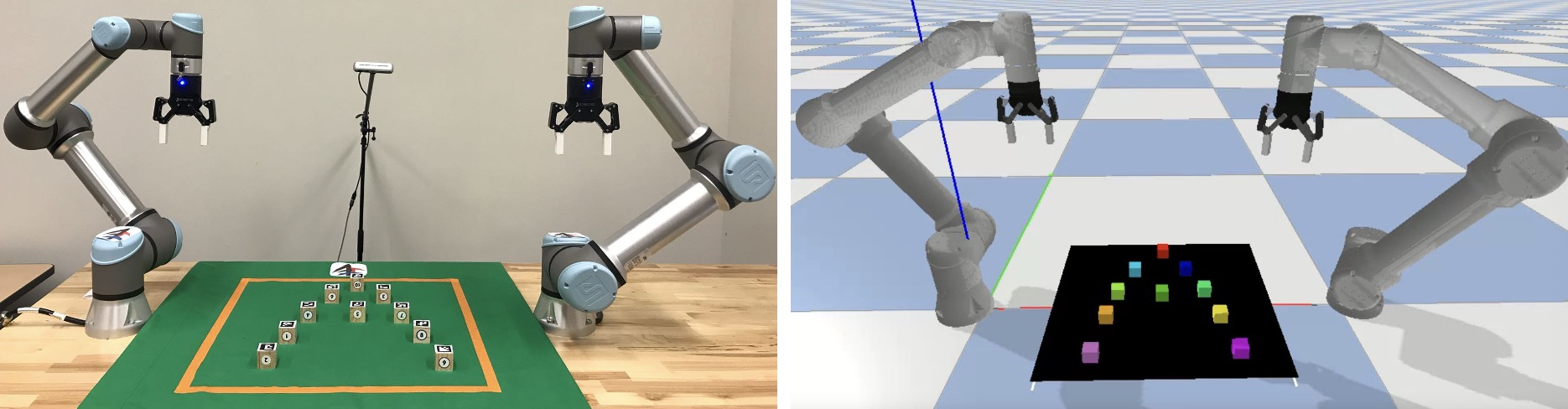}
    \caption{[left] A configuration of our dual-arm setup, also showing that the camera is mounted at the side to detect objects' poses. [right] The corresponding Dual-Arm setup in pyBullet simulation.}
    \label{fig:simnreal-setup}
    \vspace{-3mm}
\end{figure}

Since the two real robot arms are calibrated and their positions are transferred into the simulation, the trajectories planned in simulations can be seamlessly transferred to the real robot system and executed directly without further post-processing. \ref{fig:simnreal-setup} shows that our real and simulated setups mirror each other.

\subsection{Real Robot Control}
The open-source Python library ur\_rtde\cite{urrtde} is used to communicate with and control the robot, which allows us to receive robot pose/configuration and send control signals in real-time (up to 500Hz). Our controller sends joint positions for continuous control via the ServoJ() function, a PD-styled controller. The control signals are sent out alternatively at every cycle (e.g., with a 0.002-second interval), and synchronized between the two arms.

\section{Experimental Results}
\subsection{Task Planning Evaluation}
To evaluate the effectiveness of our dual-arm task planning algorithms, we 
integrate them with a simple priority-based motion planner. Essentially, 
in the presence of potential arm-arm conflict in the workspace, the motion 
planner will give the arm that is moving first priority while the second arm 
yields.
The yielding arm will either take a detour to the target pose or go back to the rest pose and wait for the execution of the other arm.
We make this choice, instead of using more sophisticated sampling-based
asymptotically optimal planners, to highlight the benefit of our task planner. 
%

The proposed algorithms are implemented in Python; we choose objects to be uniform cylinders.
In each instance, robot arms move objects from a randomly sampled start arrangement to an ordered goal arrangement (see, e.g., \ref{fig:organized}).
Besides different overlap ratios $\rho$, we test our algorithms in environments with different density levels $D$, defined as the proportion of the tabletop surface occupied by objects, 
i.e., $D:=(\Sigma_{o_i\in \mathcal O} S_{o_i})/S_{\mathcal W}$, 
where $S_{o_i}$ is the base area of $o_i$ and $S_{\mathcal W}$ is the area of $\mathcal W$.
The computed rearrangement plans are executed in PyBullet with UR-5e robot arms.

For the FC makespan, we propose an interval state space heuristic search algorithm (FCHS) in \ref{sec:CDR-KC}.
In experiments, we let the execution time of a pick $t_g$, a place $t_r$, and a handoff $t_h$ equal to the time spent for an arm traveling across the diameter of the workspace $t_d$, i.e. $t_g=t_r=t_h=t_d$.
This setting mimics general manipulation scenarios in the industry, where a successful pick-n-place relies on accurate pose estimation of the target object, a careful picking to firmly hold the object, 
and a careful placing to stabilize the object at the desired pose.
The experiments are executed on an Intel$^\circledR$ Xeon$^\circledR$ CPU at 3.00GHz. 
Each data point is the average of $20$ test cases except for unfinished trials, if any, given a time limit of $300$ seconds for each test case.

\begin{figure}[h]
    \centering
    \includegraphics[width=0.8\textwidth]{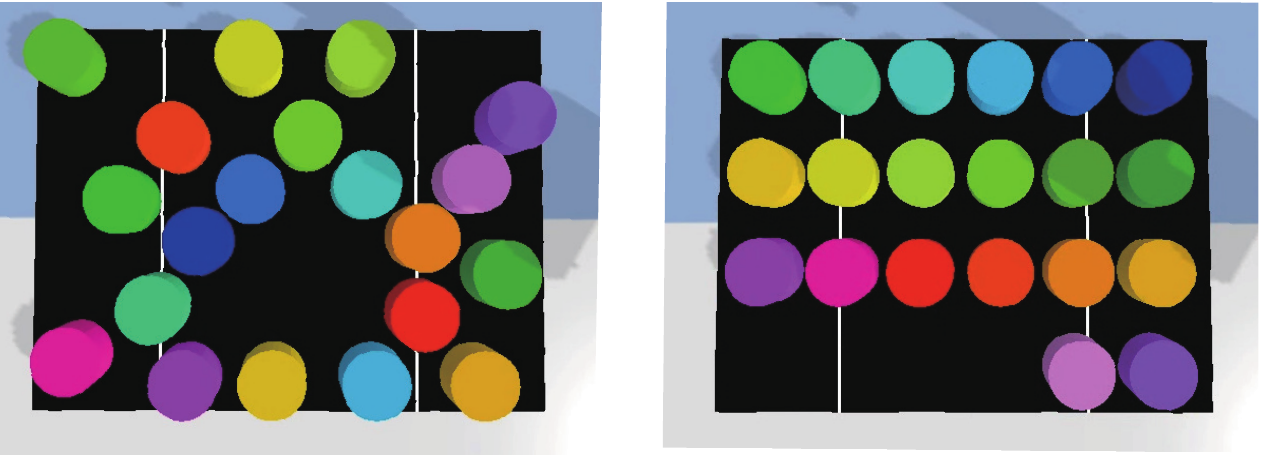}
    \caption{A 20-cylinder example of rearrangement instance with $D=0.4$ and $\rho=0.5$, moving object from a randomly sampled start arrangement [Left], to an organized goal arrangement [Right].}
    \label{fig:organized}
\end{figure}

\textbf{Manipulation versus Full Cost as Proxies.} In \ref{fig:MC_vs_FC}, we first compare optimal task schedules in both MC and FC makespans in instances with $\rho=0.5$, $D=0.3$.
While FCHS shows slight advantage in actual execution time, it incurs significantly more computation time and is more prone to planning failure. 
The results suggest that MC is more suitable as a proxy for optimizing CDR planning process.
We also compare the execution time of MC-optimal and FC-optimal plans in instances under different density levels and overlap ratios and observe similar outcome. 
Based on the observation, for later experiments, MCHS is used exclusively.

\begin{figure}[h]
    \centering
    \includegraphics[width=0.96\textwidth]{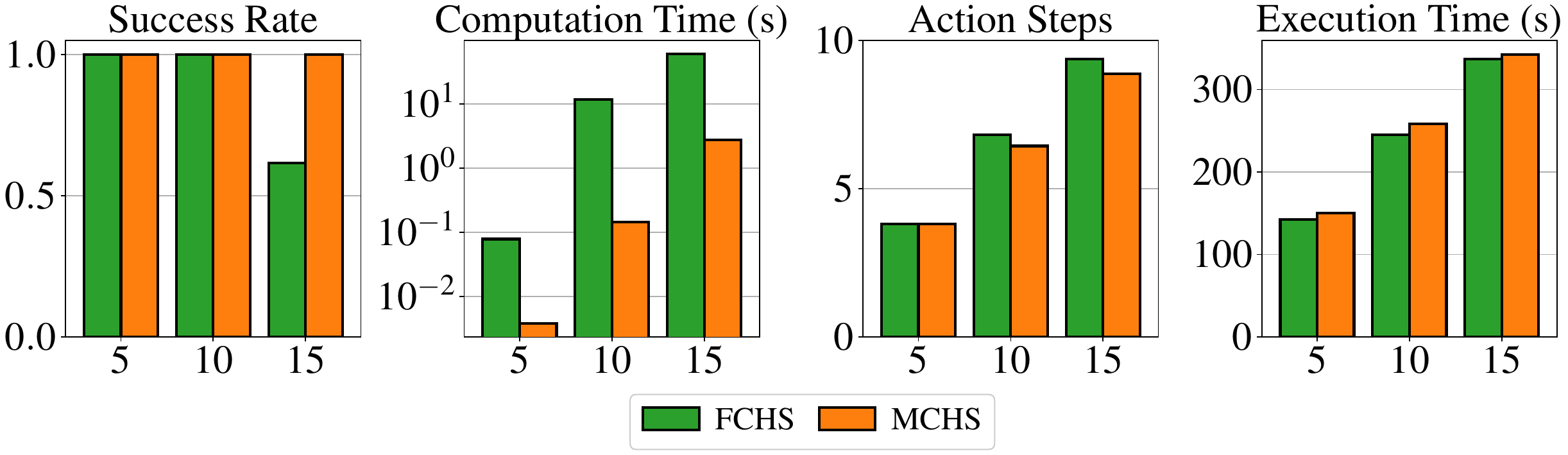}
    \caption{Performance of FCHS and MCHS in instances with $\rho=0.5$, $D=0.3$. The x-axis represents the number of objects.}
    \label{fig:MC_vs_FC}
\end{figure}

\textbf{Comparison with Baselines.} We compare the proposed algorithms with two baseline methods:
\begin{enumerate}
    \item \emph{Single-MCHS-Split:} A schedule of primitive actions for a single arm is first computed, minimizing the MC makespan, i.e. the total number of actions.
    Then, the tasks are assigned to the two arms as evenly as possible.
    A handoff is coordinated if the manipulating object is moving between $\mathcal S(r_1)$ and $\mathcal S(r_2)$.
    \item \emph{Greedy}: Each arm prioritizes the manipulations moving objects from buffers to goals.
    When there is no such manipulation available, the arm will choose the object closest to the current end-effector position.
\end{enumerate}

In \ref{fig:density_results}, we compare MCHS to the baseline algorithms in the environments with $\rho=0.5$ and different density levels $D$.
Comparing with Single-MCHS-Split, MCHS saves up to $10\%$ execution time, which is fairly significant for logistic applications.
We note that Single-MCHS-Split, guided by dependency graph, already performs quite well in dense environments.
The execution time gap comes from failures in coordinating Single-MCHS-Split plans, e.g. an arm idles for a few action steps waiting for a handoff or an arm holds an object for a few action steps waiting for obstacle clearance.
Comparing with the greedy method, MCHS saves more execution time as $D$ increases. Specifically, in 20-cylinder instances with $D=0.4$, the execution time of MCHS plans is $35\%$ shorter than greedy plans. 
Without a long-term plan in mind, greedy planning incurs more temporary object displacements.

\begin{figure}[h]
    \centering
    \includegraphics[width=0.96\textwidth]{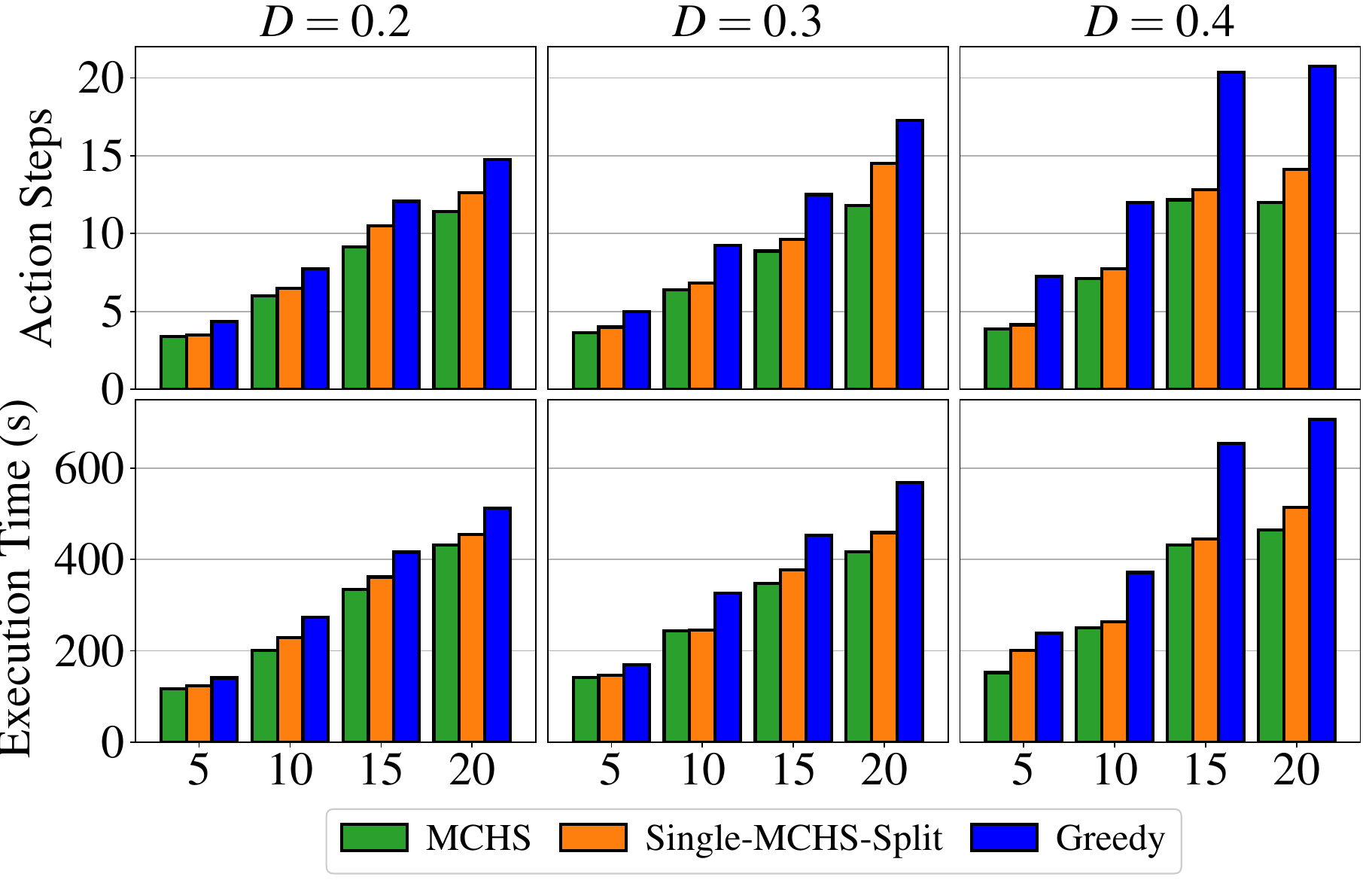}
    \caption{Algorithm comparison in environments with different density levels $D$. The x-axis represents the number of objects.}
    \label{fig:density_results}
\end{figure}

In instances with $D=0.3$, we also evaluate the tendency of the makespan and execution time as $\rho$ varies.
The conflict proportion shows the proportion of execution time for an arm to yield to the other arm during the execution.
The results are shown in \ref{fig:various_overlap}.
As $\rho$ increases, the shared area of robot arms expands. 
On one hand, the number of objects that need handoffs decreases. So does the number of action steps in the schedule.
On the other hand, we see an increase of path conflicts.
In instances with larger $\rho$, robot arms spend more time on yielding.
Based on the two factors, there is a shallow ``U'' shape in execution time as $\rho$ increases.

\begin{figure}[h]
    \centering
    \includegraphics[width=0.8\textwidth]{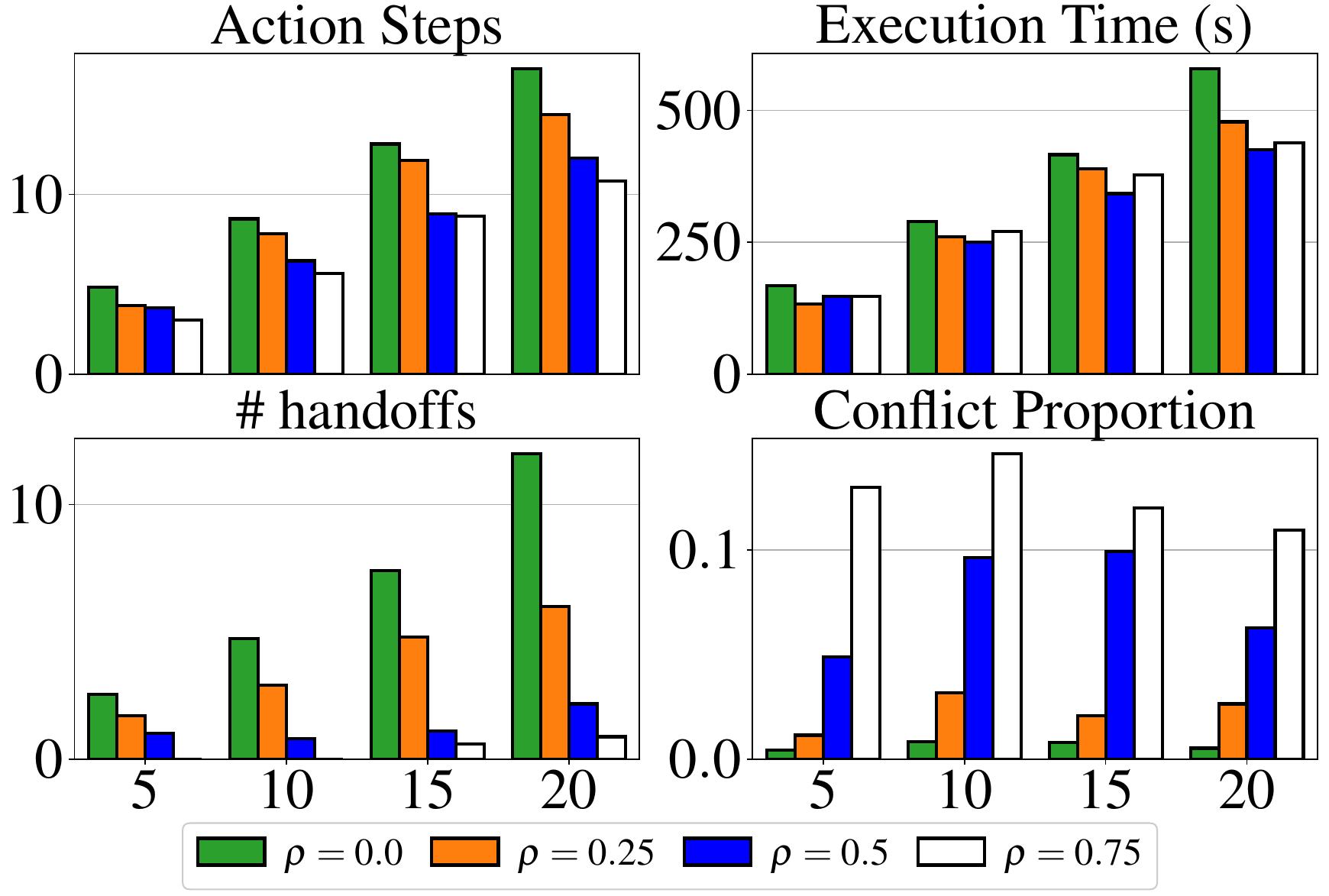}
    \caption{Evaluation of instances with different $\rho$. The x-axis represents the number of objects.}
    \label{fig:various_overlap}
\end{figure}

\textbf{Fully Overlapping Workspaces.} MCHS can also compute CDR plans with full overlap (CDRF), i.e., $\rho=1.0$.
As shown in \ref{fig:overlap_ratio}, due to the reachability limit of UR-5e, the workspace is a bounded square with sides of length $0.6m$.
In this case, each arm can reach every corner of the workspace but its manipulation is likely to be blocked by the other arm.
We compare dual-arm MCHS plans to the single-arm MCHS plans, 
where $r_1$ take responsibility of all rearrangement tasks and the number of total actions is minimized.
The results (\ref{fig:full_result}) indicate that each arm in the dual-arm system spends around $18\%$ of execution time yielding or making detours due to the blockage of the other arm.
Therefore, even though MCHS saves $50\%$ action steps, the execution of the plans is only around $10\%$ faster than that of the single-arm rearrangement plans.
However, the efficiency gain shown in action steps also suggests that the dual-arm system has the potential to save up to half of the execution time with specially designed arms for CDRF problems.

\begin{figure}[h]
    \centering
    \includegraphics[width=0.6\textwidth]{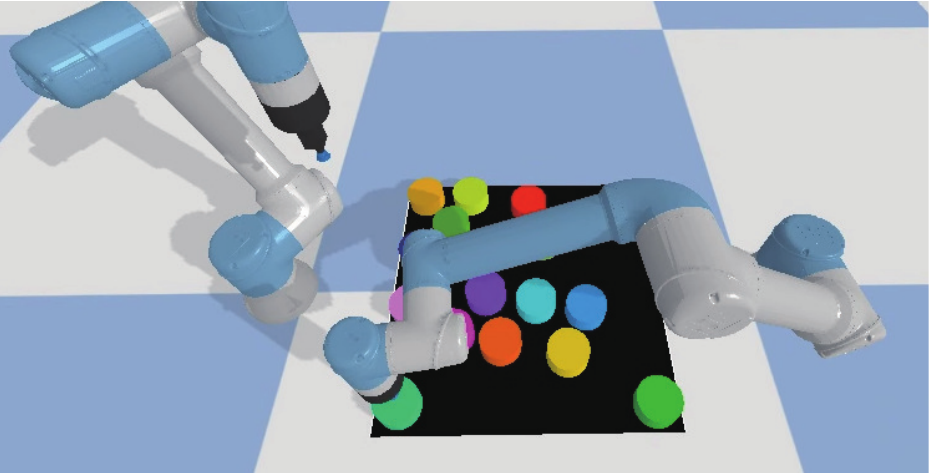}
    \caption{An instance of CDR with full overlap (CDRF), where each arm can reach every corner of the workspace but its manipulation is likely to be blocked by the other arm.}
    \label{fig:overlap_ratio}
\end{figure}

\begin{figure}[h]
    \vspace{-3mm}
    \centering
    \includegraphics[width=0.9\textwidth]{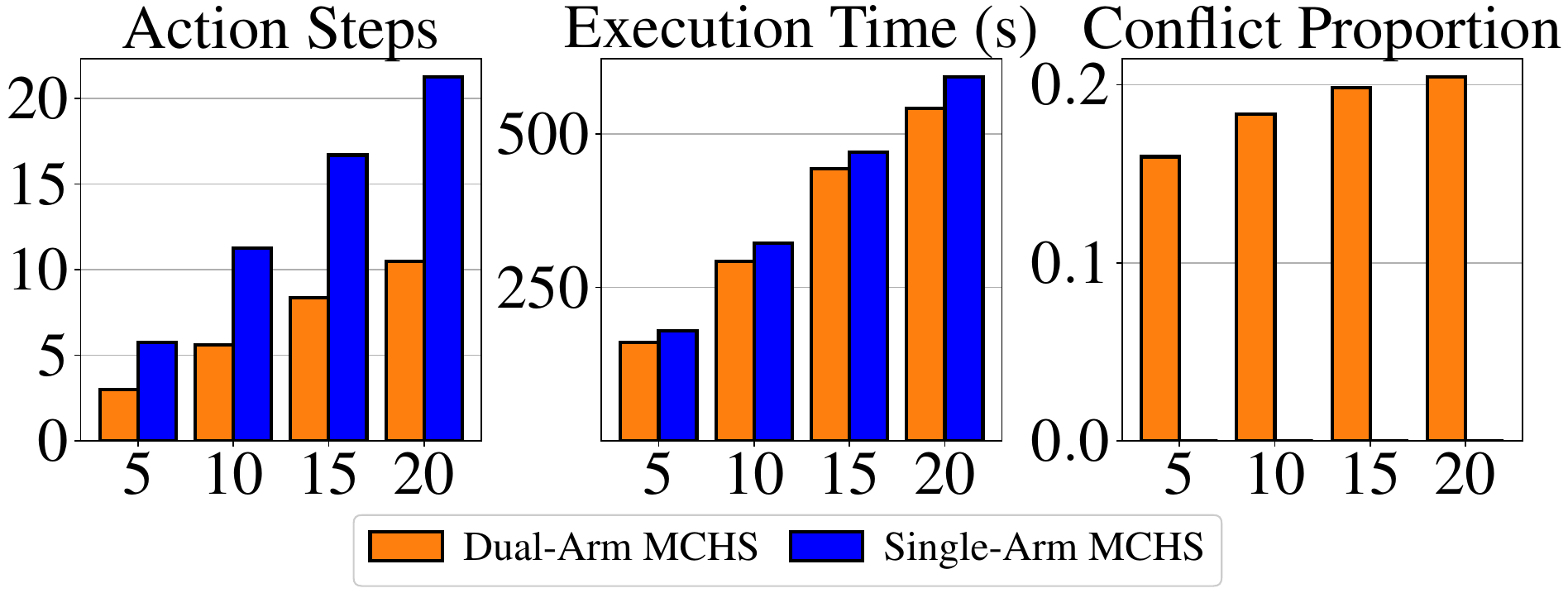}
    \caption{Algorithm performance in CDR instances with $\rho=1.0$. The x-axis represents the number of objects.}
    \label{fig:full_result}
    
\end{figure}
\vspace{-4mm}

\subsection{Motion Planning Evaluation}
In this section, we evaluate our \modap and compare it with two baseline planners both in simulation and on real robots.
The two baseline planners are:
\begin{enumerate}[leftmargin=5mm]
    \item \textbf{Baseline (BL)} Similar to \modap, BL computes collision-free dual-arm motion plans based on MCHS task plans.
    When the clearance between robots is larger than a certain threshold, both arms execute their tasks individually.
    Otherwise, one arm yields by moving toward its home state, which is guaranteed to be collision-free.
    After obtaining a collision-free initial trajectory, BL smoothens the yielding trajectories: instead of switching frequently between yielding and moving toward the next target, the robot attempts to move back to a collision-free state and waits until the conflict is resolved.
    \item \textbf{Baseline+Toppra (BL-TP)} BL-TP uses Toppra to speed up the trajectory planned by BL.
\end{enumerate}

\subsection{Sim2Real Gap: Trajectory Tracking Accuracy}
Whereas we must evaluate plans on real robots to ensure they work in the real world, doing so extensively prevents us from running a large number of experiments as it is time-consuming to set up and run real robots. 
To that end, we first evaluate the sim2real gap of our system. 
The Root-Mean-Square Deviation(RMSD) is introduced as a metric to evaluate the accuracy of the real robot control for one waypoint $q_i$ in the planned trajectory $\tau$
\vspace{-2mm}
\begin{equation*}
    RMSD(q_i) = \sqrt{\frac{\left( \hat{q_i} - q_i\right)^T \left( \hat{q_i} - q_i\right)}{\left| q_i\right|}}
\vspace{-2mm}
\end{equation*}
where $\hat{q_i}$ denotes the real configuration retrieved from the robot while executing the control signal $q_i$ and $\left| q_i \right|$ denotes the degree of freedom of the robot system. The control quality of a trajectory can be evaluated as
\vspace{-2mm}
\begin{equation*}
     Q(\tau) = \mathbb{E}[RMSD(q_i)]
\vspace{-2mm}
\end{equation*}
Trajectories are collected from 5 different scenes with four randomly placed cubes as the start and goal, planned by different methods, and then executed in simulation and real-world at different speed limits. 
In the meantime, RMSD and Q for the end-effector position can be recorded.
In \ref{fig:real_data}[top], we show the RMSD of the configuration and the end effector's position on each waypoint in a trajectory planned using \modap. 
We observe that we can see that the execution error is about $1^{\circ}$ for each joint and 5mm as the end-effector position is concerned, executing at a speed of 1.57rad/s at every joint.
\begin{figure}[h]
    \centering
        \includegraphics[width=0.95\columnwidth]{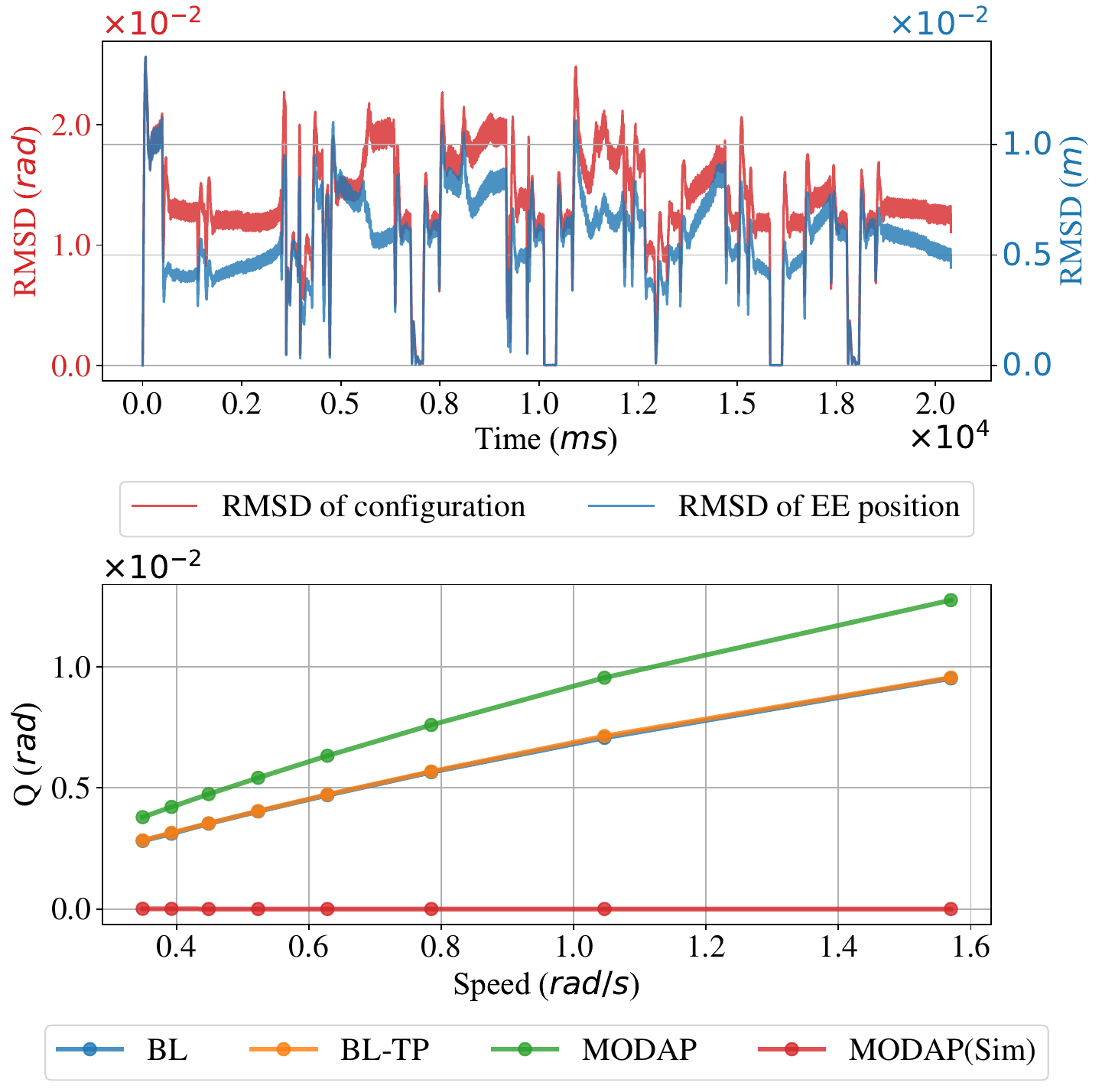}
    \caption{[top] RMSD of configuration and end-effector position on a sampled trajectory executed with a maximum speed of 1.57rad/s on real robots. [bottom] The average Q value of 5 sampled trajectories executed at different maximum speeds (sim denotes execution in simulation; otherwise, the execution is on a real robot). Note that the lines of BL and BL-TP overlap with each other.} 
    \label{fig:real_data}
    \vspace{-6mm}
\end{figure}
\begin{figure}[ht]
    \centering
\includegraphics[width=0.98\columnwidth]{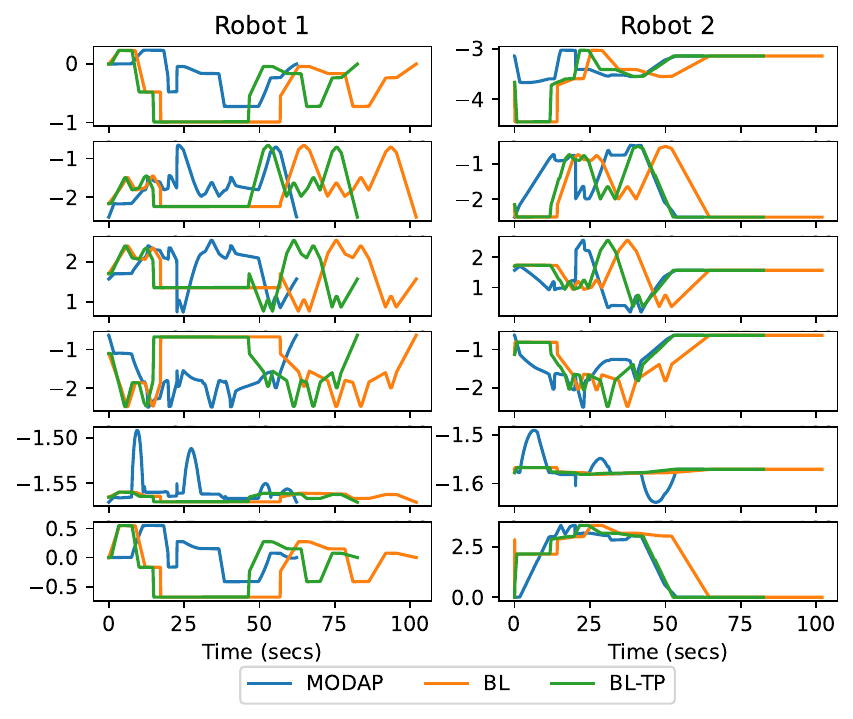}
    \vspace{-2mm}
    \caption{Trajectories of BL, BL-TP, and \modap for the a typical rearrangement task. Each row, from top to bottom, corresponds to a robot joint in radians from one to six. Each column shows the trajectory of the six joint angles of a robot.}
    \label{fig:sim_exec_joints}
\end{figure}
\ref{fig:real_data}[bottom] shows the average configuration Q value of 5 trajectories executed at different speed limits. 
First, we note that trajectory execution is perfect in simulation, as expected. 
We observed that \modap induces a slightly larger (though perfectly acceptable) execution error than BL and BL-TP. This is due to trajectories produced by \modap running at faster overall speeds. 

Overall, our evaluation of the trajectory tracking accuracy indicates that the sim2real gap is negligibly small across the methods, including \modap. This suggests we can confidently plan in simulation and expect the planned time-parameterized trajectory to execute as expected on the real system. It also means that we can compare the optimality of the methods using simulation, which we perform next. 

\subsection{Performance on Rearrangement Plans Execution} 
In evaluating the performance of \modap and comparing that with the baseline methods, we first examine the execution of \modap, BL, and BL-TP on a typical instance involving the rearrangement of five objects with $\rho=1.0$ (full workspace overlap). The time-parameterized trajectories of the six robot joints are plotted in \ref{fig:sim_exec_joints} for all three methods.
We can readily observe that \modap is much more time-efficient than BL and BL-TP. 
We can also readily observe the reason behind this: 
BL has one arm idling in most of the execution due to inefficient dual-arm coordination. 
After Toppra acceleration, BL-TP is still $32\%$ slower than \modap.

\ref{fig:sim_data} shows the result of a full-scale performance evaluation under three overlapping ratios ($\rho = 0.5, 0.75, 1$) and different numbers of objects to be rearranged ranging between $6$-$18$. 
The simulation experiments are executed on an Intel$^\circledR$ Core(TM) i7-9700K CPU at 3.60GHz and an NVIDIA GeForce RTX 2080 Ti GPU for cuRobo planner. For each method, the average execution time and total trajectory length are given; each data point (with both mean and standard deviation) is computed over $20$ randomly generated instances. Two example test cases are shown in  \ref{fig:sim_workspace}. We set the maximum per-instance computation time allowance to be 500 seconds, sufficient for all computations to complete. \modap, exploring many more trajectories than BL and BL-TP, takes, on average, 2-3 times more computation time. 
The total length is the sum of robot trajectory lengths computed under the L2 norm in the 6-DoF robot's configuration space.
\begin{figure}[ht]
\vspace{-1.5mm}
    \centering
\includegraphics[width=0.48\columnwidth]{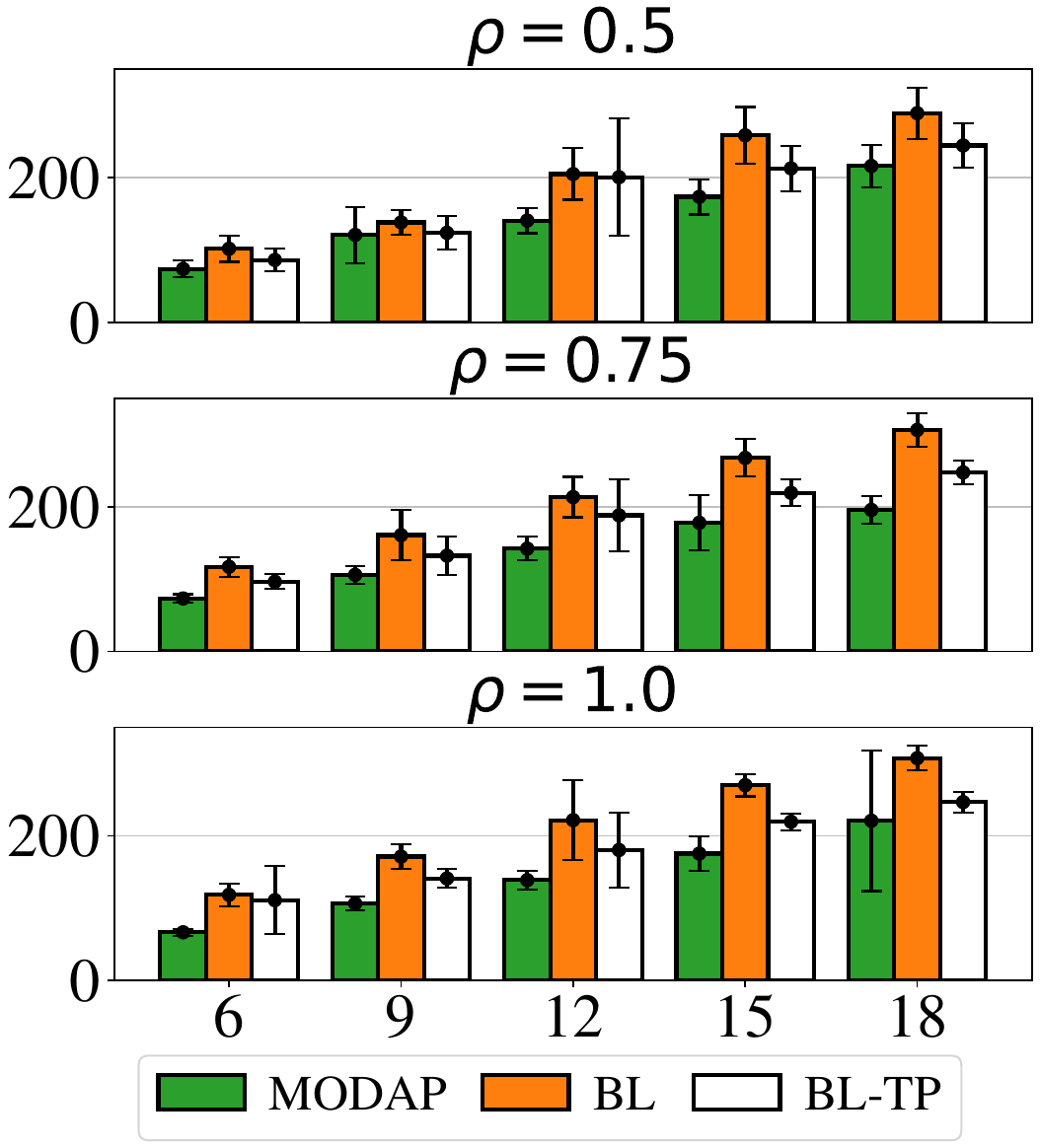}
\includegraphics[width=0.48\columnwidth]{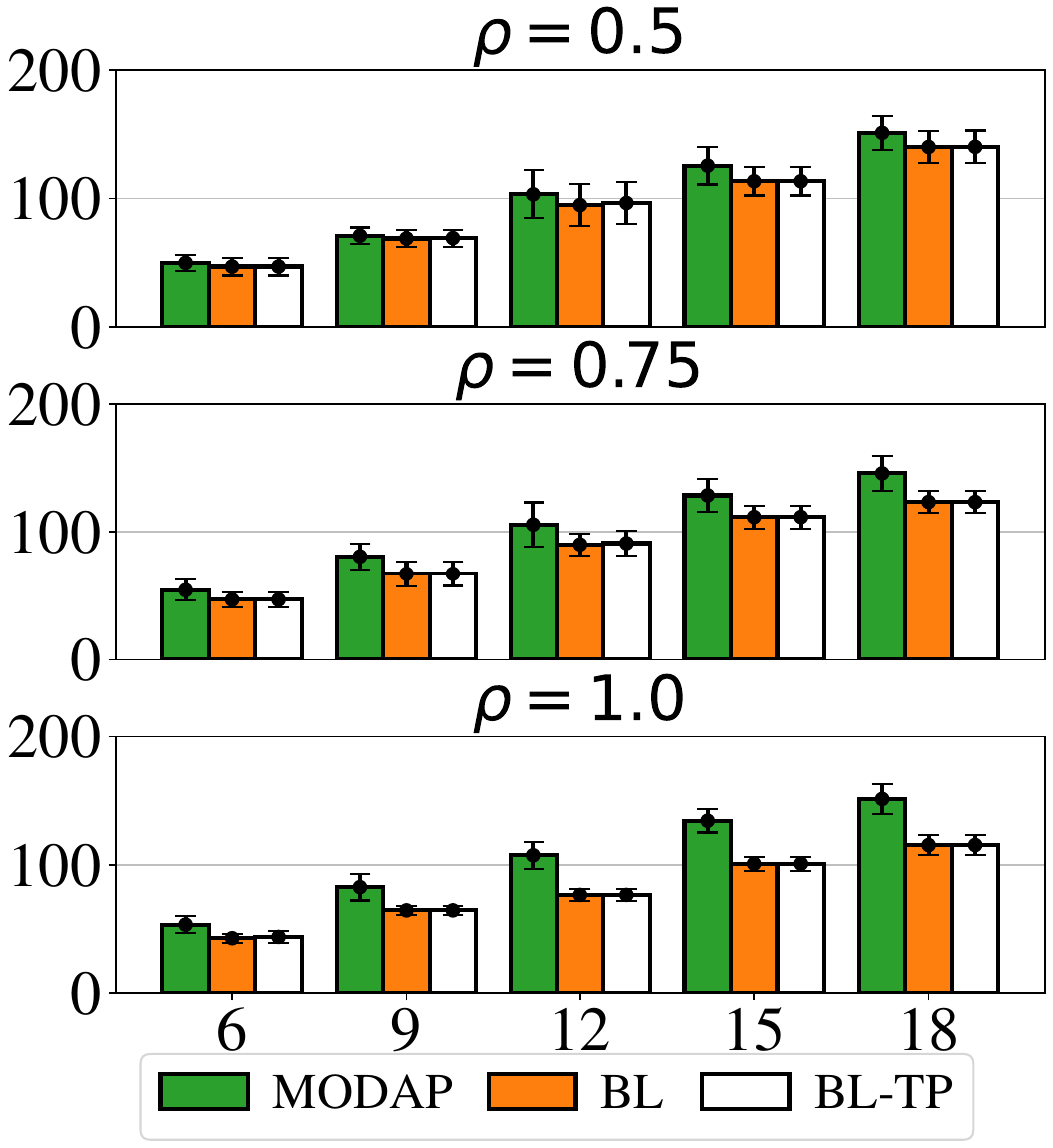}
    \caption{Average execution time (left) and trajectory total length (right) computed by compared methods under different overlap ratios $\rho$ and the number of workspace objects.}
    \label{fig:sim_data}
\vspace{-3mm}
\end{figure}
\begin{figure}[ht]
    \centering
\includegraphics[trim={48cm, 22cm, 48cm, 25cm}, clip, width=0.24\columnwidth]{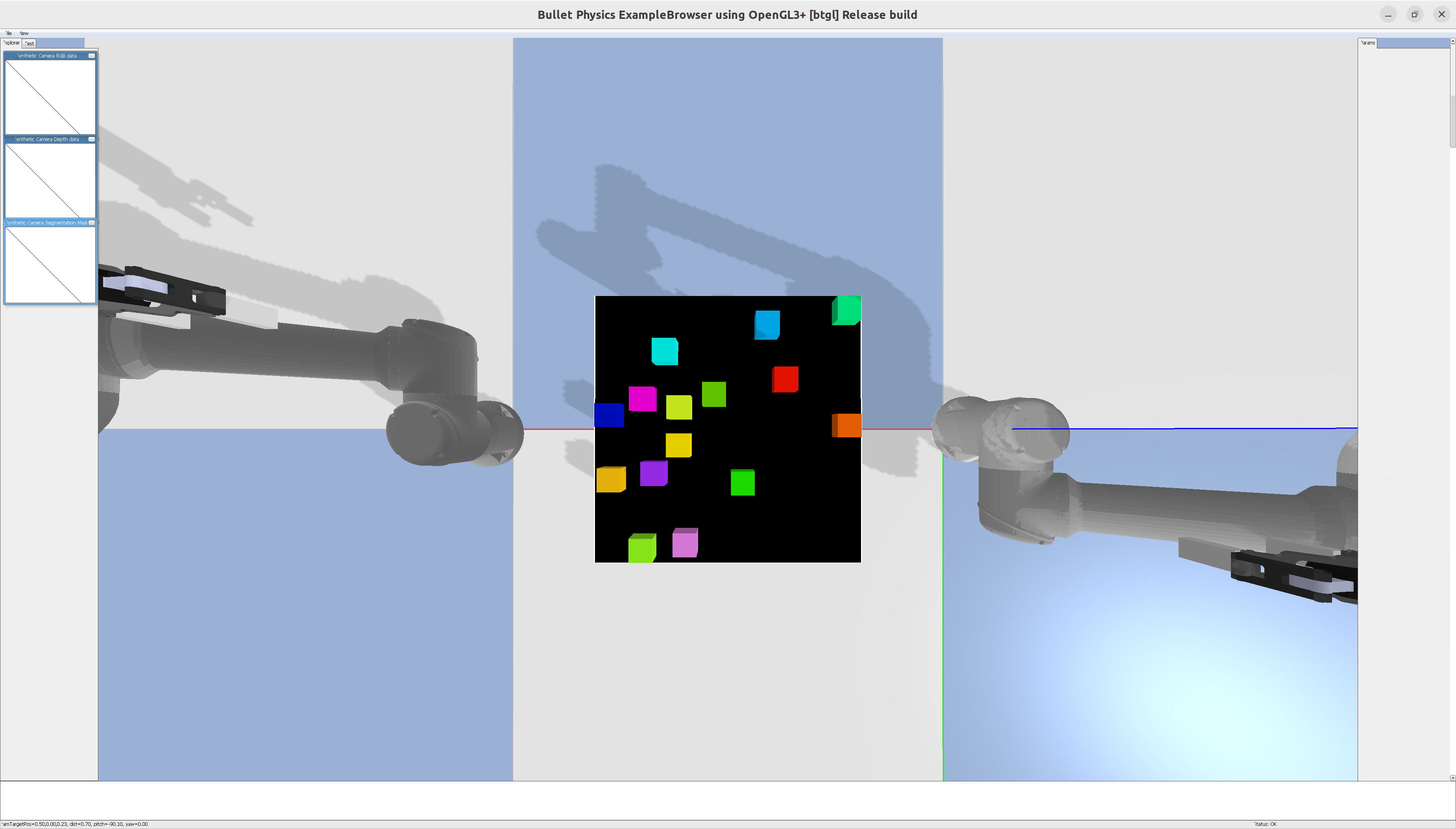}
\put(-58, -10){(a)}
\includegraphics[trim={48cm, 22cm, 48cm, 25cm}, clip, width=0.24\columnwidth]{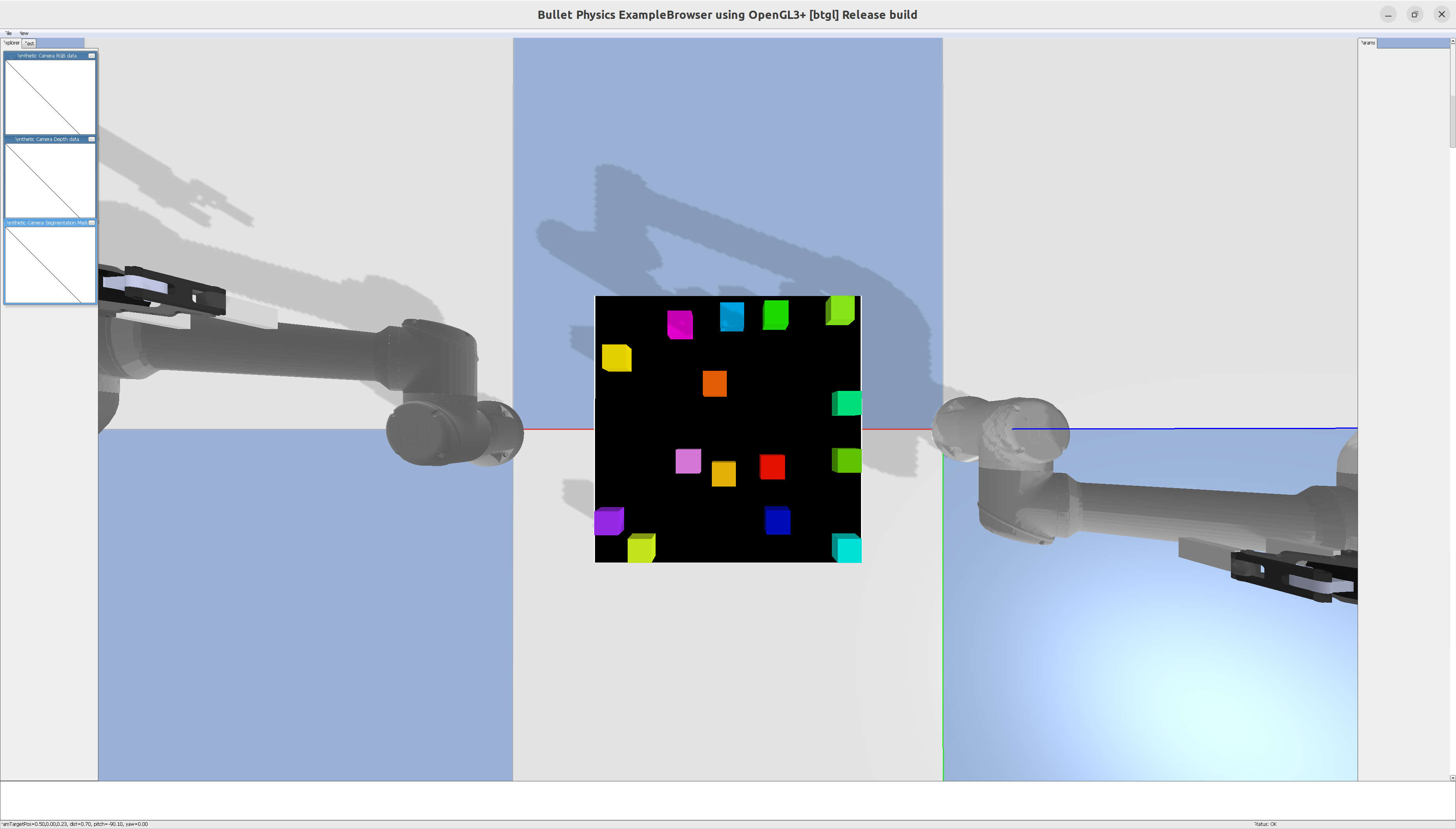}
\put(-58, -10){(b)}
\includegraphics[trim={48cm, 22cm, 48cm, 25cm}, clip, width=0.24\columnwidth]{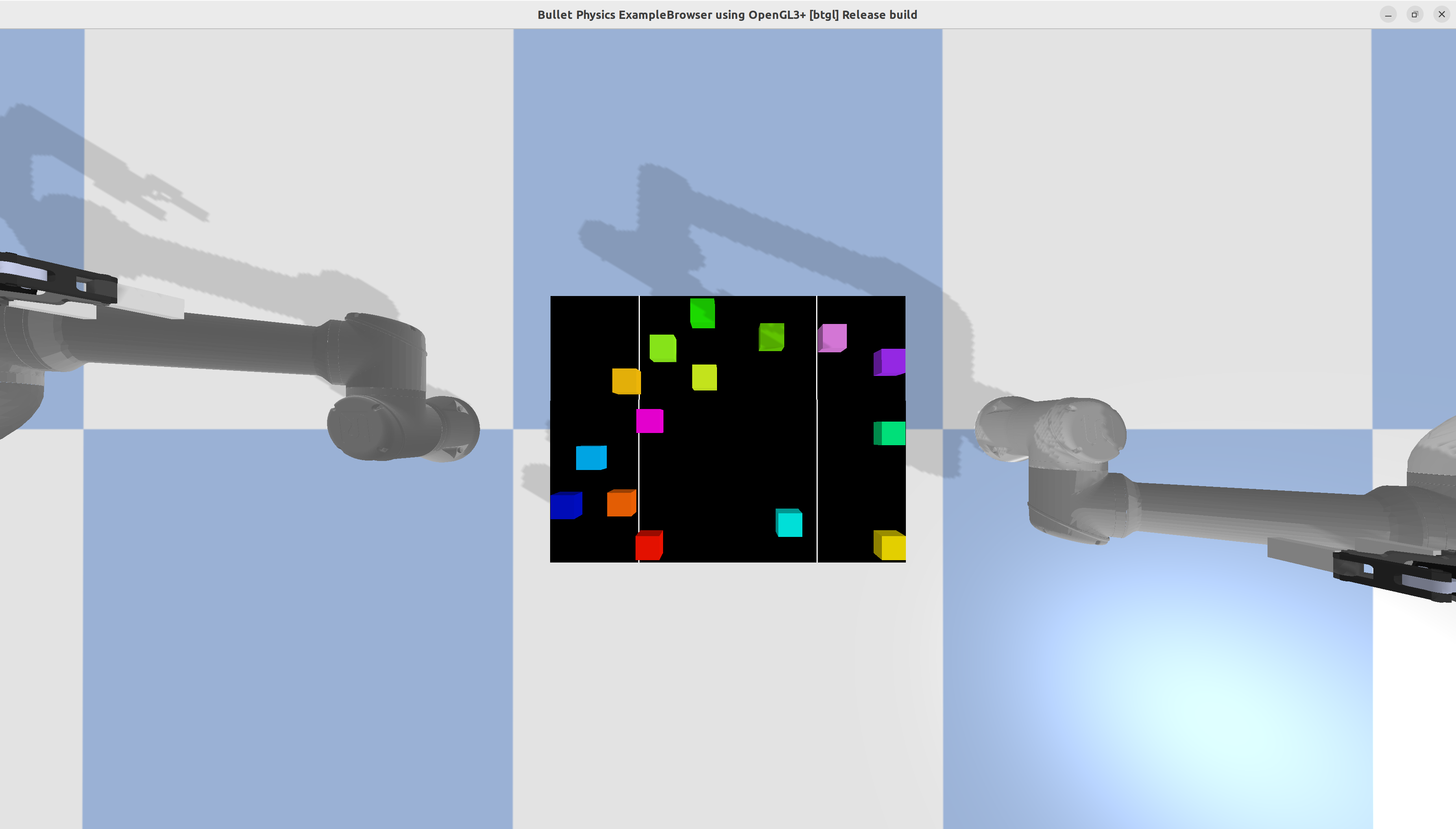}
\put(-58, -10){(c)}
\includegraphics[trim={48cm, 22cm, 48cm, 25cm}, clip, width=0.24\columnwidth]{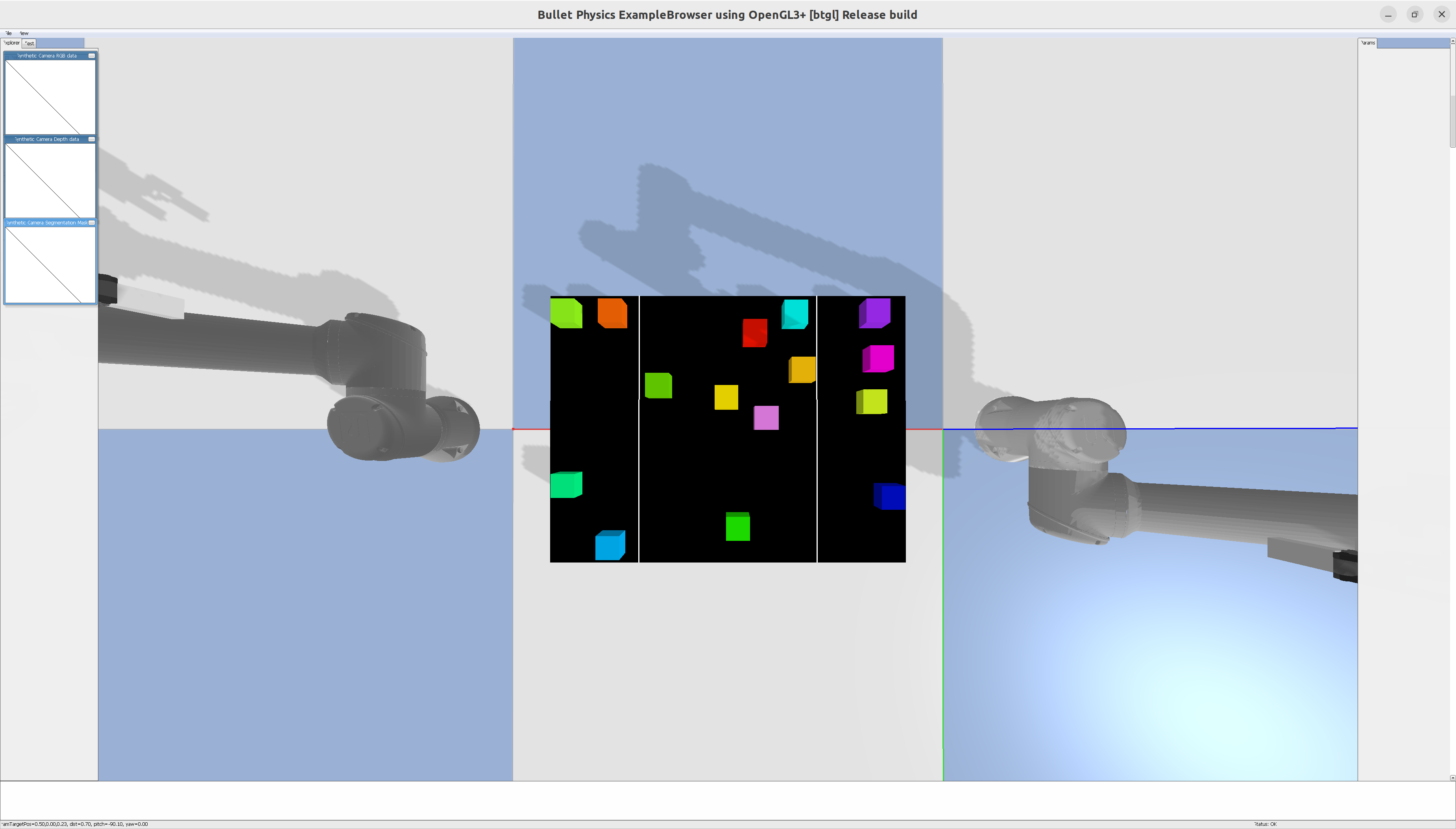}
\put(-58, -10){(d)}
    \caption{Two examples of start and goal configurations with 15 objects at overlap rates $\rho=1$ (a, b) and $\rho=0.5$ (c, d). (a) and (c) (resp., (b) and (d)) are starting (resp., goal) configurations.}
    \label{fig:sim_workspace}
\vspace{-3mm}
\end{figure}
We observe that \modap's plans can be executed faster than BL and BL-TP even though cuRobo generates longer trajectories, which aligns with the observation based on \ref{fig:sim_exec_joints}. 
When $\rho=0.5$, \modap saves $13\%-32\%$ execution time compared with BL, and $3\%-30\%$ compared with BL-TP.
When $\rho=1.0$, \modap saves $29\%-44\%$ execution time compared with BL, and $11\%-40\%$ compared with BL-TP.
\modap shows apparent efficiency gain as the overlap ratio $\rho$ increases, which shows that the proposed \modap effectively resolves conflicts between robot trajectories.

    \chapter{Distance-Optimal TRLB: Object Rearrangement with Lazy A*}\label{chap:orla}
\thispagestyle{myheadings}

\section{Motivation}
Effectively performing object rearrangement is an essential skill for mobile manipulators, e.g., setting up a dinner table. A key challenge in such problems is deciding an appropriate ordering to effectively untangle object-object dependencies while considering the necessary motions for realizing the manipulations (e.g., pick and place).
To our knowledge, computing time-optimal multi-object rearrangement solutions for mobile manipulators remains a largely untapped research direction.
This study develops effective solutions for mobile manipulators over a larger workspace and jointly considers the arrangement's stability. 

To that end, this chapter proposes \orla: \emph{Object Rearrangement with Lazy A*} for solving mobile manipulator-based rearrangement tasks. 
This study carefully investigated factors impacting the optimality of a rearrangement plan for mobile manipulators and provided insightful structural understandings for the same. 
Among these, a particularly interesting one is that the mobile base travels on $S^1$ (i.e., a cycle), leading to intricate interactions with other factors in the optimization task. 
\orla designs a suitable cost function that integrates the multiple costs and employs the idea of lazy buffer allocation (buffers are temporary locations for objects that cannot be placed at their goals) into the \astar framework to search for rearrangement plans minimizing the cost function.
To accomplish this, the $f,g,h$ values in \astar are redefined when some states in the search tree are non-deterministic due to the delayed buffer computation.
With optimal buffer computation, \orla returns globally optimal solutions.
A thorough feasibility and optimality study backs our buffer allocation strategies. 
To estimate the feasibility of a buffer pose, especially when an object needs to be placed on top of others temporarily, a learning model, \model, is proposed to estimate the stability of the placing pose.

\section{Related Work}
{\bf Tabletop Rearrangement} In tabletop rearrangement tasks, the primary challenge lies in planning a long sequence of actions in a cluttered environment.
Such rearrangement planning can be broadly divided into three primary categories: prehensile\cite{zeng2021transporter,gao2023minimizing,han2018complexity,zhang2022visually,gao2022fast,labbe2020monte,ding2023task, xu2023optimal}, non-prehensile\cite{yuan2019end,vieira2022persistent,huang2021dipn,song2020multi,huang2021visual}, and a combination of the two\cite{tang2023selective}.
Compared with non-prehensile operations (e.g., pushing and poking), prehensile manipulations, while demanding more precise grasping poses\cite{zeng2021transporter} prior to picking, offer the advantage of placing objects with higher accuracy in desired positions and facilitate planning over longer horizons\cite{gao2022toward}.
In this domain, commonly employed cost functions encompass metrics like the total count of actions\cite{gao2022fast,gao2023minimizing,xu2023optimal}, execution duration\cite{gao2022toward}, and end-effector travel\cite{han2018complexity,song2020multi}, among others\cite{gao2023effectively}.
In this paper, we manipulate objects with overhand pick-n-places. 
Besides end-effector traveling costs, we also propose a novel cost function considering the traveling cost of a mobile robot.

{\bf Buffer Allocation} In the realm of rearrangement problems, there are instances where specific objects cannot be moved directly to their intended goal poses. 
These scenarios compel the temporary movement of objects to collision-free poses. 
To streamline the rearrangement planning, several rearrangement methodologies exploit external free spaces as buffer zones\cite{gao2023minimizing,xu2023optimal}. 
One notable concept is the \emph{running buffer size}\cite{gao2023minimizing}, which quantifies the requisite size of this external buffer zone. 
In situations devoid of external space for relocation, past research either pre-identifies potential buffer candidates\cite{wang2021uniform,cheong2020relocate} or segments the rearrangement tasks into sequential subproblems\cite{krontiris2016efficiently,wang2022lazy}.
TRLB\cite{gao2022fast}, aiming for an optimized buffer selection, prioritizes task sequencing and subsequently employs the task plan to dictate buffer allocation. 
However, TRLB doesn't factor in travel costs. 
Contrarily, in our study, we incorporate lazy buffer allocation within the \astar search and prioritize buffer poses based on various cost function optimizations.


{\bf Manipulation Stability} 
Structural stability is pivotal in robot manipulation challenges. 
Wan et al.\cite{wan2018assembly} assess the stability of Tetris blocks by scrutinizing their supporting boundaries. 
For truss structures, finite element methods have been employed to assess stability of intermediate stages\cite{mcevoy2014assembly,garrett2020scalable}. 
Utilizing deep learning, Noseworth et. al.\cite{noseworthy2021active} introduce a Graph Neural Network model dedicated to evaluating the stability of stacks of cuboid objects. 
However, these methodologies often come with shape constraints, requiring objects to be in forms such as cuboid blocks or truss structures. 
For objects of more general shapes, recent research\cite{xu2023optimal,lee2023object} leverages stability checkers grounded in physics simulators, which are effective but tend to be computationally intensive. 
In contrast, our study introduces a deep-learning-based prediction model, \model, tailored for the rearrangement of objects with diverse shapes. 
\model offers a speed advantage over simulation-based checks and demonstrates robust generalization to previously unseen object categories.

\section{Mobile Robot Tabletop Rearrangement}
Consider a 2D tabletop workspace centered at the origin of the world coordinate with $z$ pointing up. 
A point $(x,y,z)$ is within the tabletop region $\mathcal W$ if $(x,y)$ are contained in the workspace and $z \geq 0$. 
The workspace has $n$ objects $\mathcal{O}$. 
The pose of a workspace object $o_i\in \mathcal O$ is represented as $(x, y, z, \theta)$.
An arrangement of $\mathcal O$ is feasible if all objects are contained in the tabletop region and are collision-free.
In this paper, we allow objects to be placed on top of others.
An object is \emph{graspable} at this arrangement if no other object is on top.
A goal pose is \emph{available} if both conditions below are true: (1). There is no obstacle blocking the pose; (2). If other objects have goal poses under the pose, these objects should have been at the goal poses.

On the table's edge, a mobile robot equipped with an arm moves objects from an initial arrangement $\mathcal A_I$ to a desired goal arrangement $\mathcal A_G$ with overhand pick and place actions.
Each action $a$ is represented by $(o_i, p_1, p_2)$, which moves object $o_i$ from current pose $p_1$ to a collision-free target pose $p_2$.
A rearrangement plan $\Pi=\{a_1, a_2, a_3, ...\}$ is a sequence of actions moving objects from $\mathcal A_I$ to $\mathcal A_G$.

We evaluate solution quality with a cost function $J(\Pi)$ (Eq.\ref{eq:cost}), which mimics the execution time of the plan. 
The first term is the total traveling cost, and the second is the manipulation cost associated with pick-n-places, which is linearly correlated with the number of actions.
\vspace{1mm}
\begin{align}\label{eq:cost}
    J(\Pi)&=dist(\Pi)+mani(\Pi),  \ 
    mani(\Pi)=C|\Pi|
\end{align}

Based on the description so far, we define the studied problem as follows.

\begin{problem}[Mobile Robot Tabletop Rearrangement (\motar)]
    Given a feasible initial arrangement $\mathcal A_I$ and feasible goal arrangement $\mathcal A_G$ of an object set $\mathcal O$, find the rearrangement plan $\Pi$ minimizing the cost $J(\Pi)$.
\end{problem}

We study \motar under two scenarios.
On the one hand, for a small workspace ( \ref{fig:scenarios}[Left]), where the mobile robot can reach all tabletop poses at a fixed base position.
We define $dist()$ as the Euclidean travel distance of the end effector (EE) in the x-y plane.
On the other hand, for a large table workspace( \ref{fig:scenarios}[Right]), where the mobile base needs to travel around to reach the picking/placing poses, we define $dist()$ as the Euclidean distance that the mobile base (MB) travels.
As shown in \ref{fig:workingExample}[Left], we assume the mobile base travels along the boundaries of the table. 
When the robot attempts to pick/place an object at pose $(x,y,z,\theta)$, it will move to the closest point on the track to $(x,y)$ before executing the pick/place.
In the remainder of this paper, we will refer to the scenarios as \emph{EE} and \emph{MB}, respectively.

\begin{figure}[h]
    \centering
    \includegraphics[width=\columnwidth]{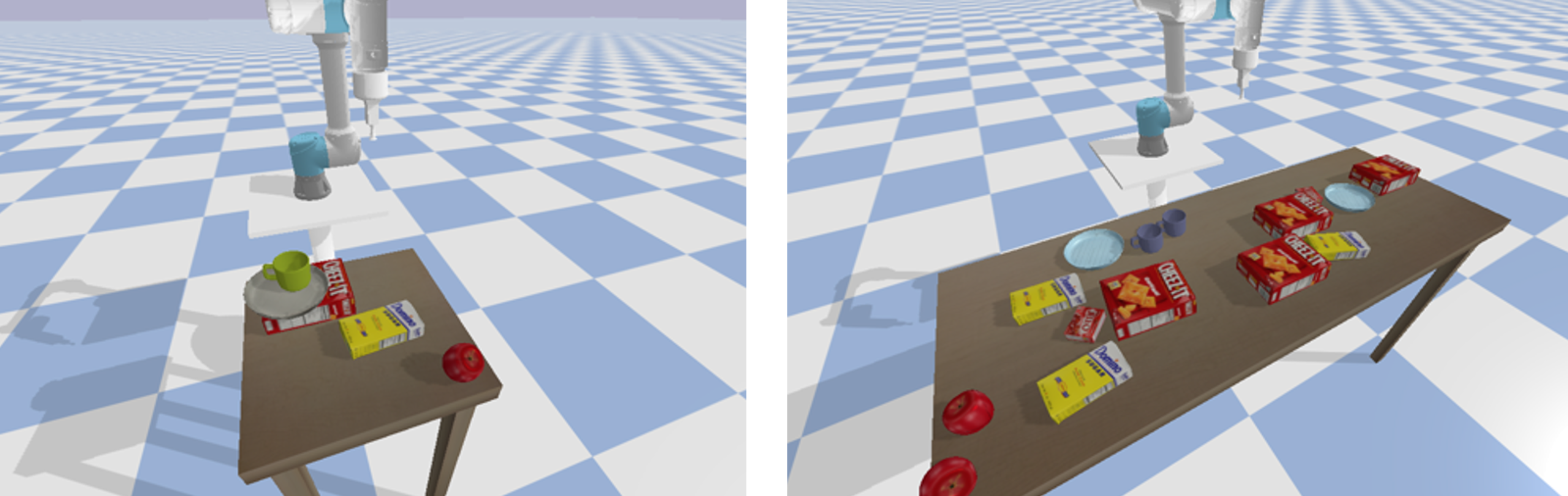}
    \caption{[Left] An example of the EE scenario, where the table is small and the robot can reach all poses from a fixed position. We count the traveling cost of the end-effector (EE) in the cost function. [Right] An example of the MB scenario, where the table is large and the robot can only reach a portion of tabletop poses from a fixed position. We count the traveling cost of the mobile base (MB) in the cost function.}
    \label{fig:scenarios}
\end{figure}

\section{\orla: A* with Lazy Buffer Allocation}\label{sec:planner}
We describe \orla, a \emph{lazy} A*-based rearrangement planner that delays buffer computation, specially designed for mobile manipulator-based object rearrangement problems.
As a variant of A*, \orla always explores the state $s$ that minimizes the estimated cost $f(s)=g(s)+h(s)$. 
An action from $s$ to its neighbor moves an object to the goal pose or a buffer.
Specifically, actions from $s$ follow the rules below:
\begin{enumerate}[leftmargin=5mm]
    \item \textbf{R1.} If $o_i$ is graspable and its goal pose is also available at $s$, move $o_i$ to its goal.
    \item \textbf{R2}. If $o_i$ is graspable, its goal is unavailable, and it causes another object to violate R1, then move $o_i$ to a buffer.
\end{enumerate}
When \orla decides to place an object at a buffer, it does not allocate the buffer pose immediately. 
Instead, the buffer pose is decided after the object leaves the buffer.
In this way, lazy buffer allocation effectively computes high-quality solutions with a low number of actions as shown in \ref{chap:trlb}.
Under the A* framework, \orla searches for buffer poses with the minimum additional traveling cost.

\subsection{Deterministic and Nondeterministic States}
To enable lazy buffer allocation, \orla categorizes states into deterministic states (DS) and non-deterministic states (NDS).
Like traditional \astar, a DS represents a feasible object arrangement in the workspace.
Each object has a deterministic pose at this state.
NDS is a state where some object is at a buffer pose to be allocated.

\begin{figure}[ht]
    \centering
    \includegraphics[width=0.32\textwidth]{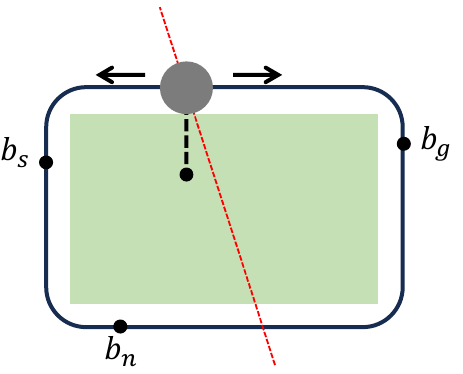}
    \hspace{6mm}
    \includegraphics[width=0.36\textwidth]{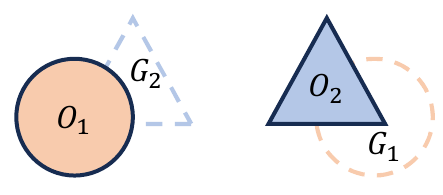}
    \caption{[Left] An example of MB scenario, where the robot (gray disc) travels along the green table following the black track along the table boundaries. 
    To pick/place an object on the table, the robot moves to the nearest position before the manipulation. 
    [Right] A working example where $o_1$ and $o_2$ block each other's goal pose. One must move to a buffer pose to finish the rearrangement.
    }
    \label{fig:workingExample}
\end{figure}

In the working example in \ref{fig:workingExample}[Right], $o_1$ and $o_2$ blocks each other from goal poses.
The path in the A* search tree from the initial state $(p_1^I,p_2^I)$ to the goal state $(p_1^G,p_2^G)$ may be:
\vspace{-1mm}
\begin{small}
$$
S_1:(p_1^I,p_2^I)\to S_2:(B_1,p_2^I)\to S_3:(B_1,p_2^G)\to S_4: (p_2^G,p_2^G)
$$
\end{small}

In this rearrangement plan, the robot moves $o_1$ to buffer and then moves $o_2$ and $o_1$ to goal poses, respectively.
In this example, the initial and goal states $S_1$ and $S_4$ are DS, and both intermediate states $S_2$ and $S_3$ are NDSs since $o_1$ is at a non-deterministic buffer $B_1$ in these states.

\subsection{Cost Estimation}\label{sec:cost}
In \astar, each state $s$ in the search space is evaluated by $f(s)=g(s)+h(s)$, where $g(s)$ represents the cost from the initial state to $s$ and $h(s)$ represents the estimated cost from $s$ to the goal state.
\orla defines $g(s)$ and $h(s)$ for DS and define $f(s)$ for NDS.
For a DS $s_D$, $g(s_D)$ is represented by the actual cost measured by $\mathcal J(\cdot)$ from the initial state to $s_D$, which can be computed as follows:
$$
g(s_D) = g(s_D') + \mathcal J(s_D',s_D)
$$
where $s_D'$ is the last DS in this path, and $\mathcal J(s_D',s_D)$ is the cost of the path between $s_D'$ and $s_D$.
When there are NDSs between $s_D'$ and $s_D$, we compute buffer poses minimizing the cost.
The details are discussed in \ref{sec:allocation}.
In $h(s_D)$ computation, we only count the transfer path and manipulation costs of one single pick-n-place for each object away from goal poses.
For example, in \ref{fig:workingExample}, $h(S_1)=dist(p_1^s,p_1^g)+dist(p_2^s,p_2^g)+C*|\{o_1, o_2\}|$.

For an NDS $s_{N}$, we have
\begin{align}
\begin{split}
f(s_{N}) &= g(s_D') + \mathcal J(s_D',s_{N}) + h(s_{N})
\end{split}
\end{align}
Since the $g(x)$ computation of DSs and NDSs only relies on $g(\cdot)$ of $S_D'$, rather than any NDS, we do not compute $g(s_{N})$ explicitly. 
Instead, we directly compute the lower bound of the actual cost as $f(s_{N})$.
The general idea of $f(s_{N})$ computation is presented in \ref{alg:fx}.
In Line 2, we add all manipulation costs in $\mathcal J(s_D',s_N)$ and $h(s_N)$, which are defined in the same spirit as those of $s_D$.
In Lines 3-4, we add the traveling cost along deterministic poses between $s_D'$ and $s_N$, 
which is a lower bound of the actual traveling cost as it assumes buffer poses do not induce additional costs.
In Lines 5-8, we add traveling cost to $h(s_N)$.
If the object is at a buffer, we add traveling distance based on \ref{alg:refine}, which we will mention more details later.
If the object is at a deterministic pose, we add the traveling cost of the transfer path between the current pose and its goal pose.
In our implementation, we store buffer information in each NDS to avoid repeated computations in \ref{alg:fx}.

\begin{algorithm}
\begin{small}
    \SetKwInOut{Input}{Input}
    \SetKwInOut{Output}{Output}
    \SetKwComment{Comment}{\% }{}
    \caption{ $f(s_{N})$ Computation}
		\label{alg:fx}
    \SetAlgoLined
		\vspace{0.5mm}
    \Input{$s_N$: an NDS state; $\mathcal A_G$: goal arrangement
    }
    \Output{$c$: $f(s_N)$}
		\vspace{0.5mm}
		$c\leftarrow g(s_D')$\\
        $c\leftarrow$ Add manipulation costs.\\
        $P\leftarrow$ All deterministic waypoints from $s_D'$ to $s_N$.\\
        $c\leftarrow c+dist(P)$\\
        \For{$o_i$ away from goal in $s_N$}
        {
        \If{$o_i$ in buffer}
        {$c\leftarrow c+$distanceRefinement($s_N$, $o_i$, $\mathcal A_G$)}
        \lElse
        {
        $c\leftarrow c+dist(s_N[o_i],\mathcal A_G[o_i])$
        }
        }
		\Return $c$\\
\end{small}
\end{algorithm}

\ref{alg:refine} computes the traveling cost in $f(s_N)$ related to a buffer pose $p_b$ in addition to the straight line path between its neighboring deterministic waypoints (\ref{alg:fx} Line 3).
For an object $o_i$ at a buffer, the related traveling cost involves three deterministic points$\{p_s, p_n, p_g\}$: $p_s$ and $p_n$ are the deterministic poses right before and after visiting the buffer.
$p_g$ is the goal pose of $o_i$.
$dist(p_b,p_s)+dist(p_b,p_n)$ is in $g(s_N)$ and $dist(p_b,p_g)$ is in $h(s_N)$.
In the EE scenario, if $\{p_s, p_n, p_g\}$ forms a triangle in x-y space, the distance sum is minimized when $p_b$ is the Fermat point of $\Delta p_sp_np_g$. If $\{p_s, p_n, p_g\}$ forms a line instead, the optimal $p_b$ is at the pose in the middle.
In the MB scenario, we minimize the total base travel. Denote the base positions of $\{p_s, p_n, p_g, p_b\}$ as $\{b_s, b_n, b_g, b_b\}$.
When $b_b$ is not at the three points or their opposite points, there are two points of $\{b_s, b_n, b_g\}$ on one half of the track and another on the other half of the track. 
In the example of \ref{fig:workingExample}[Left], if $b_b$ is at the current position, then $b_s$ and $b_n$ are on the left part of the track, and $b_g$ is on the other side.
Moving toward the two-point direction by $d$, $b_b$ can always reduce the total traveling cost by $d$.
Therefore, the extreme points of total distance are $\{b_s, b_n, b_g\}$ and their opposites on the track, and $\{b_s, b_n, b_g\}$ are the minima.
As a result, in MB scenario, the optimal $p_b$ minimizing the total distance to $\{p_s, p_n, p_g\}$ can be chosen among them.

\begin{algorithm}
\begin{small}
    \SetKwInOut{Input}{Input}
    \SetKwInOut{Output}{Output}
    \SetKwComment{Comment}{\% }{}
    \caption{ Distance Refinement}
		\label{alg:refine}
    \SetAlgoLined
		\vspace{0.5mm}
    \Input{$s_N$: an NDS state; $o_i$: object at buffer; $\mathcal A_G$: goal arrangement
    }
    \Output{$c$: additional cost}
		\vspace{0.5mm}
		$p_s\leftarrow$ the pose of $o_i$ before moved to buffer.\\
        $p_n\leftarrow$ the pose that the robot visit after placing $o_i$ to buffer.\\
        $p_g\leftarrow \mathcal A_G[o_i]$.\\ 
        \lIf{EE scenario}
        {
        $p_b\leftarrow$ Fermat point of $\Delta p_sp_np_g$
        }
        \ElseIf{MB scenario}
        {
        $p_b\leftarrow$ The one in $\{p_s,p_n,p_g\}$ with the shortest total distance to the other two.
        }
        \vspace{1mm}
		\Return $\sum_{p\in \{p_s,p_n,p_g\}}(dist(p_b,p))-dist(p_s,p_n)$\\
\end{small}
\end{algorithm}

\subsection{Buffer Allocation}\label{sec:allocation}
We now discuss \orla's buffer allocation process.
Buffer poses are allocated when a new DS is reached, and some object moves to a buffer pose since the last DS node.

\subsubsection{Feasibility of a Buffer Pose}
A buffer pose $p_b$ of $o_i$ is feasible if $o_i$ can be stably placed at $p_b$ without collapsing and $o_i$ should not block other object actions during $o_i$'s stay.
To predict the stability of placement of a general-shaped object, we propose a learning model \model based on Resnest\cite{zhang2022resnest}.
\model consumes two $200*200$ depth images of the surrounding workplace from the top and the placed object from the bottom, respectively.
It outputs the possibility of a successful placement.
The data and labels are generated by PyBullet simulation.
The depth images are synthesized in the planning process based on object poses and the stored point clouds to save time.
In addition to buffer pose stability, we must avoid blocking actions during the object's stay at the buffer pose.
For an action moving an object $o_j$ from $p_j^1$ to $p_j^2$. $o_i$ at the buffer pose needs to avoid $o_j$ at both $p_j^1$ and $p_j^2$.
We add these constraints when we allocate buffer poses.

\begin{figure}[h]
    \centering
    \includegraphics[width=0.8\columnwidth]{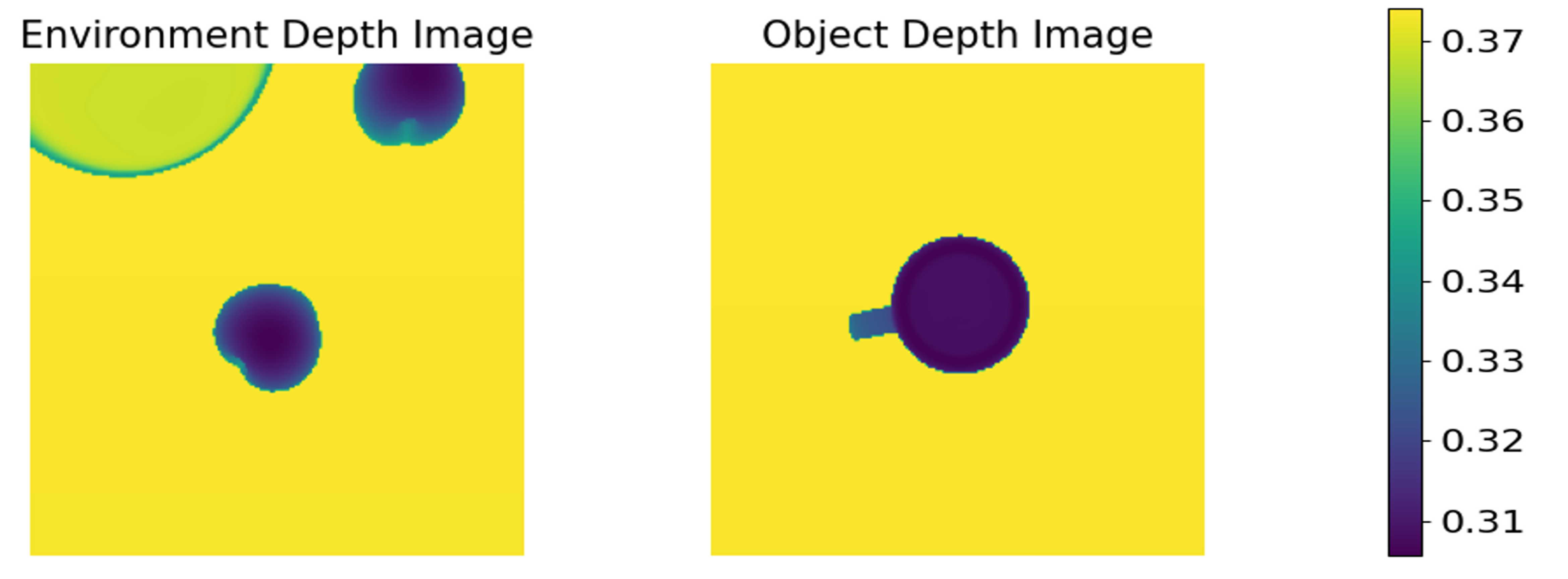}
\vspace{-3mm}
    \caption{An example input of \model when attempting to place a cup right on top of an apple in the environment. The ground truth label given by the simulation is a failure.}
    \label{fig:modelExample}
\end{figure}

\subsubsection{Optimality of a Buffer Pose}
Given the task plan, a set of buffer poses is optimal if the traveling cost is minimized.
For each object needing a buffer pose, we compute $P_b^*$, the set of buffer poses minimizing the traveling cost.
In the traveling cost, the trajectories from the buffer pose $p_b$ to four deterministic poses are involved:
the last deterministic pose the robot visits before placing and picking at $p_b$, 
and the first pose the robot visits after placing and picking at $p_b$.

In the EE scenario, if the four points form a quadrilateral, the total distance is minimized when $p_b$ is placed at the intersection of the diagonal lines. Otherwise, if two or more points overlap, the optimal $p_b$ is the point among four points minimizing the traveling cost.
Therefore, $P_b^*$ in EE scenario is a set of poses with above-computed $x,y$.

In the MB scenario, let $b_b$ be the mobile base position when visiting $p_b$.
And denote the mobile base positions of the four involved poses as $P=\{b_1,b_2,b_3,b_4\}$.
The four points and their opposites $\widehat{P}=\{\widehat{b_1},\widehat{b_2},\widehat{b_3},\widehat{b_4}\}$ partition the track into up to eight segments.
Similar to the case in \ref{alg:refine},
$\widehat{P}\bigcup P$ are extreme points of the distance sum.
And the points with the minimum value are optimal $b_b$ solutions.
Moreover, for any of the eight segments, if both of its endpoints are with the minimum value, the whole segments are optimal $b_b$ solutions.
Therefore, in MB scenario, $P_b^*$ is a set of poses whose corresponding mobile base positions are at the points or segments computed above.

\subsubsection{Buffer Sampling}
\ref{alg:sampling} handles buffer sampling.
When multiple objects need buffers since the last DS, we sample buffers for them one after another (Line 1-3).
The sampling order is sorted by the time the object is placed in the buffer.
In Lines 4-5, we collect information for pose feasibility checks.
$E$ is used to predict the stability of a buffer pose.
$A$ contains object footprints that $o_i$ must avoid during the buffer stay.
We first sample buffer poses in $P_b^*$ (Line 8-12), and gradually expand the sampling region when we fail to find a feasible buffer (Line 14).
If we cannot find a feasible buffer when the sampling region covers the tabletop area, we return the latest $P$.
In the case of a failure, we remove $S_D$ from \astar search tree and create another new DS node $S_D''$ at the failing step.
$S_D''$ represents the state before the failing $o_i$ is moved to buffer.
At $S_D''$, all objects at buffers have found feasible buffer poses in the returned $P$.
Therefore, all objects in $S_D''$ are at deterministic poses.
Note that in our implementation, $P_b^*$ in MB scenario is represented by a segment of the mobile base track.
As a result, for Line 9 in MB scenario, we first sample a mobile base position on the track and then sample a pose based on that.

\begin{algorithm}[h]
\begin{small}
    \SetKwInOut{Input}{Input}
    \SetKwInOut{Output}{Output}
    \SetKwComment{Comment}{\% }{}
    \caption{Buffer Sampling}
		\label{alg:sampling}
    \SetAlgoLined
		\vspace{0.5mm}
    \Input{$s_D$: an DS state,\\ 
    }
    \Output{$P$: buffer poses}
		\vspace{0.5mm}
        $B \gets$ Objects go to buffers since the last DS.\\
        $P \gets \{o_i:\emptyset \ \forall o_i \in B\}$\\
        \For{$o_i \in B$}
        {
        $E\leftarrow$ The environment when the buffer is placed.\\
        $A\leftarrow$ A list of poses to avoid based on the task plan.\\
        $P_b^*\leftarrow$ The placing region minimizing traveling cost.\\ 
        \While{$o_i$'s Buffer Not Found}
        {
        \For{$i \gets 0$ \textbf{to} $k-1$}
        {$p_b\leftarrow$ Sample a pose in $P_b^*$.\\
        \If{poseFeasible($p_b,o_i,E,A$)}
        {
        Add $p_b$ to $P$;\\
        \textbf{go to} Line 3;
        }
        }
        \lIf{$P_b^*$ is the whole tabletop region}
        {\Return $P$}
        \lElse{
        $P_b^* \gets$ expand($P_b^*$)}
        }
        }
        \Return $P$
\end{small}
\end{algorithm}

\subsection{Optimality of \orla}
Our \orla consists of two main components: high-level \astar search and low-level buffer allocation process.
In terms of the high-level lazy \astar,
the $h(s)$ of DS is consistent, and $g(s)$ of DS does not rely on $g(s)$ of NDSs.
Therefore, by only considering DSs, lazy \astar is a standard \astar with global optimality.
Additionally, $f(s)$ values for NDSs underestimate the actual cost.
Therefore, high-level lazy \astar is globally optimal in its domain.
Note that the formulated buffer allocation problem is an established \emph{Constraints Optimization Problem}, which can be solved optimally if we assume \model makes correct stability predictions.
By replacing the buffer sampling with an optimal solver, \orla is globally optimal.

\section{Evaluation}\label{sec:experiments}
We present simulation evaluations on \orla and compare them with state-of-the-art rearrangement planners implemented in Python.
For each experiment, costs and computation time are averages of test cases finished by all methods.
Experiments are executed on an Intel$^\circledR$ Xeon$^\circledR$ CPU at 3.00GHz.
To measure instance difficulty, we define the density level of workspace $\rho$ as $\dfrac{\sum_{o\in \mathcal O} S(o)}{S_T}$, where $S(o)$ is the footprint size of the object $o$ and $S_T$ is the size of the tabletop region.
We set $C=10$ in cost function $\mathcal J(\Pi)$.

We first compare \orla-Full, \orla-Action, Greedy-Sampling
, MCTS\cite{labbe2020monte}, and TRLB from \ref{chap:trlb} 
in disc instances (\ref{fig:discInstance}) without \model.
\orla-Full is our \orla minimizing $\mathcal J(\Pi)$.
\orla-Action only considers manipulation costs, i.e. minimizes the number of pick-n-places.
Greedy-Sampling maintains an A$^*$ search tree with $\mathcal J(\Pi)$. Instead of lazy buffer allocation, it samples buffer poses immediately as close to goal poses as possible when objects are moved to buffers.
MCTS and TRLB compute rearrangement plans using Monte-Carlo tree search with random buffer sampling and bi-directional tree search with lazy buffer allocation respectively.
These two planners only consider buffer poses in the free space.
In disc instances, we assume all poses are stable to be placed on but the robot cannot manipulate an object if another object is on top.
The table sizes of EE and MB instances are $1m\times 1m$ and $3m\times 1m$, respectively.
In disc instances with different numbers of objects, we adjust the disc radius to keep $\rho$ constant.
We also test \orla-Full and \orla-Action in general-shaped object instances (\ref{fig:ycbInstance}) with \model for buffer pose sampling.

\begin{figure}[h]
    \centering
    \includegraphics[width=0.7\textwidth]{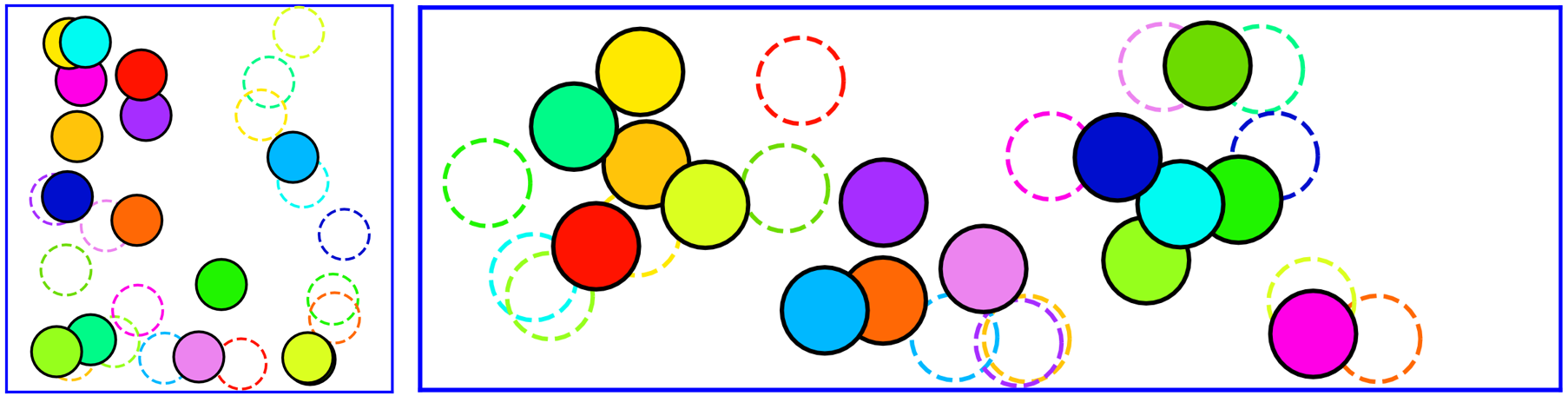}
    \includegraphics[width=0.18\textwidth]{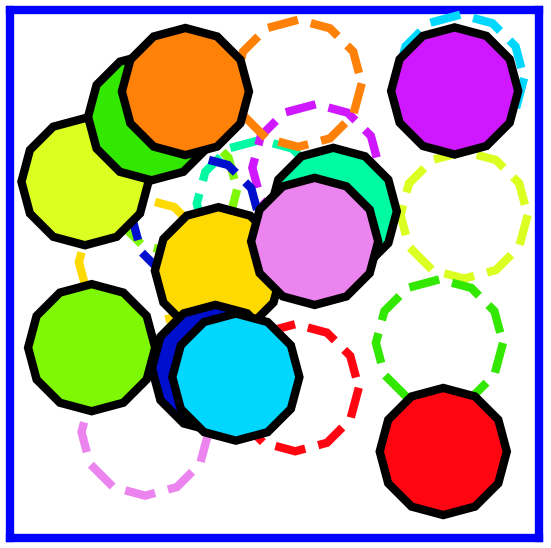}
    \caption{Examples of disc instances. [Left] EE scenario with $\rho=0.2$; [Middle] MB scenario with $\rho=0.2$; [Right] EE scenario with $\rho=0.5$. Colored and transparent discs represent the initial and goal arrangements respectively. }
    \label{fig:discInstance}
\end{figure}

\begin{figure}[h]
\vspace{-3mm}
    \centering
    \includegraphics[width=0.96\textwidth]{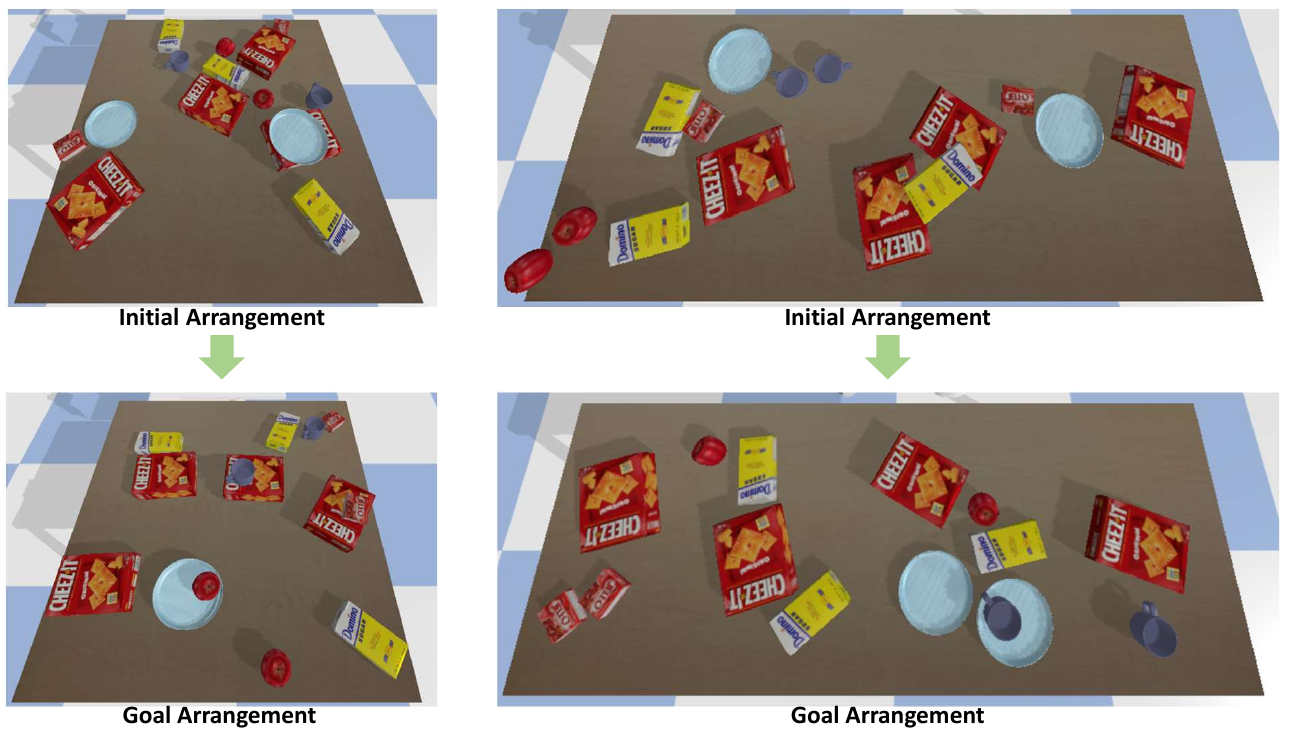}
    \vspace{-4mm}
    \caption{Examples of instances with general-shaped objects in [Left] EE and [Right] MB scenarios.}
    \label{fig:ycbInstance}
\end{figure}

\subsection{Disc Rearrangement in Simulation}
\ref{fig:discs11} shows algorithm performance in EE scenario with $\rho=0.2$. 
Comparing \orla-Full and \orla-Action, \orla-Full saves around $15\%$ path length without an increase in the manipulation cost.
Without the lazy buffer strategy, Greedy-Sampling spends more time on planning but yields much worse plans in general, e.g., additional $23\%$ actions in 5-discs instances.
That is because \orla-Full allocates buffer poses avoiding future actions while immediately sampled buffer poses in Greedy-Sampling may block some of these actions. 
Due to the repeated buffer sampling, Greedy-Sampling can only solve $40\%$ instances in $7-$object instances.
However, Greedy-Sampling has reasonably good performance in the total path length despite the large number of actions, so allocating buffers near the goal pose is a good strategy for saving traveling costs.
We also note that the number of actions as multiplies of $|\mathcal O|$ reduces as $|\mathcal O|$ increases, which indicates a reducing difficulty in rearrangement.
That is because given a fixed density level, as $|\mathcal O|$ increases, the relative size of each object to the workspace is smaller, which makes it easier to find valid buffer poses.

\begin{figure}[ht]
    \centering
    \includegraphics[width=0.96\textwidth]{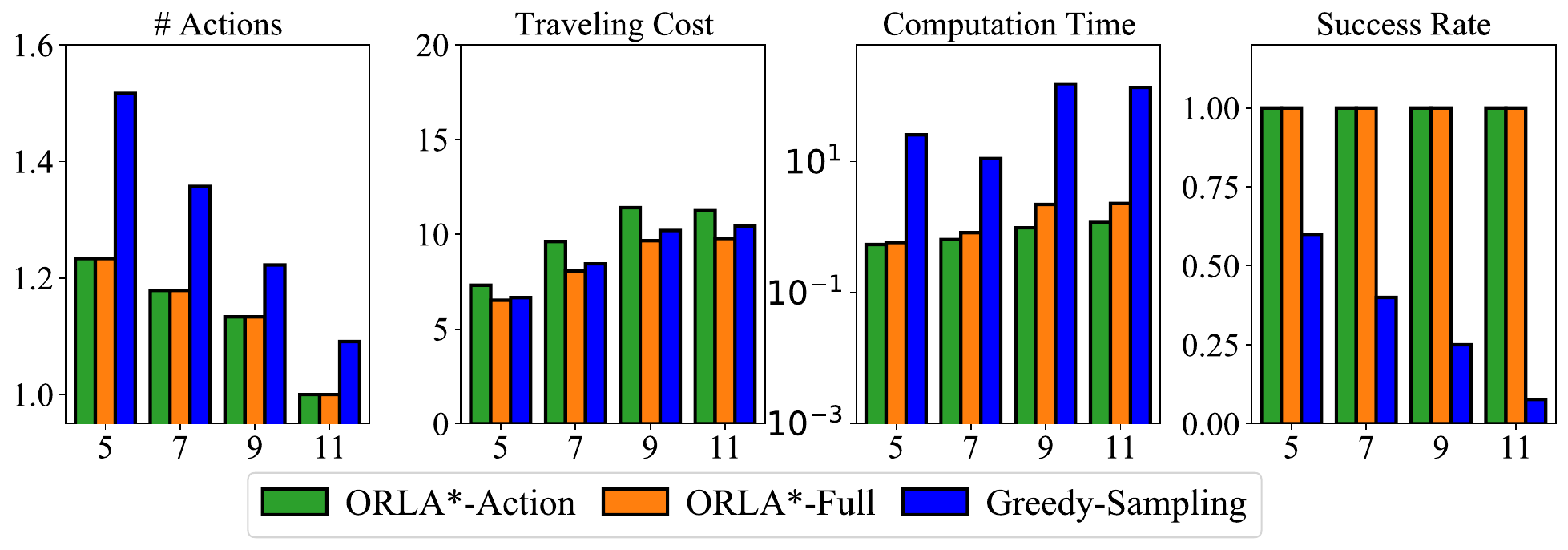}
    \caption{Algorithm performance in EE disc instances with $\rho=0.2$ and 5-11 objects. (a) $\#$ pick-n-places in solutions as multiplies of $|\mathcal O|$. (b) Traveling cost (m). (c) Computation time (secs). (d) Success rate. }
    \label{fig:discs11}
\end{figure}

We also compare \orla variants with TRLB and MCTS in dense disc instances ($\rho=0.5$) of EE scenario. 
An example of the dense instance is shown in \ref{fig:discInstance}[Right]. 
While MCTS fails in all test cases, the results of other methods are shown in \ref{fig:disc_11_dense}.
Comparing \orla-Action and TRLB, the results suggest that considering buffer poses on top of other objects not only effectively increases the success rate, but also provides shorter paths with lower traveling cost and manipulation cost in dense test cases.
Comparing \orla variants, \orla-Full saves $4.2\%-11.7\%$ traveling cost, but \orla-Action has a much higher success rate in 5-object instances, which is the hardest for buffer sampling as mentioned previously.

\begin{figure}[ht]
    \centering
    \includegraphics[width=\textwidth]{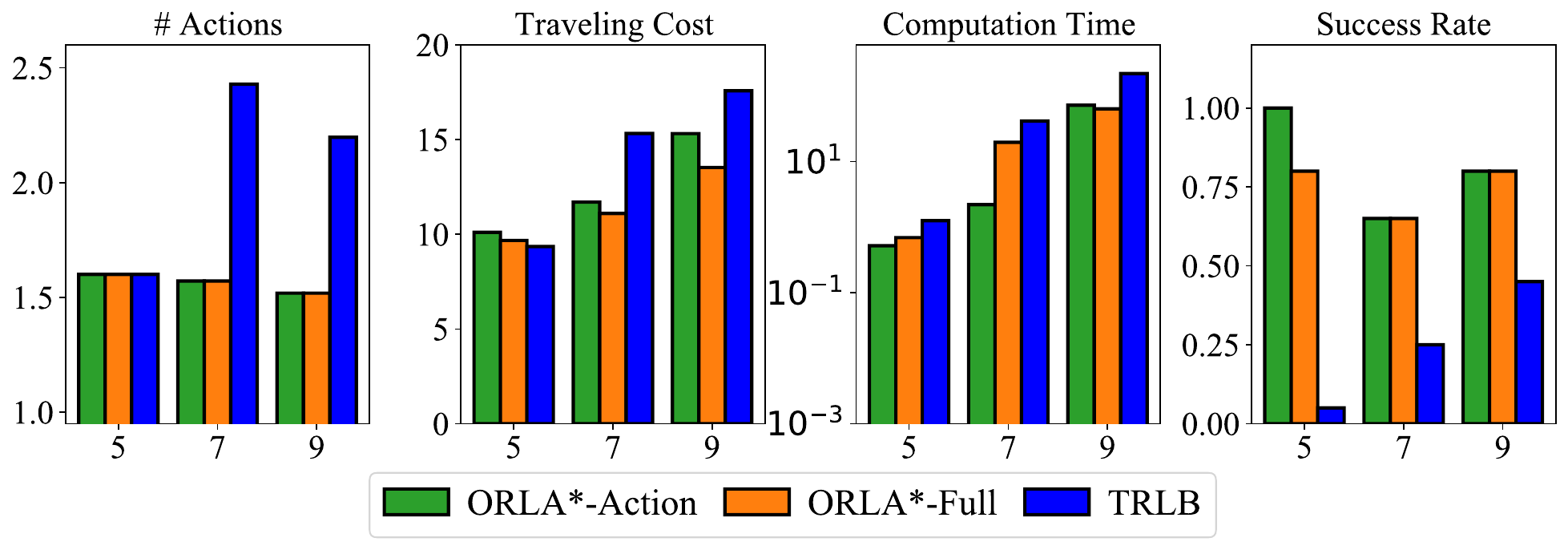}
    \caption{Algorithm performance in EE disc instances with $\rho=0.5$ and 5-11 objects. (a) $\#$ pick-n-places in solutions as multiplies of $|\mathcal O|$. (b) Traveling cost (m). (c) Computation time (secs). (d) Success rate. }
    \label{fig:disc_11_dense}
\end{figure}

\ref{fig:discs31} shows algorithm performance in MB scenario with $\rho=0.2$.
In this scenario, taking mobile base traveling costs into consideration significantly reduces traveling costs in solutions.
\orla-Full saves over $20\%$ traveling costs on average in 11-disc instances but spends more time in computation.

\begin{figure}[ht]
    \centering
    \includegraphics[width=0.96\textwidth]{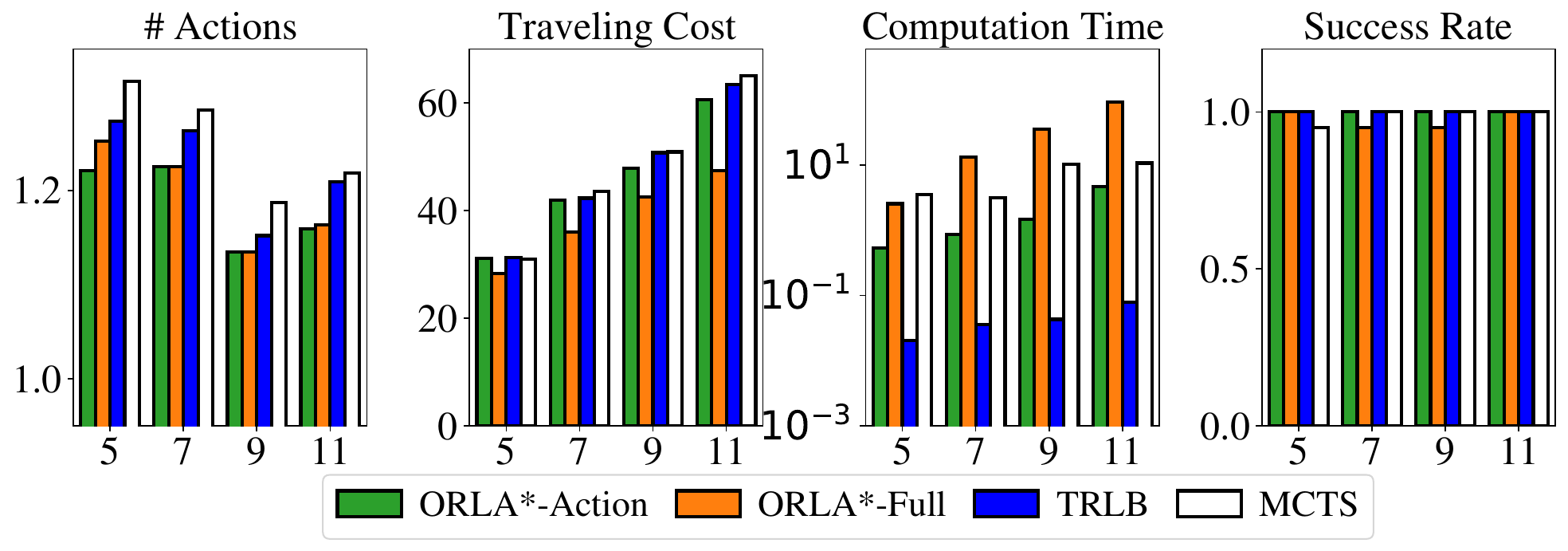}
    \caption{Algorithm performance in MB disc instances with $\rho=0.2$ and 5-11 objects. (a) $\#$ pick-n-places in solutions as multiplies of $|\mathcal O|$. (b) Traveling cost (m). (c) Computation time (secs). (d) Success rate. }
    \label{fig:discs31}
\end{figure}

To summarize, in disc rearrangement experiments, we have the following conclusions:
First, intelligent temporary object placement on top of other objects with \orla computes low-cost plans in dense test cases.
Second, considering traveling costs in the \orla framework effectively reduces traveling costs without an increase in manipulation costs but induces additional computation time.
Finally, the results suggest that lazy buffer allocation improves solution quality and saves computation time.

\subsection{Qualitative Analysis of \model}
Regarding the placement stability prediction model \model, we train it with four types of objects in Pybullet simulator: an apple, a pear, a plate, and a cup.
\ref{fig:prediction}(a) shows the prediction distribution of a scene with the plate and the apple when placing the cup.
We sample $8$ poses of the placed object with different orientations for each point in the distribution map. 
The distribution shows the average of the $8$ output probabilities.
When the cup pose is sampled far from both objects (the top-right corner and the bottom-left corner) or on the plate, \model supports placements with outputs around $0.999$ and $0.90$, respectively. 
When the cup pose is sampled at the rim of the plate and on top of the apple, \model also rejects placements with outputs around $0.02$ and $0.005$.
Due to the existence of the handle, \model is conservative when the cup is placed close to the plate rim. 
\vspace{-3mm}
\begin{figure}[ht]
    \centering
    \includegraphics[width=0.96\textwidth]{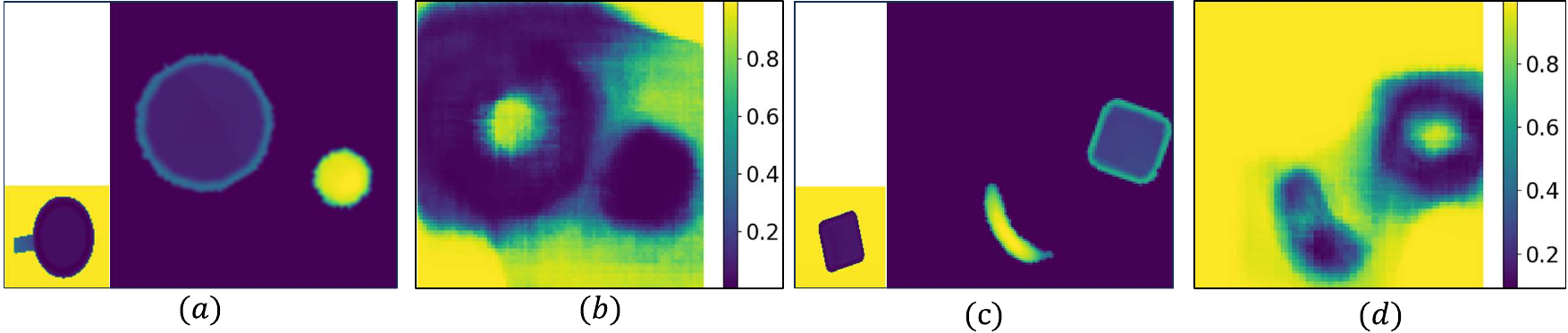}
    \vspace{-5mm}
    \caption{\textbf{(a)} A synthesized environment height map and the depth image of the placed object from the bottom. All objects are from the training set. \textbf{(b)} The corresponding stability prediction distribution of (a). \textbf{(c)} A synthesized environment height map and the depth image of the placed object from the bottom. All objects are outside the training set. \textbf{(d)} The corresponding stability prediction distribution of (c). }
    \label{fig:prediction}
\end{figure}

To show the model's generalization ability, we present the prediction distribution of a scene with untrained objects in \ref{fig:prediction}(c). In this environment, we place a square plate and a banana.
The plate's size and shape differ from those in the training set.
And not to say the banana.
The placed object is a small tea box while we do not have any cuboid object in the training set.
As shown in \ref{fig:prediction}(b), \model clearly judges the stability when placing the novel box into the workspace.


\subsection{General-Shaped Object Rearrangement in Simulation}
In general-shaped instances, we equip \orla methods with \model.
Given a set of objects in the workspace, we compute the total area of object footprints and adjust the size of the tabletop region to keep $\rho=0.25$.
\ref{fig:ycb11} presents algorithm performance in EE and MB scenarios, respectively.
In this scenario, we use \model to decide whether an object is safe to be temporarily placed on top of another.
Due to the inference time for \model in buffer sampling, the computation time of \orla methods is longer than that in disc experiments.
In $11-$object instances, \orla-Full saves $16.7\%$ and $13\%$ traveling cost than \orla-Action in EE and MB scenarios respectively.
However, \orla-Action maintains $100\%$ success rate while \orla-Full fails in $13.5\%$ MB test cases.

\begin{figure}[ht]
    \centering
    \includegraphics[width=0.96\textwidth]{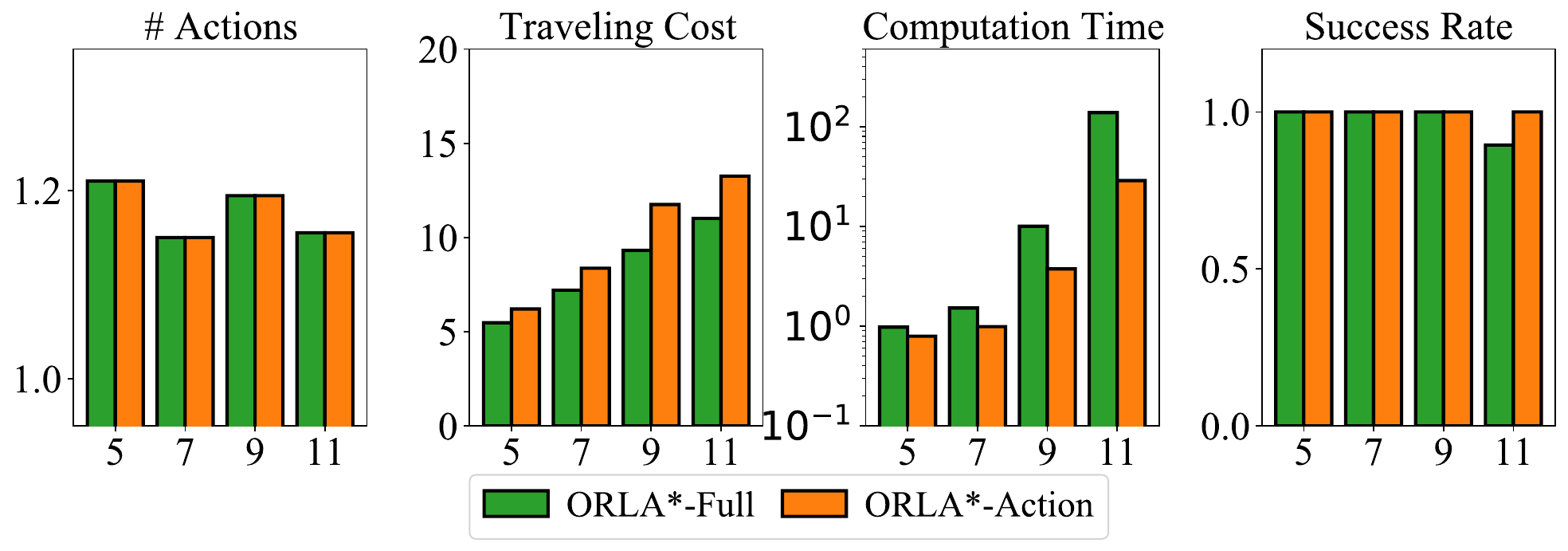}
    \includegraphics[width=0.96\textwidth]{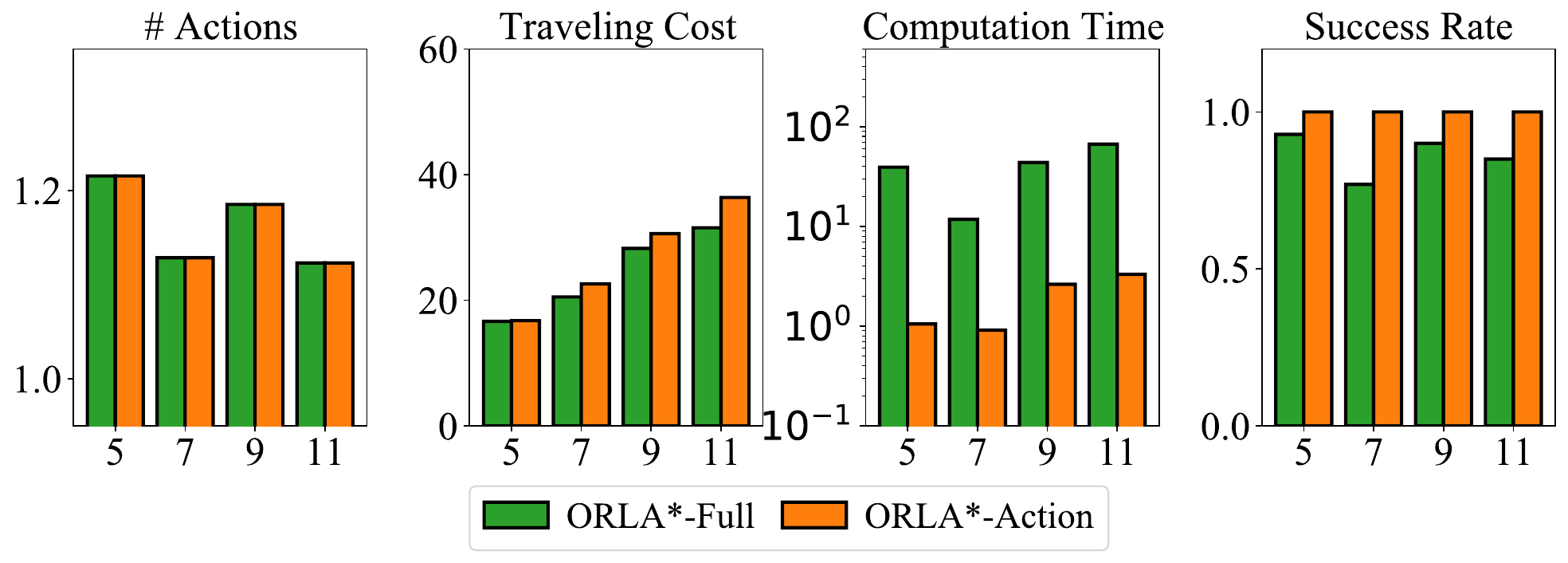}
\vspace{-5mm}
    \caption{Algorithm performance in [Top] EE scenario and [Bottom] MB scenario with general-shaped objects. $\rho=0.25$ and 5-11 objects. (a) $\#$ pick-n-places in solutions as multiplies of $|\mathcal O|$. (b) Traveling cost (m). (c) Computation time (secs). (d) Success rate. }
    \label{fig:ycb11}
\end{figure}

\section{Summary}
In this chapter, we discuss the cooperative dual-arm rearrangement (CDR) problem in a partially shared workspace.

For task planning, we employ a lazy strategy for buffer allocation in the dual-arm rearrangement system.
To coordinate manipulations of the arms, we develop dependency graph guided heuristic search algorithms computing optimal primitive task schedules under two makespan evaluation metrics.
The effectiveness of the proposed procedure is demonstrated with extensive simulation studies in the PyBullet environment with two UR-5e robot arms.
Specifically, plans computed by our algorithms are up to $35\%$ shorter than greedy ones in execution time in dense rearrangement instances.
Moreover, we observe a tradeoff between the number of needed handoffs and the occurrence of path conflicts in CDR problems as the overlap ratio of the workspace varies.

For motion planning, \modap outputs the planned trajectory while conforming to specified robot dynamics constraints, e.g., velocity, acceleration, and jerk limits. Compared to the previous state of the art, \modap achieves up to $40\%$ execution time improvement even though the planned trajectories are longer. In particular, \modap shines in cases where extensive joint dual-arm planning must be carried out to compact the overall task and motion plan.

    \chapter{Conclusions}
\thispagestyle{myheadings}

This thesis summarizes my efforts in the problem of \toro with external and internal buffers.

For \toro with External buffers (\toroe), we study the maximum running buffer size (MRB), which is an important quantity to understand as it determines the problem's feasibility.
Despite the provably high computational complexity that is involved, we provide 
effective dynamic programming-based algorithms capable of quickly computing \mrb 
for large and dense labeled/unlabeled \toro instances.

Fast optimal algorithms for \toroe led to our development of \trlb in the internal buffer setup.
The \trlb framework employs the dependency graph representation and a lazy buffer allocation approach for efficiently solving the problem of rearranging many tightly packed objects on a tabletop.
Extensive simulation studies show that \trlb computes rearrangement plans of comparable or better quality as the state-of-the-art methods, and does so up to $2$ magnitudes faster. 

We extend the idea of lazy buffer verification to handle object heterogeneity (\hete), dual-arm coordination (CDR), and time-optimal mobile manipulation (\motar).
Specifically, in \hete, we consider object property difference in \trlb with weighting heuristics;
in CDR and \motar, we integrate \trlb into \astar framework for  \toroi under different objective functions.

\begin{theappendices}
    \chapter{Structural Properties and Proofs for \toroe}\label{sec:app-toroe}
    \section*{Properties of Unlabeled Dependency Graphs for Special \toro Settings}
\begin{proof}[Proof of \ref{p:udg-polygon}]
 The maximum degree of the dependency graph is equal to the maximum number of disjoint objects that one object can overlap with. 
In this proof, we evaluate the upper bound of the maximum overlaps with a disc packing problem.
We first discuss the range of distance between two overlapping regular polygons when they are overlapping and disjoint. After that, we relate the problem to a disc packing problem and derive the upper bound. Without loss of generality, 
we assume the radius of the regular polygons to be 1.
When two $d$-sided regular polygons overlap, the distance between the polygon centers is upper bounded by two times the radius of their circumscribed circles, i.e. 2.
When two $d$-sided regular polygons are disjoint, the distance between the polygon centers is lower bounded by two times the radius of their inscribed circles, i.e.  $2\cos(\dfrac{\pi}{d})$.
 Due to the mentioned upper bound, the number of disjoint d-sided regular polygons that overlap with one d-sided regular polygon is upper bounded by that whose centers are inside a 2-circle (\ref{fig:prop_2}(a)). Additionally, due to the mentioned distance lower bound, this number is upper bounded by the number of disjoint $\cos(\dfrac{\pi}{d})$-circles whose centers are inside a 2-circle (\ref{fig:prop_2}(b)), which is equal to the number of $\cos(\dfrac{\pi}{d})$-circles packed inside a $(2+\cos(\dfrac{\pi}{d}))$-circle (\ref{fig:prop_2}(c)).
Since the radius ratio of circles $\dfrac{2+\cos(\dfrac{\pi}{d})}{\cos(\dfrac{\pi}{d})}$ is upper bounded by 5 when $d\geq 3$, there are at most 19 small circles packed in the large circle \cite{fodor1999densest}.
Therefore, the maximum degree of the unlabeled dependency graph is upper bounded by 19.

\end{proof}

\begin{proof}[Proof of \ref{p:udg}]
Constructing the dependency graph based on the start and goal positions in the workspace, the planarity can be proven if no two edges cross each other.
Assuming the contrary, there are two edges crossing each other, e.g. $AD$ and $BC$ in \ref{fig:planarity}[Left].
Since each edge comes from a pair of intersecting start and goal objects, without loss of generality,
we can assume $C$, $D$ are start positions and $A$, $B$ are goal positions.
Therefore, we have $|AB|,|CD|\geq 2r$,
where $r$ is the base radius of the cylinders.
Since $AD$ and $BC$ are both connected in the dependency graph, 
we have $|AD|,|BC| < 2r$, which leads to the conclusion that $|AD|+|BC|< 4r \leq |AB|+|CD|$, and contradicts the triangle inequality.
Therefore, the planarity is proven with contradiction.

Since uniform cylinders have uniform disc bases, one cylinder may only 
touch six non-overlapping cylinders and non-trivially intersect at most five 
non-overlapping cylinders. 
Therefore, the maximum degree of the unlabeled dependency graph is upper bounded by 5.

\end{proof}

\begin{figure}[ht!]
    \centering
    \includegraphics[width=0.49\textwidth]{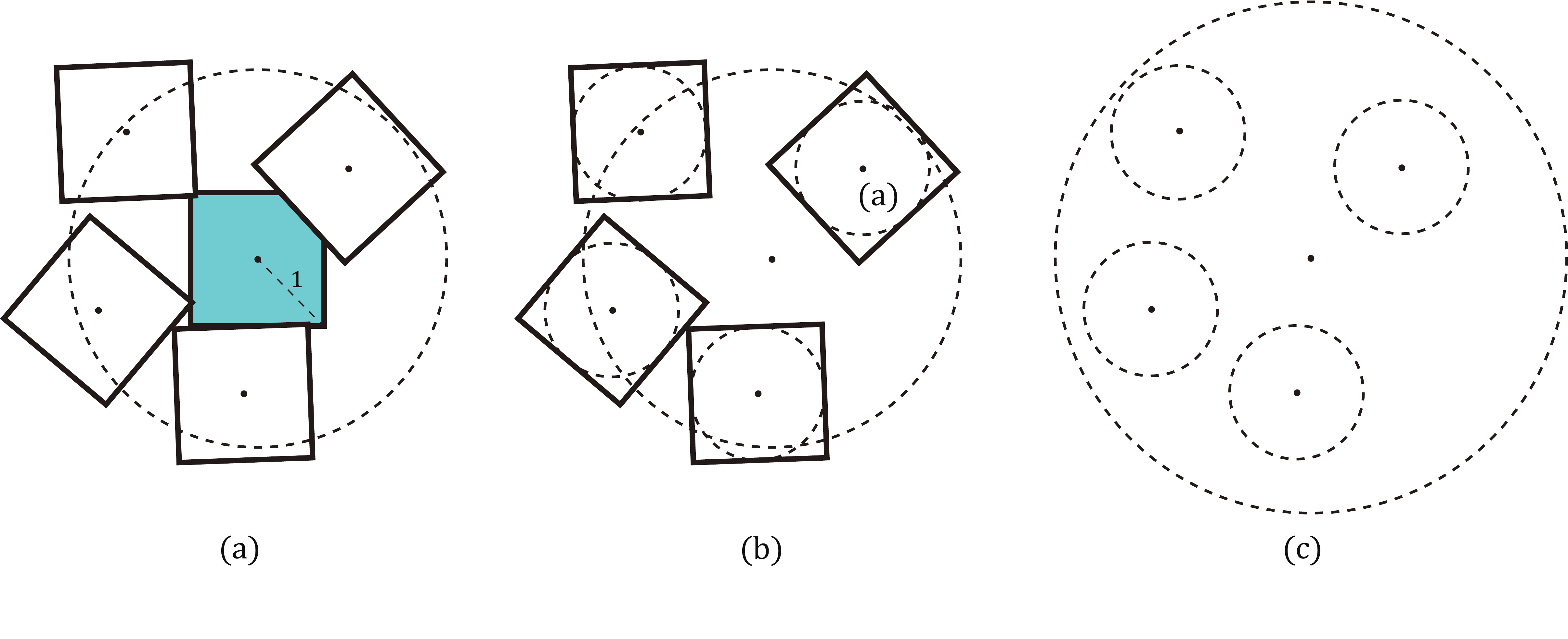}
    \caption{(a) The number of disjoint d-sided regular polygons that overlap with one d-sided regular polygon is upper bounded by that whose centers are inside a 2-circle. (b) The number of disjoint d-sided regular polygons whose centers are inside a 2-circle is upper bounded by the number of disjoint $\cos(\dfrac{\pi}{d})$-circles whose centers are inside a 2-circle. (c) The number of disjoint $\cos(\dfrac{\pi}{d})$-circles whose centers are inside a 2-circle is equal to the number of $\cos(\dfrac{\pi}{d})$-circles packed inside a $(2+\cos(\dfrac{\pi}{d}))$-circle.}
    \label{fig:prop_2}
\end{figure}

\begin{figure}[ht!]
    \centering
    \includegraphics[width=0.4\textwidth]{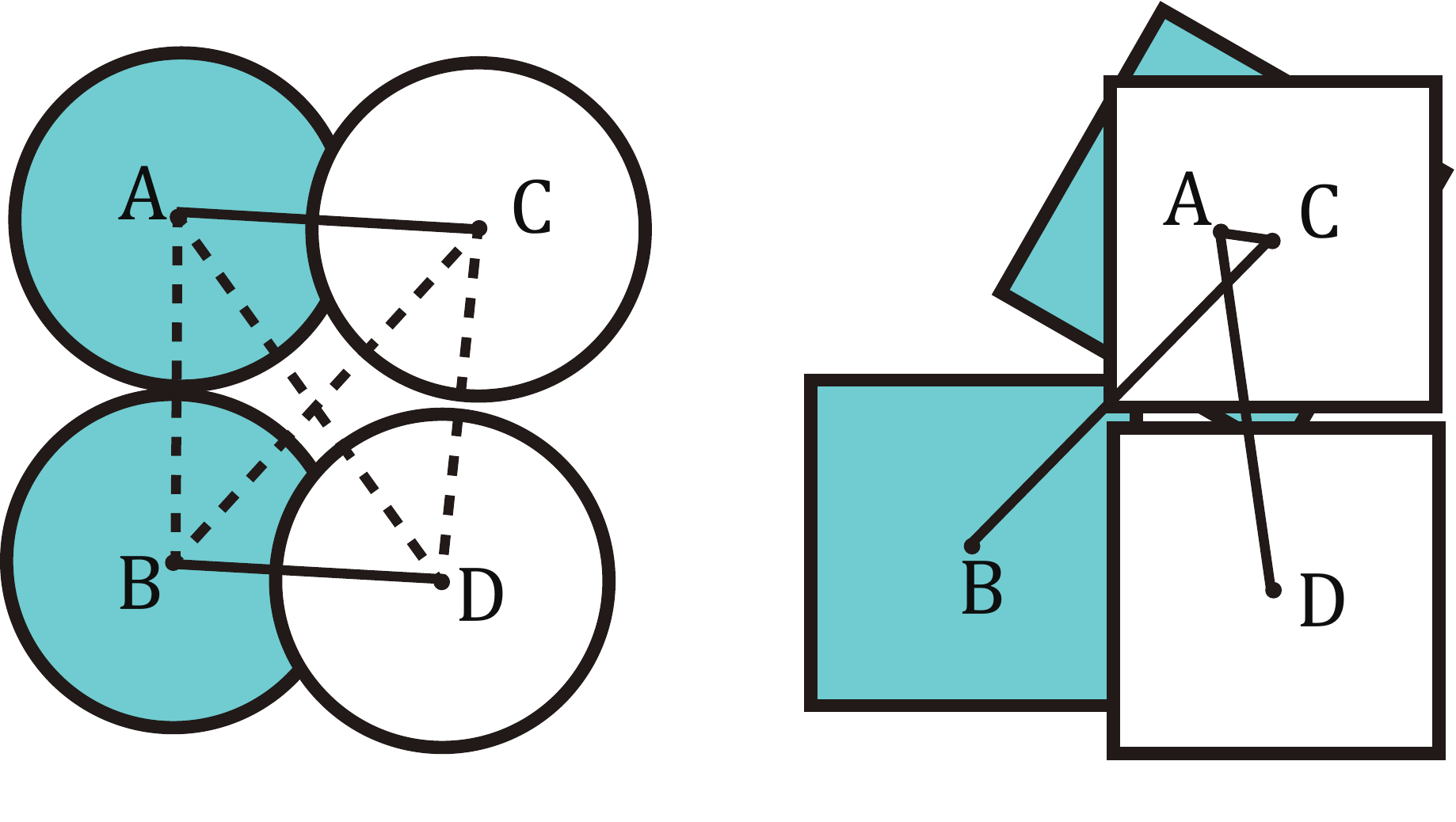}
    \caption{[Left] A case discussed in Prop.\ref{p:udg} [Right] A counterexample of the proof of Prop.\ref{p:udg} when the base of workspace objects is square.}
    \label{fig:planarity}
\end{figure}

\begin{remark}
As shown in \ref{fig:planarity}[Right],
the proof of the planarity in \ref{p:udg} does not hold for objects with regular polygon bases.
\end{remark}

\section*{$\Omega(\sqrt{n})$ Lower Bound for \urbm}
Let $(x, y)$ 
be the coordinate of a vertex $v_{x,y}$ on $\D(w, h)$ with the top left being $(1,
1)$. The parity of $x + y$ determines the partite set of the vertex (recall that 
unlabeled dependency graph for uniform cylinders is always a planar bipartite 
graph, by \ref{p:udg}), which may correspond to a start pose or 
a goal pose. With this in mind, we simply call vertices of $\D(w, h)$ start and 
goal vertices; without loss of generality, let $v_{1,1}$ be a start vertex. 
%
%




We use $\D(m, 2m)$ for establishing the lower bound on \mrb. We use a \emph{vertex 
pair} $p_{i, j}$ to refer to two adjacent vertices $v_{i, 2j-1}$ and $v_{i, 2j}$ in 
$\D(m, 2m)$. It is clear that a vertex pair contains a start and a goal vertex. 
We say that a goal vertex is \emph{filled} if an object is placed at the corresponding goal pose. 
We say that a start vertex (which belongs to a vertex pair) is \emph{cleared} if the 
corresponding object at the vertex is picked (either put at a goal or at a buffer)
but the corresponding goal in the vertex pair is unfilled. 
At any moment when the robot is not holding an object, the number of objects in the 
buffer is the same as the number of cleared vertices. For each column $i, 1\leq i 
\leq m$, let $f_i$ (resp., $c_i$) be the number of goals (resp., start) vertices in 
the column that are filled (resp., cleared). Notice that a goal cannot be filled 
until the object at the corresponding start vertex is removed.

%

\begin{lemma}\label{lemma:adjacent}
On a dependency grid $\D(m, 2m)$, for two adjacent columns $i$ and $i+1$, $1 \le i < m$, if
$f_i+f_{i+1}\neq 0$ or $2m$, then $c_i+c_{i+1}\geq 1$. In other words, there is at least one 
cleared vertex in the two adjacent columns unless $f_i=f_{i+1}=0$ or $f_i=f_{i+1}=m$.
\end{lemma}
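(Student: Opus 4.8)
The plan is to prove the contrapositive: assuming $c_i=c_{i+1}=0$ (no cleared vertex in either of the two columns), I will show $f_i=f_{i+1}\in\{0,m\}$; since this rules out exactly the cases $f_i+f_{i+1}=0$ and $f_i+f_{i+1}=2m$, it gives $c_i+c_{i+1}\geq 1$ in all remaining cases.

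First I would fix the pair structure. Each vertex pair $p_{i,j}=\{v_{i,2j-1},v_{i,2j}\}$ contains exactly one start vertex and one goal vertex (the two rows differ by one, so the vertices have opposite $x+y$ parity, the partition from \ref{p:udg}), and these two vertices are adjacent in $\D(m,2m)$. Because placing an object at a goal pose requires every start object overlapping that pose to have been moved away first, a goal vertex cannot be filled while the start vertex of its own pair is still occupied. Together with the hypothesis $c_i=0$, which forbids the state ``start cleared but paired goal unfilled,'' this leaves every pair of column $i$ in one of exactly two states: \emph{active} (start emptied, paired goal filled) or \emph{inactive} (start occupied, paired goal empty); likewise for column $i+1$. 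Consequently $f_i$ (resp.\ $f_{i+1}$) equals the number of active pairs in column $i$ (resp.\ $i+1$), so it suffices to show that either all $2m$ pairs of the two columns are active or all are inactive.

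Next I would introduce a ``spreading'' digraph $H$ whose vertices are the $2m$ pairs of columns $i$ and $i+1$, with an arc $p\to q$ whenever the goal vertex of $p$ is adjacent in $\D(m,2m)$ to the start vertex of $q$. The set $A$ of active pairs is forward-closed under $H$: if $p\in A$ then its goal vertex is filled, hence \emph{every} grid-neighbor of that goal vertex is an emptied start vertex; in particular, for any $q$ with $p\to q$, the start vertex of $q$ is emptied, and then $c=0$ in $q$'s column forces the goal of $q$ to be filled, i.e.\ $q\in A$. It then remains to verify that $H$ is strongly connected, since a forward-closed subset of a strongly connected digraph is either empty or the whole vertex set; that dichotomy yields $f_i=f_{i+1}=0$ or $f_i=f_{i+1}=m$, completing the argument. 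Strong connectivity I would establish by writing out the arcs of $H$ in grid coordinates, distinguishing $i$ even from $i$ odd (the cases are mirror images): the column of even index contributes ``downward'' arcs $p_j\to p_{j-1}$, the column of odd index ``upward'' arcs $p_j\to p_{j+1}$, and in both cases each goal vertex of one column is grid-adjacent to a start vertex of the other at the same row, giving horizontal arcs $p_j\to q_j$ and $q_j\to p_j$. Chaining these produces a Hamiltonian cycle on the $2m$ pairs (for $i$ even, e.g., $p_1\to q_1\to q_2\to\cdots\to q_m\to p_m\to p_{m-1}\to\cdots\to p_1$), so $H$ is strongly connected.

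The main obstacle will be the bookkeeping: the ``start'' versus ``goal'' role inside a pair flips with the parity of the column index, so correctly identifying the arcs of $H$ — and checking that the Hamiltonian cycle still closes up at the top and bottom boundary rows, where goal vertices have fewer neighbors — requires care, even though each individual verification is elementary once the coordinates are written down.
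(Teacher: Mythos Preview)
Your argument is correct. The contrapositive framing, the dichotomy that under $c_i=c_{i+1}=0$ every pair is either ``active'' or ``inactive,'' the forward-closure of the active set under your spreading digraph $H$, and the Hamiltonian cycle giving strong connectivity all check out; the parity bookkeeping you flag as the main obstacle indeed works exactly as you describe (odd-index columns contribute upward in-column arcs, even-index columns downward, and both horizontal arcs $p_j\leftrightarrow q_j$ are present at every level, so the cycle closes at the boundary rows $j=1$ and $j=m$).

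The paper takes a more direct, case-by-case route rather than your structural one. It argues forward from the hypothesis $f_i+f_{i+1}\notin\{0,2m\}$ and exhibits a single cleared vertex explicitly: either (Case~1) some row $j$ has exactly one of the two goal vertices of $p_{i,j},p_{i+1,j}$ filled, in which case the horizontal adjacency forces the start in the other pair to be cleared; or (Case~2) every row is uniformly filled or uniformly empty, so somewhere two consecutive rows $j,j{+}1$ differ, and the vertical adjacency across that boundary produces a cleared start. Your proof packages both cases into the single statement ``$H$ is strongly connected,'' which is conceptually cleaner and would generalize more readily to other grid-like dependency graphs, at the cost of the extra abstraction layer and the coordinate-level verification of the Hamiltonian cycle. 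The paper's argument is shorter and entirely local, needing only one adjacency at a time rather than global connectivity.
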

\begin{proof}
If there is a $j, 1\leq j \leq m$, such that only one of the goal vertices in vertex pairs 
$p_{i,j}$ and $p_{i+1, j}$ is filled ( \ref{fig:LemmaProof}(a)), then the start 
vertex in the other vertex pair must be cleared. Therefore, $c_i+c_{i+1}\geq 1$.

On the other hand, if, for each $j, 1\leq j \leq m$, both or neither of the goal vertices 
in $p_{i,j}$ and $p_{i+1,j}$ is filled, then there is a $j, 1 \leq j \leq m-1$, such that both 
goal vertices in $p_{i,j}$ and $p_{i+1,j}$ are filled but neither of those in $p_{i,j+1}$ and $p_{i+1,j+1}$ is filled (\ref{fig:LemmaProof}(b)) or the opposite 
(\ref{fig:LemmaProof}(c)). Then, for the vertex pairs whose goal vertices are not filled, 
say $p_{i,j+1}$ and $p_{i+1, j+1}$, one of their start vertices is a neighbor of the filled 
goal in $p_{i,j}$ and $p_{i,j+1}$. Therefore, at least one of the start vertices in $p_{i, j+1}$ 
and $p_{i+1,j+1}$ is a cleared vertex. And thus, $c_i+c_{i+1}\geq 1$.

\begin{figure}[h!]
    \centering
    \vspace{2mm}
    \begin{overpic}
    [width = 0.47\textwidth]{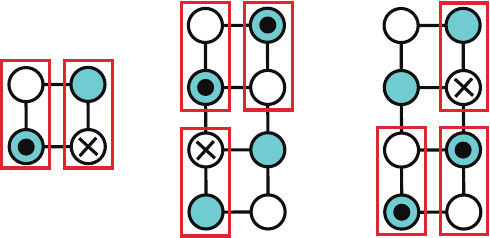}
    \put(0, 38){{\small $p_{i,j}$}}
    \put(11, 38){{\small $p_{i+1,j}$}}
    
    \put(37, 51){{\small $p_{i,j}$}}
    \put(49, 51){{\small $p_{i+1,j}$}}
    \put(34, -3){{\small $p_{i,j+1}$}}
    
    \put(89, 51){{\small $p_{i+1,j}$}}
    \put(70, -3){{\small $p_{i,j+1}$}}
    \put(89, -3){{\small $p_{i+1,j+1}$}}
    
    \put(8, -8.5){{\small $(a)$}}
    \put(44, -8.5){{\small $(b)$}}
    \put(85, -8.5){{\small $(c)$}}
    \end{overpic}
    \vspace{6mm}
    \caption{Some cases discussed in the \ref{lemma:adjacent}. The unshaded and shaded nodes represent the start and goal vertices in the dependency graph. Specifically, the shaded nodes with a dot inside represent the filled vertices and the unshaded nodes with a cross inside represent the cleared vertices. (a) When only one goal vertex in $p_{i,j}$ and $p_{i+1, j}$ is filled up, the start vertex in the other vertex pair is a cleared vertex. (b) When both goal vertices in $p_{i,j}$ and $p_{i+1,j}$ are filled but neither of those in $p_{i,j+1}$ and $p_{i+1,j+1}$ is filled, one of the start vertices $p_{i,j+1}$ and $p_{i+1,j+1}$ is a cleared vertex. (c) The opposite case of (b).}
    \label{fig:LemmaProof}
\end{figure}

\end{proof}

\begin{proof}[Proof of \ref{l:urbm-lower}]
We show that there are $\Omega(m)$ cleared vertices when $\lfloor n/3 \rfloor$ goal vertices 
are filled. Suppose there are $q$ columns in $\mathcal{D}$ with $1\leq f_i\leq m-1$. According 
to the definition of $f_i$, for each of these $q$ columns, there is at least one goal vertex 
that is filled and at least one goal vertex that is not.

If $q< \dfrac{\lfloor n/3 \rfloor}{3(m-1)}$, then there are two columns $i$ and $j$, such that $f_i=m$ and $f_j=0$. That is 
because $\sum_{1\leq i \leq m} f_i= \lfloor n/3 \rfloor$ and $0\leq f_i \leq m$ for all $1\leq i \leq m$. 
Therefore, for the vertex pairs in each row $j$, at least one goal vertex is filled 
but at least one is not. And thus, for each $j$, there are two adjacent columns $i, i+1, 1\leq 
i < m$, such that in vertex pairs $p_{i,j}$ and $p_{i+1,j}$, one goal vertex is filled while the
start vertex in the other vertex pair is cleared
(\ref{fig:LemmaProof}(a)). 
Therefore, there are at least $m$ cleared vertices in this case.

If $q\geq \dfrac{\lfloor n/3 \rfloor}{3(m-1)}$, then we partition all the columns in $\mathcal{D}$ into $\lfloor m/2\rfloor$ disjoint pairs: 
(1,2), (3,4), ... The $q$ columns belong to at least $\lfloor q/2\rfloor$ pairs of 
adjacent columns. Therefore, according to Lemma \ref{lemma:adjacent}, we have $\Theta(m)$ 
cleared vertices.

In conclusion, there are $\Omega(m)$ cleared vertices when there are $\lfloor n/3 \rfloor$ 
filled goal. Therefore, the minimum \mrb of this instance is $\Omega(m)$.

\end{proof}

\section*{$O(\sqrt{n})$ Algorithmic Upper Bound for \urbm}
To investigate the running buffer size of the plan computed by \spp,
we construct a binary tree $T$ based on \spp(\ref{fig:separator}(c)).
Each node represents a recursive call consuming a subgraph $\udg(V,E)$ and the left and right children of the node are induced from subgraphs $\udg(A',E(A'))$ and $\udg(B',E(B'))$ of $\udg(V,E)$.
\spp computes the plan by visiting the binary tree in a depth-first manner.
For each node on $T$, given the input dependency graph $\udg(V,E)$, 
denote the sequence before we deal with $\udg(V,E)$ as $\pi_0$. 
Denote the vertices pruned in RemovalTrivialGoals( $\udg(V,E)$) as $P$.
Without loss of generality,
assume that \spp recurses into $A'$ before $B'$.
Let $\pi_{P}$, $\pi_{C'}$, $\pi_{A'}$, and $\pi_{B'}$ be the goal removal sequence after we remove vertices in $P$, $C'$, $A'$ and $B'$ from $\udg(V,E)$ respectively.
We define function $RB(\pi)$ to represent ``generalized'' current running buffer size of a removal sequence $\pi$:
When vertices in $\pi$ are removed from the dependency graph, 
$RB(\pi)$ equals either the number of running buffers; 
or the negation of the number of empty goal poses.
In the depth-first recursion, each node on the binary tree $T$ is visited at most three times:

\begin{enumerate}
    \item Before exploring child nodes. The peak may be reached during the removal of $C'$. Let $\pi_{C^*}$ be the sequence when the running buffer size reaches the peak. 
    $RB(\pi_{C^*}) \leq RB(\pi_0) + 10\sqrt{2} \sqrt{|V|}$.
    \item After we deal with $A'$ and before we deal with $B'$.
    $RB(\pi_{A'}) = RB(\pi_{0})$$-\delta(C')-\delta(A')$. 
    \item After we deal with $B'$.
    $RB(\pi_{B'}) = RB(\pi_0)$$-\delta(V)$.
\end{enumerate}

\begin{lemma}\label{lemma:average}
$RB(\pi_{A'})\leq (RB(\pi_{C^*})+RB(\pi_{B'}))/2$.
\end{lemma}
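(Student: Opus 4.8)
The plan is to prove \ref{lemma:average} by a short algebraic manipulation built on the three running-buffer relations established just above the lemma: the peak bound $RB(\pi_{C^*})\le RB(\pi_0)+10\sqrt{2}\sqrt{|V|}$ (case~1), the identity $RB(\pi_{A'})=RB(\pi_0)-\delta(C')-\delta(A')$ (case~2), and $RB(\pi_{B'})=RB(\pi_0)-\delta(V)$ (case~3). Rewriting the target inequality in the equivalent form $2\,RB(\pi_{A'})\le RB(\pi_{C^*})+RB(\pi_{B'})$ and substituting, the $2\,RB(\pi_0)$ terms cancel and we are left with the purely combinatorial claim
\[
\delta(V)-2\delta(C')-2\delta(A')\;\le\;10\sqrt{2}\sqrt{|V|}.
\]

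The next step is to use the structure produced by one call of \ref{alg:spp}: after the (buffer-neutral) removal of the trivial goals, the remaining vertex set is partitioned by the separator of \ref{fig:separator} into $C'$, $A'$ and $B'$, so that $\delta(V)=\delta(C')+\delta(A')+\delta(B')$. Substituting and simplifying, the left-hand side becomes $\delta(B')-\delta(A')-\delta(C')$. Now invoke the priority rule of \ref{alg:spp}: the branch with the larger $\delta$ is processed first, and by assumption that branch is $A'$, hence $\delta(B')-\delta(A')\le 0$; therefore it suffices to prove $-\delta(C')\le 10\sqrt{2}\sqrt{|V|}$.

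Finally, $-\delta(C')=|s(C')|-|g(C')|$. By construction $C'=C\cup\big(N(g(C))\cap(A\cup B)\big)$, so $|s(C')|$ is exactly the number of objects moved to buffers during the $C$-phase of this node; since $|C|\le 2\sqrt{2|V|}$ and, by \ref{p:udg}, each goal vertex has at most $5$ neighbors, this count is at most $10\sqrt{2|V|}=10\sqrt{2}\sqrt{|V|}$, while $|g(C')|=|g(C)|\ge 0$. This gives $-\delta(C')\le 10\sqrt{2}\sqrt{|V|}$ and closes the argument.

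The two places where I expect to have to be careful are (i) justifying that pruning trivial goals really is neutral for the running-buffer count, so that $\delta(V)$ may be treated as $\delta(C')+\delta(A')+\delta(B')$ in the substitution above (one must argue that a degree-$1$ trivial goal removed together with its unique start neighbor is $\delta$-neutral, and that an isolated goal is deferred rather than filled from a buffer), and (ii) matching the count $|s(C')|$ against the $10\sqrt{2}\sqrt{|V|}$ budget on the nose, which requires the Lipton--Tarjan separator size bound together with the bounded-degree (planar bipartite) property from \ref{p:udg} and some care about whether a start vertex of $C$ is also counted inside $N(g(C))$. Everything else is routine substitution.
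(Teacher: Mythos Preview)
There is a genuine directional error in your substitution. You want to prove $2\,RB(\pi_{A'})\le RB(\pi_{C^*})+RB(\pi_{B'})$, so on the right-hand side you need a \emph{lower} bound for $RB(\pi_{C^*})$. But case~1 gives only the \emph{upper} bound $RB(\pi_{C^*})\le RB(\pi_0)+10\sqrt{2}\sqrt{|V|}$. Replacing $RB(\pi_{C^*})$ by $RB(\pi_0)+10\sqrt{2}\sqrt{|V|}$ therefore weakens the right-hand side in the wrong direction: proving your ``purely combinatorial claim'' $\delta(V)-2\delta(C')-2\delta(A')\le 10\sqrt{2}\sqrt{|V|}$ establishes only $2\,RB(\pi_{A'})\le \big(RB(\pi_0)+10\sqrt{2}\sqrt{|V|}\big)+RB(\pi_{B'})$, which does not imply the lemma.

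The fix is to use the correct lower bound on $RB(\pi_{C^*})$: since $\pi_{C^*}$ is by definition the \emph{peak} during the removal of $C'$, it dominates the endpoint value, i.e.\ $RB(\pi_{C^*})\ge RB(\pi_{C'})=RB(\pi_0)-\delta(P)-\delta(C')$. Substituting this lower bound (and the exact identities for $RB(\pi_{A'})$ and $RB(\pi_{B'})$), the $RB(\pi_0)$ terms again cancel and the inequality reduces to $\delta(B')\le \delta(A')$, which is exactly the priority rule of \spp. No separator-size or bounded-degree argument is needed here; the $10\sqrt{2}\sqrt{|V|}$ term plays no role in this lemma. This is precisely the route the paper takes: it first bounds $RB(\pi_{A'})\le \tfrac12\big(RB(\pi_{B'})+RB(\pi_{C'})\big)$ via $\delta(A')=\max(\delta(A'),\delta(B'))\ge \tfrac12(\delta(A')+\delta(B'))$, and then invokes $RB(\pi_{C'})\le RB(\pi_{C^*})$.
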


\begin{proof}
\begin{equation*}
    \begin{split}
        & RB(\pi_{A'}) \\
        & = RB(\pi_{0})-\delta{P}-\delta{C'}-\delta{A'}\\
        &= RB(\pi_0)  -\delta(P)-\delta(C')-\max[\delta(A'),\delta(B')]\\
        &\leq RB(\pi_0) -\delta(P)-\delta(C')+\dfrac{1}{2}[-\delta(V)+\delta(P)+\delta(C')]\\
        &= \dfrac{1}{2} \{[RB(\pi_0)-\delta(V)]+[RB(\pi_0)-\delta(P)-\delta(C')]\}\\ 
        &= \dfrac{1}{2} [RB(\pi_{B'})+RB(\pi_{C'})]\\
        &\leq \dfrac{1}{2} [RB(\pi_{B'})+RB(\pi_{C^*})]
    \end{split}
\end{equation*}

\end{proof}

\ref{lemma:average} establishes the relationship among $\pi_{C^*}$, $\pi_{A'}$, and $\pi_{B'}$ of a node on $T$.
With this lemma, we obtain an upper bound for each node:


\begin{lemma}\label{lemma:induction}
Given a node $N$ with depth $d$ in the binary tree $T$, let $\pi_{C^*}(N)$, $\pi_{A'}(N)$, and $\pi_{B'}(N)$ be $\pi_{C^*}$, $\pi_{A'}$, and $\pi_{B'}$ of $N$ respectively. 
$RB(\pi_{C^*}(N))$, $RB(\pi_{A'}(N))$, and $RB(\pi_{B'}(N))$ are all upper bounded by 
\begin{equation}
    \dfrac{[1-(\sqrt{\dfrac{2}{3}})^{d+1}]}{1-\sqrt{\dfrac{2}{3}}}20\sqrt{n}
\end{equation}
where $n$ is the number of objects in the instance. 
\end{lemma}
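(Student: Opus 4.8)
\textbf{Proof proposal for Lemma~\ref{lemma:induction}.}

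The plan is to induct on the depth $d$ of the node $N$ in the binary tree $T$, using \ref{lemma:average} as the recursive glue and the three-way case analysis on node visits (the enumerated items preceding \ref{lemma:average}) as the base data. Throughout, I will track the quantity $RB(\pi_0(N))$ for an arbitrary node $N$, and show by induction that it is bounded by $\tfrac{1-(\sqrt{2/3})^{d+1}}{1-\sqrt{2/3}}\cdot 20\sqrt{n}$, where $d = \mathrm{depth}(N)$; the three quantities $RB(\pi_{C^*}(N))$, $RB(\pi_{A'}(N))$, $RB(\pi_{B'}(N))$ will then all follow, because item~(1) gives $RB(\pi_{C^*}(N)) \le RB(\pi_0(N)) + 10\sqrt 2\sqrt{|V(N)|}$, items~(2)--(3) control $RB(\pi_{A'}(N))$ and $RB(\pi_{B'}(N))$ relative to $RB(\pi_0(N))$ and the $\delta$ values, and \ref{lemma:average} shows $RB(\pi_{A'}(N))$ is sandwiched below the average of the other two. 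The key point making this work is that $|V(N)| \le (2/3)^d n$ since $|A|,|B| \le 2|V|/3$ at each split, so $10\sqrt 2\sqrt{|V(N)|} \le 10\sqrt2 (\sqrt{2/3})^d \sqrt n \le 20(\sqrt{2/3})^d\sqrt n$ (using $10\sqrt 2 < 20$, or folding the constant into the stated $20$).

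First I would set up the base case $d = 0$ (the root): here $\pi_0$ is empty, so $RB(\pi_0) = 0$, and $RB(\pi_{C^*}) \le 0 + 10\sqrt2\sqrt n \le 20\sqrt n$, which matches the claimed bound $\tfrac{1-(\sqrt{2/3})^{1}}{1-\sqrt{2/3}}\cdot 20\sqrt n = 20\sqrt n$. The quantities $RB(\pi_{A'})$ and $RB(\pi_{B'})$ at the root are no larger than $RB(\pi_{C^*})$ because removing further goal vertices only decreases the generalized running buffer (each subsequent removal of a free goal or a processed component never increases it — this is exactly why \spp is organized around $\delta$-prioritization and free-goal removal), so they also satisfy the bound. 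For the inductive step, consider a node $N$ at depth $d$ with children $N_{A}$ (left, recursed first) and $N_{B}$ (right) at depth $d+1$. The starting sequence for $N_A$ is $\pi_{C'}(N)$ and the starting sequence for $N_B$ is $\pi_{A'}(N)$; that is, $RB(\pi_0(N_A)) = RB(\pi_{C'}(N)) \le RB(\pi_{C^*}(N))$ and $RB(\pi_0(N_B)) = RB(\pi_{A'}(N))$. By the induction hypothesis applied at $N_A$ and $N_B$ we get bounds on their $\pi_{C^*},\pi_{A'},\pi_{B'}$ in terms of $(\sqrt{2/3})^{d+2}$, but what I actually need is a bound on $RB(\pi_{C^*}(N))$, $RB(\pi_{A'}(N))$, $RB(\pi_{B'}(N))$ at depth $d$. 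So the induction should be run the other way: assume the bound holds for the \emph{parent} chain, i.e. $RB(\pi_0(N)) \le B_d := \sum_{k=0}^{d-1}(\sqrt{2/3})^k\cdot 20\sqrt n$ (a telescoped version of item~(1) down the ancestor path, since passing from a parent to a child contributes one more $10\sqrt 2\sqrt{|V|}$ term, geometrically decaying). Then $RB(\pi_{C^*}(N)) \le RB(\pi_0(N)) + 10\sqrt 2\sqrt{|V(N)|} \le B_d + 20(\sqrt{2/3})^d\sqrt n = B_{d+1} = \tfrac{1-(\sqrt{2/3})^{d+1}}{1-\sqrt{2/3}}20\sqrt n$, which is the claimed bound. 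For $RB(\pi_{B'}(N))$: it equals $RB(\pi_0(N)) - \delta(V(N)) \le RB(\pi_{C^*}(N))$ whenever $\delta(V(N)) \ge -10\sqrt2\sqrt{|V(N)|}$, and more carefully one argues $RB(\pi_{B'}(N)) \le \max(RB(\pi_{C^*}(N)), RB(\pi_{A'}(N)))$ or directly that it never exceeds the peak already accounted for. For $RB(\pi_{A'}(N))$, \ref{lemma:average} gives $RB(\pi_{A'}(N)) \le \tfrac12(RB(\pi_{C^*}(N)) + RB(\pi_{B'}(N)))$, so once the other two are bounded by $B_{d+1}$, so is this one.

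The main obstacle I anticipate is getting the bookkeeping on the \emph{negative} side of $RB$ exactly right. Because $RB(\pi)$ is defined as a signed quantity — positive when objects sit in buffers, negative (equal to minus the number of free goal poses) otherwise — I have to be careful that "the bound" is really an upper bound and that the recursion for $RB(\pi_{B'})$ and the chaining $RB(\pi_0(N_B)) = RB(\pi_{A'}(N))$ don't secretly let the quantity blow up through a long run of negative-to-positive oscillations. Concretely, I need to verify that when \spp finishes the $A'$ subtree and begins $B'$, the running buffer count is genuinely $RB(\pi_0(N)) - \delta(P) - \delta(C') - \delta(A')$ with no transient spike above what item~(1) already bounds for the descendant nodes — this is where the depth-first recursion and the $\delta$-priority rule (process the subset with larger $\delta$ first, Lines 9--10 of \ref{alg:spp}) do the real work, and it is the part most likely to need a careful sub-lemma rather than a one-line appeal. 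I would isolate that as a claim: along the entire depth-first traversal, the running buffer size at any intermediate sequence $\pi$ lying between $\pi_{C^*}(N)$ and $\pi_{B'}(N)$ for the deepest enclosing node $N$ is at most $RB(\pi_{C^*}(N))$, proved by descending into children and invoking the induction hypothesis together with item~(1)'s geometric decay. Once that claim is in hand, summing the geometric series $\sum_{k\ge 0}(\sqrt{2/3})^k = \tfrac{1}{1-\sqrt{2/3}}$ yields the stated global bound $\tfrac{20}{1-\sqrt{2/3}}\sqrt n$ at the root, consistent with the constant quoted just before \ref{t:urbm-upper}.
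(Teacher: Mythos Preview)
Your overall plan—induction on depth, geometric decay of $|V(N)|$, and using \ref{lemma:average} to control $\pi_{A'}$—matches the paper's proof. But there is a real gap in how you handle $RB(\pi_{B'}(N))$, and you correctly flag it yourself as ``the main obstacle.'' Your attempts (``$\delta(V(N)) \ge -10\sqrt{2}\sqrt{|V(N)|}$'' or ``it never exceeds the peak already accounted for'') are not arguments; in particular the second one is exactly the statement you are trying to prove, so invoking it is circular.

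The paper resolves this cleanly by carrying \emph{all three} quantities $\pi_{C^*},\pi_{A'},\pi_{B'}$ in the induction hypothesis simultaneously, and then using the inheritance identities you already noticed in passing: for a node $N$ with left child $L$ and right child $R$, one has $\pi_{B'}(L)=\pi_{A'}(N)$ and $\pi_{B'}(R)=\pi_{B'}(N)$. Thus $RB(\pi_{B'})$ at depth $k+1$ is literally equal to $RB(\pi_{A'})$ or $RB(\pi_{B'})$ at depth $k$, which is already bounded by the induction hypothesis (and the depth-$k$ bound is trivially $\le$ the depth-$(k+1)$ bound). With $\pi_{B'}$ and $\pi_{C^*}$ in hand, \ref{lemma:average} finishes $\pi_{A'}$. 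Your attempt to induct only on $RB(\pi_0(N))$ forces you to rederive $RB(\pi_{B'}(N))$ from scratch via $RB(\pi_0(N))-\delta(V(N))$, which drags in the sign of $\delta$ unnecessarily; strengthening the induction hypothesis to all three quantities eliminates the problem.

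One small numerical slip: the root graph has $2n$ vertices ($n$ start, $n$ goal), so $|V(N)|\le (2/3)^d\cdot 2n$, not $(2/3)^d n$. This is why $10\sqrt{2}\sqrt{|V(N)|}\le 10\sqrt{2}\cdot\sqrt{2}\,(\sqrt{2/3})^d\sqrt{n}=20(\sqrt{2/3})^d\sqrt{n}$ with equality in the constant, rather than the slack you claimed via $10\sqrt 2<20$.
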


\begin{proof}
The conclusion can be proven by induction.\\

When $d=0$, the dependency graph at the root node $r$ has $n$ start vertices and $n$ goal vertices. 
We have $RB(\pi_{C^*})\leq 20\sqrt{n}, RB(\pi_{B'})=0$. 
According to Lemma \ref{lemma:average}, 
$RB(\pi_{A'})\leq 10 \sqrt{n}$.
The conclusion holds.

Assume that the conclusion holds for all the nodes with depth less than or equal to $k$.
Given an arbitrary node $N$ in the depth $k$, 
let the left and right children of $N$ be $L$ and $R$, which are the nodes with depth $k+1$. 
The corresponding dependency graphs have at most $(2/3)^{k+1}\cdot 2n$ vertices respectively. 
$$RB(\pi_{C^*}(L))\leq RB(\pi_{C^*}(N)) + \sqrt{(2/3)^{k+1}\cdot 2n}\cdot 10\sqrt{2}$$
$$RB(\pi_{C^*}(R))\leq RB(\pi_{A'}(N)) + \sqrt{(2/3)^{k+1}\cdot 2n}\cdot 10\sqrt{2}$$
Since $\pi_{B'}(L)=\pi_{A'}(N)$ and $\pi_{B'}(R)=\pi_{B'}(N)$, upper bound for nodes with depth $k$ holds for $\pi_{B'}(L)$ and $\pi_{B'}(R)$. Therefore, the running buffer size for the depth $k+1$ nodes has an upper bound
\begin{equation}
    \begin{split}
        & \dfrac{[1-(\sqrt{2/3})^{k+1}]}{1-\sqrt{2/3}}20\sqrt{n} + \sqrt{(2/3)^{k+1}\cdot 2n}\cdot 10\sqrt{2}\\ 
        = &\dfrac{[1-(\sqrt{2/3})^{k+2}]}{1-\sqrt{2/3}}20\sqrt{n}        
    \end{split}
\end{equation}

Therefore, the upper bound holds for all the nodes of depth $k+1$. 
With induction, the lemma holds.

\end{proof}

With \ref{lemma:induction}, it is straightforward to establish that \mrb is bounded by $\dfrac{20}{1-\sqrt{2/3}}\sqrt{n}$, yielding \ref{t:urbm-upper}.

    \chapter{Interval State Space Heuristic Search for CDR in FC Makespan (FCHS)}\label{sec:CDR-KC}
    FC metric estimates the total execution time of rearrangement.
For each pick-n-place action, the execution time is counted from the moment when the robot leaves for the object to the moment when the robot finishes the placement of the object.
Therefore, the cost of a pick-n-place can be expressed by $t_{pp}(p_c, p_g, p_r)$, 
where $p_c$ is the current end-effector pose. 
$p_g$, $p_r$ are the pick and place poses of the object.
Similarly, for each handoff operation, the execution time of the delivery (resp. receiving) arm is counted from the moment when the robot leaves for the pick pose (resp. handoff pose), 
to the moment when the handoff (resp. the placement) is finished.
With the same notations of locations, the cost of the delivery arm and the receiving arm in a handoff operation can be expressed as $t_{hd}(p_c, p_g)$ and $t_{hr}(p_c, p_r)$ respectively.
Besides object manipulations, robot arms move back to the rest pose when the rearrangement task is complete.

When the FC metric is considered, objects at intermediate states cannot be guaranteed to be at stable poses inside the workspace.
Based on the observation, we introduce a state space based on time intervals.
Each state can be expressed by two mappings $\mathcal L: \mathcal O \rightarrow \{S,G,\mathcal B(r_1), \mathcal B(r_2),T\}$ and $\mathcal T: \mathcal R\rightarrow \mathcal (O\times\{G,\mathcal B(r_1), \mathcal B(r_2)\}) \bigcup \{E\}$.
$\mathcal L$ indicates the current object status.
$S,G,\mathcal B(r_1), \mathcal B(r_2),T$ are five kinds of possible status of objects in the state: staying at start pose, goal pose, buffer in $\mathcal S(r_1)$, buffer in $\mathcal S(r_2)$, or being involved in the current task of a robot arm.
Specifically, $\mathcal L(o_i)=T$ when either $o_i$ is being manipulated by a robot, or a robot is on the way to pick it.
$\mathcal T$ indicates the current executing task of robot arms.
When $\mathcal T(r_i)=(o_j, G)$ (resp. $(o_j, \mathcal B(r_i))$), $r_i$ is executing a task moving $o_j$ from the current pose to its goal pose (resp. buffers).
When $\mathcal T(r_i)=E$, $r_i$ idles or just finishes an execution.
Each state except the start/goal state mimics a moment when one arm just placed an object while the other arm is executing a manipulation task or idling.

A rearrangement plan of the instance in \ref{fig:CDR-Exp}[Left], as a path in the interval state space, is presented in \ref{fig:CDR-KC-Plan}.
The plan starts with the interval state $s_1$ representing the start arrangement.
In $s_1$, $r_1$ and $r_2$ choose to pick $o_2$ and $o_1$ respectively.
The second state in the path is the moment when $r_1$ completes the placement of $o_2$ while $r_2$ is still executing the manipulation.
When $r_2$ places $o_1$ at its goal pose, $r_1$ is attempting to move $o_3$ to the handoff pose, which yields to state $s_3$ in the path.
After the handoff, $r_2$ places $o_3$ at its goal pose and the plan ends when both arms return to their rest poses.

\begin{figure}[ht]
    \centering
    \includegraphics[width=0.96\textwidth]{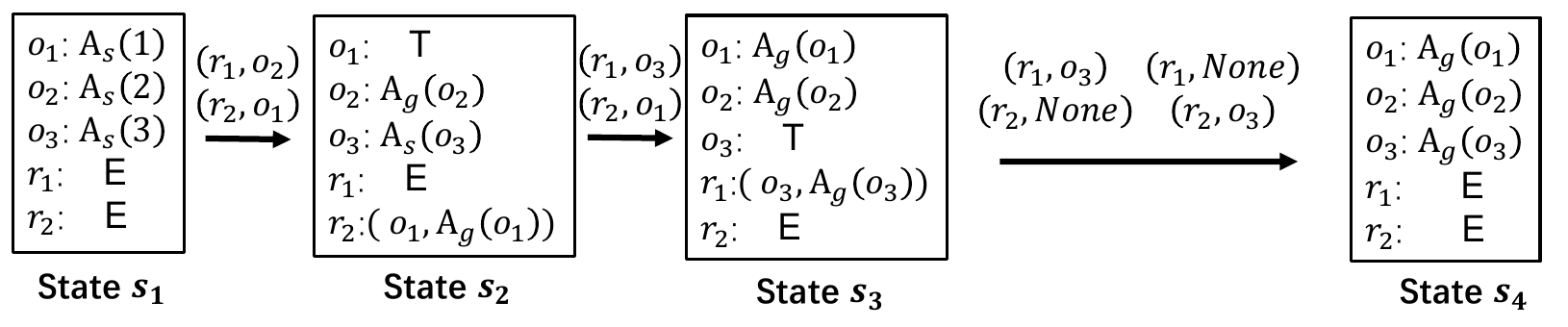}
    \includegraphics[width=0.4\textwidth]{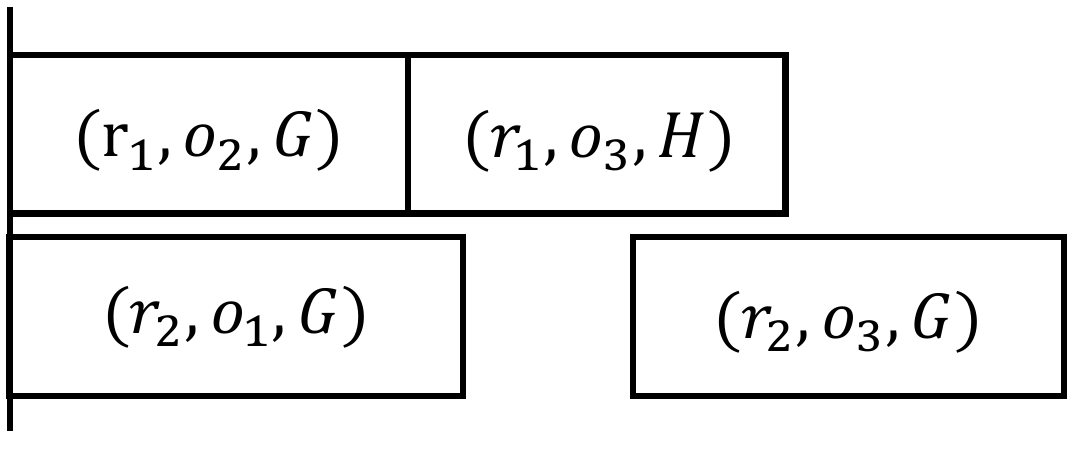}
    \caption{[Top] A path on the interval state space for the example instance in \ref{fig:CDR-Exp}[Left]. [Bottom] The corresponding task schedule for the instance.}
    \label{fig:CDR-KC-Plan}
\end{figure}

In either single arm rearrangement or dual-arm rearrangement with MC metric, object poses in arrangement states are determined, and the transition cost between states is the cost of specific manipulations. 
However, an interval state is more flexible and the transition cost between interval states is non-deterministic either. For example, in $s_3$ in \ref{fig:CDR-KC-Plan}, without mentioning the path from the start state, $r_1$ can be on the way to $\mathcal A_s(o_3)$ or waiting at the handoff pose (since $\mathcal A_g(o_3)$ is out of reach). 
Moreover, there is no non-trivial lower bound on the transition cost between interval states. For example, from state $s_2$ to state $s_3$, the transition cost may be close to 0 if $r_2$ finishes the placement right after $r_1$ does. It may also be close to the cost of the pick-n-place if $r_2$ just starts to leave for $A_s(o_1)$ when $r_1$ finishes the placement.

Given a path on the search tree and an interval state $s$, the current cost $g(s)$ on the path is defined as the current makespan.
In the path presented in \ref{fig:CDR-KC-Plan}[Top], 
$g(s_2)=t_{pp}(\gamma_1, \mathcal A_s(o_2), \mathcal A_g(o_2))$,
where $\gamma_1$ is the projection of the rest pose of $r_1$'s end-effector to the 2d space embedding $\mathcal W$. It is the pick-n-place cost of $r_1$ to manipulate $o_2$. 
At this moment, $r_1$ completes the placement of $o_2$.
Similarly, $g(s_3)=t_{pp}(\gamma_2, \mathcal A_s(o_1), \mathcal A_g(o_1))$. 
At this moment, $r_2$ completes the placement of $o_1$.
Note that the buffer locations are treated as variables in this task scheduling phase. 
When a buffer is involved in a primitive action, e.g. moving an object from/to buffer, 
the travel distance of the arm is set as the diagonal distance of its reachable region.
This setting further penalizes temporary object displacements and ensures the computed schedule is executable no matter where the buffers are allocated in the workspace.
Based on the path from state $s_1$ to $s_4$, we obtain a corresponding task schedule.

Similar to the arrangement space heuristic search, for each interval state $s$, the heuristic function $h(s)$ is defined based on the cost of picking, transferring, placing, and handing off that are needed to rearrange objects which are still away from the goal poses. 
While the general idea is the same as manipulation cost based heuristic (\ref{alg:MC-heuristic}), we make some modifications for interval states. 
The details are presented in Algo.\ref{alg:KC-heuristic}.
In an interval state, the ongoing task can be arbitrarily close to completion.
Therefore, besides the objects at the goal poses, we also ignore objects moving to the goal poses (Line 3),
When the object needs a handoff to the goal pose (Line 8), the delivering arm bears the delivery cost (Line 9-10), and the receiving arm bears the receiving cost (Line 11-12).

\begin{algorithm}
\begin{small}
    \SetKwInOut{Input}{Input}
    \SetKwInOut{Output}{Output}
    \SetKwComment{Comment}{\% }{}
    \caption{Full Cost Search Heuristic}
		\label{alg:KC-heuristic}
    \SetAlgoLined
		\vspace{0.5mm}
    \Input{$s$: current interval state}
    \Output{h: heuristic value of $s$}
		\vspace{0.5mm}
		cost[$r_1$], cost[$r_2$], sharedCost $\leftarrow 0,0,0$\\
		\For{$o_i\in \mathcal O$}{
		\lIf{$o_i$ is at goal or moving to goal}{continue}
		armSet $\leftarrow$ ReachableArms($s$, $o_i$, $\{\mathcal L(o_i), G\}$)\\
		\lIf{armSet is $\{r_1, r_2\}$}{sharedCost $\leftarrow$ sharedCost + transfer($o_i, \mathcal L(o_i), G$)+$t_g$+$t_r$}
		\lElseIf{armSet is $\{r_1\}$}{cost[$r_1$]$\leftarrow$ cost[$r_1$] + transfer($o_i, \mathcal L(o_i), G$)+$t_g$+$t_r$}
		\lElseIf{armSet is $\{r_2\}$}{cost[$r_2$]$\leftarrow$ cost[$r_2$] + transfer($o_i, \mathcal L(o_i), G$)+$t_g$+$t_r$}
		\Else{
		deliverArm $\leftarrow$ ReachableArms($s$, $o_i$, $\{\mathcal L(o_i)\}$)\\
		cost[deliverArm] $\leftarrow$ cost[deliverArm] + transfer($o_i, \mathcal L(o_i), H$)+$t_g$+$t_h$\\
		receiveArm $\leftarrow$ ReachableArms($s$, $o_i$, $\{\mathcal G\}$)\\
		cost[receiveArm] $\leftarrow$ cost[receiveArm] + transfer($o_i, H, G$)+$t_h$+$t_r$
		}
		}
		\vspace{1mm}
		\lIf{$\|$cost[$r_1$]-cost[$r_2$] $\|\leq$ sharedCost}{
		\\ \quad \Return (cost[$r_1$]+cost[$r_2$]+sharedCost)/2
		}
		\lElse{
		\Return max(cost[$r_1$], cost[$r_2$])
		}
\end{small}
\end{algorithm}


\end{theappendices}
    \begin{publications}
\thispagestyle{myheadings}


\begin{enumerate}[leftmargin=*,start=1,label=\textbf{P\arabic*}, ref={P\arabic*}]
    \item \label{pubs:P0} {\bf Kai Gao*}, Zihe Ye*, Duo Zhang*, Baichuan Huang, and Jingjin Yu. "Toward Holistic Planning and Control Optimization for Dual-Arm Rearrangement." arXiv preprint arXiv:2404.06758 (2024).
    \item \label{pubs:P00} {\bf Kai Gao*}, Zhaxizhuoma*, Yan Ding, Shiqi Zhang, and Jingjin Yu. "ORLA*: Mobile Manipulator-Based Object Rearrangement with Lazy A*." arXiv preprint arXiv:2309.13707 (2023).
    \item \label{pubs:P3} {\bf Kai Gao}, Justin Yu, Tanay Sandeep Punjabi, and Jingjin Yu. "Effectively rearranging heterogeneous objects on cluttered tabletops." In 2023 IEEE/RSJ International Conference on Intelligent Robots and Systems (IROS), pp. 2057-2064. IEEE, 2023.
    \item \label{pubs:P5} {\bf Kai Gao}, Si Wei Feng, Baichuan Huang, and Jingjin Yu. "Minimizing running buffers for tabletop object rearrangement: Complexity, fast algorithms, and applications." The International Journal of Robotics Research 42, no. 10 (2023): 755-776.
    \item \label{pubs:P7} {\bf Kai Gao}, and Jingjin Yu. "Toward efficient task planning for dual-arm tabletop object rearrangement." In 2022 IEEE/RSJ International Conference on Intelligent Robots and Systems (IROS), pp. 10425-10431. IEEE, 2022.
    \item \label{pubs:P10} {\bf Kai Gao}, Darren Lau, Baichuan Huang, Kostas E. Bekris, and Jingjin Yu. "Fast high-quality tabletop rearrangement in bounded workspace." In 2022 International Conference on Robotics and Automation (ICRA), pp. 1961-1967. IEEE, 2022.
    \item \label{pubs:P12} {\bf Kai Gao}, Si Wei Feng, and Jingjin Yu. "On running buffer minimization for tabletop rearrangement." In 17th Robotics: Science and Systems, RSS 2021. 
\end{enumerate}

\end{publications}

    \makeBibliography
\end{thesisbody}

\end{document}